\newcommand{\trace}{\mathrm{trace}}
\newcommand{\cE}{{\mathcal E}}
\newcommand{\bT}{{\mathbb T}}
\newcommand{\fc}{{\mathfrak c}}
\newcommand{\cD}{{\mathcal D}}
\newcommand{\cN}{{\mathcal N}}
\newcommand{\bR}{{\mathbb R}}
\newcommand{\wt}{\widetilde}
\newcommand{\rscore}{\bm{\epsilon}}
\def\TV{\mathsf{TV}}
\def\<{\langle}
\def\>{\rangle}
\def\dd{\mathrm{d}}
\def\op{\mbox{\rm \tiny op}}
\def\cE{\mathcal{E}}
\def\Lamp[#1]{\boldsymbol{\Lambda}_{\mathrm{AMP}}^{(#1)}}
\def\lalg[#1]{\Lambda_{\mathrm{alg}, #1}}
\def\cE{\mathcal{E}}
\def\de{{\rm d}}
\def\RR{\mathbb{R}}
\newcommand{\rd}{{\rm d}}
\newcommand{\del}{\partial}
\newcommand{\al}{\alpha}
\def\cO{\mathcal{O}}
\def\id{{\mathbb I}}
\newcommand{\N}{\mathcal{N}}
\newcommand{\RN}[1]{%
  \textup{\uppercase\expandafter{\romannumeral#1}}%
}
\newcommand{\RNum}[1]{\uppercase\expandafter{\romannumeral #1\relax}}
\newcommand\qU{U}
\newcommand\svarrho{\widehat{\varrho}}
\newcommand\sU{\widehat{U}}
\newcommand\sV{\widehat{V}}
\newcommand\sY{\widehat{Y}}
\newcommand\dV{\widetilde{V}}
\newcommand\dU{\widetilde{U}}
\newcommand\dY{\widetilde{Y}}
\newcommand\dvarrho{\widetilde{\varrho}}
\newcommand\muast{\mu_{\ast}}
\newcommand\mft{\tau}
\newcommand\baV{\overline{V}}
\newcommand\baM{\overline{M}}
\newcommand\dM{\widetilde{M}}
\newcommand\dphi{\widetilde{\phi}}
\newcommand\baphi{\overline{\phi}}
\newcommand{\tick}{\ding{51}} 
\newcommand{\cross}{\ding{55}}
\theoremstyle{plain} %plain, definition, remark
\newtheorem{theorem}{Theorem}[section]
\newtheorem*{theorem*}{Theorem}
\newtheorem{lemma}[theorem]{Lemma}
\newtheorem*{lemma*}{Lemma}
\newtheorem*{corollary*}{Corollary}
\newtheorem{proposition}[theorem]{Proposition}
\newtheorem*{proposition*}{Proposition}
\newtheorem{assumption}[theorem]{Assumption}
\newtheorem*{assumption*}{Assumption}
\newtheorem*{definition*}{Definition}
\newtheorem*{example*}{Example}
\newtheorem{remark}[theorem]{Remark}
\newtheorem*{remark*}{Remark}
\newtheorem*{remarks*}{Remarks}
\definecolor{darkred}{rgb}{.6,0,0}
\definecolor{darkblue}{rgb}{0,0,.7}
\definecolor{darkgreen}{rgb}{0,.7,0}
\definecolor{darkbrown}{rgb}{0.8,0.4,0.4}
\definecolor{purple}{rgb}{0.5,0.0,0.5}
\begin{document}

\title{Fast Convergence for High-Order ODE Solvers in Diffusion Probabilistic Models}

\author{Daniel~Zhengyu~Huang\textsuperscript{1}}
\address{\textsuperscript{1}Beijing International Center for Mathematical Research, Center for Machine Learning Research, Peking University, Beijing, China}
\email{huangdz@bicmr.pku.edu.cn}

\author{Jiaoyang Huang\textsuperscript{2}}
\address{\textsuperscript{2}Department of Statistics and Data Science, University of Pennsylvania, Philadelphia, PA, USA}
\email{huangjy@wharton.upenn.edu}

\author{Zhengjiang Lin\textsuperscript{3}}
\address{\textsuperscript{3}Department of Mathematics, Massachusetts Institute of Technology, 77 Massachusetts Ave, 02139 Cambridge MA, USA}
\email{linzj@mit.edu}    
\date{}

\begin{abstract}

Diffusion probabilistic models generate samples by learning to reverse a noise-injection process that transforms data into noise. 
A key development is the reformulation of the reverse sampling process as a deterministic probability flow ordinary differential equation (ODE), which allows for efficient sampling using high-order numerical solvers. 
Unlike traditional time integrator analysis, the accuracy of this sampling procedure depends not only on numerical integration errors but also on the approximation quality and regularity of the learned score function, as well as their interaction.  
In this work, we present a rigorous convergence analysis of deterministic samplers derived from probability flow ODEs for general forward processes with arbitrary variance schedules. Specifically, we develop and analyze $p$-th order (exponential) Runge-Kutta schemes, under the practical assumption that the first and second derivatives of the learned score function are bounded. We prove that the total variation distance between the generated and  target distributions can be bounded as
\begin{align*}
   O\bigl(d^{\frac{7}{4}}\varepsilon_{\text{score}}^{\frac{1}{2}} +d(dH_{\max})^p\bigr),
\end{align*}
where $\varepsilon^2_{\text{score}}$ denotes the $L^2$ error in the score function approximation, $d$ is the data dimension, and $H_{\max}$ represents the maximum solver step size. 
Numerical experiments on benchmark datasets further confirm that the derivatives of the learned score function are bounded in practice.

\end{abstract}

% Add keywords
\keywords{Score-based generative models, Sampling, Runge-Kutta methods, Convergence analysis}

% Add MSC (Mathematics Subject Classification) numbers
\subjclass[2020]{65L06,  68T07,  60J60}  % Modify MSC numbers as per your work

\maketitle

{
\hypersetup{linkcolor=black}
\setcounter{tocdepth}{1}
\tableofcontents
}

\section{Introduction}

Score-based generative models have recently emerged as a powerful framework for generating data from complex, high-dimensional data distributions~\cite{sohl2015deep, ho2020denoising, song2019generative, song2020score, dhariwal2021diffusion}. Unlike traditional generative models that directly learn a mapping from random noise to samples of the target distribution~\cite{kingma2013auto, goodfellow2014generative, rezende2015variational, papamakarios2021normalizing},  score-based models leverage a pair of stochastic processes: a forward process that incrementally corrupts data into noise, and a reverse process that iteratively reconstructs the data.
The forward process, also known as the diffusion process, gradually perturbs data samples from the target distribution $\mu_*$ into pure noise. 
Notably, the target distribution may either possess a density $q_0$, or it may be supported on a low-dimensional manifold, in which case it does not admit a well-defined density.
The reverse process then reconstructs samples by reversing this transformation, guided by the score function—the gradient of the log-density—which is learned via score matching techniques~\cite{hyvarinen2005estimation, vincent2011connection, song2019generative, song2020score}. This approach has demonstrated remarkable success in generating high-fidelity audio and visual data~\cite{dhariwal2020jukebox, dhariwal2021diffusion, popov2021grad, ramesh2022hierarchical, esser2024scaling}.

The reverse process in score-based generative models can be implemented through either stochastic dynamics—referred to as the denoising diffusion probabilistic model (DDPM), originally proposed by \cite{sohl2015deep,ho2020denoising}—or deterministic dynamics, commonly known as the probability flow ordinary differential equation (ODE) sampler or the denoising diffusion implicit model \cite{song2020denoising,song2020score}. 
Both approaches are theoretically linked by the same underlying Fokker–Planck equation, which characterizes the evolution of the data distribution in the reverse process. However, their practical implementations involve different discretization schemes, which can lead to notable differences in algorithmic behavior, affecting sample quality, computational cost, and numerical stability. These differences underscore the importance of rigorous numerical analysis in the design of solvers for diffusion-based generative models.

The convergence properties of stochastic dynamics used in the reverse process, when discretized via the (exponential) Euler–Maruyama scheme, have been extensively studied in recent literature~\cite{block2020generative,de2021diffusion,yang2022convergence,kwon2022score,tang2024contractive,lee2022convergence,chen2022sampling,lee2023convergence,chen2023score,chen2023improved,wu2024theoretical,tang2024score,mooney2024global,chen2024accelerating,chen2025solving, zhou2024parallel},
providing polynomial complexity guarantees with respect to the dimension $d$, without relying on structural assumptions like log-concavity of the data distribution. Recent state-of-the-art theoretical results by \cite{li2024d} demonstrate that DDPM requires at most on the order of (up to logarithmic factors) $d/\varepsilon$ iterations to generate samples within $\varepsilon$ total variation (TV) distance from a fairly general class of target distributions, assuming perfect score function estimates. The stochastic nature of these dynamics plays a critical role in mitigating error accumulation. Additionally, recent advances by \cite{wu2024stochastic, yu2025advancing} investigate high-order stochastic samplers based on stochastic Runge–Kutta schemes.

Although diffusion-based samplers such as DDPM are renowned for generating high-fidelity samples, they typically suffer from slow sampling speeds due to requiring a large number of score function evaluations.
In contrast,  probability flow ODEs can be solved using numerical discretization methods, including forward Euler, Heun’s method, and higher-order (exponential) Runge-Kutta schemes~\cite{butcher2016numerical,hochbruck2010exponential}. Recent advances~\cite{song2020denoising, song2020score, lu2022dpm, lu2022dpm+, zhang2022fast, zhao2024unipc, karras2022elucidating, xue2024accelerating} either leverage classical high-order schemes or develop novel higher-order exponential Runge-Kutta schemes, significantly reducing the required denoising steps, often to as few as $\mathcal{O}(10)$, compared to the (exponential) Euler-Maruyama scheme traditionally used for stochastic simulations, which can necessitate up to $\mathcal{O}(1000)$ steps. Consequently, these advanced deterministic approaches offer enhanced sampling efficiency with only modest compromises in sample quality.

The theoretical analysis of the probability flow ODE is more challenging. At the continuous-time level, \cite{albergo2022building, benton2023error} analyze the convergence of deterministic dynamics arising from more general stochastic interpolants or flow-matching frameworks~\cite{lipman2022flow, liu2022flow, albergo2023stochastic, boffi2206probability}. Their error bounds exhibit exponential dependence on the Lipschitz constant of the score approximation. Besides, the behavior of probability flow ODEs with first-order solvers, such as the forward Euler scheme, has been extensively studied. For instance, \cite{chen2023restoration} assume no score-matching error and derive error bounds that exhibit significant dependence on the dimensionality and grow exponentially with the time-integrated Lipschitz constant of the score. In contrast, \cite{chen2024probability} assume an $L^2$ bound on the score estimation  and offer polynomial-time convergence guarantees for probability flow ODEs combined with a stochastic Langevin corrector. \cite{gao2024convergence} provides convergence analysis under a log-concave data distribution assumption, showing that the error grows exponentially with time $T$ when score-matching error is present.

However, empirical evidence suggests that the error does not necessarily grow exponentially in practice. More recently, in a series of works, \cite{li2023towards,li2024unified, li2024sharp} establish convergence guarantees for probability flow ODEs, showing that $d/\varepsilon$ iterations suffice to generate samples with total variation error $\varepsilon$. Their analysis requires control over the difference between both the values and the derivatives of the true and estimated scores. Under similar assumptions, \cite{li2024accelerating} analyzes the case of second-order ODE solvers. In our previous work~\cite{huang2024convergence}, we initiated the study of probability flow ODEs with general $p$-th order Runge-Kutta solvers, for any integer $p \geq 1$, assuming boundedness of the first $p$ derivatives of the estimated score.
In this work, we generalize our previous analysis framework~\cite{huang2024convergence} to include 
$p$-th order exponential Runge-Kutta solvers, widely used in practice. Notably, our analysis requires only the second derivatives of the approximate score function to be bounded. This goes beyond classical local error analysis, which typically assumes higher-order smoothness.

\begin{table}[ht]
\centering
\renewcommand{\arraystretch}{1.4}
\begin{tabular}{lccccc}
\toprule
\textbf{paper} & \textbf{smoothness of}& \textbf{score matching} &   \textbf{iteration} & \textbf{higher-order}  \\
 & \textbf{scores} &  \textbf{assumption}  & \textbf{complexity} & \textbf{solver }\\
\midrule
\cite{chen2023restoration} & $L$-Lipschitz & $s_t = s_t^*$ &% & poly$(Ld)/\sqrt{T}$ & 
poly$(Ld)/\varepsilon^2$ & \cross \\
\cite{li2023towards} & no requirement & $s_t \approx s_t^*, \frac{\partial s_t}{\partial x} \approx \frac{\partial s_t^*}{\partial x}$  & $d^2/\varepsilon + d^3/\sqrt{\varepsilon}$  &\cross\\
\cite{li2024sharp}  & no requirement & $s_t \approx s_t^*, \frac{\partial s_t}{\partial x} \approx \frac{\partial s_t^*}{\partial x}$  & $d/\varepsilon + d^2$ &\cross \\
\cite{li2024unified} & $L$-Lipschitz & $s_t \approx s_t^*, \frac{\partial s_t}{\partial x} \approx \frac{\partial s_t^*}{\partial x}$  & $Ld(L+d)/\varepsilon$ &\cross \\
\cite{li2024accelerating} & no requirement & $s_t \approx s_t^*, \frac{\partial s_t}{\partial x} \approx \frac{\partial s_t^*}{\partial x}$ &$d^3/\sqrt{\varepsilon}$ &\tick $2$-nd order\\

\cite{huang2024convergence} & $C^p$ & $s_t \approx s_t^*$  & $ d^{1+1/p}/\varepsilon^{1/p}$ &\tick $p$-th order \\
\rowcolor{gray!20}  \Cref{theorem: main L^1 theorem}  & $C^2$ & $s_t \approx s_t^*$  & \cellcolor{gray!20}$ d^{1+1/p}/\varepsilon^{1/p}$&\cellcolor{gray!20} \tick $p$-th order  \\
%\rowcolor{gray!20} & & & & & \\
\bottomrule
\end{tabular}
\vspace{0.2cm}
\caption{Comparison of convergence guarantees in total variation error and iteration complexities of recent works. 
Here, $s_t$ and $\frac{\partial s_t}{\partial x}$ are the estimated score function and its derivative, with $s_t^{*}$ and $\frac{\partial s_t^{*}}{\partial x}$ as their true counterparts.
$d$ represents the data dimension, 
$L$ denotes the corresponding Lipschitz constant of $s_t^{*}$, and 
$\varepsilon$ refers to the target distribution error. When computing iteration complexity, we neglect the score matching error and estimate the number of time steps required to achieve an error of $\varepsilon$.}
\label{table:comparison}
\end{table}

% Introduce 

\subsection{Our Contributions}
Specifically, our main contributions are summarized as follows:

\begin{itemize} \item We establish convergence guarantees for $p$-th order  Runge-Kutta schemes applied to the probability flow ODE at the discretized time level under mild assumptions. Including:
\begin{itemize} \item \cref{assumption:secon-moment} assumes that the target density has compact support,
\item \cref{a:score-estimate} assumes that the $L^2$ discrete time score matching error over time is bounded by $\varepsilon^2_{\rm score}$.
\item \cref{a:bounds on RK} assumes that the coefficients involved in the Runge-Kutta schemes and their derivatives are bounded, and both the time steps and substeps lie within the discrete time sequence used for score matching.
\item \cref{a:score-derivative} assumes that the first and second derivatives of the estimated score are bounded.
\end{itemize}
Under these assumptions, we prove in \cref{theorem: main L^1 theorem} that for each $p \geq 1$, the total variation distance between the target distribution and the generated data distribution is bounded above by  
\begin{align}\label{e:bound}
   O(d^{\frac{7}{4}} \varepsilon_{\rm score}^{\frac{1}{2}} + d(d H_{\max})^p),
\end{align}
where $d$ is the data dimension and $H_{\max}$ is the maximum time step size used in the solver.

This result improves upon our previous work in \cite{huang2024convergence} in three key aspects:  
\begin{enumerate}
    \item In \cite{huang2024convergence}, we required boundedness of the first $(p+1)$-th  derivatives of the approximate score. Here, we only assume boundedness of the first and second derivatives. 
    \item We generalize the analysis of the reverse process to allow for non-uniform time step sizes, as long as the maximum time step size $H_{\max}$ remains sufficiently small.
    \item In \cref{theorem: main L^1 theorem general}, we extend our results beyond the Ornstein–Uhlenbeck forward process to more general forward processes.
\end{enumerate}

To establish high-order temporal error bounds for $p$-th order Runge-Kutta schemes applied to the probability flow ODE, assuming only the first two derivatives of the approximate score are bounded, we decompose the total error into two components: (1) the high-order error between the target distribution and a numerical solution obtained without score matching error (referred to as artificial discrete flow~\eqref{e: p RK interpolation terms}) and (2) the error between numerical solutions with and without score-matching error. 
For the first component, we derive sharp estimates for temporal and spatial derivatives of the exact score $\log q_t(x)$ , as detailed in \Cref{s:prel-estim-q}. We then develop a transport inequality in \cref{section: L^1 error of transport equation} to bound the error between the target distribution and the artificial discrete flow. 
For the second component, because the artificial discrete flow and the numerical solution  has the same flow structure, we use an interpolation argument between these two flows, and the interpolation error can be bounded in terms of the score marching error, with an application of Gagliardo-Nirenberg interpolation techniques in \cref{section:score estimation error}.

\item In \cref{s:general}, we develop $p$-th order (exponential) Runge–Kutta schemes with provable high-order convergence guarantees, applicable to  arbitrary forward processes with general variance schedules.

\item In \cref{sec:num}, we present numerical experiments. We examine the gradient and Hessian of approximate scores in practical diffusion models, and our results show that both quantities remain well-bounded, thereby supporting our theoretical assumptions. Additionally, we demonstrate high-order convergence for the designed (exponential) Runge–Kutta schemes using Gaussian mixture target densities. Finally, we explore sharp error bounds using in this setting.

\end{itemize}

\subsection{Notations}

\begin{itemize}
\item Diacritics: $\widehat{\square}$ denotes quantities involve score error, $\widetilde{\square}$ denotes quantities involve both score and time discretization errors.
\item Time steps: $0 = t_0 < t_1< \cdots < t_N  = T-\tau$, where $\tau>0$ is a small parameter.  
    \item Distributions on $\mathbb{R}^d$: $q_t, \widehat q_t, \widetilde q_t$ denote forward process, $\varrho_t = q_{T-t}, \widehat \varrho_t = \widehat q_{T-t}, \dvarrho_t = \widetilde q_{T-t}$ reverse process.
    \item Vector fields from $\mathbb{R}^d$ to $\mathbb{R}^d$:
    Forward process: $\qU_t(x) \coloneq x +  \nabla \log q_t(x)$, $\sU_t \coloneq x + s_{T -t}(x)$,  $\dU_t \coloneq x + \widetilde s_{T -t}(x)$; Reversal process: $V_t \coloneq U_{T-t}$, $\sV_t \coloneq \sU_{T-t}$, $\dV_t \coloneq \dU_{T-t}$; Other vector fields:
    $Z_t(x)$.
    \item $\alpha = (\alpha_1, \alpha_2, \ldots, \alpha_d)$ is a multi-index with nonnegative integers $\alpha_i$'s, $|\alpha| \coloneq \alpha_1 + \alpha_2 + \ldots +\alpha_d$, and we define $\partial_x ^\alpha  \coloneq \partial_{x_1} ^{\alpha_1} \partial_{x_2} ^{\alpha_2} \cdots \partial_{x_d} ^{\alpha_d}$. We also use $\partial_i \coloneq \partial_{x_i}$ for simplicity.
    \item Constants: We use $C_u$ to denote universal constants like $10, 50, 100, 200$, i.e., $C_u$ is independent of the dimension $d$ and other parameters in this paper. Also, $C_u$ may vary by lines.
    \item Norms: For a vector $x = (x_1, x_2, \ldots, x_d)\in \mathbb{R}^d$, we use $\|x\|=\|x\|_2 \coloneq {(x_1 ^2 + x_2 ^2 + \ldots + x_d ^2)}^{\frac{1}{2}}$, $\|x\|_{\infty} \coloneq \sup_{1 \leq i \leq d} |x_i|$, $\|x\|_{1} \coloneq |x_1| + |x_2| + \ldots+|x_d|$, $\|x\|_{p} \coloneq {({|x_1|}^p + {|x_2|}^p + \ldots+{|x_d|}^p)}^{\frac{1}{p}}$. We similarly define $\|\cdot \|_2$, $\| \cdot \|_{\infty}$, $\| \cdot \|_{1}$, $\| \cdot \|_{p}$,  for matrices or even more general tensors, because we can view them as vectors and forget their tensor structures.
    For a vector-valued function $F(x) = (f_1(x), f_2(x), \ldots, f_m(x)): \mathbb{R}^d \mapsto \mathbb{R}^m$, where $m$ is a positive integer, we regard $F(x)$ as a vector in $\mathbb{R}^m$ and similarly use the notations $\|F(x)\|$, $\|F(x)\|_{1}$, $\|F(x)\|_{\infty}$. In particular, if $A$ is a $d \times d$ matrix, we use $\|A\|_{op}$ to denotes its operator norm, i.e., $\|A\|_{op} \coloneq \sup_{\|x\|_2 =1} \|Ax\|_2$. 
\item Function class: We say a vector-valued function $F(x) = (f_1(x), f_2(x), \ldots, f_m(x)): \bR^d\mapsto \bR^m$ as being in $C^r$, if each of its components $f_i(x)$ has continuous first $r$-th derivatives. We say $F(x)$ is in the $L^s$-space $L^{s} (\mathbb{R}^d)$ if for each of its components $f_i(x)$, its $L^s(\mathbb{R}^d)$-norm defined as $\|f_i\|_{L^s(\mathbb{R}^d)} \coloneq {(\int_{\mathbb{R}^d} {|f_i(x)|}^s \de x)}^{\frac{1}{s}}$ is finite. We say $F(x)$ is in the Sobolev space $W^{r,s} (\mathbb{R}^d)$ if for each of its components $f_i(x)$, $\partial_x ^{\alpha} f_i \in L^s(\mathbb{R}^d)$ for each $\alpha$ with $|\alpha| \leq r$. We define the $W^{r,s} (\mathbb{R}^d)$-norm of $f_i$ as $\|f_i\|_{W^{r,s}(\mathbb{R}^d)} \coloneq (\sum_{|\alpha| \leq r} \|\partial_x ^{\alpha} f_i\|_{L^s(\mathbb{R}^d)} ^s )^{\frac{1}{s}}$. 

\item Coordinates: For a vector $v \in \mathbb{R}^d$, we sometimes denote its $q$-th coordinate as $v^{(q)}$. We either use the notation $v = (v_1, v_2, \ldots, v_d)$ or the notation $v = (v^{(1)}, v^{(2)}, \ldots, v^{(d)})$.
\end{itemize}

\section{Score-based Generative Model}\label{Section: Preliminaries}
\subsection{Forward and Reverse Process}

Score-based generative models begin with $d$ dimensional true data samples $\{X_0\}$ following an unknown target distribution $\muast$\footnote{Even if $\muast$ does not have a density, the forward process produces a density $q_t$ for all $t > 0$ by the added noise. The reverse process then aims to sample new data from $q_\tau$ for sufficiently small $\tau > 0$.
}. The objective is to sample new data from the target distribution.
Typically, the score-based generative models usually involve two processes---the forward and reverse processes. 

In the forward process, we start with data samples from $q_0$, and progressively transform the data into noise. This process is often based on the canonical Ornstein-Uhlenbeck~(OU) process given by 
\begin{align}
\label{eq:OU-0}
	\de X_t = -X_t \de t + \sqrt{2} \de B_t, \qquad X_0 \sim \muast, \qquad 0 \leq t \leq T, 
\end{align}
where $(B_t)_{t \in [0, T]}$ is a standard Brownian motion in $\RR^d$. 
The OU process has an analytical solution 
\begin{align}
\label{eq:OU}
	X_t \overset{d}{=} \lambda_t X_0 + \sigma_t W, \qquad W \sim \cN(0, \id_d),  \quad \textrm{ with } \quad \lambda_t = e^{-t} \quad \textrm{ and } \quad\sigma_t = \sqrt{1 - e^{-2t}}.
\end{align}
Here $\overset{d}{=}$ denotes that the random variables have the same distribution.

The OU process exponentially converges to its stationary distribution, the standard Gaussian distribution $\cN(0, \id_d)$.
Let $q_t$ denote the density of $X_t$, which evolves according to  the following Fokker--Planck equation: 
\begin{align*}
    \del_t q_t= \nabla \cdot \bigl((x +  \nabla \log q_t(x)) q_t\bigr) = \nabla \cdot (\qU_t q_t), \qquad 0 < t \leq T,
\end{align*}
with $\qU_t(x) \coloneq x +  \nabla \log q_t(x)$.

By denoting $\varrho_t  = q_{T - t}$, the time reversal process from time $T$ to $0$, satisfies the following partial differential equation (PDE):
\begin{align}\begin{split}
\label{e:reversep}
    \del_t \varrho_t =-\nabla \cdot \bigl((x + \nabla \log q_{T - t}) \varrho_t\bigr) = -\nabla \cdot (V_t \varrho_t), \qquad 0 \leq t < T,
\end{split}\end{align}
with $V_t(x) \coloneq x +  \nabla \log q_{T - t}(x)$.
The score function $\nabla\log q_{T- t}(x)$ is typically learned by a neural network trained using score matching techniques with progressively corrupted trajectories $\{X_t\}$ from \eqref{eq:OU-0}. Subsequently,  the reverse PDE can be solved from $\varrho_0 = q_T$ to sample new data from $\muast$.

The reverse  PDE \eqref{e:reversep} is often reformulated into a mean field equation for sampling instead of being directly solved. 
This mean field equation can manifest as stochastic dynamics 
\begin{align}\label{eq:reverse}
	\de Y_t = \left( Y_t + 2 \nabla \log q_{T - t} (Y_t) \right) \de t + \sqrt{2} \de B_t', \qquad Y_0 \sim q_T, \qquad 0 \leq t < T,
\end{align}
where $(B_t')_{0 \leq t \leq T}$ is a Brownian motion in $\RR^d$.
This formulation is commonly referred to as the denoising diffusion probabilistic model (DDPM). Alternatively, the mean-field equation can adopt a deterministic dynamics framework in terms of an ordinary differential equation with velocity field $V_t$:
\begin{align}\label{eq:reverse-ode}
	\del_t Y_t =  Y_t +  \nabla \log q_{T - t} (Y_t)  = V_t(Y_t), \qquad Y_0 \sim q_T, \qquad 0 \leq t < T,
\end{align}
known as the probability flow ODE. 
Additionally, when $\nabla \log q_{T - t} (x)$ is represented by the learned score function $s_t(x)$, the probability flow ODE \eqref{eq:reverse-ode} becomes 
\begin{align}\label{eq:reverse-ode-score}
	\del_t \sY_t =  \sY_t +  s_t(\sY_t) = \sV_t(\sY_t), \qquad \sY_0 \sim \svarrho_0, 
\end{align}
where the velocity field becomes $\sV_t(x) \coloneq x + s_t(x)$, $\sY_0$ is sampled from $\svarrho_0$. Since the density $\varrho_0=q_T$ is unknown, $\svarrho_0$ is commonly approximated by the standard Gaussian distribution $\cN(0, \id_d)$, which serves as a reliable approximation of $q_T$ for sufficiently large $T$ as shown in \Cref{lemma: qt close to Gaussian}.
The associated density of $\sY_t$ is denoted as $\svarrho_{t}$, which differs from $\varrho_t$ that describes the density of $Y_t$, due to the score matching error.

\subsection{Score Matching}\label{s:score_matching}
The goal of score matching is to learn the score function 
$$s_t(x) \approx \nabla \log q_{T-t}(x) \quad \textrm{for} \quad 0\leq t < T,$$ 
from corrupted trajectories $\{X_t\}$ generated by the forward process in \eqref{eq:OU}. By the variational formulation introduced in \cite{vincent2011connection}, the $L^2$-error at reverse time $t$ becomes tractable and can be expressed as
\begin{align*}
    \mathbb{E}_{x\sim q_{T-t}} \left[  \left\| s_{t}(x) -  \nabla\log q_{T-t}(x)\right\|_2^2 \right]
&= \mathbb{E}_{x_0\sim q_0}\mathbb{E}_{x\sim q_{T-t}(x|x_0)}\left[  \left\| s_{t}(x) -\nabla \log q_{T-t}(x|x_0)\right\|_2^2 \right]\\
&= \mathbb{E}_{x_0\sim q_0}\mathbb{E}_{w \sim \cN(0,\id_d)}\left[  \left\| s_{t}(\lambda_{T-t} x + \sigma_{T-t} w) + w/\sigma_{T -t}\right\|_2^2 \right],
\end{align*}
where $q_{t}(x|x_0)$ denotes the conditional distribution of the forward process $X_{t}$ given $X_0$, which is a Gaussian as defined in \eqref{eq:OU}.

To recover the score function over the entire time interval $0\leq t\leq T$, the discrete time score matching approach~\cite{ho2020denoising,vincent2011connection,song2020score} minimizes a weighted $L^2$-error over a discrete sequence of reverse times 
\begin{align}\label{e:defbT}
   \quad \bT:=\{j\Delta t : 0 \leq j \leq N \},\quad \text{where } \Delta t = \frac{T}{N}.
\end{align}
In practice, the discrete time sequence $\bT$ is typically chosen as a uniform grid~\cite{rombach2022high,karras2022elucidating} with fixed step size $\Delta t = T/N$. The optimization objective uses weights $\sigma_{T-t}^2$ \cite{ho2020denoising}, resulting in the following formulation: 
\begin{equation}
\label{eq:score-matching}
\begin{split}
s_t(x) 
&= \arg\min_{s: \bR \times \bR^d\mapsto \bR^d} 
\sum_{t\in \mathbb T} \sigma_{T-t}^2 \mathbb{E}_{x\sim q_{T-t}} \left[  \left\| s_{t}(x) -  \nabla\log q_{T-t}(x)\right\|_2^2 \right]\\
&= \arg\min_{s: \bR \times \bR^d\mapsto \bR^d} 
\sum_{t\in \mathbb T} \sigma_{T-t}^2 \mathbb{E}_{x_0\sim q_0}\mathbb{E}_{w \sim \cN(0,\id_d)}\left[  \left\| s_{t}(\lambda_{T-t} x + \sigma_{T-t} w) + w/\sigma_{T -t}\right\|_2^2 \right]\\
&= \arg\min_{s: \bR \times \bR^d\mapsto \bR^d} 
\sum_{t\in \mathbb T} \mathbb{E}_{x_0\sim q_0}\mathbb{E}_{w \sim \cN(0,\id_d)}\left[  \left\|  \sigma_{T-t} s_{t}(\lambda_{T-t} x + \sigma_{T-t} w) + w\right\|_2^2 \right].
\end{split}
\end{equation}

Our theoretical analysis is based on this discrete time score matching framework, which ensures that the score matching error is controlled only at the discrete times in $\bT$. Accordingly, we design probability flow ODE solvers that require score evaluations only at these discrete times.
Furthermore, rather than learning the score function $s_t(x)$ directly via neural networks, a reparameterized version $\rscore_t(x) = \sigma_{T-t} s_t(x)$  is typically learned \cite{ho2020denoising}. This reparameterization helps to mitigate the potential singularity of $s_t(x) \to \infty$ as $t \to T$, since $\sigma_{0} = 0$.\footnote{When the target distribution $\muast$ is a Dirac delta function $\delta_0$,  the corresponding score function becomes $s_t(x) \approx \nabla\log q_{T-t}(x) = -\frac{2}{\sigma_{T-t}^2}$, which diverges as $t\to T$. This behavior commonly arises when the target distribution is supported on a low-dimensional manifold, in which case it does not admit a well-defined density in the ambient space. \label{fn:st}} Under this reparameterization, the score matching objective in \eqref{eq:score-matching}
is reformulated as an optimization problem over $\rscore_t$:
\begin{equation*}
\begin{split}
\rscore_t(x) 
&= \arg\min_{\rscore: \bR \times \bR^d\mapsto \bR^d} 
\sum_{t\in \mathbb T} \mathbb{E}_{x_0\sim q_0}\mathbb{E}_{w \sim \cN(0,\id_d)}\left[  \left\|  \rscore_{t}(\lambda_{T-t} x + \sigma_{T-t} w) + w\right\|_2^2 \right].
\end{split}
\end{equation*}

\subsection{ODE Solver}
\label{s:time_integrator}

This work focuses on solving the probability flow ODE \eqref{eq:reverse-ode-score} to generate new data. We begin by randomly initializing $\widehat Y_{0}$ from the standard Gaussian distribution $\cN(0, \id_d)$, which approximates the terminal distribution $q_T$.
We then numerically integrate the ODE over the interval $[0,T-\tau]$
using a Runge-Kutta method, where $\tau$ is introduced to avoid potential singularities caused by the score function (see footnote \ref{fn:st}). Typically, $\tau$ is chosen as a large multiple of $\Delta t$, the step size of the discrete time sequence $\bT$.
The time domain is discretized into $M$ time steps: $0 = t_0 < t_1< \cdots < t_M  = T-\tau$. 
Although Runge-Kutta method is a single step method, it generally involves multiple internal stages, that may require evaluating $\sV_t$ at intermediate times $t \in (t_i , t_{i+1})$.
Since the score function is only trained at these discrete times in $\bT$ (see \cref{s:score_matching}), we assume that all Runge-Kutta evaluation times, including both primary time steps and intermediate stage points, are contained in $\bT$ (see \cref{a:bounds on RK}). This assumption ensures compatibility with the available score evaluations and is essential for achieving high-order accuracy, especially when the learned score lacks smoothness.

In this subsection, we introduce standard Runge–Kutta schemes constructed to fulfill this assumption for solving the probability flow ODE \eqref{eq:reverse-ode-score}. The extension to exponential Runge-Kutta schemes and arbitrary forward processes with general variance schedules is present in \cref{s:GeneralProcess}.

The Runge-Kutta method is a single step, multi-stage time integrator. Let $H_i = t_{i+1} - t_i$ denote the step size, and  let $\dY_{t_i}$ represent the numerical approximation of $\sY_{t_i}$ in the probability flow ODE \eqref{eq:reverse-ode-score}. 
The family of explicit $s$-stage, $p$-th order Runge-Kutta schemes update the solution $\{\dY_{t_i}\}$ according to:
\begin{align}
\label{eq:RK-update}
\dY_{t_{i+1}} = \dY_{t_i} + H_i \sum_{j=1}^s b_j \widetilde k_j(\dY_{t_i}),
\end{align}
where the stage evaluations $\widetilde{k}_j$ are computed as
\begin{equation}
\label{eq:RK-update-k}
\begin{split}
    &\widetilde k_1(x) = \sV_{t_i + c_1 H_i}(x),\\
    &\widetilde k_2(x) = \sV_{t_i + c_2 H_i}\bigl(x + (a_{21}\widetilde k_1(x)) H_i\bigr),\\
    &\widetilde k_3(x) = \sV_{t_i + c_3 H_i}\bigl(x + (a_{31}\widetilde k_1(x) + a_{32}\widetilde k_2(x)) H_i\bigr),\\
    &\qquad\qquad\qquad\vdots
    \\
    &\widetilde k_s(x) = \sV_{t_i + c_s H_i}\bigl(x + (a_{s1}\widetilde k_1(x) + a_{s2}\widetilde k_2(x) + \cdots + a_{s,s-1}\widetilde k_{s-1}(x)) H_i\bigr).
\end{split}
\end{equation}
The lower triangular matrix $[a_{jk}]$ is called the Runge-Kutta matrix, while the coefficients $[b_j]$ and $[c_j]$ are known as the weights and the nodes. The stage number $s$ and the parameters are chosen such that 
the local truncation error of \eqref{eq:RK-update} is $\cO(H_i^{p+1})$.
In this work, we focus on standard Runge-Kutta schemes with up to fourth-order accuracy, as summarized in the Butcher tableaus in \cref{tab:butcher-specific}. The nodes $c_j$ are selected uniformly such that the evaluation points $t_i+c_j H_i$ are uniformly spaced in $[t_i,t_{i+1}]$. To ensure these points align with the discrete time sequence $\bT$ with step size $\Delta t$, we require both $t_i$ and $t_{i+1}$ to lie in $\bT$. Moreover, for the 3-stage third-order and 4-stage fourth-order schemes, we further let $H_i$ be a multiple of $3\Delta t$ and $2\Delta t$, respectively, so that all intermediate evaluation points also lie within $\bT$.

\begin{table}[h]
\centering
\caption{Butcher tableaus for standard Runge-Kutta schemes of orders 1 through 4 (from left to right, first to fourth order)}
\begin{tabular}{c|ccccc}
$0$ & $0$ & $0$ & $\cdots$ & $0$ & $0$ \\
$c_2$ & $a_{21}$ & $0$ & $\cdots$ & $0$ \\
$c_3$ & $a_{31}$ & $a_{32}$ & $\cdots$ & $0$& $0$ \\
$\vdots$ & $\vdots$ & $\vdots$ & $\ddots$ & $\vdots$ \\
$c_s$ & $a_{s1}$ & $a_{s2}$ & $\cdots$ & $a_{ss-1}$ &$0$ \\
\hline
       & $b_1$   & $b_2$   & $\cdots$ & $b_{s-1}$ & $b_s$
\end{tabular}
\quad
\begin{tabular}{c|c}
$0$ & $0$  \\
\hline
& $1$
\end{tabular}
\quad
\begin{tabular}{c|cc}
$0$ & $0$ & $0$ \\
$1$ & $1$ & $0$ \\
\hline
& $\frac{1}{2}$   & $\frac{1}{2}$
\end{tabular}
\quad
\begin{tabular}{c|ccc}
$0$ & $0$ & $0$   & $0$ \\
$\frac{1}{3}$ & $\frac{1}{3}$  & $0$ & $0$    \\
$\frac{2}{3}$ & $0$ & $\frac{2}{3}$ &  $0$  \\
\hline
       & $\frac{1}{4}$   & $0$  & $\frac{3}{4}$
\end{tabular}
\quad
\begin{tabular}{c|cccc}
$0$ & $0$ & $0$   & $0$ & $0$ \\
$\frac{1}{2}$ & $\frac{1}{2}$  & $0$ & $0$  & $0$  \\
$\frac{1}{2}$ & $0$ & $\frac{1}{2}$  & $0$  & $0$ \\
$1$ & $0$ & $0$ &  $1$ & $0$ \\
\hline
       & $\frac{1}{6}$   & $\frac{1}{3}$   & $\frac{1}{3}$ & $\frac{1}{6}$
\end{tabular}
\label{tab:butcher-specific}
\end{table}

\section{Convergence Analysis  and Main Results}
In this section, we analyze the total variation error between the target distribution $\varrho_{t}$ and the distribution $\dvarrho_{t}$ of the numerical solution $\dY_t$, produced by aforementioned standard Runge–Kutta schemes in \eqref{eq:RK-update}. The goal is to illustrate the core analysis ideas. Extension to exponential Runge-Kutta schemes and to forward processes with general variance schedules is present in \cref{s:GeneralProcess}. Although the overall analysis strategy remains similar, the extension requires additional estimates involving the variance schedules.
We begin by stating our assumptions, followed by the main result. 

\begin{assumption}\label{assumption:secon-moment}
	The data distribution $\muast$ is a probability distribution that is positive and compactly supported on a compact set $K_{\ast}$ in $\bR^d$, and we also define $D \coloneq 1+\max_{x \in K_{\ast}} \|x\|_{\infty} $.
\end{assumption}

\begin{assumption}\label{a:score-estimate}
Recall the discrete-time sequence $\bT$ defined in \eqref{e:defbT} and consider the integration domain $[0, T - \tau]$ for a small $\tau > 0$.
For the backward process, the time domain is discretized into $M$ steps:
\[
0 = t_0 < t_1 < \dots < t_M = T - \tau.
\]
with step sizes $H_i = t_{i+1} - t_i$ for every $0 \leq i \leq  M - 1$. We assume  
there exists a small parameter $\varepsilon_{\mathrm{score}} > 0$ such that the weighted $L^2$ score matching error satisfies
 \begin{align*}
		 \sum_{i=0} ^{M-1}H_i\cdot \sum_{t\in [t_i,t_{i+1}]\cap \bT}  \sigma_{T-t}^2 \int_{\bR^d}   \left\| s_{{t}}(x) -  \nabla \log\varrho_{{t}}(x)\right\|_2^2 \varrho_{{t}}(x) \rd x\leq \varepsilon_{\rm score} ^2 .
	\end{align*}
\end{assumption}
\begin{remark}
We remark that if each time step contains a bounded number of points from $\bT$, especially if $\max_{0\leq i\leq M-1}H_i/\Delta t=\cO(1)$. Then our \Cref{a:score-estimate} is equivalent (up to a constant factor) to bounding the average $L^2$ score matching error
\begin{align*}
		 \sum_{t\in [0,T-\tau]\cap \bT} \Delta t\cdot \sigma_{T-t}^2 \int_{\bR^d}   \left\| s_{{t}}(x) -  \nabla \log\varrho_{{t}}(x)\right\|_2^2 \varrho_{{t}}(x) \rd x\leq \varepsilon_{\rm score} ^2 .
	\end{align*}
\end{remark}

Although for standard Runge-Kutta schemes \eqref{eq:RK-update} and \eqref{eq:RK-update-k} (as in \Cref{tab:butcher-specific}), the coefficients $[a_{jk}]$ and $[b_j]$ do not depend on the step size $H_i$, in the general case they may vary with $H_i$. This is particularly true for exponential Runge-Kutta schemes; see \Cref{s:GeneralProcess} for details.
We therefore allow the coefficients $[a_{jk}(H_i)]$ and $[b_j(H_i)]$ to depend on the step size $H_i$. The following assumption ensures that these coefficient functions, along with their derivatives, are appropriately bounded.

\begin{assumption}\label{a:bounds on RK}
For the Runge-Kutta schemes \eqref{eq:RK-update} and \eqref{eq:RK-update-k}, we assume there exist constants $A_p, B_p \geq 1$ such that the coefficient functions $a_{jk}(h)$ and $b_j(h)$, along with their derivatives up to order $p+1$, satisfy the following bounds for all time steps $t_i$ and for all $h \in [0,H_i]$:
\begin{align}\label{e:RKbound}
    \max_{1 \leq j,k \leq s} \max_{0 \leq l \leq p+1} \tau^l \left| \partial_h^l a_{jk}(h) \right| \leq A_p, \quad 
    \max_{1 \leq j \leq s} \max_{0 \leq l \leq p+1} \tau^l \left| \partial_h^l b_j(h) \right| \leq B_p.
\end{align}
For commonly used Runge-Kutta schemes listed in \Cref{tab:butcher-specific}, \eqref{e:RKbound} holds trivially because the coefficients $[a_{jk}]$ and $[b_j]$ are constants.
Moreover, when applying the Runge-Kutta scheme, the time domain is discretized into $M$ steps. We assume that all time nodes $t_i$, as well as all intermediate evaluation points $t_i +c_jH_i$ (as in \eqref{eq:RK-update-k}), lie within the discrete time sequence $\bT$.
\end{assumption}

\begin{assumption}\label{a:score-derivative}
We assume that the approximate score $s_t(x)$ is $C^2$ with respect to $x$. More precisely, there is a positive constant $\widetilde K$, such that for any $t \in [0,T-\tau]\cap \bT$, 
    \begin{align*}
        \sup_{x \in \mathbb{R}^d} \max_{1\leq l, j \leq d} |\partial_{l} s_t ^{(j)}(x)| \leq \widetilde K \tau ^{-2}, \quad \sup_{x \in \mathbb{R}^d} \max_{1\leq l,k, j \leq d} |\partial^2 _{lk} s_t ^{(j)}(x)| \leq \widetilde K  \tau ^{-3} .
    \end{align*}
Here, we write $s_t(x) = (s_t ^{(1)}(x),s_t ^{(2)}(x),\ldots,s_t ^{(d)}(x))$, and we denote $\partial_l \coloneq \partial_{x_l}, \partial^2 _{lk} \coloneq \partial^2 _{x_l x_k}$.
We also assume that for any $t \in [0,T-\tau]\cap \bT$, there is a constant $\widetilde W$ such that  $\|s_t(x)\|_2 \leq \tau^{-1}\widetilde W(\sqrt{d}   +\|x\|_2) $.
\end{assumption}

\begin{remark}
    We remark that the assumption on $\| s_t(x)\|_2$ is of correct order, as we prove that, according to \Cref{p:ABCD}, $\|\nabla \log\varrho_{{t}}(x)\|_2 \leq 2 \tau^{-1} (\sqrt{d} D + \|x\|_2)$. Also, similar to~\cite{huang2024convergence}, we can replace the point-wise second derivative assumption on $s_t$ with the following:
        \begin{align*}
            \max_{1\leq l, k, j \leq d} \int_{\mathbb{R}^d}|\partial^2 _{lk} s_t ^{(j)}(x)| \varrho_t(x) \de x \leq \widetilde K \tau ^{-3}.
        \end{align*}
\end{remark}

Our main result is stated below, with the detailed proof deferred to \cref{Section: Discrete ODE}.
\begin{theorem}\label{theorem: main L^1 theorem}
    Let $\dvarrho_{t}$ be the distribution of the numerical solution $\dY_t$, produced by the standard Runge–Kutta schemes in \eqref{eq:RK-update} and \eqref{eq:RK-update-k}.
    Adopt \Cref{assumption:secon-moment}, \Cref{a:score-estimate},  \Cref{a:bounds on RK}, and \Cref{a:score-derivative}. There exist constants $C_{\rm score} >0$, $C_{\rm RK}>0$, $\Delta_{\rm disc} \in (0,1)$, depending on $A_p, B_p, s, p, D,\widetilde K, \widetilde W$,  and $\gamma_1,\gamma_2$ depending on $p$, such that if we let $H_{\rm max} \coloneq \max_{0\leq i \leq M -1} (t_{i+1}-t_i) $ and take $H_{\rm max}\leq (d \log (2d))^{-1} \cdot \tau^{3} \cdot \Delta_{\rm disc}$, then
    \begin{align}\label{e: main L^1 error}
        \begin{split}
        \TV(\varrho_{T-\tau}, \dvarrho_{T-\tau}) &\leq \TV(\varrho_{0}, \dvarrho_{0}) + C_{\rm score}  \tau^{-2}     T^{\frac{3}{4}}      d^{\frac{7}{4}}   \cdot \varepsilon_{\rm score} ^{\frac{1}{2}}   
         +  C_{\rm RK}  T H_{\mathrm{max}} ^p d^{p+1}  \tau^{-\gamma_1} {\big(D+\sqrt{ \log d}\big)}^{\gamma_2}.
        \end{split}
    \end{align}
If we take the initialization to be the standard normal distribution $ \dvarrho_0=\cN(0, \id_d)$, then $\TV(\varrho_0, \dvarrho_0)\leq C_u e^{-T} \sqrt d D$, which is exponentially small in $T$.
\end{theorem}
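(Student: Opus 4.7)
The plan is to decompose the total error via the triangle inequality into three pieces: the initialization error $\TV(\varrho_0,\widetilde{\varrho}_0)$, a pure time discretization error, and a pure score matching error. To isolate the last two, I would introduce the \emph{artificial discrete flow} $\widehat{\varrho}_t$ of \eqref{e: p RK interpolation terms}, defined as the law of the $p$-th order Runge--Kutta iterates driven by the \emph{exact} score $\nabla \log q_{T-t}$ started from $\widetilde{\varrho}_0$. Since the exact reverse ODE is bijective and transports the initial mismatch isometrically in $\TV$, the initialization contribution survives unchanged as $\TV(\varrho_0,\widetilde{\varrho}_0)$ at the endpoint, and is controlled by \Cref{lemma: qt close to Gaussian}, yielding the $C_u e^{-T}\sqrt{d}\,D$ bound when $\widetilde{\varrho}_0=\cN(0,\id_d)$.

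For the time discretization term, I would carry out a local error analysis of the $p$-th order scheme applied to the probability flow ODE with the smooth exact velocity $V_t(x)=x+\nabla\log q_{T-t}(x)$. A Taylor expansion of the true flow against the RK update \eqref{eq:RK-update}--\eqref{eq:RK-update-k} produces a local truncation error of order $H_i^{p+1}$ weighted by mixed derivatives $\partial_t^j\partial_x^\alpha \log q_{T-t}$ with $|\alpha|+j\leq p+1$; here I would invoke the sharp derivative estimates from \Cref{s:prel-estim-q}, which depend only polynomially on $d$ and $\tau^{-1}$ rather than exponentially on a Lipschitz norm. To lift this trajectory-level error to a TV bound on densities and sum over the $M$ steps without incurring exponential factors, I would apply the transport (continuity-equation) inequality from \Cref{section: L^1 error of transport equation}. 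Integrating the divergence of the defect velocity against the exact density and telescoping yields precisely the factor $C_{\rm RK}\,T H_{\max}^p\, d^{p+1}\,\tau^{-\gamma_1}(D+\sqrt{\log d})^{\gamma_2}$.

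For the score matching term, the flows $\widehat{\varrho}_t$ and $\widetilde{\varrho}_t$ arise from the same RK scheme with only the vector field replaced, so I would introduce a one-parameter interpolation $\varrho_t^{(\lambda)}$, $\lambda\in[0,1]$, obtained by substituting $(1-\lambda)\nabla\log q_{T-t}+\lambda s_t$ at each stage, and bound $\TV(\widehat{\varrho}_{T-\tau},\widetilde{\varrho}_{T-\tau})\leq \int_0^1 \|\partial_\lambda \varrho_{T-\tau}^{(\lambda)}\|_{L^1}\,\dd\lambda$. The $\lambda$-derivative satisfies a linear transport-type equation whose source is driven by $s_t-\nabla\log q_{T-t}$. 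Since \Cref{a:score-estimate} provides only a weighted $L^2$ score bound, an $L^2$-to-$L^1$ conversion is needed; I would bridge this via the Gagliardo--Nirenberg interpolation of \Cref{section:score estimation error}. The exponents can be tuned so that only the first and second derivatives of $s_t$ are invoked, matching the budget in \Cref{a:score-derivative}, and this trade-off produces the square-root rate $\varepsilon_{\rm score}^{1/2}$ together with the prefactor $C_{\rm score}\,\tau^{-2} T^{3/4} d^{7/4}$ in \eqref{e: main L^1 error}.

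The hardest step is the score-matching bound under only $C^2$ regularity: a naive high-order Taylor expansion of the perturbation flow is unavailable, and a direct Gronwall argument would cost a factor exponential in the Lipschitz norm of $s_t$, destroying the polynomial-in-$d$ dependence. The resolution is the Gagliardo--Nirenberg interpolation, but its exponents must be chosen so that (i) derivatives of $s_t$ above order two are never invoked, (ii) the dimensional factor stays polynomial in $d$, and (iii) the remaining weights align with the $\sigma_{T-t}^2$ weighting in the $L^2$ hypothesis. Matching these three constraints simultaneously pins down the specific rates $d^{7/4}$ and $\varepsilon_{\rm score}^{1/2}$, and also motivates the step-size smallness condition $H_{\max}\leq(d\log(2d))^{-1}\tau^{3}\Delta_{\rm disc}$, which keeps the Jacobian of the RK map along the $\lambda$-interpolation non-degenerate so that the transport inequality closes.
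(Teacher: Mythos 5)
Your overall architecture matches the paper at a high level (three error sources, transport inequality from \Cref{section: L^1 error of transport equation} for the discretization term, interpolation plus Gagliardo--Nirenberg from \Cref{section:score estimation error} for the score term, sharp derivative/moment estimates from \Cref{s:prel-estim-q} to keep the dimension dependence polynomial), and your observations about why a naive Gronwall argument fails and why the Gagliardo--Nirenberg exponents yield $\varepsilon_{\rm score}^{1/2}$ are exactly the right motivations. However, there is a genuine gap in the way you set up the score-matching piece.

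You propose a \emph{global} interpolation family $\varrho^{(\lambda)}_t$, obtained by running the full RK flow from time $0$ to $t$ with the blended vector field $(1-\lambda)\nabla\log q_{T-t}+\lambda s_t$, and then bounding $\TV(\widehat\varrho_{T-\tau},\widetilde\varrho_{T-\tau})\leq\int_0^1\|\partial_\lambda\varrho^{(\lambda)}_{T-\tau}\|_{L^1}\,\dd\lambda$. The resulting source integrals are taken against $\varrho^{(\lambda)}_t$, which for $t>0$ is the pushforward of $\widetilde\varrho_0$ under $M\sim 1/H_{\max}$ composed discrete steps of a non-exact flow; it is \emph{not} $\varrho_t=q_{T-t}$, and the transport map relating the two is a long-time composite that is nowhere close to the identity. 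This is fatal, because everything you want to invoke next --- the weighted $L^2$ score bound in \Cref{a:score-estimate}, the derivative and moment bounds in \Cref{Lemma: Hessian estimates} and \Cref{lemma: moments of L^2 and infinity norms}, and the Gaussian-convolution cut-off arguments --- is stated for $\varrho_t$ and relies on the explicit Gaussian mollification structure of $q_{T-t}$. Total-variation closeness of $\varrho^{(\lambda)}_t$ to $\varrho_t$ does not control the polynomially-weighted integrals that appear after you apply Gagliardo--Nirenberg.

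The paper sidesteps this by doing the interpolation \emph{one step at a time}, and, crucially, by anchoring both the score-error term $I_2$ and the discretization term $I_3$ at the \emph{exact} density $\varrho_{t_i}$ rather than at the accumulated numerical density. Concretely, in \eqref{e: score error split 0} the recursion reads $\|\dvarrho_{t_{i+1}}-\varrho_{t_{i+1}}\|_{L^1}\leq I_1+I_2+I_3$, where $I_1=\|\dvarrho_{t_i}-\varrho_{t_i}\|_{L^1}$ carries the accumulated error (it is preserved because the same diffeomorphism $\dphi_H$ is applied to both densities), $I_2$ compares $(\dphi_H)_*\varrho_{t_i}$ with $(\baphi_H)_*\varrho_{t_i}$, and $I_3$ compares $(\baphi_H)_*\varrho_{t_i}$ with $\varrho_{t_{i+1}}$. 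The interpolation $\varphi_s=s\,\dphi_H^{-1}+(1-s)\,\baphi_H^{-1}$ used to estimate $I_2$ stays within $\mathcal O(H)$ of the identity, so the cut-off lemma (\Cref{lemma: score error cut-off general scheme}) and the change-of-variables to $\varrho_{t_i+c_jH}$ are available, landing exactly on the quantities controlled by \Cref{a:score-estimate}. If you rebuild your score-matching step as a per-step interpolation between the two single-step maps, evaluated against the exact $\varrho_{t_i}$, and then feed the result into the telescoping recursion (with the discretization term handled as you described), you recover the paper's argument. As written, though, your decomposition does not close.
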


\begin{remark}
The total variation bound in \cref{theorem: main L^1 theorem} simplifies to $\mathcal{O}( d^{7/4}\varepsilon_{\rm score}^{1/2} + d\cdot(dH_{\max})^p)$, where we treat quantities such as the total time $T$, the target distribution diameter  parameter $D$, and the stopping time parameter $\tau$ as constants. In the absence of score error ($\varepsilon_{\rm score}$), this bound further simplifies as $\mathcal{O}(d\cdot(dH_{\max})^p)$, which implies an iteration complexity of $\cO(d^{1+1/p} \varepsilon^{-1/p})$ to achieve a total variation accuracy of $\varepsilon$.

\end{remark}

\begin{remark}
  Although our bound in \eqref{e: main L^1 error} is established between the numerical solution $\dvarrho_{T-\tau}$ and reference density $\varrho_{T-\tau}$ at an earlier time, the Wasserstein 2-distance between $\varrho_{T-\tau}$ and the target distribution $\muast$, $W_2(\muast,\varrho_{T-\tau})$, tends to $0$ of order $\tau d$ as $\tau \to 0^+$ (see Remark 3.7 in \cite{huang2024convergence}). It is also standard that $W_2(\varrho_{T-\tau}, \dvarrho_{T-\tau})$ can be quantitatively estimated by $\TV(\varrho_{T-\tau}, \dvarrho_{T-\tau})$. We can use, for example, Proposition 7.10 in \cite{MR1964483} together with the facts like \Cref{lemma: qt close to Gaussian} and \Cref{lemma: moments of L^2 and infinity norms} to quantitatively estimate $W_2(\varrho_{T-\tau}, \dvarrho_{T-\tau})$ by $\TV(\varrho_{T-\tau}, \dvarrho_{T-\tau})$. Hence, we can also obtain a quantitative estimate for $W_2(\muast, \dvarrho_{T-\tau})$ by \Cref{theorem: main L^1 theorem}.
\end{remark}

\begin{remark}\label{rem:score_error}
The total variation error bound in \eqref{e: main L^1 error} consists of three components: the error from the initial condition, the score matching error, and the temporal discretization error. For the score matching error term, as discussed in Remark 3.6 in our previous work in \cite{huang2024convergence}, if we make assumptions on higher order derivatives of $s_t(x)$ like \Cref{a:score-derivative}, i.e., $s_t(x)$ is $C^k$ with controlled norms for its derivatives of order $\leq k$ for some $k \geq 2$, we can replace the exponent $\frac{1}{2}$ of $\varepsilon_{\rm score}$ with $1-\frac{1}{k}$ in \eqref{e: main L^1 error}. 
In our numerical experiments exploring the sharpness of this bound, we find that the total variation error scales linearly with $\varepsilon_{\rm score}$ (See \Cref{sec:num}), with a potentially more  favorable dependence on the dimension $d$ for the Gaussian mixture target density.
\end{remark}

\begin{remark}
    The technical assumption $H_{\rm max}\leq (d \log (2d))^{-1} \cdot \tau^{3} \cdot \Delta_{\rm disc}$ in the statement of \Cref{theorem: main L^1 theorem} is to ensure that the maps in the Runge-Kutta schemes \eqref{eq:RK-update}, when viewed as pushforward maps, are invertible. This invertibility is essential for estimating the distribution of $\dY_{t_{i+1}}$ from the distribution of $\dY_{t_i}$ via the map \eqref{eq:RK-update} reformulated as \eqref{e:phi_h_define}. 
\end{remark}

\begin{remark}
Compared to several previous studies on probability flow ODEs \cite{li2023towards, li2024unified, li2024sharp, li2024accelerating}, which require control over the difference between the derivatives of the true and estimated scores, our result in \Cref{a:score-derivative} only requires the boundedness of the first two derivatives of the estimated score. In \Cref{sec:num}, we numerically validate \Cref{a:score-derivative}, showing that both quantities remain well-bounded in practice, which supports our theoretical assumptions.
\end{remark}

In \Cref{s:GeneralProcess}, we extend \Cref{theorem: main L^1 theorem} to (exponential) Runge–Kutta schemes applied to arbitrary forward processes with general variance schedules. The technical analysis remains similar. For the general case, we can rewrite the (exponential) Runge-Kutta schemes as in \eqref{eq:RK-update-beta-rf} and \eqref{eq:Exp-RK-update-beta-rf}, which mirror the standard Runge–Kutta formulations in \eqref{eq:RK-update}, except that stage evaluations now involve $s_t$ instead of $\sV_t(x) = x + s_t(x)$. Since both $s_t(x)$ and $\sV_t(x) = x + s_t(x)$ satisfy the same technical assumptions required for \Cref{theorem: main L^1 theorem}, this allows us to generalize the result to \Cref{theorem: main L^1 theorem general}.

\section{Results for General Forward Process}
\label{s:GeneralProcess}
\subsection{General Forward Process}
In practical settings \cite{ho2020denoising,song2020score}, the following variance preserving stochastic process is commonly used: 
\begin{align}
\label{eq:OU-time}
	\de X_t = -\frac{1}{2}\beta_t X_t \de t + \sqrt{\beta_t} \de B_t, \qquad X_0 \sim \muast, \qquad 0 \leq t \leq T, 
\end{align}
where $\beta_t:\bR\rightarrow\bR$ is the variance schedule that controls the level of noise injected at each time step.
Similar to the OU process, this process \eqref{eq:OU-time} admits an analytical solution:
\begin{align*}
	X_t \overset{d}{=} \lambda_t X_0 + \sigma_t W, \qquad W \sim \cN(0, \id_d), 
\end{align*}
where  $\lambda_t$ and $\sigma_t$ satisfy
\begin{align*}
\frac{\dd \log \lambda_t}{\dd t} = -\frac{1}{2}\beta_t \qquad \textrm{and} \qquad \frac{\dd \sigma_t^2}{\dd t} - 2\frac{\dd \log \lambda_t}{\dd t } \sigma_t^2 = \beta_t.
\end{align*}
These yield the explicit expressions:
\begin{align}\label{e:general_sigma_t}
\lambda_t = e^{-\frac{1}{2} \int_0^t \beta_s \de s} \qquad \textrm{and} \qquad  \sigma_t  = \sqrt{1 - \lambda_t^2}.
\end{align}
The parameters play a key role in the score matching procedure described in~\cref{s:score_matching}.
A widely adopted choice for $\beta_t$ is the linear variance schedule: $\beta_t = \beta_{\rm min} + \frac{t}{T} (\beta_{\rm max} - \beta_{\rm min})$, where $\beta_{\rm min}$ and $\beta_{\rm max}$ represent the minimum and maximum noise levels. Under this schedule, the decay factor becomes $\lambda_t = e^{-\frac{1}{2}\Bigl(t \beta_{\rm min} + \frac{t^2}{2T}(\beta_{\rm max} - \beta_{\rm min})\Bigr)}$. The OU process \eqref{eq:OU} is a special case with $\beta_t = 2$.

The density $q_t$ of $X_t$ as in \eqref{eq:OU-time} satisfies
\begin{align*}
    \del_t q_t =\frac{\beta_t}{2} \nabla \cdot \left((x+\nabla \log q_t)q_t\right)=\frac{\beta_t}{2}  \nabla \cdot (\qU_t q_t), \quad 0 <  t\leq T,
\end{align*}
with $\qU_t(x) \coloneq x +  \nabla \log q_t(x)$. 
By denoting $\varrho_t  = q_{T - t}$, the time reversal process from time $0$ to $T$, satisfies the following partial differential equation (PDE):
\begin{align}\begin{split}
\label{e:reversep_general}
    \del_t \varrho_t =-\frac{\beta_{T-t}}{2}\nabla \cdot \bigl((x + \nabla \log q_{T - t}) \varrho_t\bigr) = -\frac{\beta_{T-t}}{2} \nabla \cdot (V_t \varrho_t), \quad 0 \leq t < T,
\end{split}\end{align}
with $V_t(x) \coloneq x +  \nabla \log q_{T - t}(x)$.

Then we can reformulate the reverse  PDE \eqref{e:reversep_general} into 
a deterministic dynamics in terms of an ordinary differential equation with velocity field $V_t$:
\begin{align}\label{eq:reverse-ode_general}
	\del_t Y_t = \frac{\beta_{T-t}}{2}\left( Y_t +  \nabla \log q_{T - t} (Y_t) \right) = \frac{\beta_{T-t}}{2} V_t(Y_t), \qquad Y_0 \sim q_T, \qquad 0 \leq t < T.
\end{align}
%known as the probability flow ODE. 
When $\nabla \log q_{T - t} (x)$, is approximated by the learned score function $s_t(x)$, the probability flow ODE \eqref{eq:reverse-ode_general} becomes 
\begin{align}\label{eq:reverse-ode-score_general}
	\del_t \sY_t = \frac{\beta_{T-t}}{2}\left( \sY_t +  s_t(\sY_t)\right) = \frac{\beta_{T-t}}{2} \sV_t(\sY_t), \qquad \sY_0 \sim \svarrho_0, 
\end{align}
where the velocity field becomes $\sV_t(x) \coloneq x + s_t(x)$.

\subsection{ODE Solver for General Forward Process}\label{s:general}
To solve the differential equation \eqref{eq:reverse-ode-score_general}, one can again apply the standard Runge-Kutta schemes introduced in \cref{s:time_integrator}.
\begin{align}
\label{eq:RK-update-beta}
\dY_{t_{i+1}} = \dY_{t_i} +  H_i \sum_{j=1}^s  b_j  \wt k_j(\dY_{t_i}),
\end{align}
where $ H_i=t_{i+1}-t_i$, and
\begin{equation}
\label{eq:RK-update-beta-k}
\begin{split}
    & \wt k_1(x) =\frac{\beta_{T-t_i - c_1  H_i}}{2} \sV_{t_i +  c_1  H_i}(x),\\
    & \wt k_2(x) = \frac{\beta_{T-t_i -  c_2  H_i}}{2}\sV_{t_i +  c_2  H_i}\bigl(x + ( a_{21} \wt k_1(x)) H_i\bigr),\\
    & \wt k_3(x) = \frac{\beta_{T-t_i - c_3  H_i}}{2}\sV_{t_i + c_3 H_i}\bigl(x + ( a_{31} \wt k_1(x) +  a_{32} \wt k_2(x))  H_i\bigr),\\
    &\qquad\qquad\qquad\vdots
    \\
   & \wt k_s(x) = \frac{\beta_{T-t_i -  c_s  H_i}}{2}\sV_{t_i +  c_s H_i}\bigl(x + ( a_{s1}\wt k_1(x) +  a_{s2} \wt k_2(x) + \cdots +  a_{s,s-1} \wt k_{s-1}(x))  H_i\bigr).
\end{split}
\end{equation}

However, standard Runge-Kutta schemes do not leverage the natural separation between the linear and nonlinear terms in the probability flow ODE~\eqref{eq:reverse-ode-score_general}. Exploiting this structure by integrating the linear term analytically is the central idea behind exponential Runge–Kutta schemes~\cite{cox2002exponential,hochbruck2005explicit,hochbruck2010exponential}, which yields more robust time integrators. This approach has been adopted in recent works~\cite{zhang2022fast, lu2022dpm} for solving probability flow ODEs and has demonstrated improved performance.  In practice, applying exponential Runge–Kutta schemes in this setting requires several important modifications.
First, classical exponential Runge–Kutta schemes~\cite{cox2002exponential,hochbruck2005explicit,hochbruck2010exponential} are typically designed for time-independent linear terms. In our case~\eqref{eq:reverse-ode-score_general}, the linear term is time-dependent due to the variance schedule, necessitating additional adjustments to handle this dependency. 
Second, to mitigate the potential singularity of the score function as $t\to T$, the reparameterization $\rscore_t(x) \coloneq s_t(x)\,\sigma_{T - t}$, used in score matching, is adopted.
The rescaled quantity $\rscore_t$ is typically smoother than $s_t$, especially near the terminal time, making it a more robust target for deriving order conditions.
Finally, all time steps, including intermediate stages,  should lie within the discrete time sequence $\bT$ used in score matching. 

To derive the enhanced exponential Runge-Kutta schemes for solving \eqref{eq:reverse-ode-score_general}, we begin by introducing a time change variable
\begin{align}\label{eq:t-zeta change of variable2}
\zeta(t)= \frac{1}{2}\int_{T-t}^{T} \beta_s \, \rd s ,
\end{align}
and we rewrite \eqref{eq:reverse-ode-score_general} as $
 \del_t \bigl(e^{-\zeta(t)}\sY_t\bigr) = e^{-\zeta(t)} \frac{\beta_{T-t}}{2} s_t(\sY_t).$ 
 This leads to the corresponding integral form for $\sY_{t}$:
\begin{align} 
\label{eq:exp-rk-intg}
     \sY_{t_{i+1}} &= e^{\zeta(t_{i+1}) - \zeta(t_i)}\sY_{t_{i}} + \int_{t_i}^{t_{i+1}} \frac{\beta_{T-t}}{2} e^{\zeta(t_{i+1})-\zeta(t)} s_t(\sY_t) \rd t \nonumber \\
     &= e^{\zeta(t_{i+1}) - \zeta(t_i)}\sY_{t_{i}} - \sigma_{T - t_{i+1}}e^{\alpha_{t_{i+1}}}\int_{t_i}^{t_{i+1}} \frac{\rd}{\rd t} e^{-\alpha_{t}} \rscore_t(\sY_t) \rd t,
\end{align}
where, in the last equation, we introduce the reparameterized score $\rscore_t(x) \coloneq \sigma_{T-t} s_{t}(x)$ and apply the change of variables from \cite{lu2022dpm}, defined by
\begin{equation}
\label{eq:t-alpha change of variable}
    \alpha_{t} = \zeta(t) - \log\sigma_{T-t}, \quad \textrm{satisfying} \quad \frac{\rd \alpha_t}{\rd t} = -\frac{\beta_{T-t}}{2\sigma_{T - t}^2}, 
    \quad \frac{\rd}{\rd t} e^{-\alpha_{t} } = -\frac{\beta_{T-t}}{2\sigma_{T - t}} e^{-\zeta(t)}.
\end{equation} 
The enhanced exponential Runge-Kutta schemes updates from $t_i$ to $t_{i+1}$ with step size $H_i=t_{i+1}-t_i$ according to (see detailed derivations in \cref{section:rk scheme})
\begin{align}
\label{eq:Exp-RK-update-beta}
\dY_{t_{i+1}} = e^{\zeta(t_{i+1}) - \zeta(t_i)}\dY_{t_i} + H_i \sum_{j=1}^s b_j(H_i) k_j(\dY_{t_i}),
\end{align}
where the stage evaluations are defined using the rescaled score $\rscore_t(x) = s_t(x)\,\sigma_{T - t}$ as follows:
\begin{equation}
\label{eq:Exp-RK-update-beta-k}
\begin{split}
    &k_1(x) = \rscore_{t_i + c_1 H_i}(e^{\zeta(t_{i} + c_1H_i) - \zeta(t_i)}x),\\
    &k_2(x) = \rscore_{t_i + c_2 H_i}\bigl(e^{\zeta(t_{i} + c_2H_i) - \zeta(t_i)}x + (a_{21}(H_i) k_1(x)) H_i\bigr),\\
    &k_3(x) = \rscore_{t_i + c_3 H_i}\bigl(e^{\zeta(t_{i} + c_3H_i) - \zeta(t_i)}x + (a_{31}(H_i) k_1(x) + a_{32} (H_i)k_2(x)) H_i\bigr),\\
    &\qquad\qquad\qquad\vdots
    \\
    &k_s(x) = \rscore_{t_i + c_s H_i}\bigl(e^{\zeta(t_{i} + c_sH_i) - \zeta(t_i)}x + (a_{s1}(H_i) k_1(x) + a_{s2}(H_i) k_2(x) + \cdots + a_{s,s-1}(H_i)k_{s-1}(x)) H_i\bigr).
\end{split}
\end{equation}
As in standard Runge-Kutta schemes, the lower triangular matrix $[a_{jk}]$, along with the coefficients $[b_j]$ and $[c_j]$, represent the Runge-Kutta matrix, weights and nodes. Typically, the nodes $c_j$ are constants, but the coefficients $a_{jk}(H_i)$ and $b_j(H_i)$ are functions of the step size $H_i$. The detailed coefficients for schemes up to third order are provided in \cref{tab:exp-butcher-specific}. Note that a fourth-order scheme requires at least five stages \cite{hochbruck2010exponential}, which we do not include here.
Exponential-type Runge-Kutta schemes of up to third order are widely used for solving the probability flow ODE, forming the backbone of the DPM solver~\cite{lu2022dpm}. However, in \cite{lu2022dpm}, the evaluation time points do not necessarily lie within the discrete time sequence $\bT$. In contrast, our construction~\cref{tab:exp-butcher-specific} allows the nodes $c_i$ to be chosen arbitrarily, in particular, uniformly over the interval $[0,1]$. By setting the step size $H_i$ to be a multiple of $\Delta t$, all evaluation times $t_i+c_jH_i$ are guaranteed to lie within $\bT$.

\begin{table}[h]
\centering
\small
\caption{
Butcher tableaus for the enhanced exponential Runge-Kutta schemes up to third order are shown from top to bottom. Here $\varphi_k(h) = \int_0^1 \frac{x^{k-1}}{(k-1)!}e^{h(1-x)}dx$, with explicit expressions for the first two cases given by
$\varphi_1(h) = \frac{e^{h}-1}{h}, \quad \varphi_2(h) = \frac{e^{h}-h-1}{h^2}$.
The third order scheme is defined using {\footnotesize $\gamma = \frac{3c_3^2 - 2c_3}{3c_2^2-2c_2}$} and {\footnotesize $a_{32} = \frac{\gamma\sigma_{T - t_{i} - c_2H_i}(\alpha_{t_{i}+ c_2H_{i}} - \alpha_{t_i}\bigr)^2\varphi_2\bigl(\alpha_{t_{i} + c_2H_{i}} - \alpha_{t_i}\bigr) + \sigma_{T - t_{i} - c_3H_i}\bigl(\alpha_{t_{i}+ c_3H_{i}} - \alpha_{t_i}\bigr)^2\varphi_2\bigl(\alpha_{t_{i} + c_3H_{i}} - \alpha_{t_i}\bigr)}{(\alpha_{t_{i}+ c_2H_{i}} - \alpha_{t_{i}})H_i}$}.
}
\begin{tabular}{c|c}
$0$ & $0$  \\
\hline
&  $\sigma_{T - t_{i+1}} \frac{\alpha_{t_{i+1}} - \alpha_{t_i}}{H_i} \varphi_1(\alpha_{t_{i+1}} - \alpha_{t_i})$
\end{tabular}
\\
\begin{tabular}{c|cc}
$0$ & $0$ & $0$ \\
$c_2$ & $\sigma_{T - t_{i} - c_2H_i} \frac{\alpha_{t_{i}+ c_2H_{i}} - \alpha_{t_i}}{H_i}\varphi_1\bigl(\alpha_{t_{i} + c_2H_{i}} - \alpha_{t_i}\bigr)$ & $0$ \\
\hline
&  $\sigma_{T - t_{i+1}}\frac{\alpha_{t_{i+1}} - \alpha_{t_i}}{H_i}\varphi_1\bigl(\alpha_{t_{i+1}} - \alpha_{t_i}\bigr) - b_2$ & $\sigma_{T - t_{i+1}}\frac{(\alpha_{t_{i+1}} - \alpha_{t_i})^2}{(\alpha_{t_{i}+c_2H_i} - \alpha_{t_i})H_i} \varphi_2(\alpha_{t_{i+1}} - \alpha_{t_i})$  
\end{tabular}
\\
\begin{tabular}{c|ccc}
$0$ & $0$ & $0$   & $0$ \\
$c_2$ & $\sigma_{T - t_{i} - c_2H_i}\frac{\alpha_{t_{i}+ c_2H_{i}} - \alpha_{t_i}}{H_i}\varphi_1\bigl(\alpha_{t_{i} + c_2H_{i}} - \alpha_{t_i}\bigr)$ & $0$ & $0$    \\
$c_3$ & $\sigma_{T - t_{i} - c_3H_i}\frac{\alpha_{t_{i}+ c_3H_{i}} - \alpha_{t_i}}{H_i}\varphi_1\bigl(\alpha_{t_{i} + c_3H_{i}} - \alpha_{t_i}\bigr) - a_{32}$ & $a_{32}$ &  $0$  \\
\hline
       & $\sigma_{T - t_{i+1}}\frac{\alpha_{t_{i+1}} - \alpha_{t_i}}{H_i}\varphi_1\bigl(\alpha_{t_{i+1}} - \alpha_{t_i}\bigr) - \gamma b_3 - b_3$  & $\gamma b_3$  & $\frac{\sigma_{T - t_{i+1}}
     (\alpha_{t_{i+1}} - \alpha_{t_i})^2\varphi_2(\alpha_{t_{i+1}} - \alpha_{t_i})
      }{\bigl(\gamma(\alpha_{t_{i}+ c_2H_{i}} - \alpha_{t_{i}}) + (\alpha_{t_{i}+ c_3H_{i}} - \alpha_{t_{i}})\bigr)H_i}$
\end{tabular}
\label{tab:exp-butcher-specific}
\end{table}

\subsection{Convergence Analysis} 
In this subsection, we analyze the convergence of the aforementioned (exponential) Runge–Kutta schemes applied to arbitrary forward processes with general variance schedules. 
We begin by presenting the necessary assumptions, followed by the main convergence result.

\begin{assumption}\label{a:bounds on beta_t}
There exists a large constant $C_\beta>0$, such that uniformly for $0\leq t\leq T$, $\beta_t$ is lower and upper bounded, and its derivatives are upper bounded: 
    \begin{align}\label{e:betatbound}
       C_\beta^{-1}\leq \beta_t\leq C_\beta,\quad  |\del_t^\ell \beta_t|\leq C_{\beta},\quad  1\leq \ell\leq p+1.
    \end{align}

\end{assumption}

\begin{theorem}\label{theorem: main L^1 theorem general}
    Let $\dvarrho_{t}$ be the distribution of the numerical solution $\dY_t$, produced by the Runge–Kutta schemes in \eqref{eq:RK-update-beta} or in  \eqref{eq:Exp-RK-update-beta}. Adopt \Cref{assumption:secon-moment}, \Cref{a:score-estimate},  \Cref{a:bounds on beta_t}, and \Cref{a:score-derivative}. There exist constants  $C_{\rm score} >0$, $C_{\rm RK}>0$,  $\Delta_{\rm disc} \in (0,1)$, depending on $C_\beta, s, p, D,\widetilde K, \widetilde W$,  and $\gamma_1,\gamma_2$ depending on $p$, such that if we let $H_{\rm max} \coloneq \max_{0\leq i \leq M -1} (t_{i+1}-t_i) $ and take $H_{\rm max}\leq (d\log (2d))^{-1} \cdot \tau^{3} \cdot \Delta_{\rm disc}$, then 
    \begin{align}
        \begin{split}
        \TV(\varrho_{T-\tau}, \dvarrho_{T-\tau}) &\leq \TV(\varrho_{0}, \dvarrho_{0}) + C_{\rm score}  \tau^{-2}  T^{\frac{3}{4}}      d^{\frac{7}{4}}   \cdot \varepsilon_{\rm score} ^{\frac{1}{2}}
        +  C_{\rm RK}  T H_{\mathrm{max}} ^p d^{p+1}  \tau^{-\gamma_1} {\big(D+\sqrt{ \log d}\big)}^{\gamma_2}.
        \end{split}
    \end{align}
If we take the initialization to be the standard normal distribution $\dvarrho_0=\cN(0, \id_d)$, then $\TV(\varrho_0, \dvarrho_0)\leq C_u e^{-T} \sqrt d D$, which is exponentially small in $T$.
\end{theorem}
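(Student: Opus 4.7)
The plan is to follow the reduction strategy flagged right after the theorem statement: rewrite both the standard Runge-Kutta scheme \eqref{eq:RK-update-beta}--\eqref{eq:RK-update-beta-k} and the exponential Runge-Kutta scheme \eqref{eq:Exp-RK-update-beta}--\eqref{eq:Exp-RK-update-beta-k} into a unified form that mirrors the structure of \eqref{eq:RK-update}--\eqref{eq:RK-update-k}, but in which every stage evaluation uses $s_t$ directly rather than $\sV_t(x)=x+s_t(x)$, and then invoke the proof of \Cref{theorem: main L^1 theorem} essentially verbatim. Concretely, I would absorb the linear (identity) piece of $\sV_t$ together with the scalar prefactor $\beta_{T-t}/2$ (in the standard case), or the factor $e^{\zeta(t_{i+1})-\zeta(t_i)}$ and the rescaling $\sigma_{T-t}$ (in the exponential case), into new $H_i$-dependent Runge-Kutta coefficients $\widetilde a_{jk}(H_i)$, $\widetilde b_j(H_i)$. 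Because \Cref{a:score-derivative} only controls $s_t$ and its first two derivatives, and because the OU analysis uses only these bounds plus the regularity of the RK coefficients \eqref{e:RKbound}, this reformulation is exactly what is needed: once in place, the entire analytic machinery transfers with $\sV_t$ replaced by $s_t$.

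The second step is to verify the preconditions of the OU proof for the reformulated coefficients. Under \Cref{a:bounds on beta_t}, $\beta_t$ and $\beta_t^{-1}$ are bounded with bounded derivatives up to order $p+1$; chain-rule differentiation through $\lambda_t=e^{-\frac12\int_0^t\beta_s \rd s}$, $\sigma_t=\sqrt{1-\lambda_t^2}$, $\zeta(t)$, $\alpha_t$, and the entire functions $\varphi_k$ yields $\tau^\ell|\partial_h^\ell \widetilde a_{jk}|\leq A_p$ and $\tau^\ell|\partial_h^\ell \widetilde b_j|\leq B_p$ for constants depending only on $C_\beta,s,p$, so \eqref{e:RKbound} is satisfied. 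I would also check that the one-step map $\dY_{t_{i+1}}=\phi_{H_i}(\dY_{t_i})$ remains a $C^2$-diffeomorphism for $H_{\max}\leq (d\log(2d))^{-1}\tau^3\Delta_{\rm disc}$, since its Jacobian is a small perturbation of either $\id_d$ (standard RK) or $e^{\zeta(t_{i+1})-\zeta(t_i)}\id_d$ (exponential RK), exactly as in the OU analysis; the requirement $t_i+c_jH_i\in\bT$ is built into \Cref{tab:exp-butcher-specific} and into the step-size conditions in \Cref{s:time_integrator}. With these ingredients in hand, the two-part decomposition from the OU proof applies unchanged: the transport inequality together with the score-derivative estimates controls the TV error between $\varrho_{T-\tau}$ and the artificial discrete flow (pure time-discretization), and the Gagliardo--Nirenberg interpolation argument handles the error between the artificial flow and $\dvarrho_{T-\tau}$ (score-matching contribution).

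The main obstacle is the exponential Runge-Kutta case, where the coefficients in \Cref{tab:exp-butcher-specific} are genuinely non-constant smooth functions of $H_i$ through $\varphi_k(\alpha_{t_{i+1}}-\alpha_{t_i})$ and similar compositions, and where the natural variable updated in \eqref{eq:exp-rk-intg} is $\rscore_t=\sigma_{T-t}s_t$ rather than $s_t$ itself. Converting back to $s_t$ introduces an extra $\sigma_{T-t}^{-1}$ factor that could degenerate near the terminal time; this is absorbed by the stopping parameter $\tau$, since \Cref{a:bounds on beta_t} implies $\sigma_{T-t}\gtrsim \sqrt{\tau}$ on the integration domain, and therefore only enlarges the exponents $\gamma_1,\gamma_2$ by a $p$-dependent amount. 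Finally, the initialization estimate $\TV(\varrho_0,\dvarrho_0)\leq C_u e^{-T}\sqrt d\, D$ transfers from \Cref{theorem: main L^1 theorem} because $\lambda_T=\exp\bigl(-\tfrac12\int_0^T\beta_s\,\rd s\bigr)\leq e^{-T/(2C_\beta)}$ is still exponentially small under \Cref{a:bounds on beta_t}, so the terminal density of the general forward process collapses to $\cN(0,\id_d)$ exponentially fast in $T$, exactly as in the OU case.
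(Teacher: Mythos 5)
Your proposal matches the paper's own proof essentially step for step: you reformulate both \eqref{eq:RK-update-beta} and \eqref{eq:Exp-RK-update-beta} into the form \eqref{eq:RK-update-beta-rf} and \eqref{eq:Exp-RK-update-beta-rf} by absorbing the $\beta_t$ and $\sigma_{T-t}$ (resp.\ $e^{\zeta(t_{i+1})-\zeta(t_i)}$) factors into $H_i$-dependent coefficients $\check a_{jk}$, $\check b_j$, verify that these satisfy \Cref{a:bounds on RK} with constants depending on $C_\beta,s,p$ (which is precisely \Cref{lem:coefficients}), observe that $s_t$ inherits the same $C^2$ bounds as $\sV_t=x+s_t$ under \Cref{a:score-derivative}, and then invoke the OU-case argument verbatim; the initialization bound transfers because $\lambda_T\leq e^{-T/(2C_\beta)}$. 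The only imprecision is a cosmetic one: the $\sigma_{T-t}^{-1}$ factors that the stopping time $\tau$ must absorb arise not from ``converting back to $s_t$'' (the reformulation multiplies by $\sigma_{T-t}$, it does not divide) but from differentiating $\sigma_{T-t}$, $\alpha_t$, and $\varphi_k(\alpha_{t_{i}+c_jH_i}-\alpha_{t_i})$ with respect to $h$ when verifying \eqref{e:derbound} — this is what produces the $\tau^{-\ell}$ scaling in \Cref{a:bounds on RK}, exactly as in the paper's \Cref{s:coefficient_est}.
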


The overall proof strategy remains similar to that used for the OU process in \cref{theorem: main L^1 theorem}, although the general case requires additional estimates to account for the influence of the variance schedule.
The key idea is to rewrite both the standard Runge-Kutta schemes \eqref{eq:RK-update-beta} and the enhanced exponential Runge-Kutta schemes \eqref{eq:Exp-RK-update-beta}
as some form of standard Runge-Kutta schemes in \eqref{eq:RK-update} and \eqref{eq:RK-update-k}. The primary difference lies in the stage evaluations, which now involve $s_t(x)$ as shown in \eqref{eq:Exp-RK-update-beta-k-rf} rather than $\sV_t(x) = x + s_t(x)$, due to the analytical treatment of the linear term.
Nevertheless, both $s_t(x)$ and $\sV_t(x)$ satisfy the same regularity assumptions in the proofs. Namely, \Cref{p:ABCD} holds for $s_t(x)$ and statements in \Cref{s:prel-estim-q} holds for the forward density of the process \eqref{eq:OU-time}. Also, we verify \Cref{a:bounds on RK} for Runge-Kutta schemes as in \Cref{tab:exp-butcher-specific} in \Cref{lem:coefficients} when we prove \Cref{theorem: main L^1 theorem general}. Thus, the convergence proof proceeds in a similar manner, and we postpone the proof of \Cref{theorem: main L^1 theorem general} to the end of \Cref{Section: Discrete ODE}.

%-------------------------------

\section{Discrete ODE Flow and Proofs of \Cref{theorem: main L^1 theorem} and \Cref{theorem: main L^1 theorem general}}\label{Section: Discrete ODE}

In this section, we first outline the proof of \Cref{theorem: main L^1 theorem} for the $p$-th order standard Runge-Kutta schemes introduced in \eqref{eq:RK-update} and \eqref{eq:RK-update-k}. At the end of this section, we sketch the proof of \Cref{theorem: main L^1 theorem general} for the (exponential) Runge-Kutta schemes introduced in \eqref{eq:RK-update-beta} and \eqref{eq:Exp-RK-update-beta} for the general forward process.

We start with the proof of \Cref{theorem: main L^1 theorem}. We analyze these Runge-Kutta schemes over a single time step. Recall that in \Cref{s:time_integrator}, we introduced the discretization of the time interval $[0,\,T-\tau]$: $0 = t_0 < t_1< \cdots < t_M  = T-\tau$, and we introduced $H_i=t_{i+1}-t_i$. We remark that we do not require each $H_i$ to be the same in the proof for \Cref{theorem: main L^1 theorem}. A key step in the proof (see \Cref{t:RK_1step}) is to establish a bound on the total variation distance between $\widetilde{Y}_{t_{i+1}}$ and $Y_{t_{i+1}}$. This bound is derived in terms of the total variation distance between $\widetilde{Y}_{t_i}$ and $Y_{t_i}$, with two contributing error terms: 
\begin{itemize}
    \item A discretization error of order $\mathcal{O}(H_i^{p+1})$ arising from the Runge-Kutta method.
    \item An additional error due to the score approximation.
\end{itemize}
We remark that the proof for \Cref{t:RK_1step} does not require those $t_i + c_j H_i$'s to be uniform in $[t_i , t_{i+1}]$ as in \eqref{e:defbT}. 

\begin{theorem}\label{t:RK_1step}
     Adopt \Cref{assumption:secon-moment}, \Cref{a:bounds on RK} and \Cref{a:score-derivative}. We fix $t=t_i$ and $H=t_{i+1}-t_i$. We denote the 
 $L^2$ score matching error at times $\{t+c_1H, t+c_2 H, t+c_3 H,\cdots, t+c_s H\}$ as
\begin{align}\label{e:score_bound}
		 \varepsilon_{\rm score} ^2(t) \coloneq \sum_{j=1}^s \sigma_{T-(t+c_j H)}^2 \int_{\bR^d}   \left\| s_{{t+c_j H}}(x) -  \nabla \log\varrho_{{t+c_j H}}(x)\right\|_2^2 \varrho_{{t+c_j H}}(x) \rd x.
\end{align}
Then, there is a constant $\Delta_{\rm disc} \in (0,1)$ depending on $A_p, B_p, s, p, D,\widetilde K, \widetilde W$, such that if we further assume that $H \leq (d\log (2d))^{-1} \cdot \tau^{3} \cdot \Delta_{\rm disc}$,
we can obtain that the total variation distance between the law of $\widetilde Y_{t+H}$ and $Y_{t+H}$ can be estimated by
 \begin{align}\label{eq:main local time estimate}
        \begin{split}
        &\int_{\mathbb{R}^d} |\dvarrho_{t+H}(x)-\varrho_{t+H}(x)| \de x
        \leq\int_{\mathbb{R}^d} |\dvarrho_{t}(x)-\varrho_{t}(x)| \de x+ \cE_{\rm score}+\cE_{\rm RK} ,
        \end{split}
    \end{align}
where $\cE_{\rm score}$ is related to the score error from \eqref{e:score_bound}, and $\cE_{\rm RK}$ is produced by $p$-th Runge-Kutta on the interval, which is of order $\cO(H^{p+1})$. More precisely, according to \Cref{Prop: Conclusion second error term} and \Cref{Prop: Conclusion third error term},
    \begin{align}\label{e:two_error}
        \begin{split}
            &\cE_{\rm score} = C_{\rm score}  \tau^{-2}  H  d^{\frac{7}{4}}   \cdot (\varepsilon_{\rm score}(t)) ^{\frac{1}{2}}, \quad \cE_{\rm RK} = C_{\rm RK}  H^{p+1} d^{p+1}  \tau^{-\gamma_1} {\big(D+\sqrt{ \log d}\big)}^{\gamma_2}, 
        \end{split}
    \end{align}
with $C_{\rm score}>0, C_{\rm RK} >0$ depending on $A_p, B_p, s, p, D,\widetilde K, \widetilde W$, and $\gamma_1,\gamma_2$ depending on $p$.
\end{theorem}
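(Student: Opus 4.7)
The plan is to prove Theorem~\ref{t:RK_1step} by introducing an intermediate \emph{artificial discrete flow} that applies the $p$-th order Runge-Kutta scheme to the \emph{exact} velocity field $V_t(x) = x + \nabla \log q_{T-t}(x)$ (rather than the learned $\sV_t$). Denote by $\widehat{Y}_{t+H}$ the one-step output of this artificial scheme started from $\widetilde{Y}_t$. By the triangle inequality for TV,
\begin{equation*}
\TV(\mathrm{Law}(\widetilde{Y}_{t+H}), \mathrm{Law}(Y_{t+H})) \leq \TV(\mathrm{Law}(\widetilde{Y}_t),\mathrm{Law}(Y_t)) + \underbrace{\TV(\mathrm{Law}(\widehat{Y}_{t+H}),\mathrm{Law}(Y_{t+H}))}_{\text{RK error, no score error}} + \underbrace{\TV(\mathrm{Law}(\widetilde{Y}_{t+H}),\mathrm{Law}(\widehat{Y}_{t+H}))}_{\text{score error}},
\end{equation*}
after first passing TV through the flow map of the true reverse ODE (which is measure-preserving modulo its divergence, handled by the transport inequality from Section~\ref{section: L^1 error of transport equation}).

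For the \textbf{RK discretization term}, I would exploit the fact that both the exact reverse flow and the artificial discrete flow share the same velocity field $V_t$, so classical local truncation error analysis applies: Taylor expand each stage $\wt k_j$ around $t_i$ and compare with the Taylor expansion of the exact flow through time $H$. Matching terms up to order $p$ is exactly the Runge-Kutta order condition (\Cref{a:bounds on RK} on $a_{jk}, b_j$ gives the needed uniform control of coefficients), leaving a remainder $O(H^{p+1})$ that depends on spatial and temporal derivatives of $\log q_{T-t}$ up to order $p+1$. Here I would invoke the sharp score derivative estimates from \Cref{s:prel-estim-q} (bounding $\partial_x^\alpha \partial_t^k \log q_{T-t}$ in appropriate $L^s$ norms against powers of $\tau^{-1}$ and $D + \sqrt{\log d}$), together with the transport inequality from \Cref{section: L^1 error of transport equation} that converts a pointwise velocity discrepancy into a TV bound. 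This should produce the advertised $\cE_{\rm RK} = C_{\rm RK} H^{p+1} d^{p+1} \tau^{-\gamma_1}(D + \sqrt{\log d})^{\gamma_2}$.

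For the \textbf{score-approximation term}, I would interpolate between the artificial discrete flow (using $V_t$) and the actual numerical flow (using $\sV_t$) by introducing a one-parameter family of velocity fields $V_t^\lambda = (1-\lambda) V_t + \lambda \sV_t$, $\lambda \in [0,1]$, and running the same Runge-Kutta recipe. Differentiating the pushforward in $\lambda$ yields a transport equation whose source is $\sV_t - V_t = s_t - \nabla \log q_{T-t}$, weighted along the stage nodes $t + c_j H$. Integrating in $\lambda$, applying Cauchy--Schwarz, and using the Gagliardo--Nirenberg interpolation inequality (as in \Cref{section:score estimation error}) converts the weighted $L^2$ score error from \eqref{e:score_bound} into an $L^1$ bound, with the $\sigma_{T-t}^2$ weights absorbing the singular factors near the terminal time. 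This is what produces the exponent $\tfrac{1}{2}$ on $\varepsilon_{\rm score}(t)$ and the $d^{7/4}$ factor in $\cE_{\rm score}$.

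The main obstacle, and the reason for the technical assumption $H \leq (d \log(2d))^{-1} \tau^3 \Delta_{\rm disc}$, is that every one of these steps compares distributions through a pushforward by the RK map $x \mapsto x + H \sum_j b_j \wt k_j(x)$. For TV bounds via change of variables, this map must be a diffeomorphism and its Jacobian must be uniformly controlled. Using \Cref{a:score-derivative} to estimate $\nabla s_t$ and $\nabla^2 s_t$ together with the iterated structure of \eqref{eq:RK-update-k}, each stage Jacobian is $\id + O(H \tau^{-2})$, so requiring $H \tau^{-2}$ to be sufficiently small (more precisely of order $(d \log d)^{-1} \tau$, to absorb the worst-case dimensional blow-up of Gagliardo--Nirenberg and of the Hessian bounds) yields invertibility with a Jacobian bounded away from $0$ and $\infty$. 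Carrying these constants through the interpolation argument while keeping the final exponents sharp is the delicate bookkeeping that produces the stated constants $C_{\rm score}, C_{\rm RK}, \gamma_1, \gamma_2$.
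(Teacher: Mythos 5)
Your overall blueprint — decompose via triangle inequality into an initial-TV term, a score-approximation term, and a Runge--Kutta discretization term; introduce an ``artificial discrete flow'' that runs the RK recipe on the true velocity field; use an interpolation argument for the score term; invoke Gagliardo--Nirenberg to produce the $\varepsilon_{\rm score}^{1/2}$ exponent; use the transport inequality of \Cref{section: L^1 error of transport equation} for the RK term; and control invertibility of the RK pushforward via a smallness condition on $H$ — is the same skeleton the paper uses. The paper's $\baM_t$, $\baphi_H$ are your artificial discrete flow; its $I_1, I_2, I_3$ decomposition in \eqref{e: score error split 0}–\eqref{e:defI3} is your three terms.

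There is, however, a genuine gap in your triangle-inequality chain, and it matters. You define $\widehat{Y}_{t+H}$ as the artificial RK step \emph{started from $\widetilde Y_t$}, so $\mathrm{Law}(\widehat{Y}_{t+H}) = (\baphi_H)_*\dvarrho_t$, and both your score-error term $\TV\bigl((\dphi_H)_*\dvarrho_t, (\baphi_H)_*\dvarrho_t\bigr)$ and your RK-error term $\TV\bigl((\baphi_H)_*\dvarrho_t, (\phi_H)_*\dvarrho_t\bigr)$ are anchored at the \emph{numerical} density $\dvarrho_t$. Every estimate needed to bound those terms — the moment bounds of \Cref{lemma: moments of L^2 and infinity norms}, the score-derivative bounds of \Cref{Lemma: Hessian estimates}, the exponential-tail cut-off of \Cref{lemma: score error cut-off general scheme}, and the $L^2(\varrho_t)$ score-matching \Cref{a:score-estimate} — is stated for the \emph{exact} reverse density $\varrho_t$, which is a Gaussian smoothing of a compactly supported measure. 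Nothing in the assumptions gives you comparable control over $\dvarrho_t$. The paper avoids this by ordering the triangle inequality so that the base-measure change happens \emph{first}, through the same map $\dphi_H$: $I_1 = \TV\bigl((\dphi_H)_*\dvarrho_t,(\dphi_H)_*\varrho_t\bigr) = \TV(\dvarrho_t,\varrho_t)$, and then $I_2 = \TV\bigl((\dphi_H)_*\varrho_t,(\baphi_H)_*\varrho_t\bigr)$, $I_3 = \TV\bigl((\baphi_H)_*\varrho_t,(\phi_H)_*\varrho_t\bigr)$ are both weighted by $\varrho_t$. Rearranging your chain in this way is necessary; as written, the two error terms cannot be controlled by the available lemmas.

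Two smaller differences, not errors but worth flagging. First, for the score term you propose to interpolate the velocity fields $V_t^\lambda = (1-\lambda)V_t + \lambda \sV_t$ and re-run the RK recipe, whereas the paper instead interpolates the \emph{inverse} pushforward maps directly, $\varphi_s = s\dphi_H^{-1} + (1-s)\baphi_H^{-1}$ (\Cref{Section: second term conclusion}). The paper's choice is affine in $s$, so $\partial_s\varphi_s$ is a fixed vector field and the $s$-derivative of the pushforward density (a determinant and a composition) is algebraically simple; a velocity-field interpolation produces a map that is polynomial of degree $s$ in $\lambda$ due to the nested stage structure, requiring more bookkeeping, though it could plausibly be made to work. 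Second, for the RK term you gesture at ``classical local truncation error analysis'' plus the transport inequality; the missing piece is the construction of an effective time-dependent velocity $\overline V_{t+h}$ via $F_h(x) = x + h\sum_j b_j(h)k_j(x)$ and $\overline V_{t+h} = (\partial_h F_h)\circ F_h^{-1}$ (as in \eqref{e:discreteODE}), which is what actually puts the discrete pushforward into the continuity-equation framework where \Cref{theorem: L^1 error} applies. Without this device, a pointwise local truncation error does not directly become a TV bound.
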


%------------------------------

\subsection{Preliminaries: p-th Order Runge-Kutta}\label{s:p-th-RK}
In the remaining subsections, we adopt notations from \Cref{t:RK_1step}, namely $t=t_i$ and $H=t_{i+1}-t_i$. 
For one step of the $p$-th order Runge-Kutta scheme introduced in \eqref{eq:RK-update} and \eqref{eq:RK-update-k}, we can write it in the form
\begin{align}\label{e:deftMt}
    \dY_{t+H}=\dY_t +H \dM_t(\widetilde Y_t;H),\quad \dM_t(x;H)=\sum_{j=1}^s b_j  \widetilde k_j(x).
\end{align}
where $\{b_j, \widetilde k_j(x)\}_{1\leq j\leq s}$ are as defined in \eqref{eq:RK-update-k}, and $\dM_t(x;H)$ in \eqref{eq:RK-update} depends on the approximate score at times $\{t+c_1 H, t+c_1 H,\cdots, t+c_s H\}$ and coefficients $a_{jk}$ from $p$-th order Runge-Kutta method. We remark that in the definition of $\dM_t(x;H)$ by \eqref{eq:RK-update} and \eqref{eq:RK-update-k}, because $H=t_{i+1}-t_i$ is fixed in \Cref{t:RK_1step}, we can view the coefficients $b_j = b_j(H)$ and $a_{jk} = a_{jk}(H)$ as constants. 

We further introduce the notations: for $0\leq h\leq H$ 
\begin{align}\label{e:phi_h_define}
  \dphi_h(x) \coloneq x + h \dM_t(x;H).
\end{align}
Let us first assume that for all $0\leq h\leq H$, $\dphi_h$ is invertible, which will be verified later in \Cref{lemma: phi_h invertible}. With the above notations \eqref{e:phi_h_define}, we can rewrite \eqref{e:deftMt} as $\dY_{t+H}=\dphi_H(\dY_t)$.
Given the law $\dvarrho_{t}$ of $\dY_t$,  we then define $\dvarrho_{t+h}$ as the pushforward law by the map $\dphi_h$:
    \begin{align}\label{e:trhoth}
        \dvarrho_{t+h} (x)\coloneq \dvarrho_{t} (\dphi_h ^{-1}(x)) \cdot | \det[ \nabla (\dphi_h ^{-1}(x))]|.
    \end{align}
Then the law of $\dY_{t+H}=\dphi_H(\dY_t)$ is $ \dvarrho_{t+H} (x)$.

We use the notation $\phi_h(x)$ to denote the solution of the continuous flow \eqref{eq:reverse-ode} 
\begin{align*}
    \del_h \phi_h(x)=V_{t+h}(\phi_h(x)),
\end{align*} 
with $\phi_{0}(x) = x$ for $x \in \mathbb{R}^d$. 
Given the law $\varrho_t$ of $Y_t$,  the law $\varrho_{t+h}$ of $Y_{t+h}$ is the pushforward law by the map $\phi_h$:
    \begin{align}\label{e:rhoth}
    \varrho_{t+h} (x)\coloneq \varrho_{t} (\phi_h ^{-1}(x)) \cdot|  \det[ \nabla (\phi_h ^{-1}(x))]|.
    \end{align}
In particular, the law of $Y_{t+H}$ is given by $\varrho_{t+H} (x) = \varrho_{t} (\phi_H ^{-1}(x)) \cdot  |\det[ \nabla (\phi_H ^{-1}(x))]|$. Then we can formally define $M_t (Y_t;H)$, so that the solution $Y_{t+H}$ of the continuous flow \eqref{eq:reverse-ode} can also be written in the form of \eqref{e:deftMt}
\begin{align*}
    Y_{t+H}=Y_t+H M_t (Y_t;H),\quad M_t(x;H)=\frac{\phi_H(x)-x}{H}.
\end{align*}

Analogues to \eqref{e:deftMt}, we introduce  $\baM_t(x;H)$, which is defined in the same way as $\dM_t(x;H)$, but using  $V_t(x)=x+\nabla \log \varrho_t(x)$ from the true score instead of  $\sV_t(x)$. We let 
\begin{align}\label{e: score error p RK method}
\baM_{t}(x;H) \coloneq \sum_{j=1}^s b_j \overline k_j(x) ,
\end{align}
with 
\begin{equation}\label{e: p RK interpolation terms}
\begin{split}
    &\overline k_1(x) = V_{t + c_1 H}\bigl(x\bigr),\\
    & \overline k_2(x)  = V_{t + c_2 H}\bigl(x + (a_{21 } \overline k_1(x) ) H\bigr),\\
    & \overline k_3 (x) = V_{t + c_3 H}\bigl(x + (a_{31} \overline k_1(x)  + a_{32} \overline k_2(x) ) H\bigr),\\
    &\qquad\qquad\qquad\vdots
    \\
    &\overline k_s(x)  = V_{t + c_s H}\bigl(x + (a_{s1} \overline k_1(x)  + a_{s2} \overline k_2(x)  + \cdots + a_{s,s-1} \overline k_{s-1}(x) ) H\bigr).
\end{split}
\end{equation}
In the above \eqref{e: score error p RK method} and \eqref{e: p RK interpolation terms}, the coefficients $b_j = b_j(H)$ and $a_{jk} = a_{jk}(H)$ are the same as those we used to define $\dM_t(x;H)$ in \eqref{e:deftMt}.
We can similarly define 
    \begin{align*}
        \baphi_h(x) \coloneq x + h \baM_t(x;H).
    \end{align*}
By \Cref{lemma: phi_h invertible}, let us also assume that for all $0\leq h\leq H$, $\baphi_h$ is invertible. Similar to \eqref{e:rhoth}, given the law $\varrho_t$ of $Y_t$,  we then define $\overline \varrho_{t+H}$ as the pushforward law by the map $\baphi_H$:
    \begin{align}\label{e:brhoth}
      \overline \varrho_{t+H} (x)\coloneq \varrho_{t} (\baphi_H ^{-1}(x)) \cdot | \det[ \nabla (\baphi_H ^{-1}(x))]|.
    \end{align}

For \eqref{eq:main local time estimate} in \Cref{t:RK_1step}, we need to estimate the total variation distance of $\dvarrho_{t+H},  \varrho_{t+H}$. Using \eqref{e:trhoth} and \eqref{e:rhoth}, we can rewrite it as
    \begin{align}\label{e: score error split 01}
        \begin{split}
        &\phantom{{}={}}\int_{\mathbb{R}^d} \left|\dvarrho_{t+H}(x)-\varrho_{t+H}(x)\right| \de x
        \\ &= \int_{\mathbb{R}^d} \left|\dvarrho_{t} (\dphi_H ^{-1}(x)) \cdot | \det[ \nabla (\dphi_H ^{-1}(x))]| -  \varrho_{t} (\phi_H ^{-1}(x)) \cdot | \det[ \nabla (\phi_H ^{-1}(x))]| \right| \de x.
        \end{split}
    \end{align}

By triangle inequality, we can further bound \eqref{e: score error split 01} as
\begin{align}\label{e: score error split 0}
    \eqref{e: score error split 01}\leq I_1+I_2+I_3.
\end{align}
Here 
\begin{align}\begin{split}\label{e:defI1}
    I_1 &\coloneq \int_{\mathbb{R}^d} \left|\dvarrho_{t} (\dphi_H ^{-1}(x)) \cdot | \det[ \nabla (\dphi_H ^{-1}(x))]| -  \varrho_{t} (\dphi_H ^{-1}(x)) \cdot | \det[ \nabla (\dphi_H ^{-1}(x))]| \right| \de x\\
    &=\int_{\mathbb{R}^d} |\dvarrho_{t}(x)-\varrho_{t}(x)|\de x,
\end{split}\end{align}
where the last line follows from change of variables, and is the total variation distance of $\dvarrho_t, \varrho_t$. The other two terms $I_2,I_3$ are 
\begin{align}\label{e:defI2}
    I_2 \coloneq \int_{\mathbb{R}^d} \left| \varrho_{t} (\dphi_H ^{-1}(x)) \cdot | \det[ \nabla (\dphi_H ^{-1}(x))]| -  \varrho_{t} (\baphi_H ^{-1}(x)) \cdot | \det[ \nabla (\baphi_H ^{-1}(x))]| \right| \de x,
\end{align}
and 
\begin{align}\begin{split}\label{e:defI3}
    I_3
    & \coloneq \int_{\mathbb{R}^d} \left| \varrho_{t} (\baphi_H ^{-1}(x)) \cdot | \det[ \nabla (\baphi_H ^{-1}(x))]| -  \varrho_{t} (\phi_H ^{-1}(x)) \cdot | \det[ \nabla (\phi_H ^{-1}(x))]| \right| \de x\\
    &=\int_{\bR^d} |\overline\varrho_{t+H}(x)-\varrho_{t+H}(x)|\rd x,
\end{split}\end{align}
where the last line follows from \eqref{e:brhoth} and \eqref{e:rhoth}, and it is the total variation distance between  $\overline\varrho_{t+H}$ and $\varrho_{t+H}$.
Intuitively, $I_1$ captures the initial distribution error; $I_2$ reflects the score matching error, since $\dphi_H$ and $\baphi_H$ correspond to the Runge-Kutta updates with and without score error; and $I_3$ represents the numerical discretization error of the Runge-Kutta scheme.

In the rest of the paper, for simplicity of notations, if the context is clear, we simply write $ M_t(x;H), \dM_t(x;H), \baM_t(x;H)$ as $M_t(x), \dM_t(x), \baM_t(x)$.

%----------------------------------

\subsection{Proof Outline of \Cref{t:RK_1step}}
\label{s:proof_RK_1step}
In this section, we outline the proof of \Cref{t:RK_1step}. By the decomposition \eqref{e: score error split 0}, the claim \eqref{eq:main local time estimate} boils down to  estimate $I_2, I_3$ as in \eqref{e:defI2} and \eqref{e:defI3}. In \Cref{Prop: Conclusion second error term} we show that $I_2\leq \cE_{\rm score}$ and in \Cref{Prop: Conclusion third error term} we show that $I_3\leq \cE_{\rm RK}$ as given in \eqref{e:two_error}. 

Before presenting the estimates for $I_2$ and $I_3$, we first state a proposition that gathers several useful estimates for the Runge-Kutta map functions $V_{t+h}(x)$, $\sV_{t+h}(x)$, $\widetilde{M}_t(x)$, and $\overline{M}_t(x)$, 
and ensures that for all $0\leq h\leq H$, $\dphi_h$ and $\baphi_h$ are invertible.
These estimates will be repeatedly applied throughout the remaining proofs.
The proof of \Cref{p:ABCD} is deferred to \Cref{Section: Constant Convention}.

\begin{proposition}\label{p:ABCD}
Adopt \Cref{assumption:secon-moment}, \Cref{a:bounds on RK} and \Cref{a:score-derivative}, the following holds
\begin{enumerate}
    \item 
\label{i:Vbound}
We recall $D$ from \Cref{assumption:secon-moment} and $\widetilde W, \widetilde K$ from \Cref{a:score-derivative}, and introduce
\begin{align}\label{e:tbound1}
 \widetilde W ^{(0)} 
  = \sqrt{d} \widetilde W \tau^{-1},  
  \widetilde W ^{(1)} 
  =  1+\widetilde W \tau^{-1},
   \widetilde K ^{(1)}  = 1+\widetilde K \tau^{-2},  \widetilde K ^{(2)}  = \widetilde K \tau^{-3}, \widetilde L^{(1)}=1+d\widetilde K \tau^{-2},
\end{align}
and
\begin{align}\label{e:tbound2}
            \overline W ^{(0)} = 2\tau^{-1} \sqrt{d}D, \overline W ^{(1)} = 2\tau^{-1}, \overline K ^{(1)}= 2D^2 \tau^{-2}, \overline K ^{(2)} = 24 D^3 \tau^{-3},\overline L ^{(1)} = 2dD^2 \tau^{-2}.
        \end{align}
%There exist constants $ \overline W ^{(0)}$,$  \overline W ^{(1)}$, $\overline  L^{(1)}$, $ \overline K ^{(1)}$, $ \overline K ^{(2)}$,  $\widetilde  W ^{(0)}$,$  \widetilde W ^{(1)}$,$ \widetilde L^{(1)}$, $ \widetilde K ^{(1)}$ and $ \widetilde K ^{(2)}$, which depend on the constants in \Cref{assumption:secon-moment}, \Cref{a:score-derivative} and $\tau$,
Then for any $x \in \mathbb{R}^d$ and any $h\in [0,H]$ such that $0\leq t+h\leq T-\tau$, the following holds: $\|V_{t+h}(x)\|_2 \leq \overline W ^{(0)} + \overline W ^{(1)} \|x\|_2$, $\|\nabla V_{t+h}(x)\|_{op} \leq \overline L ^{(1)}$, $\|\nabla V_{t+h} (x)\|_{\infty} \leq \overline K ^{(1)}$, and $\|\nabla ^2 V_{t+h} (x)\|_{\infty} \leq \overline K ^{(2)}$. Similarly, we have that $\|\sV_{t+h}(x)\|_2 \leq \widetilde W ^{(0)} + \widetilde W ^{(1)} \|x\|_2$, $\|\nabla \sV_{t+h}(x)\|_{op} \leq \widetilde L ^{(1)}$, $\|\nabla \sV_{t+h} (x)\|_{\infty} \leq \widetilde K ^{(1)}$, and $\|\nabla ^2 \sV_{t+h} (x)\|_{\infty} \leq \widetilde K ^{(2)}$.% according to \Cref{a:score-derivative}.

\item There exist $\widetilde A, \widetilde B, \widetilde C,  \widetilde D, \overline A, \overline B, \overline C, \overline D$, which depend on $\tau$, the constants $A_p,B_p$ in \Cref{a:bounds on RK}, and constants in the above \Cref{i:Vbound}, such that for any $x \in \mathbb{R}^d$ and $0\leq t\leq T-\tau$, the following holds: 
 $\|\dM_t(x)\|_2  \leq \widetilde A + \widetilde B \|x\|_2$, $\|\nabla \dM_t(x) \|_{op} \leq  \widetilde C $, $\|\nabla \dM_t(x) \|_{\infty} \leq  \widetilde D$; similarly, $\|\baM_t(x)\|_2 \leq \overline A + \overline B \|x\|_2$, $\|\nabla \baM_t(x) \|_{op} \leq \overline C $, $\|\nabla \baM_t(x) \|_{\infty} \leq \overline D$.

\noindent We notice that the above statements on $\widetilde M_t$ and $\overline M_t$  can imply that $\|\dphi_H(x) - x\|_2 \leq H(\widetilde A + \widetilde B \|x\|_2)$, $\|\nabla \dphi_H(x) - \mathbb{I}_d \|_{op} \leq H \widetilde C $, $\|\nabla \dphi_H (x) - \mathbb{I}_d\|_{\infty} \leq H \widetilde D$; similarly, $\|\baphi_H(x) - x\|_2 \leq H(\overline A + \overline B \|x\|_2)$, $\|\nabla \baphi_H(x) - \mathbb{I}_d \|_{op} \leq H \overline C $, $\|\nabla \baphi_H (x) - \mathbb{I}_d\|_{\infty} \leq H \overline D$.

\end{enumerate}
\end{proposition}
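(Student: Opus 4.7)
The proposition has two parts: part~(1) gathers pointwise size, gradient, and Hessian bounds for the velocity fields $V_{t+h}$ and $\widetilde V_{t+h}$, while part~(2) promotes those bounds through the Runge--Kutta stages to $\widetilde M_t$ and $\overline M_t$. The closing statement about $\widetilde\phi_H$ and $\overline\phi_H$ is an immediate consequence of the defining identities $\widetilde\phi_H(x)-x = H\widetilde M_t(x;H)$ and $\overline\phi_H(x)-x = H\overline M_t(x;H)$, so no separate argument is needed there.

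For part~(1) applied to $\widetilde V_{t+h}(x) = x + s_{t+h}(x)$, all four inequalities follow directly from \Cref{a:score-derivative}. The $L^2$ bound is the triangle inequality combined with $\|s_{t+h}(x)\|_2 \leq \tau^{-1}\widetilde W(\sqrt d + \|x\|_2)$. For the derivative bounds, I use $\nabla \widetilde V_{t+h} = I_d + \nabla s_{t+h}$, bound entries of $\nabla s_{t+h}$ by $\widetilde K\tau^{-2}$, and recall that for a $d\times d$ matrix whose entries are bounded by $M$ one has $\|\cdot\|_{op}\leq dM$; this yields $\widetilde L^{(1)} = 1 + d\widetilde K\tau^{-2}$ for the operator norm and $\widetilde K^{(1)} = 1 + \widetilde K\tau^{-2}$ for the entrywise norm. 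The Hessian estimate is immediate since $\nabla^2\widetilde V_{t+h} = \nabla^2 s_{t+h}$. For the true velocity field $V_{t+h}(x) = x + \nabla\log q_{T-t-h}(x)$, I appeal to the estimates on $\nabla\log q_s$, $\nabla^2\log q_s$, $\nabla^3\log q_s$ established in \Cref{s:prel-estim-q}; those are proved by Tweedie's formula, which expresses the score and its derivatives as weighted conditional moments of $\mu_*$, and the diameter bound $D = 1 + \max_{x\in K_*}\|x\|_\infty$ together with $\sigma_{T-t-h}^2 \gtrsim \tau$ for $t+h\leq T-\tau$ produces the powers of $\tau^{-1}$ and $D$ recorded in \eqref{e:tbound2}.

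For part~(2), the plan is to induct on the stage index $j = 1,2,\ldots,s$. The base case $\widetilde k_1(x) = \widetilde V_{t+c_1 H}(x)$ is handled by part~(1). For the inductive step, I write
\begin{equation*}
\widetilde k_j(x) = \widetilde V_{t+c_j H}\Bigl(x + H\sum_{k<j} a_{jk}\widetilde k_k(x)\Bigr),
\end{equation*}
apply the affine-in-$\|x\|_2$ bound on $\widetilde V$ from part~(1), and use $|a_{jk}|\leq A_p$ from \Cref{a:bounds on RK} together with the inductive hypothesis to obtain $\|\widetilde k_j(x)\|_2\leq \widetilde A_j + \widetilde B_j\|x\|_2$ with constants depending polynomially on $s,A_p,H,\widetilde W^{(0)},\widetilde W^{(1)}$. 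Summing against $|b_j|\leq B_p$ yields the desired $\widetilde A,\widetilde B$. For the gradient, the chain rule gives
\begin{equation*}
\nabla\widetilde k_j(x) = \nabla\widetilde V_{t+c_j H}(y_j(x))\,\Bigl(I_d + H\sum_{k<j} a_{jk}\nabla\widetilde k_k(x)\Bigr),
\end{equation*}
so $\|\nabla\widetilde k_j\|_{op}\leq \widetilde L^{(1)}\bigl(1 + HA_p\sum_{k<j}\|\nabla\widetilde k_k\|_{op}\bigr)$ propagates inductively to a bound $\widetilde C$, and the entrywise-norm submultiplicativity $\|AB\|_\infty\leq d\|A\|_\infty\|B\|_\infty$ produces $\widetilde D$ in the same way. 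The bounds on $\overline M_t$ and its gradient are obtained by the same induction, substituting the $\overline{\,\cdot\,}$ constants from \eqref{e:tbound2} for the $\widetilde{\,\cdot\,}$ ones.

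The principal technical burden is bookkeeping: I must confirm that the $s$-fold nesting, which a priori could compound the Lipschitz constant $\widetilde L^{(1)} = 1 + d\widetilde K\tau^{-2}$ exponentially in $s$, collapses into constants that depend only on $A_p,B_p,s,p,D,\widetilde K,\widetilde W$ and on $\tau$. This works because $s$ and $p$ are fixed and the $H$-weighted inductive contributions are tame (in the application, $H \leq H_{\max}$ is eventually taken to satisfy $H_{\max}\leq (d\log(2d))^{-1}\tau^3\Delta_{\rm disc}$). The other delicate point is matching the exact constants declared in \eqref{e:tbound2}, which requires the sharp Tweedie-based score estimates from \Cref{s:prel-estim-q} rather than the crude triangle inequality; once those are in hand, the induction is routine.
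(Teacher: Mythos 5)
Your proposal is correct and follows essentially the same path as the paper: part~(1) is derived directly from \Cref{a:score-derivative} (via triangle inequality and entrywise-to-operator norm bounds) for $\widetilde V$ and from the score-derivative estimates of \Cref{Lemma: Hessian estimates} for $V$, while part~(2) uses the same induction over Runge--Kutta stages, setting up a recursion of the form $\|\nabla\widetilde k_j\| \leq \widetilde L^{(1)}\bigl(1 + HA_p\sum_{k<j}\|\nabla\widetilde k_k\|\bigr)$ and telescoping to constants like $B_p s 2^s \widetilde L^{(1)}$ (which is precisely what \Cref{lemma: infinity norm pth order growth in RK}, \Cref{lemma: pth order growth in RK}, and \Cref{lemma: Constant Convention} do). Your observation that the $s$-fold nesting could a priori blow up exponentially but is tamed by $H$ being small matches the paper's explicit hypothesis $H\cdot A_p(\widetilde L^{(1)}+\cdots) \leq 1$ in \Cref{lemma: Constant Convention}.
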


We emphasize that the constant $D$ is the diameter defined in \Cref{assumption:secon-moment} and is different from $\widetilde D, \overline D$ defined in \Cref{p:ABCD}. \Cref{p:ABCD} follows from a careful analysis of the true score $\nabla \log \varrho_t$ and its derivatives, and we postpone its proof to \Cref{Section: Constant Convention}. 

Our main result, \Cref{t:RK_1step}, follows as a consequence of the following two propositions, which provide estimates for $I_2$  and $I_3$ on the right-hand side of \eqref{e: score error split 0}. The proof of \Cref{Prop: Conclusion second error term} is presented in \Cref{Section: second term conclusion}, and the proof of \Cref{Prop: Conclusion third error term} is given in \Cref{Section: third term conclusion}.

\begin{proposition}\label{Prop: Conclusion second error term} Adopt \Cref{a:bounds on RK}, the assumption \eqref{e:score_bound} in \Cref{t:RK_1step}, and assume the statements in \Cref{p:ABCD}. Denote $B:=1+A_p+B_p$. 
Assume that $H >0$ is small enough, such that $t+H\leq T-\tau$, and the following holds
\begin{align}
    \begin{split}\label{e:Hcondition}
        &\big[ (\widetilde L ^{(1)} +\overline L ^{(1)}) +  \sqrt{d}D(\widetilde W ^{(0)} + \overline W ^{(0)})+ (\widetilde W ^{(0)} + \overline W ^{(0)})^2+ dD^2(\widetilde W ^{(1)} + \overline W ^{(1)})  + 2d(\log 2d)(\widetilde W ^{(1)} + \overline W ^{(1)})^2
        \\&+ d (\widetilde K ^{(1)}+\overline K ^{(1)})   \big]\times  10^6  B^2  s^2 {4}^{s} \times  H\leq \sigma_{T-t} ^2 .
    \end{split}
\end{align}
Then, $I_2$ in \eqref{e: score error split 0} can be estimated by
     \begin{align*}
        \begin{split}
        I_2\leq C_{\rm disc}  \tau^{-2}  H\bigg[  d^{\frac{7}{4}}   \cdot (\varepsilon_{\rm score}(t)) ^{\frac{1}{2}} +     \tau^{-3}  {\pi}^{-\frac{d}{10}} e^{-\frac{1}{\xi H} }   \bigg] ,
        \end{split}
    \end{align*}
where $C_{\rm disc} $ is a constant depending on $B,  s, p, D,\widetilde K, \widetilde W$, and $\xi \coloneq 2\cdot 10^6 B^2s^2 4^s (\widetilde W ^{(1)} + \overline W ^{(1)})^2$.
   
\end{proposition}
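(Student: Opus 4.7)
The plan is to bound $I_2$ by linearly interpolating between the two Runge-Kutta maps and extracting the score error through a transport representation. Concretely, set $s_t^\theta(x) \coloneq (1-\theta)\nabla \log \varrho_t(x) + \theta\, s_t(x)$ for $\theta \in [0,1]$, let $V_t^\theta(x) \coloneq x + s_t^\theta(x)$, and assemble the $p$-th order Runge-Kutta map $\phi_H^\theta$ using the same Butcher tableau as for $\dphi_H$ and $\baphi_H$. Then $\phi_H^0 = \baphi_H$ and $\phi_H^1 = \dphi_H$, so defining $\varrho^\theta_{t+H} \coloneq (\phi_H^\theta)_\ast \varrho_t$,
\begin{align*}
    I_2 \;=\; \int_{\mathbb{R}^d} |\varrho^1_{t+H}(x) - \varrho^0_{t+H}(x)|\, \de x \;\leq\; \int_0^1 \int_{\mathbb{R}^d} |\partial_\theta \varrho^\theta_{t+H}(x)|\, \de x\, \de\theta.
\end{align*}
The smallness condition \eqref{e:Hcondition}, together with \Cref{p:ABCD}, implies that each $\phi_H^\theta$ is a diffeomorphism uniformly in $\theta \in [0,1]$, so that the pushforward and all change-of-variable manipulations below are valid; this is a straightforward adaptation of \Cref{lemma: phi_h invertible}.

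Next, I would use the continuity equation for pushforwards. Defining $w^\theta$ by $w^\theta(\phi_H^\theta(y)) \coloneq \partial_\theta \phi_H^\theta(y)$, we have $\partial_\theta \varrho^\theta_{t+H} = -\nabla \cdot (w^\theta \varrho^\theta_{t+H})$, and hence
\begin{align*}
    \int_{\mathbb{R}^d} |\partial_\theta \varrho^\theta_{t+H}(x)|\, \de x \;\leq\; \int_{\mathbb{R}^d} |\nabla \cdot (w^\theta \varrho^\theta_{t+H})(x)|\, \de x.
\end{align*}
Changing variables to $y = (\phi_H^\theta)^{-1}(x)$ reduces the right-hand side to an integral against $\varrho_t$ of product-rule terms involving $\partial_\theta \phi_H^\theta(y)$, $\partial_\theta \nabla \phi_H^\theta(y)$, and $\nabla \log \varrho_t(y)$. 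Expanding through the Runge-Kutta stages, $\partial_\theta \phi_H^\theta(y)$ is $O(H)$ times bounded combinations of the score errors $e_{t+c_j H}(z_j(y)) \coloneq s_{t+c_j H}(z_j(y)) - \nabla \log \varrho_{t+c_j H}(z_j(y))$ at Lipschitz images $z_j(y)$ of the intermediate stage arguments, while $\partial_\theta \nabla \phi_H^\theta(y)$ is $O(H)$ in $\nabla e_{t+c_j H}$ (plus lower-order terms).

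The third step converts these pointwise expressions into the quantitative bound. For the $\|e\|$-terms, Cauchy--Schwarz paired with \Cref{a:score-estimate} yields a factor of $\sigma_{T-t}^{-1}\varepsilon_{\rm score}(t)$ once the integration is transferred from $\varrho_t$ to $\varrho_{t+c_j H}$ via the Jacobian bounds of \Cref{p:ABCD}. The delicate contribution is the $\nabla e$ piece, which is not controlled in $L^2$ by any assumption. I would handle it by restricting the $y$-integration to a ball $B_R$ of radius $R \sim \sqrt{\log d}$, bounding the exterior via the sub-Gaussian decay of $\varrho_t$ coming from \Cref{assumption:secon-moment} and \Cref{lemma: qt close to Gaussian}; this tail contribution produces the factor $\tau^{-3}\pi^{-d/10}e^{-1/(\xi H)}$ in the statement. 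On $B_R$ I would apply a Gagliardo--Nirenberg interpolation $\|\nabla e\|_{L^2(B_R)}^2 \lesssim \|e\|_{L^2(B_R)}\|D^2 e\|_{L^2(B_R)}$, where $\|D^2 e\|_{L^\infty} = O(\tau^{-3})$ follows from \Cref{a:score-derivative} and \Cref{p:ABCD}. This is the source of the $\varepsilon_{\rm score}^{1/2}$ exponent and of the dimensional factor $d^{7/4}$, obtained by combining the $\sqrt{d}$-diameter of the effective support of $\varrho_t$, the $L^\infty \to L^2(B_R)$ conversion of $D^2 e$, and the propagation of the $\nabla \log \varrho_t$ weight through the Runge-Kutta stage expansion.

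The main obstacle is the combined bookkeeping of $\tau$- and $d$-dependent constants while choosing the truncation radius $R$ so that the interior Gagliardo--Nirenberg cost and the exterior Gaussian tail are simultaneously dominated by the stated bound. A secondary subtlety is that the $\theta$-integration must not accumulate exponential factors in the Lipschitz constants of $\phi_H^\theta$: this is ensured by \eqref{e:Hcondition}, which forces every $\phi_H^\theta$ to be $O(H)$-close to the identity uniformly in $\theta$, keeping the change-of-variable Jacobians and the score-gradient weights bounded by constants independent of $\theta$.
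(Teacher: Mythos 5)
Your proposal departs from the paper's proof in two structural places, and one of them has a genuine gap.

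The interpolation framework is a valid alternative. The paper interpolates the \emph{inverse} maps linearly in the spatial variable, setting $\varphi_s(x) = s\,\dphi_H^{-1}(x) + (1-s)\,\baphi_H^{-1}(x)$, so that $\tfrac{\de}{\de s}[\varrho_t(\varphi_s)\det\nabla\varphi_s]$ produces exactly two terms ($J_1$ and $J_2$ in \eqref{e:defJ1J2}) with the $s$-independent quantity $\varphi_1-\varphi_0 = \dphi_H^{-1}-\baphi_H^{-1}$ appearing linearly. You instead interpolate the \emph{score}, $s_t^\theta = (1-\theta)\nabla\log\varrho_t + \theta\,s_t$, assemble the RK forward map $\phi_H^\theta$, and use the continuity equation $\partial_\theta\varrho_{t+H}^\theta = -\nabla\cdot(w^\theta\varrho_{t+H}^\theta)$. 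This is correct and has the virtue of not needing to argue that a convex combination of inverse diffeomorphisms is a diffeomorphism (which the paper proves in \Cref{lemma: Growth of interpolation maps}); the price is that $\phi_H^\theta$ is nonlinear in $\theta$, so $w^\theta$ and $\partial_\theta\nabla\phi_H^\theta$ must be expanded through all RK stages, a more involved bookkeeping that nevertheless closes under the same estimates. These two starting points are essentially equivalent in difficulty.

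The genuine gap is in your treatment of the $\nabla e$ term. You propose to truncate to a ball $B_R$ with $R\sim\sqrt{\log d}$ and apply the Lebesgue-measure $L^2$ Gagliardo--Nirenberg inequality $\|\nabla e\|_{L^2(B_R)}^2 \lesssim \|e\|_{L^2(B_R)}\,\|D^2 e\|_{L^2(B_R)}$. But the score-matching assumption \Cref{a:score-estimate} controls $\int_{\bR^d}\varrho_{t+c_jH}\,\|e\|^2$, a $\varrho$-\emph{weighted} $L^2$ norm, not the Lebesgue norm $\|e\|_{L^2(B_R)}$. Converting between them on $B_R$ costs a factor of $(\inf_{B_R}\varrho_t)^{-1/2}$. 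In high dimension the density on $B_R$ is comparable to $(2\pi\sigma_{T-t}^2)^{-d/2}$, so even though $|B_R|$ is small, the conversion factor does not cancel cleanly against the ball volume for general values of $\sigma_{T-t}\in[\sqrt\tau,1]$, and the resulting bound acquires exponential-in-$d$ factors that depend delicately on the ratio of $\sigma_{T-t}^2$ to $(\log d)/d$. This destroys the claimed $d^{7/4}$ polynomial scaling. The paper sidesteps this entirely by never leaving the weighted framework: in \Cref{lemma: Second derivative bound on errors}, the $L^1$ Gagliardo--Nirenberg inequality (\Cref{lemma: Gagliardo-Nirenberg}) is applied to $w_j(x) = \varrho_t(x)\bigl(\dphi_H^{(j)}(x) - \baphi_H^{(j)}(x)\bigr)$ — the weight is \emph{inside} the function being interpolated, so the factors $\int|w_j|$ and $\int|\partial_{ii}^2 w_j|$ are themselves $\varrho_t$-weighted, and there is no Lebesgue-to-weighted conversion anywhere. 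The Gaussian tails are handled separately through \Cref{lemma: score error cut-off general scheme}, which compares pushforward densities directly rather than truncating the GN step. To close your proof you would need to replace the ball-restricted Lebesgue $L^2$-GN with a weighted $L^1$-GN applied to $\varrho_t$ times the map difference (or its $\theta$-derivative), essentially rediscovering the paper's \Cref{lemma: Second derivative bound on errors}; as written, the GN step does not yield the stated dimension dependence.
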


\begin{proposition}\label{Prop: Conclusion third error term}
Adopt \Cref{a:bounds on RK}, and assume the statements in \Cref{p:ABCD}.
    Denote $B:=1+A_p+B_p$. There exists a large constant $C(p,s, B)>0$ depending on $p,s, B$,  and  positive constants $\gamma_1, \gamma_2 $ depending on $p$, such that if $t+H\leq T-\tau$ and $20 s 2^sBd D^2 \tau^{-2} H\leq 1$, then $I_3$ in \eqref{e: score error split 0} can be estimated by
        \begin{align}
            \begin{split}\label{e:third_term}
                I_3\leq C(p,s,B) H^{p+1} d^{p+1}  \tau^{-\gamma_1} {\big(D+\sqrt{ \log d}\big)}^{\gamma_2} .
            \end{split}
        \end{align}
        
\end{proposition}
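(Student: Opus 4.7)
The plan is to combine the transport inequality of \Cref{section: L^1 error of transport equation} with the classical $p$-th order local truncation analysis of explicit Runge--Kutta schemes applied to the \emph{exact} ODE $\partial_h\phi_h=V_{t+h}(\phi_h)$. A key point is that this step involves only the true score $\nabla\log q_t$, whose mixed partial derivatives are under control via the estimates of \Cref{s:prel-estim-q}; the approximate score $s_t$ never enters. I would first introduce the linear interpolation of maps
\begin{equation*}
\tilde T_s(x)\coloneq (1-s)\,\phi_H(x)+s\,\baphi_H(x),\qquad s\in[0,1],
\end{equation*}
which connects $\tilde T_0=\phi_H$ to $\tilde T_1=\baphi_H$. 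By \Cref{p:ABCD} both $\nabla\phi_H$ and $\nabla\baphi_H$ are operator-norm perturbations of $\mathbb I_d$ of size at most $H\overline C$, so the smallness hypothesis $20 s\,2^s B\,dD^2\tau^{-2}H\leq 1$ guarantees that $\tilde T_s$ is a $C^1$-diffeomorphism of $\bR^d$ uniformly in $s$, with inverse and Jacobian controlled uniformly.

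Writing $\tilde\varrho_s\coloneq(\tilde T_s)_\#\varrho_t$, the family interpolates between $\varrho_{t+H}$ and $\overline\varrho_{t+H}$, and its $s$-generating velocity is
\begin{equation*}
W_s(y)=(\baphi_H-\phi_H)\bigl(\tilde T_s^{-1}(y)\bigr),\qquad \partial_s\tilde\varrho_s+\nabla\cdot(W_s\tilde\varrho_s)=0.
\end{equation*}
The transport inequality of \Cref{section: L^1 error of transport equation} then gives
\begin{equation*}
I_3=\|\tilde\varrho_1-\tilde\varrho_0\|_{L^1}\leq\int_0^1\int_{\bR^d}\bigl(|\nabla\cdot W_s|\,\tilde\varrho_s+|W_s|\,|\nabla\tilde\varrho_s|\bigr)\,\rd y\,\rd s,
\end{equation*}
so everything reduces to pointwise bounds on $W_s$ and $\nabla W_s$, together with an $L^1$ bound on $\nabla\tilde\varrho_s$.

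The pointwise $p$-th order consistency of the scheme applied to the exact smooth ODE---using the Butcher order conditions, which hold with $H$-uniform constants by \Cref{a:bounds on RK}---yields
\begin{equation*}
\|\baphi_H(x)-\phi_H(x)\|_2+\|\nabla(\baphi_H-\phi_H)(x)\|_{op}\leq C(p,s,B)\,H^{p+1}\,\Theta_p(V;x),
\end{equation*}
where $\Theta_p$ is a universal polynomial in the mixed partials $\partial_t^i\partial_x^\alpha V_{t+\cdot}(x)$ with $i+|\alpha|\leq p$. I would trade time derivatives for spatial ones via the Fokker--Planck relation $\partial_t\log q_t=-\nabla\cdot V_t-\langle V_t,\nabla\log q_t\rangle$, and then control the remaining spatial derivatives of $V_t=x+\nabla\log\varrho_t$ by \Cref{s:prel-estim-q}; each derivative contributes at most $\tau^{-1}(D+\sqrt{\log d})$, and the tensor contractions give at worst $d^{p+1}$. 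Together with the standard bound $\|\nabla\tilde\varrho_s\|_{L^1}\lesssim\|\nabla\varrho_t\|_{L^1}+\|\nabla^2\tilde T_s^{-1}\|_\infty$ (each $\tau$-polynomially bounded by \Cref{s:prel-estim-q}) and $\|\tilde\varrho_s\|_{L^1}=1$, substitution into the transport inequality produces exactly \eqref{e:third_term}. I expect the main obstacle to be the combinatorial bookkeeping in this last step: one must verify that the Butcher expansion of the gradient $\nabla(\baphi_H-\phi_H)$ only invokes mixed partials of $V$ of total order $\leq p$, and that replacing time derivatives via Fokker--Planck never produces dimension dependence worse than $d^{p+1}$.
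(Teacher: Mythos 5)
Your proposal is correct in broad outline but follows a genuinely different route from the paper's proof of this proposition. The paper handles $I_3$ by introducing the one-parameter family of Runge--Kutta maps $F_h(x)=x+h\sum_j b_j(h)k_j(x)$ with \emph{varying} step size $h\in[0,H]$, defining the artificial velocity field $\overline V_{t+h}(x)=(\del_h F_h)(F_h^{-1}(x))$, and then invoking the continuity-equation comparison theorem (\cref{theorem: L^1 error}) to reduce $I_3$ to $\int_0^H\int_{\bR^d}|\nabla\cdot(\varrho_{t+h}(V_{t+h}-\overline V_{t+h}))|\,\rd x\,\rd h$; the heart of the argument is then that $\overline V_{t+h}-V_{t+h}$ and its gradient vanish to order $p$ at $h=0$ (\cref{lemma: high_order_error}), because the first $p$ $h$-derivatives of $F_h$ and $\phi_h$ agree. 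Your plan instead uses the convex interpolation $\tilde T_s=(1-s)\phi_H+s\baphi_H$ with an auxiliary $s\in[0,1]$ at the \emph{fixed} step $H$, which is exactly the device the paper uses for $I_2$ (cf.\ $\varphi_s$ and \cref{lemma: Growth of interpolation maps}) but here applied in the forward rather than inverse direction, and which buys a conceptual unification of the $I_2$ and $I_3$ proofs under a single transport-along-an-interpolation argument. Both approaches bottom out in the same technical work: the pointwise bound $\|\baphi_H(x)-\phi_H(x)\|_2+\|\nabla(\baphi_H-\phi_H)(x)\|_{op}\lesssim H^{p+1}\Theta_p(V;x)$ that you state as a deliverable of ``classical local truncation analysis'' is precisely the content of \cref{lemma: high_order_error} and \cref{lemma:V derivative bound}, whose proofs occupy several pages of careful combinatorial bookkeeping (the sets $\cD_m$, tracking how each $\del_h$ distributes across the nested RK stages and the step-dependent coefficients $a_{jk}(h),b_j(h)$). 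So while your decomposition is sound and arguably cleaner, the dominant share of the labor is front-loaded into that pointwise bound, which you correctly flag as ``the main obstacle'' but do not actually carry out; in particular one needs to verify that trading the $\del_t$'s for $\nabla_x$'s via the Fokker--Planck relation, and then invoking \cref{Lemma: Hessian estimates} and \cref{lemma: time-space derivatives log qt}, produces exactly the $d^{p+1}\tau^{-\gamma_1}(D+\sqrt{\log d})^{\gamma_2}$ scaling rather than something worse, which is not obvious without a derivation like the one in \cref{lemma:V derivative bound}.
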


\begin{proof}[Proof of \Cref{t:RK_1step}]
    Plug the estimates in \Cref{Prop: Conclusion second error term} and \Cref{Prop: Conclusion third error term} into the decomposition \eqref{e: score error split 0}, we obtain that
         \begin{align}\label{e:simplify RK_1step}
        \begin{split}
        \int_{\mathbb{R}^d} |\dvarrho_{t+H}(x)-\varrho_{t+H}(x)| \de x
        \leq& 
         \int_{\mathbb{R}^d} |\dvarrho_{t}(x)-\varrho_{t}(x)| \de x 
         \\ &+ C_{\rm disc}  \tau^{-2}  H d^{\frac{7}{4}}   \cdot (\varepsilon_{\rm score}(t)) ^{\frac{1}{2}} 
        +   C_{\rm disc}  \tau^{-5} H   {\pi}^{-\frac{d}{10}} e^{-\frac{1}{\xi H} }    
        \\&+C(p,s,B) H^{p+1} d^{p+1}  \tau^{-\gamma_1} {\big(D+\sqrt{ \log d}\big)}^{\gamma_2}.
        \end{split}
    \end{align}
    Then, we further simplify the exponentially small term. Using the inequality that $e^\zeta \geq \frac{\zeta^{p}}{p!}$ when $\zeta \geq 0$, we have that $e^{-\frac{1}{\xi H} } = (e^{\frac{1}{\xi H} } )^{-1} \leq p! \times (\xi H)^{p}$. By \Cref{p:ABCD}, we know that $\xi \leq C(B,s,\widetilde W) \tau^{-2} $ for a constant $C(B,s,\widetilde W)$ depending on $B,s, \widetilde W$. Hence, we can combine the last two terms in \eqref{e:simplify RK_1step} to get a new term of the form $C(p,s,B,\widetilde W) H^{p+1} d^{p+1}  \tau^{-\gamma_1 ' } {\big(D+\sqrt{ \log d}\big)}^{\gamma_2}$, where we take $\gamma_1 ' = \max \{\gamma_1, 2p+5\}$, and $C(p, B,s,\widetilde W)$ is a new constant depending on $p, B,s, \widetilde W$. This completes the proof of \Cref{t:RK_1step}.
\end{proof}

To conclude this subsection, we state \cref{lemma: phi_h invertible}, which implies that $\dphi_h(y)$ and $\baphi_h(y)$ are diffeomorphisms on $\mathbb{R}^d$, and hence invertible. The proof relies on a matrix estimate provided in the following lemma, whose proof is deferred to \Cref{app:proofs}.

\begin{lemma}\label{lemma: Growth of inverse matrix}
    Let $\gamma, \xi, H $ be positive 
  constants such that $\gamma H < 1/2$ and $2\xi H \leq 1/d$. Let $Q$ be a $d \times d $ matrix such that $\|Q - \mathbb{I}_d\|_{op} \leq \gamma H$ and $\|Q - \mathbb{I}_d\|_{\infty} \leq \xi H$, then 
        \begin{align*}
       \|Q^{-1} - \mathbb{I}_d \|_{op} \leq 2\gamma H, \quad \|Q^{-1} - \mathbb{I}_d \|_{\infty} \leq 2\xi H, 
       \quad \textrm{and} \quad
            e^{-4\xi H d} \leq  \det Q \leq e^{2\xi H d}.
    \end{align*}
\end{lemma}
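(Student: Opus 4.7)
The plan is to set $E := Q - \mathbb{I}_d$, so that by hypothesis $\|E\|_{op} \leq \gamma H < 1/2$ and $\|E\|_\infty \leq \xi H$. Since $\|E\|_{op} < 1$, the matrix $Q = \mathbb{I}_d + E$ is invertible and admits the Neumann expansion $Q^{-1} = \sum_{k \geq 0}(-E)^k$, so that $Q^{-1} - \mathbb{I}_d = \sum_{k \geq 1}(-E)^k$. All three claims will be obtained by estimating this single series under three different norms, using two distinct submultiplicativity inequalities whose interplay is controlled precisely by the two hypotheses.

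For the operator norm I would use the standard inequality $\|E^k\|_{op} \leq \|E\|_{op}^k \leq (\gamma H)^k$ and sum a geometric series, obtaining
\begin{align*}
\|Q^{-1} - \mathbb{I}_d\|_{op} \leq \sum_{k \geq 1}(\gamma H)^k = \frac{\gamma H}{1-\gamma H} \leq 2\gamma H,
\end{align*}
where the final step uses $\gamma H < 1/2$.

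For the entry-wise bound the key observation is the alternative inequality $\|AB\|_\infty \leq d\,\|A\|_\infty\|B\|_\infty$, which follows directly from $|(AB)_{ij}| = \bigl|\sum_k A_{ik}B_{kj}\bigr| \leq d\,\|A\|_\infty\|B\|_\infty$. Iterating yields $\|E^k\|_\infty \leq d^{k-1}(\xi H)^k$, and the hypothesis $2\xi H d \leq 1$ is calibrated exactly to make the geometric sum converge with a factor of $2$:
\begin{align*}
\|Q^{-1}-\mathbb{I}_d\|_\infty \leq \sum_{k \geq 1} d^{k-1}(\xi H)^k = \frac{\xi H}{1 - d\xi H} \leq 2\xi H.
\end{align*}

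For the determinant, I would invoke $\log \det Q = \Tr \log(\mathbb{I}_d + E) = \sum_{k\geq 1}\frac{(-1)^{k-1}}{k}\Tr E^k$, valid since $\|E\|_{op}<1$ (the spectrum of $Q$ lies in $\{|z-1|<1/2\}$, so $\det Q > 0$ and the principal logarithm identity applies). Combining $|\Tr E^k| \leq d\|E^k\|_\infty \leq d^k(\xi H)^k$ with the same geometric sum gives $|\log \det Q| \leq \sum_{k \geq 1}(d\xi H)^k/k \leq \frac{d\xi H}{1 - d\xi H} \leq 2d\xi H$, yielding $e^{-2d\xi H} \leq \det Q \leq e^{2d\xi H}$, which is slightly sharper than the lower bound $e^{-4d\xi H}$ in the statement. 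The only subtle point in the argument is recognizing the asymmetry between the two submultiplicativity bounds and matching each of the hypotheses $\gamma H < 1/2$ and $2\xi H d \leq 1$ to the corresponding geometric series — there is no genuine obstacle, just bookkeeping.
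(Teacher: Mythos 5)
Your proof is correct, and for the two norm bounds it is essentially identical to the paper's: both expand $Q^{-1}-\mathbb{I}_d$ as a Neumann series (your $E = Q-\mathbb{I}_d$ is the paper's $-P$), use $\|E^k\|_{op}\leq(\gamma H)^k$ for the operator norm, and use the submultiplicativity $\|E^k\|_\infty\leq d^{k-1}(\xi H)^k$ for the entrywise bound, with the hypotheses $\gamma H<1/2$ and $d\xi H\leq 1/2$ closing the two geometric series. Where you genuinely diverge is the determinant. The paper does not use the matrix-logarithm identity; it instead forms the symmetric matrix $Q^tQ = \mathbb{I}_d - (P^t+P-P^tP)$, bounds the real eigenvalues of the perturbation via its entries to get $|\det Q|^2 = \det(Q^tQ)\leq e^{-\Tr(P^t+P-P^tP)}\leq e^{4\xi Hd}$, repeats the argument with $Q^{-1}$ (whose entrywise distance from $\mathbb{I}_d$ is now controlled by $2\xi H$, hence the worse constant $4$) for the lower bound, and establishes positivity of $\det Q$ by a separate homotopy argument. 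Your route via $\log\det Q = \Tr\log(\mathbb{I}_d+E) = \sum_{k\geq 1}\tfrac{(-1)^{k-1}}{k}\Tr E^k$ is cleaner: it treats upper and lower bounds symmetrically, gets positivity for free (since $\det Q = e^{\Tr L}$ with $\Tr L$ real), and produces the tighter two-sided bound $e^{-2d\xi H}\leq\det Q\leq e^{2d\xi H}$, which of course implies the asymmetric bound in the statement. The only thing worth noting is that the $\Tr\log$ identity requires knowing $\det(\exp L) = \exp(\Tr L)$ for a general real matrix $L$ together with the convergence of the matrix logarithm series in operator norm; you invoke this correctly, but the paper's $Q^tQ$ argument is more elementary in that it stays entirely within real linear algebra and never needs the matrix exponential.
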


\begin{lemma}\label{lemma: phi_h invertible}
  Assume the statements in \Cref{p:ABCD}. For $H>0$ is small enough, such that $t+H<T-\tau$ and 
  \begin{align}\label{e:Hsmall}
      H (\widetilde 
    A + \widetilde 
    B + \widetilde C+\overline A + \overline B +\overline C) <1/10,
  \end{align}
  then for $0\leq h\leq H$, $\dphi_h(y)$ and $\baphi_h(y)$ are diffeomorphisms on $\mathbb{R}^d$.
\end{lemma}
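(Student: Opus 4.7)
The plan is to establish that $\dphi_h$ and $\baphi_h$ are diffeomorphisms by combining the inverse function theorem (for local invertibility via the Jacobian) with a global bijectivity argument based on the Banach fixed point theorem. The smallness assumption \eqref{e:Hsmall} ensures that $\dphi_h$ and $\baphi_h$ are perturbations of the identity that are tame enough for both steps.

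The first step is to compute the Jacobian $\nabla \dphi_h(x) = \mathbb{I}_d + h \nabla \dM_t(x)$ and apply the estimate $\|\nabla \dM_t(x)\|_{op} \leq \widetilde C$ from \Cref{p:ABCD}. Since $h \leq H$ and $H \widetilde C < 1/10$ by \eqref{e:Hsmall}, the perturbation satisfies $\|h\nabla \dM_t(x)\|_{op} \leq 1/10 < 1$, so by a Neumann series $\nabla \dphi_h(x)$ is invertible at every $x \in \mathbb{R}^d$ with $\|[\nabla \dphi_h(x)]^{-1}\|_{op} \leq 10/9$. Hence $\dphi_h$ is a local $C^1$-diffeomorphism by the inverse function theorem. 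The same reasoning applies to $\baphi_h$ using $\overline{C}$.

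Next, I would upgrade local invertibility to global bijectivity. For injectivity, write $\dphi_h(x) - \dphi_h(y) = (x-y) + h(\dM_t(x) - \dM_t(y))$ and use the fundamental theorem of calculus along the segment joining $x$ and $y$:
\begin{align*}
\|\dM_t(x)-\dM_t(y)\|_2 \leq \int_0^1 \|\nabla \dM_t(y+s(x-y))\|_{op}\,\rd s \cdot \|x-y\|_2 \leq \widetilde C\|x-y\|_2.
\end{align*}
Therefore $\|\dphi_h(x) - \dphi_h(y)\|_2 \geq (1 - h\widetilde C)\|x-y\|_2 \geq (9/10)\|x-y\|_2$, which yields injectivity. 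For surjectivity, fix $z\in \mathbb{R}^d$ and consider $G(x) \coloneq z - h\dM_t(x)$; by the same Lipschitz estimate $G$ is a contraction of $\mathbb{R}^d$ with constant $h\widetilde C \leq 1/10$, so the Banach fixed point theorem produces a unique $x$ with $\dphi_h(x) = z$. Combining bijectivity with the local diffeomorphism property gives that $\dphi_h$ is a global $C^1$-diffeomorphism of $\mathbb{R}^d$. Replacing $(\widetilde A, \widetilde B, \widetilde C)$ by $(\overline A, \overline B, \overline C)$ throughout gives the identical conclusion for $\baphi_h$.

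There is no serious obstacle here; the argument is a standard perturbation-of-identity computation and the assumption \eqref{e:Hsmall} is tailored precisely to make both the Neumann inversion of the Jacobian and the Banach contraction step go through simultaneously. The only minor point to be careful about is using the operator norm (not the $\infty$ norm) for the contraction and mean value estimates, since those steps rely on the Euclidean Lipschitz bound $\widetilde C$ on $\nabla \dM_t$.
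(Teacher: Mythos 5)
Your argument is correct but takes a genuinely different route from the paper's. The paper establishes local invertibility via Lemma~\ref{lemma: Growth of inverse matrix} and then upgrades to a global diffeomorphism by invoking the Hadamard--Caccioppoli theorem, for which it needs the properness estimate $\|\dphi_h(x)\|_2 \geq (\|x\|_2 - 1)/2$, derived from the bound $\|\dphi_h(x) - x\|_2 \leq H(\widetilde A + \widetilde B\|x\|_2)$. You instead prove global bijectivity directly: injectivity from the mean-value Lipschitz estimate $\|\dphi_h(x) - \dphi_h(y)\|_2 \geq (1 - h\widetilde C)\|x - y\|_2$, and surjectivity by applying the Banach fixed point theorem to the contraction $G(x) = z - h\dM_t(x)$. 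Your approach is more elementary and self-contained (it avoids a nontrivial global inverse function theorem), and it also reveals that only the operator-norm bound $\widetilde C$ on $\nabla \dM_t$ is needed for the conclusion, whereas the paper's properness step genuinely uses the growth constants $\widetilde A, \widetilde B$ as well. On the other hand the paper's route is shorter once Hadamard--Caccioppoli is granted, and it reuses the matrix lemma already proved for other purposes. Both are valid; neither has a gap.
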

\begin{proof}
    We notice that $\nabla \dphi_h (x) = {\mathbb I}_d + h \nabla \dM_t(x)$, and $\|\nabla \dM_t(x)\|_{op} \leq \widetilde C$ by \Cref{p:ABCD} for any $x \in \mathbb{R}^d$.
    Then, for any $h\leq H$, \Cref{lemma: Growth of inverse matrix}, by taking $Q=\nabla \wt \phi_h(x)$, implies $\|(\nabla \dphi_h (x) ) ^{-1}\|_{op} \leq 2$. 
    Again, by \eqref{e:Hsmall}, for $h\leq H$, we have that $\|\dphi_h(x)\| \geq \|x\|_2 - \|\dphi_h(x) - x\|_2 \geq \|x\|_2 - h(\widetilde A + \widetilde B \|x\|_2) \geq (\|x\|_2-1)/2$. We see that $\|\dphi_h(x)\|_2 \to + \infty$ if $\|x\|_2 \to + \infty$. Hence, by Hadamard-Cacciopoli theorem,  $\dphi_h$'s are global diffeomorphisms on $\mathbb{R}^d$.

    By the same argument, the same statement holds for $\baphi_h (x)$.
\end{proof}

%----------------------------

\subsection{Proof of \Cref{theorem: main L^1 theorem}}
\label{s:proof_main}

Recall that in \Cref{Section: Preliminaries}, we introduced the discretization of the time interval $[0,\,T-\tau]$: $0 = t_0 < t_1< \cdots < t_M  = T-\tau$, and we introduced $H_i=t_{i+1}-t_i$. We basically apply the single step estimate \eqref{eq:main local time estimate} on each of these intervals $[t_i, t_{i+1}] = [t_i , t_i + H_i]$, and collect the errors on all intervals.

According to~\Cref{t:RK_1step} and \Cref{p:ABCD}, there is a constant $\Delta_{\rm disc}$ depending on numbers $A_p, B_p,s,D,\widetilde K, \widetilde W$, such that if we define $H_{\mathrm{max}} \coloneq \max_{0\leq i \leq M -1} H_i$ and $H_{\mathrm{max}}  \leq (d\log (2d))^{-1} \cdot \tau^{3} \cdot \Delta_{\rm disc}$, 
then it holds that
    \begin{align*}
        \begin{split}
            \sum_{i=0} ^{M-1} \int_{\mathbb{R}^d} |\dvarrho_{t_{i+1}}(x)-\varrho_{t_{i+1}}(x)| \de x
        \leq \sum_{i=0} ^{M-1} \Bigl(\int_{\mathbb{R}^d} |\dvarrho_{t_{i}}(x)-\varrho_{t_{i}}(x)| \de x+ \cE^i _{\rm score}+\cE^i _{\rm RK} \Bigr),
        \end{split}
    \end{align*}
where $\cE^i _{\rm score} = C_{\rm score}  \tau^{-2}  H_i  d^{\frac{7}{4}}   \cdot (\varepsilon_{\rm score}(t_i)) ^{\frac{1}{2}}$, $\cE^i _{\rm RK} = C_{\rm RK}  H_i ^{p+1} d^{p+1}  \tau^{-\gamma_1} {\big(D+\sqrt{ \log d}\big)}^{\gamma_2}$,
and $\varepsilon_{\rm score}(t)$ is as defined in \eqref{e:score_bound}.
Hence,
    \begin{align}\label{e:trho_diff}
        \int_{\mathbb{R}^d} |\dvarrho_{t_{M}}(x)-\varrho_{t_{M}}(x)| \de x
        \leq  \int_{\mathbb{R}^d} |\dvarrho_{t_{0}}(x)-\varrho_{t_{0}}(x)| \de x+ \sum_{i=0} ^{M-1}\Bigl(\cE^i _{\rm score}+\cE^i _{\rm RK} \Bigr).
    \end{align}
For those error terms on the right hand side, we first estimate $\sum_{i=0} ^{M-1}\cE^i _{\rm score}$. Observe that, by H{\"o}lder's inequality, 
    \begin{align*}\begin{split}
        \sum_{i=0} ^{M-1} H_i  (\varepsilon_{\rm score}(t_i)) ^{\frac{1}{2}} 
        &\leq \left( \sum_{i=0} ^{M-1}  H_i \varepsilon_{\rm score} ^{2} (t_i) \right)^{\frac{1}{4}} \left( \sum_{i=0} ^{M-1} H_i \right)^{\frac{3}{4}} 
        \leq  \varepsilon_{\rm score} ^{\frac{1}{2}}  T^{\frac{3}{4}},
   \end{split} \end{align*}
where $\varepsilon^2_{\rm score}\geq  \sum_{i=0}^{M-1} H_i \varepsilon^2_{\rm score}(t_i)$ and $\sum_{i=0} ^{M-1} H_i = T$ from \Cref{a:score-estimate} the definition of $\varepsilon^2_{\rm score}(t_i)$ in \eqref{e:score_bound}. Therefore, we obtain the following bound:
    \begin{align}\label{e:E_score_bound}
        \sum_{i=0} ^{M-1}\cE^i _{\rm score} \leq C_{\rm score}  T^{\frac{3}{4}} \tau^{-2}    d^{\frac{7}{4}}   \cdot \varepsilon_{\rm score} ^{\frac{1}{2}}.
    \end{align}
We next estimate $\sum_{i=0} ^{M-1} \cE^i _{\rm RK}$. Note that $
        \sum_{i=1} ^M H_i ^{p+1} \leq  H_{\mathrm{max}}^{p}T$, we have
    \begin{align}\label{e:ERK_bound}
        \sum_{i=0} ^{M-1} \cE^i _{\rm RK} \leq C_{\rm RK}  T H_{\mathrm{max}} ^p  d^{p+1}  \tau^{-\gamma_1} {\big(D+\sqrt{ \log d}\big)}^{\gamma_2}.
    \end{align}
Combine the bounds from \eqref{e:E_score_bound} and  \eqref{e:ERK_bound}, we substitute them into \eqref{e:trho_diff} and complete the proof of
\Cref{theorem: main L^1 theorem}.

 \subsection{Proof of \Cref{theorem: main L^1 theorem general}}
\label{s:proof_general}
 In this section we sketch the proof of \Cref{theorem: main L^1 theorem general}.

For the standard Runge-Kutta schemes \eqref{eq:RK-update-beta}, we introduce the modified coefficients
\begin{align*}
   \check b_j(H_i):=\frac{\beta_{T-t_i-c_j H_i}b_j}{2},\quad \check a_{jk}(H_i)= \frac{\beta_{T-t_i-c_kH_i} a_{jk}}{2},\quad 1\leq k<j\leq s.
\end{align*}
Although \eqref{eq:RK-update-beta} and \eqref{eq:RK-update-beta-k} depend on $\beta_t$, we can rewrite them in the following form:
\begin{align}
\label{eq:RK-update-beta-rf}
\dY_{t_{i+1}} = \dY_{t_i} + H_i \sum_{j=1}^s  \check b_j \check k_j(\dY_{t_i}),
\end{align}
with 
\begin{equation}\label{eq:RK-update-beta-k-rf}
\begin{split}
    &\check k_1(x) = \widehat V_{t_i + c_1 H_i}(x),\\
    &\check k_2(x) = {\widehat V}_{t_i + c_2 H_i}\bigl(x + (\check a_{21}(H_i)\check k_1(x)) H_i\bigr),\\
    &\check k_3(x) = {\widehat V}_{t_i + c_3 H_i}\bigl(x + (\check a_{31}(H_i)\check k_1(x) + \check a_{32}(H_i) \check k_2(x)) H_i\bigr),\\
    &\qquad\qquad\qquad\vdots
    \\
    &\check k_s(x) = {\widehat V}_{t_i + c_s H_i}\bigl(x + (\check a_{s1}(H_i) \check k_1(x) + \check a_{s2}(H_i)\check k_2(x) + \cdots + \check a_{s,s-1}(H_i)\check k_{s-1}(x)) H_i\bigr).
\end{split}
\end{equation}
In this reformulated version, the above scheme has the same structure as the standard Runge-Kutta schemes \eqref{eq:RK-update}.

Similarly, for the enhanced exponential Runge-Kutta scheme \eqref{eq:Exp-RK-update-beta} and \eqref{eq:Exp-RK-update-beta-k}, we introduce
\begin{align*}
\begin{split}
    &\check k_0(x)=x, \quad \check b_0(H_i)=\frac{e^{\zeta(t_{i}+H_i) - \zeta(t_i)}-1}{H_i},\quad 
      \check a_{j0}(H_i)=\frac{e^{\zeta(t_{i} + c_jH_i) - \zeta(t_i)}-1}{H_i},\\
      &\check b_{j}(H_i)=\sigma_{T-t_i-c_j H_i} b_j(H_i), \quad \check a_{jk}(H_i)=\sigma_{T-t_i-c_k H_i} a_{jk}(H_i), \quad 1\leq k<j\leq s.
\end{split}\end{align*}
With these definitions, the enhanced exponential Runge-Kutta scheme can be written as: 
\begin{align}
\label{eq:Exp-RK-update-beta-rf}
\dY_{t_{i+1}} = \dY_{t_i} + H_i \sum_{j=0}^s \check b_j (H_i)\check k_j(\dY_{t_i}),
\end{align}
and 
\begin{equation}\label{eq:Exp-RK-update-beta-k-rf}
\begin{split}
    &\check k_1(x) = s_{t_i+c_1H_i}(x+\check a_{10}(H_i) \check k_0(x) H_i),\\
    &\check k_2(x) = s_{t_i + c_2 H_i}\bigl(x + (\check a_{20}(H_i) \check k_0(x)+\check a_{21}(H_i) \check k_1(x)) H_i\bigr),\\
    &\check k_3(x) = s_{t_i + c_3 H_i}\bigl(x + (\check a_{30}(H_i) \check k_0(x)+\check a_{31}(H_i) \check k_1(x) + \check a_{32}(H_i) \check k_2(x)) H_i\bigr),\\
    &\qquad\qquad\qquad\vdots
    \\
    &\check k_s(x) = s_{t_i + c_s H_i}\bigl(x + (\check a_{s0}(H_i) \check k_0(x)+\check a_{s1}(H_i) \check k_1(x) + \check a_{s2}(H_i) \check k_2(x) + \cdots + \check a_{s,s-1}(H_i)\check k_{s-1}(x)) H_i\bigr).
\end{split}
\end{equation}
Again, in this new form, the enhanced exponential Runge-Kutta schemes have the same structure as the standard schemes \eqref{eq:RK-update-k} with $\sV_t$ replaced by $s_t$. Importantly, both  \( s_t(x) \) and \( \widehat V_t(x) = x + s_t(x) \) satisfy the same regularity assumptions (see \Cref{a:score-derivative}). Therefore, \Cref{p:ABCD} also holds for $s_t(x)$. Moreover, using the explicit formula \eqref{e:general_sigma_t} of $\sigma_t$, it is straightforward to check that all statements in \Cref{s:prel-estim-q} hold for the forward density of the process \eqref{eq:OU-time}. 

The following lemma states that the coefficients of both schemes \eqref{eq:RK-update-beta-rf} and \eqref{eq:Exp-RK-update-beta-rf} satisfy the regularity condition in \Cref{a:bounds on RK}.  
\begin{lemma}
\label{lem:coefficients}
We consider the $i$-th time step for the Runge-Kutta schemes over the time interval $[t_i, t_{i}+H_i]\subset[0, T-\tau]$, such that $H_{i} \leq \tau$. Under \cref{a:bounds on beta_t} for the variance schedule $\beta_t$, the Runge-Kutta coefficients, rewritten in the forms \eqref{eq:RK-update-beta-rf} and \eqref{eq:Exp-RK-update-beta-rf} and treated as  functions of the time step $h$, satisfy the following derivative bounds.
\begin{align}
\label{e:derbound}
    \max_{1\leq j,k\leq s}\max_{0\leq l\leq p+1} \tau^{l} |\del_{h}^l \check{a}_{jk}(h)|\leq A_p,\quad \max_{1\leq j\leq s} \max_{0\leq l\leq p+1} \tau^{l} |\del_{h}^l \check{b}_j(h)|\leq B_p.
\end{align}
\end{lemma}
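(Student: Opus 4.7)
The plan is to verify \eqref{e:derbound} separately for the two rewritten schemes, in both cases by reducing each coefficient to a smooth function of $h$ whose $l$-th derivative blows up no worse than $\tau^{-l}$, so that the $\tau^l$ prefactor in \eqref{e:derbound} absorbs the blow-up. The first step is to collect uniform ingredient bounds on the interval $[0,T-\tau]$. Using $H_i\leq \tau$ together with \Cref{a:bounds on beta_t} and \eqref{e:general_sigma_t}, one gets $\sigma_{T-t_i-ch}^2 \geq c_\beta \tau$ for every evaluation point $c\in[0,1]$ (from $1-e^{-x}\geq x/2$ for small $x$ and $\beta_s\geq C_\beta^{-1}$), together with $|\del_t^l \beta_t|, |\del_t^l \lambda_t| \leq C_l$. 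Chain rule and Fa\`a di Bruno then yield
\begin{align*}
|\del_t^l \sigma_{T-t}| \leq C_l\tau^{\tfrac12-l}, \quad |\del_t^l \log\sigma_{T-t}| \leq C_l\tau^{-l}, \quad |\del_t^l \zeta(t)| \leq C_l, \quad |\del_t^l \alpha_t| \leq C_l\tau^{-l},
\end{align*}
for all $0\leq l\leq p+1$, using $\zeta'(t)=\beta_{T-t}/2$ and $\alpha_t=\zeta(t)-\log\sigma_{T-t}$, with constants $C_l$ depending only on $C_\beta$ and $l$.

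The standard scheme \eqref{eq:RK-update-beta-rf} is immediate: $\del_h^l\check b_j(h) = \tfrac12 b_j(-c_j)^l(\del_t^l\beta)(T-t_i-c_jh)$ and the analogous expression for $\check a_{jk}$ are bounded by the $\beta$-bounds of Step~1 uniformly in $h$, and multiplication by $\tau^l\leq 1$ preserves the bound.

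The exponential scheme \eqref{eq:Exp-RK-update-beta-rf} is the substantive case. From \Cref{tab:exp-butcher-specific} every coefficient is built from the entire function $\varphi_k$, from $\sigma_{T-t_i-ch}$, and from time differences $G(h) := \zeta(t_i+ch)-\zeta(t_i)$ or $\alpha_{t_i+ch}-\alpha_{t_i}$ raised to various powers, with apparent $1/h$ or $1/h^2$ singularities at $h=0$ in expressions like $(e^{G(h)}-1)/h$ and $G(h)^2\varphi_2(G(h))/((\alpha_{t_i+c_2h}-\alpha_{t_i})h)$. These singularities are all removable by Hadamard's trick: since $G(0)=0$,
\begin{align*}
\frac{G(h)}{h} = \int_0^1 G'(xh)\,dx, \qquad \frac{e^{G(h)}-1}{h} = \Big(\int_0^1 G'(xh)\,dx\Big)\Big(\int_0^1 e^{yG(h)}\,dy\Big),
\end{align*}
and analogously for the $\varphi_k$ quotients, using $\varphi_k(z) = \int_0^1 \tfrac{x^{k-1}}{(k-1)!}e^{z(1-x)}dx$. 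Differentiating under the integral $l$ times and applying Fa\`a di Bruno to the compositions with $\zeta, \alpha, \sigma$, combined with the Step~1 bounds, gives $|\del_h^l \check a_{jk}(h)|, |\del_h^l \check b_j(h)| \leq C(p,s,C_\beta)\tau^{-l}$; multiplying by $\tau^l$ yields \eqref{e:derbound}.

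The main obstacle is bookkeeping: tracking how the $\tau^{-1}$ factors from $\del_t^l\log\sigma$ and $\del_t^l\alpha$ propagate through the nested integrals and Fa\`a di Bruno expansions, particularly in the third-order coefficient $a_{32}$ from \Cref{tab:exp-butcher-specific}, which carries a quotient with $(\alpha_{t_i+c_2H_i}-\alpha_{t_i})H_i$ in the denominator. The key observation keeping this tractable is that every factor which could a priori blow up as $\tau\to 0$ enters only through $\sigma^{-1}$, $\del_t^l\log\sigma$, or $\del_t^l\alpha$, each contributing exactly the power of $\tau^{-1}$ that is absorbed by the $\tau^l$ prefactor. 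Since the lemma is used only for the explicit schemes in \Cref{tab:exp-butcher-specific} (orders up to three), the bookkeeping is carried out case-by-case for the finitely many coefficients involved.
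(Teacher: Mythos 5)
Your proposal is correct and follows essentially the same route as the paper's own proof: both handle the standard scheme trivially from the boundedness of $\beta_t$, and both treat the exponential scheme by (i) expressing the apparently singular $1/h$ quotients as smooth averages via integral representations (your ``Hadamard trick'' $G(h)/h = \int_0^1 G'(xh)\,dx$ is exactly the paper's $\frac{\zeta(t+\fc h)-\zeta(t)}{\fc h}=\int_0^1 \beta_{T-t-\fc hs}\,\de s$, and your product-of-integrals form of $(e^{G}-1)/h$ plays the same role as the paper's factorization through $\varphi_1$), and (ii) tracking that each $h$-derivative of $\sigma$, $\log\sigma$, or $\alpha$ contributes at most one power of $\tau^{-1}$, which the $\tau^l$ prefactor absorbs. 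The only cosmetic difference is that the paper organizes the coefficients into three template functions $f_0,f_1,f_2$ and bounds their building blocks in \eqref{eq:RK-coeff}, whereas you phrase the finite case-check per Butcher-tableau entry directly; both amount to the same bookkeeping.
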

The proof of \cref{lem:coefficients} is technical and depends on the explicit forms of \( b_j(H_i) \) and \( a_{jk}(H_i) \). We defer this proof to \Cref{s:coefficient_est}.
Using the bounds in \eqref{e:derbound}, we can directly extend the proof of \Cref{theorem: main L^1 theorem} to both schemes \eqref{eq:RK-update-beta-rf} and \eqref{eq:Exp-RK-update-beta-rf}, which establishes \Cref{theorem: main L^1 theorem general}. 

% For \eqref{e:defba1}, the bound \eqref{e:derbound} follows directly from the fact that \( b_j \) and \( a_{jk} \) are constants, together with \Cref{a:bounds on beta_t}, which asserts that the derivatives of \( \beta_t \) are uniformly bounded.

%
%as After the change of time variable using~\eqref{eq:t-zeta change of variable}, the reverse process~\eqref{e:copy_hatYcirc} becomes essentially equivalent to~\eqref{eq:reverse-ode-score}. Similarly, the (exponential) Runge–Kutta methods~\eqref{eq:RK-update-general} and~\eqref{eq:Exp-RK-update-general} align with those in~\eqref{eq:RK-update} and~\eqref{eq:Exp-RK-update}, respectively, when the time sequence $\bT$ is taken to be $\{\zeta(j\Delta) : 0 \leq j \leq N\}$, and $t_j$ is set as $\zeta_j$ for $1 \leq j \leq M$.

%\Cref{theorem: main L^1 theorem general} follows from exactly the same argument as that of \Cref{theorem: main L^1 theorem}, by noticing that from \eqref{e:betatbound}
%\begin{align}\begin{split}
%    &\zeta(T)\geq T/(2C_\beta), \quad
%    \zeta(T)-\zeta(T-\tau)=\int_0^\tau \beta_s\rd s\leq C_\beta\tau,\\
%    &\zeta_{i+1}-\zeta_i=\zeta(t_{i+1})-\zeta(t_i)\leq C_\beta(t_{i+1}-t_i).
%\end{split}\end{align}

\section{Numerical Experiments}
\label{sec:num}
In this section, we present numerical experiments to validate our theoretical assumptions, verify the derived convergence rates, and investigate sharp bounds for the probability flow ODE approach.

\subsection{Assumption Verification}
In this subsection, we numerically verify our \cref{a:score-derivative}, which concerns the boundedness of the gradient and Hessian of the estimated score function. Specifically, we evaluate:
\begin{align*}
        \sup_{x \in \mathbb{R}^d} \max_{1\leq l \leq d} |\partial_l s_t ^{(j)}(x)| , \quad \sup_{x \in \mathbb{R}^d} \max_{1\leq l, k\leq d} |\partial^2 _{lk} s_t ^{(j)}(x)|, 
    \end{align*}
on both the MNIST~\cite{lecun2002gradient} and FashionMNIST~\cite{xiao2017fashion} datasets. 
For both datasets, the data dimension is $d=28^2$, where $x \in \bR^d$ and $s_t(x) \in \bR^d$; the superscript $1 \leq j \leq d$ denotes the $j$-th component of the score function.
The diffusion model is trained using the open-source library \url{https://github.com/lucidrains/denoising-diffusion-pytorch}. Its neural network architecture follows the PixelCNN++ backbone~\cite{salimans2017pixelcnn++}, incorporating convolutional residual blocks and self-attention mechanisms. 
The forward process \eqref{eq:reverse-ode_general} employs a linear variance schedule from $\beta_{\min} = 10^{-4}$ to $\beta_{\max} = 0.02$ with $T=1000$.   
Generated samples are visualized in \cref{fig:generated_samples}. 
To assess the gradient and Hessian, we visualize them at
$t=0.1T$, $0.5T$, $0.9T$ and $0.99T$ each as a $28\times 28$ matrix in \cref{fig:MNIST_results} and \cref{fig:FashionMNIST_results}. 
Specifically,  for the gradient, the figures visualize $\sup_{x \in \mathbb{R}^d} \max_{1\leq l \leq d} |\partial_l s_t ^{(j)}(x)|$, where each pixel corresponds to a different $j$; similarly, for the Hessian, the figures visualize $\sup_{x \in \mathbb{R}^d} \max_{1\leq l, k\leq d} |\partial^2 _{lk} s_t ^{(j)}(x)|$, where each pixel also corresponds to a different $j$.
In both cases, the maximum values of the gradient and Hessian remain bounded. These values increase as $t$ approaches the final time $T$, scaling approximately as $\frac{1}{T - t}$, which verifies our \cref{a:score-derivative}:
    \begin{align*}
        \sup_{x \in \mathbb{R}^d} \max_{1\leq l, j \leq d} |\partial_{l} s_t ^{(j)}(x)| \leq \widetilde K \tau ^{-2}, \quad \sup_{x \in \mathbb{R}^d} \max_{1\leq l,k, j \leq d} |\partial^2 _{lk} s_t ^{(j)}(x)| \leq \widetilde K  \tau ^{-3} .
    \end{align*}
for any $t \in [0,T-\tau]\cap \bT$.

\begin{figure}[ht]
\centering
    \includegraphics[width=0.3\textwidth]{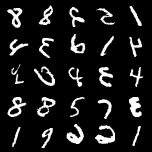}~~~~~
    \includegraphics[width=0.3\textwidth]{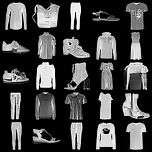}
    \caption{Generated samples from MNIST and FashionMNIST.
     }
    \label{fig:generated_samples}
\end{figure}

\begin{figure}[ht]
\centering
    \includegraphics[width=0.9\textwidth]{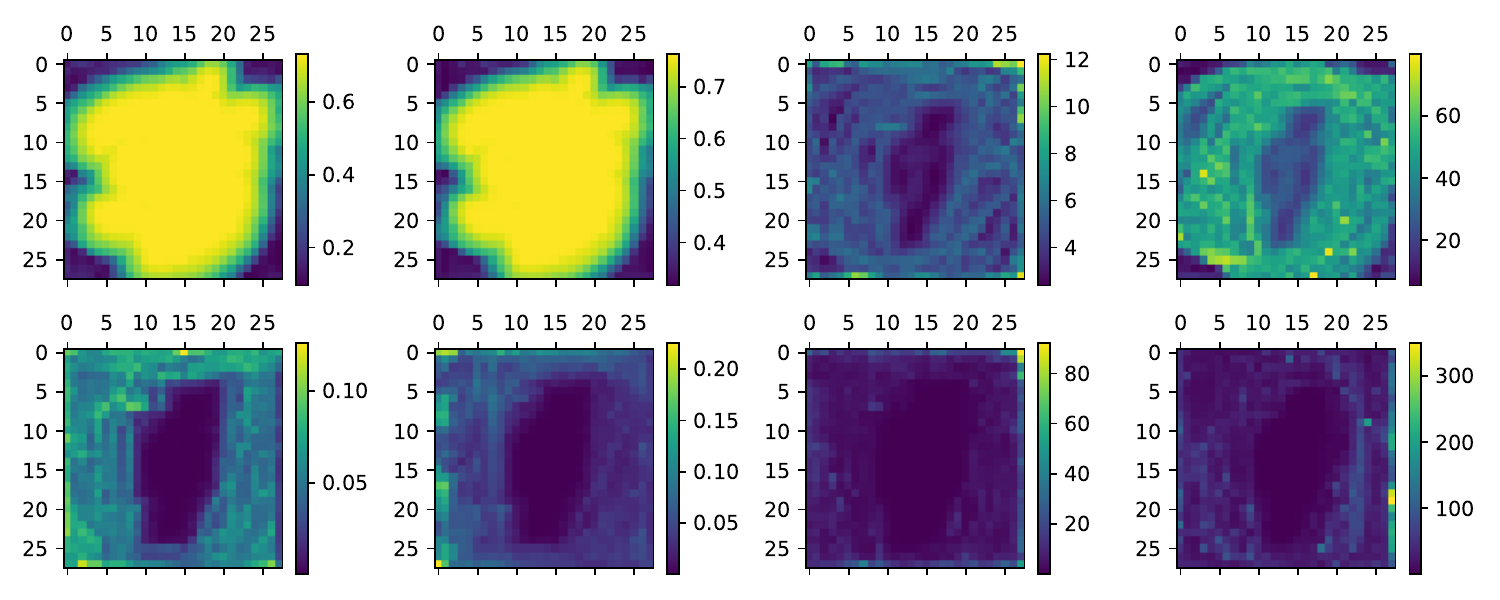}
    \caption{MNIST dataset: maximum of gradient $\sup_{x \in \mathbb{R}^d} \max_{1\leq l \leq d} |\partial_l s_t ^{(j)}(x)|$ (top) and Hessian $\sup_{x \in \mathbb{R}^d} \max_{1\leq l, k\leq d} |\partial^2 _{lk} s_t ^{(j)}(x)|$ (bottom) at $t=0.1T$, $0.5T$, $0.9T$, and $0.99T$ (from left to right).
     }
    \label{fig:MNIST_results}
\end{figure}

\begin{figure}[ht]
\centering
    \includegraphics[width=0.9\textwidth]{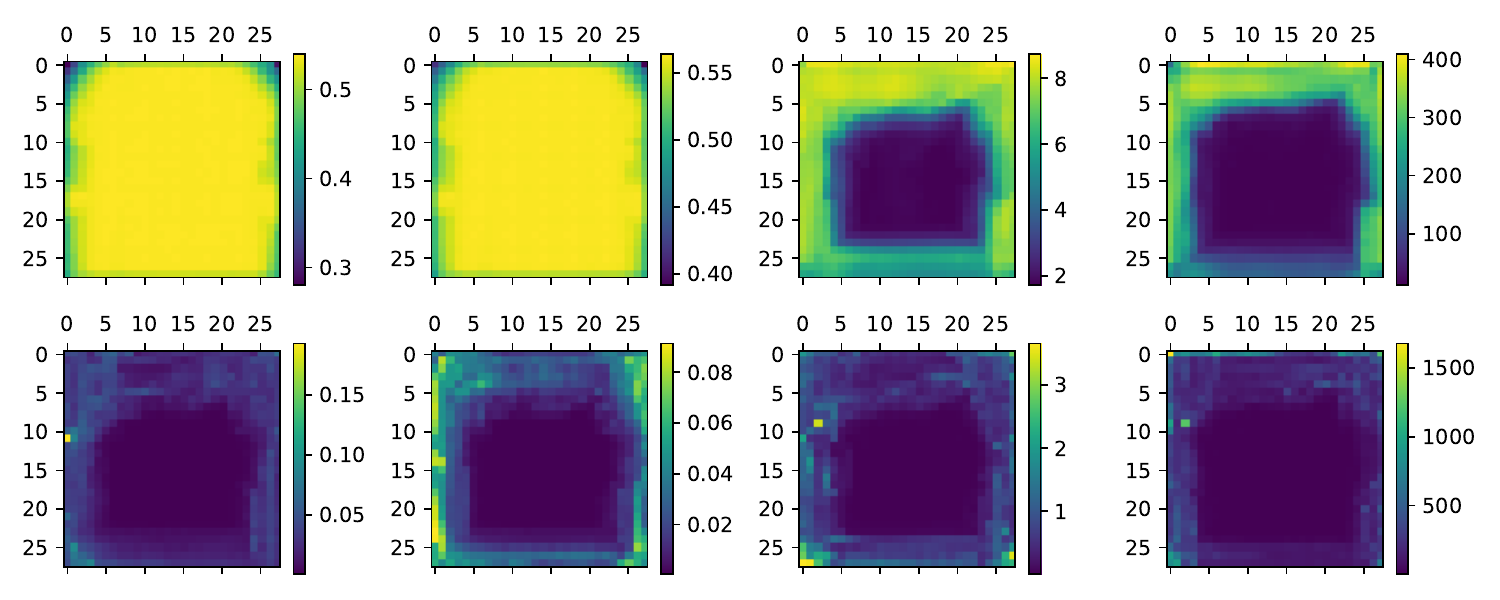}
    \caption{FashionMNIST dataset: maximum of gradient $\sup_{x \in \mathbb{R}^d} \max_{1\leq l \leq d} |\partial_l s_t ^{(j)}(x)|$ (top) and Hessian $\sup_{x \in \mathbb{R}^d} \max_{1\leq l, k\leq d} |\partial^2 _{lk} s_t ^{(j)}(x)|$ (bottom) at $t=0.1T$, $0.5T$, $0.9T$, and $0.99T$ (from left to right).
     }
    \label{fig:FashionMNIST_results}
\end{figure}

\subsection{Convergence Verification and Sharp Bounds} 
\label{ssec:sharp_bounds}
In this subsection, we numerically analyze the convergence behavior of various schemes for solving the probability flow ODE. Our focus is on a $K$-mode Gaussian mixture target distribution, given by 
\begin{equation}
\label{eq:GM}
    q_0(x) = \sum_{k=1}^K w_k \N(x; m_k, C_k).
\end{equation}
The motivation for using a Gaussian mixture is its widespread application in classification tasks and its analytical form. Although its density does not have compact support, it decays exponentially fast.
The forward process, such as the OU process denoted in \eqref{eq:OU} or more practical process denoted in \eqref{eq:OU-time}, yields
\begin{align*}
	q_t(y) = &  \int_{\bR^d} \frac{1}{(\sqrt{2\pi} \sigma_t)^d} \cdot \exp \left( -\frac{\|y - \lambda_t x\|_2^2}{2\sigma_t^2} \right) q_0(x) \de x 
	=  \sum_{k=1}^K w_k \N(y; \lambda_t m_k, \lambda_t^2 C_k + \sigma_t^2 I).
\end{align*}
The score function takes the following analytical form 
\begin{align}
	\nabla_x \log q_t(x)
	= & -\sum_{k=1}^K\frac{w_k\N(x; \lambda_t m_k, \lambda_t^2 C_k + \sigma_t^2 I)}{q_t(x)} (\lambda_t^2C_k + \sigma_t^2I)^{-1}(x - \lambda_t m_k).
\end{align}

\begin{figure}[ht]
\centering
    \includegraphics[width=0.5\textwidth]{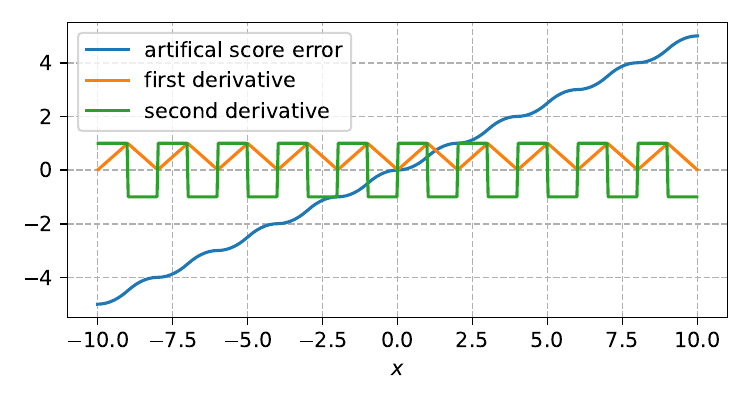}
    \caption{Artificial score error and its first and second derivatives used in \cref{ssec:sharp_bounds}.
     }
    \label{fig:score-error-visualization}
\end{figure}

As discussed previously, the score function is typically represented by a neural network with inputs $t$ and $x$, trained through score matching with sequentially corrupted training data (see \cref{s:score_matching}). However, for the purpose of this test, we circumvent the score matching step. Instead, we assume access to an imperfect score function, characterized by the following artificially score error (visualized in \cref{fig:score-error-visualization}):
\begin{align*}
    s_{t}(x) - \nabla \log \varrho_t(x) =  \varepsilon_{\rm score}\frac{\mathbbm{1} \delta(x_1)}{\sqrt{d}} \quad\textrm{where} \quad\delta(x) = \begin{cases} n+\frac{(x - \lfloor{x\rfloor})^2}{2} & \lfloor{x\rfloor} = 2n  \\ n+1 - \frac{(1 + \lfloor{x\rfloor} -x)^2}{2} & \lfloor{x\rfloor} = 2n+1.
\end{cases}
\end{align*}
Here, $\varepsilon_{\rm score}$ denotes the magnitude of the score error, and $d$ is the dimension of $x$.  The vector $\mathbbm{1} \in \bR^d$ is an all-ones vector, scaled by the factor $\sqrt{d}$. The function $\delta:\bR \rightarrow \bR$ has discontinuous second order derivatives $ \delta''(x) = \begin{cases} 1 & \lfloor{x\rfloor} = 2n  \\ -1 & \lfloor{x\rfloor} = 2n+1
\end{cases}$.

For the following numerical study, we consider various combinations
\begin{itemize}
    \item Standard Runge-Kutta schemes up to fourth order applied to the OU forward process with $T=16$, denoted as RK1, RK2, RK4.
    \item Standard Runge-Kutta schemes up to fourth order applied to a general forward process with a linear variance schedule from $\beta_{\min} = 10^{-4}$ to $\beta_{\max} = 0.02$ with $T=2000$, denoted as RK1($\beta$), RK2($\beta$), RK4($\beta$).
    \item Enhanced exponential Runge-Kutta schemes up to third order applied to the same general forward process with the aforementioned linear variance schedule and $T=2000$, denoted as ExpRK1($\beta$), ExpRK2($\beta$), ExpRK3($\beta$). 
\end{itemize}
To initialize the Runge-Kutta solvers, we sample $J=10^7$ particles from the standard Gaussian distribution $\mathcal{N}(x; 0, \id_d)$.
Convergence is assessed based on the total variation error, relative mean error, and relative covariance error.
For the total variation error, we consider the marginal density along the first dimension and estimate it using kernel density estimation, with bandwidth selected according to Silverman’s rule \cite{silverman2018density}. All code used to produce the numerical results and figures in this paper are available at
    \url{https://github.com/PKU-CMEGroup/InverseProblems.jl/blob/master/Diffusion/Gaussian-mixture-density-RK-include.ipynb}\,.

\begin{figure}[ht]
\centering
    \includegraphics[width=0.9\textwidth]{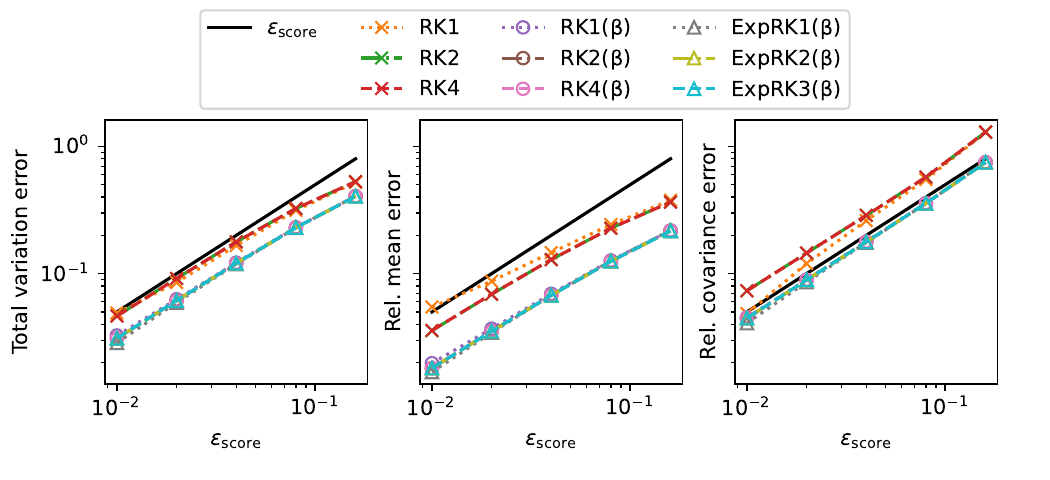}
    \caption{One dimensional test: convergence of density estimates obtained by solving the probability flow ODE using different Runge-Kutta schemes (with 512 time steps, rendering temporal error negligible) under varying levels of artificial score error $\varepsilon_{\rm score}$. This observed linear convergence rate surpasses our worst-case bound of $\varepsilon_{\rm score}^{1/2}$.
     }
    \label{fig:score-error}
\end{figure}

\begin{figure}[ht]
\centering
    \includegraphics[trim=0 16pt 0 54pt, 
    clip, width=0.9\textwidth]{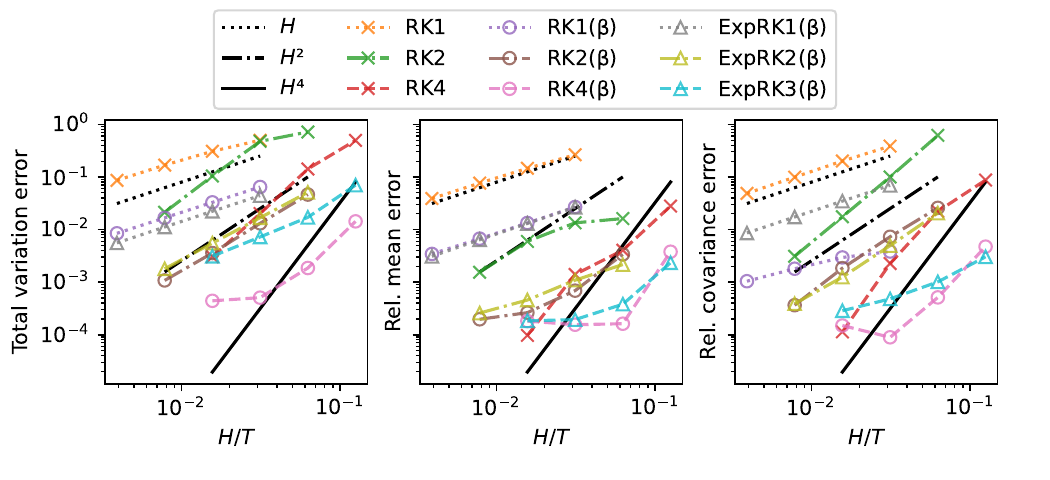}
    \includegraphics[width=0.9\textwidth]{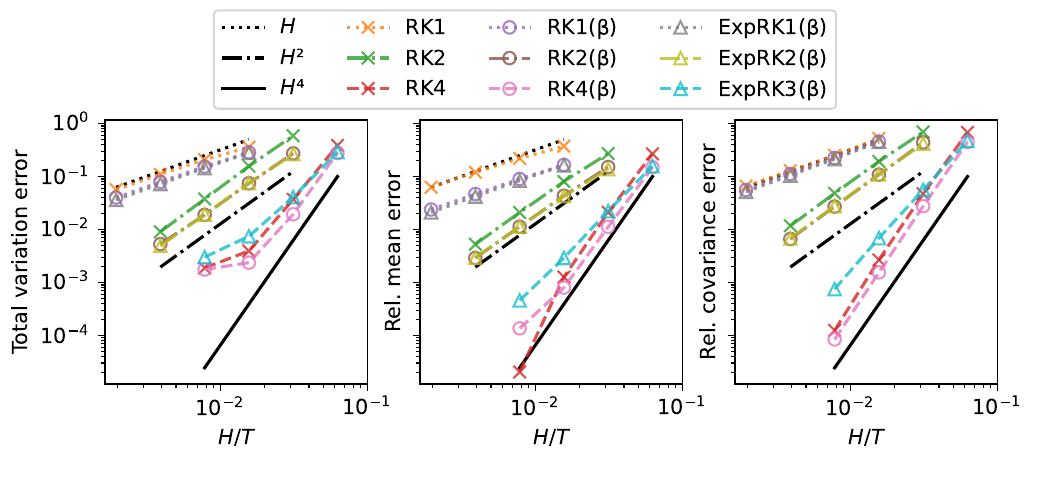}
    \caption{One dimensional test: convergence of density estimates obtained by solving the probability flow ODE using different Runge-Kutta schemes, with varying time step sizes $H$ and a fixed negligible score error magnitude $\varepsilon_{\rm score} = 10^{-6}$ (top), as well as with order-matching score error $\varepsilon_{\rm score} = \mathcal{O}(H^p)$ (bottom).
     }
    \label{fig:discretization-error}
\end{figure}

We begin with a one-dimensional simulation ($d=1$), where the target distribution is a three-component Gaussian mixture. 
First, we study the effect of score matching error on convergence. We keep the time step small (with 512 time steps, rendering temporal error negligible) and vary the magnitude of the score error $\varepsilon_{\rm score}$. The convergence results are depicted in  \cref{fig:score-error}.
A clear linear relationship is evident between the errors and $\varepsilon_{\rm score}$.  Interestingly, the empirical errors are smaller than the theoretical upper bound predicted by \cref{theorem: main L^1 theorem}, which scales as $\varepsilon_{\rm score}^{1/2}$. This theoretical rate stems from the use of the Gagliardo–Nirenberg interpolation inequality to control the contribution of the score error, as explained in \cref{rem:score_error}. When the score function exhibits greater smoothness, this exponent can be improved towards $1$, explaining the sharper empirical convergence observed.

Next, we study the effect of temporal discretization error. 
We fix a small score error magnitude $\varepsilon_{\rm score} = 10^{-6}$ and vary the time step size $H$. 
The observed convergence rates (see \cref{fig:discretization-error}-top) for different Runge-Kutta schemes align with the theoretical predictions. The plateau observed for very small step sizes is attributed to the finite sample size of particles, which imposes an error floor around $10^{-4}$. For the total variation error, it also suffers from the kernel density estimate error.
Additionally, we observe that schemes applied to forward processes with a linear variance schedule yield significantly lower errors compared to those using the OU process. In this case, the standard and exponential Runge-Kutta schemes perform similarly, which can be attributed to the absence of singularities in the score function as $t$ approaches the terminal time $T$. Furthermore, high-order schemes consistently achieve lower errors while maintaining similar computational cost. Among all the schemes tested, RK4 and ExpRK3 achieve the best performance and exhibit superior stability, accurately generating samples even with as few as 8 large time steps.
Next, we test the case where the score error matches the temporal discretization error, i.e., $\varepsilon_{\rm score} = \mathcal{O}(H^p)$.  The results, shown in  \cref{fig:discretization-error}-bottom,  clearly confirm the expected high-order convergence with respect to the time step size $H$. As predicted by \cref{theorem: main L^1 theorem}, the two sources of error do not compound or amplify one another. 

Finally, we study the effect of the dimension $d$. We consider a randomly generated 5-mode Gaussian mixture as described in \cite{albergo2023stochastic,huang2024convergence} in dimensions $d=8$, $32$, and $128$, and set the score error to match the order of the Runge-Kutta scheme. 
The estimated marginal densities for $d=128$ using various Runge-Kutta schemes applied to the OU process\footnote{To keep the figure concise, we omit results for Runge-Kutta schemes applied to forward processes with a linear variance schedule, which perform even better than those presented. } are depicted in \cref{fig:high-D-error}-top. We use a relatively large score error $\varepsilon_{\rm score} = 0.1$, with 32 time steps for RK1 and RK2, and 16 steps for RK4. Notably, RK4, which allows for a larger time step (and hence lower computational cost), successfully captures all modes and achieves the best accuracy.
The convergence results, depicted in  \cref{fig:high-D-error}-bottom, clearly demonstrate a high-order relationship between the error and the step size $H$.  Surprisingly, the observed error exhibits  no dependence on the dimensionality $d$, which is significantly better than the worst-case theoretical bound \cref{theorem: main L^1 theorem}. Based on this series of tests, we conjecture that the error scales as $\mathcal{O}(\varepsilon_{\rm score} + H^p)$, at least for Gaussian mixture target distributions. A linear dependence on  $\varepsilon_{\rm score}$ can be justified, as previously discussed, or by assuming an error bound on the gradient of the score~\cite{li2023towards,li2024sharp,li2024accelerating}.However, deriving a sharp estimate for  the dependence on dimensionality $d$ may require additional analysis tools.

\begin{figure}[ht]
\centering
    \includegraphics[width=0.7\textwidth]{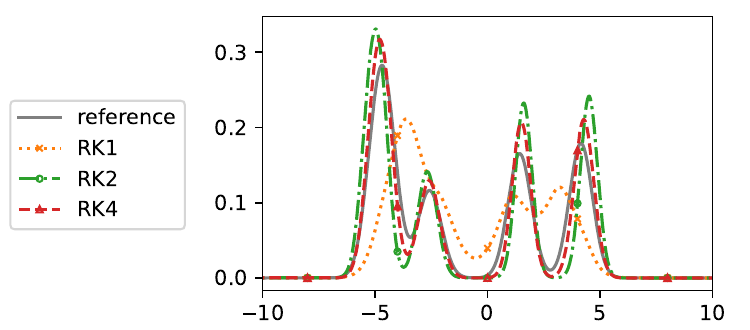}
    \includegraphics[width=0.9\textwidth]{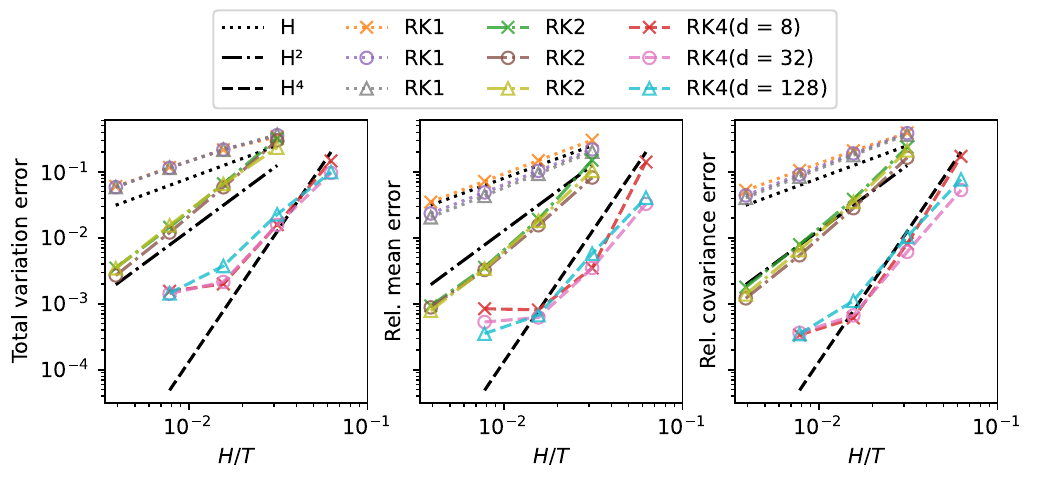}
    \caption{High dimensional test: estimated density for a $d=128$ Gaussian mixture using various Runge-Kutta schemes applied to the OU process,  with 32 time steps for RK1 and RK2, and 16 time steps for RK4 (top); convergence of density estimates obtained with varying step sizes $H$ and score errors set to match the scheme, $\varepsilon_{\rm score} = \mathcal{O}(H^p)$ (bottom).
     }
    \label{fig:high-D-error}
\end{figure}

\section{Conclusion}
In this study, we investigated the convergence of score-based generative models using the probability flow ODE framework, both theoretically and numerically. Our focus was on high-order (exponential) Runge-Kutta schemes, which are widely used in industry.
We provided theoretical convergence guarantees at the discrete level and, compared to prior work, our analysis relies on minimal assumptions. In addition to an $L^2$ score-matching error, we only assume boundedness of the first and second derivatives of the estimated score function—a condition that we verified numerically on several benchmark datasets.
Furthermore, our numerical experiments with Gaussian mixture target densities, conducted in dimensions up to $128$, empirically support our theoretical findings. The results suggest a sharp error bound of $\mathcal{O}(\varepsilon_{\rm score} + H^p)$, indicating the potential for improved estimates in terms of both dimension and score-matching error. 
Deriving tighter bounds on the dependence of error on dimensionality remains an open challenge and may require the development of new analytical techniques---an important direction for advancing the theoretical understanding of high-dimensional generative models in practical settings. Notably, dimension-free convergence results for stochastic dynamics in Gaussian mixtures have been established in \cite{li2025dimension}.  
In parallel, there have been recent significant developments in diffusion model theory that address low-dimensional stochastic dynamics \cite{li2024adapting, azangulov2024convergence, potaptchik2024linear, huang2024denoising,yakovlev2025generalization,tang2024adaptivity}, as well as in probability flow ODEs using first-order solvers \cite{liang2025low,tang2025adaptivity}. It would be interesting to investigate higher-order solvers in this setting. 
Additionally,  various sampling methods can be interpreted as gradient flows in the space of probability measures \cite{chen2023sampling, trillos2023optimization, jordan1998variational, chewi2024statistical}. For theoretical connections between the score-based approach and gradient flows, see \cite{cheng2024convergence, xie2025flow}, which represents another promising avenue for accelerating sampling methods.

\vspace{0.5in}
\noindent {\bf Acknowledgments} 
DZH is supported by the high-performance computing platform of Peking University.
JH is supported by NSF grants DMS-2331096 and DMS-2337795, and the Sloan research fellowship. 

\bibliography{References.bib}
\bibliographystyle{abbrv}

%---------------------------------------
\appendix
%---------------------------------------
\section{Enhanced Exponential Runge-Kutta Schemes}\label{section:rk scheme}

In this section, we present the derivation of the enhanced exponential Runge–Kutta schemes introduced in \cref{s:general} for solving the probability flow ODE with a general variance schedule $\beta_t$:
\begin{align} 
\label{e:ODE_flow0}
	\del_t \sY_t = \frac{\beta_{T-t}}{2}\left( \sY_t +  s_t(\sY_t)\right) = \frac{\beta_{T-t}}{2} \sV_t(\sY_t), \qquad \sY_0 \sim \svarrho_0, \qquad 0 \leq t \leq T.
\end{align}
The schemes are constructed to ensure that all primary and intermediate time steps align with the discrete time sequence used during score matching: $\bT=\{j\Delta t : 0 \leq j \leq N,\,  \Delta t = \frac{T}{N}\}$. This alignment is necessary to achieve high-order convergence to the target distribution in \Cref{theorem: main L^1 theorem} and \Cref{theorem: main L^1 theorem general}.  We also provide estimates for the corresponding Runge–Kutta scheme coefficients.

\subsection{Coefficient Derivation}
The derivation follows the approaches in \cite{lu2022dpm,hochbruck2005explicit}.
Starting from the integral equation \eqref{eq:exp-rk-intg}, 
\begin{align*} 
     \sY_{t_{i+1}} = e^{\zeta(t_{i+1}) - \zeta(t_i)}\sY_{t_{i}} - \sigma_{T - t_{i+1}}e^{\alpha_{t_{i+1}}}\int_{t_i}^{t_{i+1}} \frac{\rd}{\rd t} e^{-\alpha_{t}} \rscore_t(\sY_t) \rd t.
\end{align*}
Recall the time change variable $\alpha_t$ defined in \eqref{eq:t-alpha change of variable}, which is monotonically decreasing in $t$.
We treat $\rscore_t(\sY_t)$ as a function of $\alpha_t$ and  apply a Taylor expansion of 
$\rscore_t(\sY_t)$ with respect to the changed time variable $\alpha_t$, yielding the following approximation:
\begin{align} 
     \sY_{t_{i+1}} &= e^{\zeta(t_{i+1}) - \zeta(t_i)}\sY_{t_{i}} -  \sigma_{T - t_{i+1}}e^{\alpha_{t_{i+1}}}\sum_{k=0}^{q-1}\int_{t_i}^{t_{i+1}} \frac{\bigl(\alpha_{t} - \alpha_{t_i}\bigr)^k}{k!}\frac{\rd}{\rd t} e^{-\alpha_{t}} \rd t \frac{\de^{(k)}\rscore_{t_i}(\sY_{t_i})}{\de{\alpha}^{(k)}}  + \mathcal{O}(H^{q+1}) \nonumber\\
     &= e^{\zeta(t_{i+1}) - \zeta(t_i)}\sY_{t_{i}} + \sigma_{T - t_{i+1}}\sum_{k=1}^{q}\bigl(\alpha_{t_{i+1}} - \alpha_{t_i}\bigr)^{k}\varphi_k(\alpha_{t_{i+1}} - \alpha_{t_i}) \frac{\de^{(k-1)}\rscore_{t_i}(\sY_{t_i})}{\de \alpha^{(k-1)}}  + \mathcal{O}(H^{q+1}) \label{eq:app-integral}.
\end{align}
where $\varphi_k(h) = \int_0^1 \frac{x^{k-1}}{(k-1)!}e^{h(1-x)}dx$. In particular,
\begin{equation}
\label{eq:varphi}
\varphi_1(h) = \frac{e^{h}-1}{h}\quad \textrm{and} \quad \varphi_2(h) = \frac{e^{h}-h-1}{h^2}.
\end{equation}

The first order scheme ($q=1$) with $c_1 = 0$ is given by 
\begin{equation}\label{eq:exp-RK-stiff1}\begin{split}
       k_1(x) &= \rscore_{t_i}(x), \\
     \dY_{t_{i+1}} &= e^{\zeta(t_{i+1}) - \zeta(t_i)}\dY_{t_{i}} 
     + \sigma_{T - t_{i+1}}\bigl(\alpha_{t_{i+1}} - \alpha_{t_i}\bigr)\varphi_1\bigl(\alpha_{t_{i+1}} - \alpha_{t_i}\bigr) k_1(\dY_{t_i}). 
\end{split}
\end{equation} 
Assume that $\dY_{t_i} = \sY_{t_i}$ is exact at time $t_i$, and subtract the above equation from \eqref{eq:app-integral}, the local truncation error 
is given by $\dY_{t_{i+1}} - \sY_{t_{i+1}} = \mathcal{O}(H_i^2)$.

The second order scheme ($q=2$), with $c_1 = 0$ and any $0 < c_2 \leq 1$, is given by 
\begin{equation}
\label{eq:exp-RK-stiff2}
    \begin{split}
k_1(x) =& \rscore_{t_i}(x), \\
k_2(x) =& \rscore_{t_i + c_2H_i}\bigl(e^{\zeta(t_{i}+c_2H_i) - \zeta(t_i)}x + \sigma_{T - t_{i} - c_2H_i}\bigl(\alpha_{t_{i}+ c_2H_{i}} - \alpha_{t_i}\bigr)\varphi_1(\alpha_{t_{i} + c_2H_{i}} - \alpha_{t_i}) k_1(x)\bigr), \\
\dY_{t_{i+1}} =& e^{\zeta(t_{i+1}) - \zeta(t_i)}\dY_{t_{i}} 
     + b_1 k_1(\dY_{t_i}) H_i + b_2 k_2(\dY_{t_i}) H_i, 
\\
     b_2 = & \sigma_{T - t_{i+1}}\frac{(\alpha_{t_{i+1}} - \alpha_{t_i})^2}{(\alpha_{t_{i}+c_2H_i} - \alpha_{t_i})H_i} \varphi_2(\alpha_{t_{i+1}} - \alpha_{t_i})
     \quad 
     b_1 =  \sigma_{T - t_{i+1}}\frac{\alpha_{t_{i+1}} - \alpha_{t_i}}{H_i}\varphi_1(\alpha_{t_{i+1}} - \alpha_{t_i}) - b_2.
\end{split}
\end{equation}
To analyze the local truncation error, we assume that  $\dY_{t_i} = \sY_{t_i}$ is exact at time $t_i$. Then, the slope evaluations satisfy the following approximations:
\begin{align*} 
k_1(\dY_{t_i}) =& \rscore_{t_i}(\dY_{t_i}), 
\\
k_2(\dY_{t_i}) =& \rscore_{t_i + c_2H_i}\bigl(\sY_{t_i + c_2H_i}\bigr) + \mathcal{O}(H_i^2)
= \rscore_{t_i}(\sY_{t_i}) + \bigl(\alpha_{t_i+c_2H_i} - \alpha_{t_i}\bigr) \frac{\de\rscore_{t_i}(\sY_{t_i})}{\de\alpha} + \mathcal{O}(H_i^2).
\end{align*}
The resulting update satisfies
$$\dY_{t_{i+1}} = e^{\zeta(t_{i+1}) - \zeta(t_i)}\dY_{t_{i}} 
     + b_1 k_1(\dY_{t_i})H_i 
     + b_2 k_2(\dY_{t_i})H_i 
     = \sY_{t_{i+1}} + \mathcal{O}(H_i^3).$$ 
This third order local accuracy is achieved by matching coefficients of $\rscore_{t_i}(\sY_{t_i})$ and $\frac{\de\rscore_{t_i}(\sY_{t_i})}{\de\alpha}$ in the update rules \eqref{eq:exp-RK-stiff2} and \eqref{eq:app-integral} to order $3$, using  the identities:
\begin{align*}
    &\Bigl(b_1  + b_2\Bigr)H_i = \sigma_{T - t_{i+1}}\bigl(\alpha_{t_{i+1}} - \alpha_{t_i}\bigr)\varphi_1(\alpha_{t_{i+1}} - \alpha_{t_i}), 
    \\
    &b_2 \bigl(\alpha_{t_i+c_2H_i} - \alpha_{t_i}\bigr) H_i = \sigma_{T - t_{i+1}}\bigl(\alpha_{t_{i+1}} - \alpha_{t_i}\bigr)^2\varphi_2(\alpha_{t_{i+1}} - \alpha_{t_i}). 
\end{align*}

The third order scheme ($q=3$), with $c_1 = 0$, any $0 < c_2  < c_3 \leq 1$, and $\gamma = \frac{3c_3^2 - 2c_3}{3c_2^2-2c_2}$, is given by 
{\small
\begin{equation}
\label{eq:exp-RK-stiff3}
    \begin{split}
        k_1(x) =& \rscore_{t_i}(x), 
\\
k_2(x) =& \rscore_{t_i + c_2H_i}\Bigl(e^{\zeta(t_{i}+c_2H_i) - \zeta(t_i)}x + \sigma_{T - t_{i} - c_2H_i}\bigl(\alpha_{t_{i}+ c_2H_{i}} - \alpha_{t_i}\bigr)\varphi_1\bigl(\alpha_{t_{i} + c_2H_{i}} - \alpha_{t_i}\bigr) k_1(x)\Bigr), 
\\
k_3(x) =& \rscore_{t_i + c_3H_i}\Bigl(e^{\zeta(t_{i}+c_3H_i) - \zeta(t_i)}x 
+ \sigma_{T - t_{i} - c_3H_i}\bigl(\alpha_{t_{i}+ c_3H_{i}} - \alpha_{t_i}\bigr)\varphi_1\bigl(\alpha_{t_{i} + c_3H_{i}} - \alpha_{t_i}\bigr) k_1(x) + a_{32} \bigl(k_2(x) - k_1(x)\bigr)H_i
\Bigr), 
\\
a_{32} =& \frac{\gamma\sigma_{T - t_{i} - c_2H_i}(\alpha_{t_{i}+ c_2H_{i}} - \alpha_{t_i}\bigr)^2\varphi_2\bigl(\alpha_{t_{i} + c_2H_{i}} - \alpha_{t_i}\bigr) + \sigma_{T - t_{i} - c_3H_i}\bigl(\alpha_{t_{i}+ c_3H_{i}} - \alpha_{t_i}\bigr)^2\varphi_2\bigl(\alpha_{t_{i} + c_3H_{i}} - \alpha_{t_i}\bigr)}{(\alpha_{t_{i}+ c_2H_{i}} - \alpha_{t_{i}})H_i},
\\
\dY_{t_{i+1}} =& e^{\zeta(t_{i+1}) - \zeta(t_i)}\dY_{t_{i}} 
     + b_1 k_1(\dY_{t_i})H_i 
     + \gamma b_3 k_2(\dY_{t_i})H_i 
     + b_3 k_3(\dY_{t_i})H_i, 
\\
b_{3}  =&  \frac{\sigma_{T - t_{i+1}}
     \bigl(\alpha_{t_{i+1}} - \alpha_{t_i}\bigr)^2\varphi_2(\alpha_{t_{i+1}} - \alpha_{t_i})
      }{\bigl(\gamma(\alpha_{t_{i}+ c_2H_{i}} - \alpha_{t_{i}}) + (\alpha_{t_{i}+ c_3H_{i}} - \alpha_{t_{i}})\bigr)H_i}\quad
b_1 = \frac{\sigma_{T - t_{i+1}}\bigl(\alpha_{t_{i+1}} - \alpha_{t_i}\bigr)\varphi_1\bigl(\alpha_{t_{i+1}} - \alpha_{t_i}\bigr)}{H_i}  - (1+\gamma)b_3.
    \end{split}
\end{equation}}The local truncation error analysis assumes that $\dY_{t_i} = \sY_{t_i}$ is exact at time $t_i$. Denote $J_{t_i} = \frac{\partial \rscore_{t_i}(x)}{\partial x}\Bigl|_{x=\sY_{t_i}}$,  then the slope evaluations satisfy the following approximations:
{\small
\begin{align*} 
k_1(\dY_{t_i}) =& \rscore_{t_i}(\dY_{t_i}) ,
\\
k_2(\dY_{t_i}) =& \rscore_{t_i + c_2H_i}(\sY_{t_i + c_2H_i}) 
- \Bigl(\rscore_{t_i + c_2H_i}(\sY_{t_i + c_2H_i}) -  k_2(\dY_{t_i})\Bigr)  \quad  \textrm{(Taylor expansions at $\rscore_{t_i(\alpha_{t_i})}(\sY_{t_i(\alpha_{t_i})})$ and $\rscore_{t_i}(\sY_{t_i})$)}
\\
=& \rscore_{t_i}(\sY_{t_i}) + \bigl(\alpha_{t_i+c_2H_i} - \alpha_{t_i}\bigr) \frac{\de\rscore_{t_i}(\sY_{t_i})}{\de\alpha} + \frac{1}{2}\bigl(\alpha_{t_i+c_2H_i} - \alpha_{t_i}\bigr)^2 \frac{\de^{(2)}\rscore_{t_i}(\sY_{t_i})}{\de\alpha^{(2)}}\\
&- J_{t_i}\Bigl(\sigma_{T - t_i - c_2H_i}\bigl(\alpha_{t_i+c_2H_i} - \alpha_{t_i}\bigr)^2\varphi_2(\alpha_{t_i+c_2H_i} - \alpha_{t_i})\Bigr)\frac{\de\rscore_{t_i}(\sY_{t_i})}{\de\alpha} + \mathcal{O}(H_i^3) , 
\\
k_3(\dY_{t_i}) =& \rscore_{t_i + c_3H_i}(\sY_{t_i + c_3H_i}) 
- \Bigl(\rscore_{t_i + c_3H_i}(\sY_{t_i + c_3H_i}) -  k_3(\dY_{t_i})\Bigr)  \quad  \textrm{(Taylor expansions at $\rscore_{t_i(\alpha_{t_i})}(\sY_{t_i(\alpha_{t_i})})$ and $\rscore_{t_i}(\sY_{t_i})$)}
\\
=& \rscore_{t_i}(\sY_{t_i}) + \bigl(\alpha_{t_i+c_3H_i} - \alpha_{t_i}\bigr) \frac{\de\rscore_{t_i}(\sY_{t_i})}{\de\alpha} + \frac{1}{2}\bigl(\alpha_{t_i+c_3H_i} - \alpha_{t_i}\bigr)^2  \frac{\de^{(2)}\rscore_{t_i}(\sY_{t_i})}{\de\alpha^{(2)}} \\
&- J_{t_i}\Bigl(\sigma_{T - t_{i} - c_3H_i}\bigl(\alpha_{t_{i}+ c_3H_{i}} - \alpha_{t_i}\bigr)^2\varphi_2\bigl(\alpha_{t_{i} + c_3H_{i}} - \alpha_{t_i}\bigr) \frac{\de\rscore_{t_i}(\sY_{t_i})}{\de\alpha} - a_{32} \bigl(k_2(\dY_{t_i}) - k_1(\dY_{t_i})\bigr)H_i\Bigr) + \mathcal{O}(H_i^3) .
\end{align*}}
The resulting update satisfies
$$\dY_{t_{i+1}} = e^{\zeta(t_{i+1}) - \zeta(t_i)}\dY_{t_{i}} 
     + b_1 k_1(\dY_{t_i})H_i 
     + \gamma b_3 k_2(\dY_{t_i})H_i 
     + b_3 k_3(\dY_{t_i})H_i = \sY_{t_{i+1}} + \mathcal{O}(H_i^4).$$ 
This accuracy is achieved by matching the coefficients $\rscore_{t_i}(\sY_{t_i})$, $\frac{\de\rscore_{t_i}(\sY_{t_i})}{\de\alpha}$, $\frac{\de^{(2)}\rscore_{t_i}(\sY_{t_i})}{\de\alpha^{(2)}}$,  and $J_{t_i}$ in the update rules \eqref{eq:exp-RK-stiff3} and \eqref{eq:app-integral} to order 4, using the following approximations:
{\small
\begin{equation}
\label{eq:rk3-exp-order-conditions}
    \begin{split}
        &\Bigl(b_1  + \gamma b_3 + b_3\Bigr)H_i = \sigma_{T - t_{i+1}}\bigl(\alpha_{t_{i+1}} - \alpha_{t_i}\bigr)\varphi_1\bigl(\alpha_{t_{i+1}} - \alpha_{t_i}\bigr) 
    \\
    &\Bigl(\gamma b_3 \bigl(\alpha_{t_i+c_2H_i} - \alpha_{t_i}\bigr) + b_3 \bigl(\alpha_{t_i+c_3H_i} - \alpha_{t_i}\bigr) \Bigr)H_i = \sigma_{T - t_{i+1}}\bigl(\alpha_{t_{i+1}} - \alpha_{t_i}\bigr)^2\varphi_2\bigl(\alpha_{t_{i+1}} - \alpha_{t_i}\bigr) 
    \\
    &\Bigl(\frac{\gamma b_3}{2}\bigl(\alpha_{t_i+c_2H_i} - \alpha_{t_i}\bigr)^2 + \frac{b_3}{2}\bigl(\alpha_{t_i+c_3H_i} - \alpha_{t_i}\bigr)^2 \Bigr)H_i = \sigma_{T - t_{i+1}}\bigl(\alpha_{t_{i+1}} - \alpha_{t_i}\bigr)^2\varphi_3\bigl(\alpha_{t_{i+1}} - \alpha_{t_i}\bigr) + \mathcal{O}(H_i^4) \\
    &\gamma b_3 \Bigl(\sigma_{T - t_i - c_2H_i}\bigl(\alpha_{t_i+c_2H_i} - \alpha_{t_i}\bigr)^2\varphi_2(\alpha_{t_i+c_2H_i} - \alpha_{t_i}) \frac{\de\rscore_{t_i}(\sY_{t_i})}{\de\alpha}\Bigr) H_i 
    \\&+ b_3 \Bigl(\sigma_{T - t_{i} - c_3H_i}\bigl(\alpha_{t_{i}+ c_3H_{i}} - \alpha_{t_i}\bigr)^2\varphi_2\bigl(\alpha_{t_{i} + c_3H_{i}} - \alpha_{t_i}\bigr) \frac{\de\rscore_{t_i}(\sY_{t_i})}{\de\alpha} - a_{32} \bigl(k_2(\dY_{t_i}) - k_1(\dY_{t_i})\bigr)H_i\Bigr)H_i = \mathcal{O}(H_i^4) .
    \end{split}
\end{equation}}The first two equations in \eqref{eq:rk3-exp-order-conditions} follow directly from the definitions of $b_1$ and $b_3$. The third equation holds up to fourth order, due to the definition of $\gamma$ and the approximations: 
\begin{align*}
%\label{eq:rk3-exp-order-conditions-3}
    \Bigl(\frac{\gamma b_3}{2}&\bigl(\alpha_{t_i+c_2H_i} - \alpha_{t_i}\bigr)^2 +  \frac{b_3}{2}\bigl(\alpha_{t_i+c_3H_i} - \alpha_{t_i}\bigr)^2 \Bigr)H_i 
     \\&= \frac{1}{2}\frac{\gamma \bigl(\alpha_{t_i+c_2H_i} - \alpha_{t_i}\bigr)^2 + \bigl(\alpha_{t_i+c_3H_i} - \alpha_{t_i}\bigr)^2
      }{\gamma(\alpha_{t_{i}+ c_2H_{i}} - \alpha_{t_{i}}) + (\alpha_{t_{i}+ c_3H_{i}} - \alpha_{t_{i}})} \sigma_{T - t_{i+1}}
     \bigl(\alpha_{t_{i+1}} - \alpha_{t_i}\bigr)^2\varphi_2(\alpha_{t_{i+1}} - \alpha_{t_i})
     \\&= \frac{1}{2}\frac{\gamma c_2^2 + c_3^2}{\gamma c_2  + c_3} \sigma_{T - t_{i+1}}
     \bigl(\alpha_{t_{i+1}} - \alpha_{t_i}\bigr)^3\varphi_2(\alpha_{t_{i+1}} - \alpha_{t_i}) + O(H_i^4) 
     \\&= \sigma_{T - t_{i+1}}
     \bigl(\alpha_{t_{i+1}} - \alpha_{t_i}\bigr)^3\varphi_3(\alpha_{t_{i+1}} - \alpha_{t_i}) + O(H_i^4),
\end{align*}
where the second equation uses $\frac{\gamma \bigl(\alpha_{t_i+c_2H_i} - \alpha_{t_i}\bigr)^2 + \bigl(\alpha_{t_i+c_3H_i} - \alpha_{t_i}\bigr)^2
      }{\gamma(\alpha_{t_{i}+ c_2H_{i}} - \alpha_{t_{i}}) + (\alpha_{t_{i}+ c_3H_{i}} - \alpha_{t_{i}})} = \frac{\gamma c_2^2 + c_3^2}{\gamma c_2  + c_3} 
     \bigl(\alpha_{t_{i+1}} - \alpha_{t_i}\bigr)+ \mathcal{O}(H_i^2)$, and the final step applies the definition of $\gamma$ and  the identity $\varphi_2(\alpha_{t_{i+1}} - \alpha_{t_i})  = 3\varphi_3(\alpha_{t_{i+1}} - \alpha_{t_i}) + \mathcal{O}(H_i)$.
The last equation in \eqref{eq:rk3-exp-order-conditions} is obtained by first expressing
\begin{align*}
    k_2(\dY_{t_i})  - k_1(\dY_{t_i}) = \bigl(\alpha_{t_i+c_2H_i} - \alpha_{t_i}\bigr) \frac{\de\rscore_{t_i}(\sY_{t_i})}{\de\alpha} + \mathcal{O}(H_i^2),
\end{align*}
and then enforcing
{\small
\begin{align*}
    &\gamma  \Bigl(\sigma_{T - t_i - c_2H_i}\bigl(\alpha_{t_i+c_2H_i} - \alpha_{t_i}\bigr)^2\varphi_2(\alpha_{t_i+c_2H_i} - \alpha_{t_i}) \Bigr) H_i 
    \\&+  \Bigl(\sigma_{T - t_{i} - c_3H_i}\bigl(\alpha_{t_{i}+ c_3H_{i}} - \alpha_{t_i}\bigr)^2\varphi_2\bigl(\alpha_{t_{i} + c_3H_{i}} - \alpha_{t_i}\bigr) - a_{32} \bigl(\alpha_{t_i+c_2H_i} - \alpha_{t_i}\bigr) H_i\Bigr)H_i = \mathcal{O}(H_i^4) ,
\end{align*}}which leads to our definition of $a_{32}$. We note that $a_{32}$ can be further simplified to retain only the dominant $\mathcal{O}(1)$ term.

To construct $c_j$ such that all relevant time points, including $t_i$, $t_{i+1}$, and $\{t_i + c_j H_i\}$, lie within the discrete time sequence $\bT$.  For the 1-stage first-order scheme and the 2-stage second-order scheme with $c_2 = 1$, detailed in \eqref{eq:exp-RK-stiff1} and \eqref{eq:exp-RK-stiff2}, no intermediate time evaluations are required. In these cases, it suffices that $t_i \in \bT$, which implies that $H_i$ is an integer multiple of the base step size $\Delta t$. For the 3-stage third-order detailed in \eqref{eq:exp-RK-stiff3}, we can choose $c_2 = \frac{1}{3}$ and $c_3 = \frac{2}{3}$. To ensure these points align with $\bT$, we require that each $H_i$ is a multiple of $3\Delta t$.

\subsection{Coefficient Estimations}
\label{s:coefficient_est}
Finally, we present the proof of \cref{lem:coefficients},  which establishes bounds for the modified Runge-Kutta coefficients introduced in \eqref{eq:RK-update-beta-rf} and \eqref{eq:Exp-RK-update-beta-rf}.
\begin{proof}
For the standard Runge-Kutta schemes, the coefficients are constant. After being rewritten as \eqref{eq:RK-update-beta-rf}, the coefficients are scaled by $\beta_t$, and the resulting bounds follow from the regularity \Cref{a:bounds on beta_t} on $\beta_t$. 
For the enhanced exponential Runge-Kutta schemes up to third order discussed above, we  begin by  recalling the relevant time change variables
    \begin{align*}
    \lambda_t = e^{-\frac{1}{2} \int_0^t \beta_s \de s} \quad  \sigma_t  = \sqrt{1 - \lambda_t^2} \quad
\zeta(t) = \frac{1}{2} \int_{T-t}^T \beta_s \de s \quad \textrm{and} \quad 
\alpha_t = \zeta(t) - \log \sigma_{T-t},
    \end{align*}
and their  derivatives
    \begin{align}
    \label{eq:derivatives}
    \frac{\rd \sigma_t^2}{\rd t}  = \beta_t\bigl( 1 - \sigma_t^2 \bigr)
    \quad
        \frac{\rd \zeta(t)}{\rd t} = \beta_{T-t} 
        \quad \textrm{and} \quad 
        \frac{\rd \alpha_t}{\rd t} = -\frac{\beta_{T-t}}{2\sigma_{T-t}^2}. 
    \end{align}
The Runge-Kutta coefficients in \cref{eq:exp-RK-stiff1,eq:exp-RK-stiff2,eq:exp-RK-stiff3} are modified through \eqref{eq:Exp-RK-update-beta-rf}, these modified coefficients are linear combinations of the following functions: 
\begin{align}
\begin{split}
\label{eq:RK-f}
f_0(h; t, \fc) &=  \frac{
e^{\zeta(t + \fc h) - \zeta(t)}-1}{\fc h} ,
\\
    f_1(h; t, \fc , \fc _1) &= \sigma_{T - t - \fc_1h}\sigma_{T - t - \fc h}\frac{\alpha_{t + \fc h} - \alpha_{t}}{\fc h}\varphi_1(\alpha_{t  + \fc h} - \alpha_{t})  ,
    \\
    f_2(h; t, \fc , \fc_1, \fc_2, \fc_3, \gamma) &= \sigma_{T - t - \fc_1h}\sigma_{T - t - \fc h}\frac {(\alpha_{t + \fc h} - \alpha_{t})^2}{\bigl(\gamma (\alpha_{t + \fc_2h} - \alpha_{t}) + (\alpha_{t + \fc_3h} - \alpha_{t})\bigr)h}\varphi_2(\alpha_{t  + \fc h} - \alpha_{t}),
\end{split}
\end{align}
where these constants $0\leq\gamma$, $0 < \fc_i \leq \fc \leq 1$ are related to, but may differ in index from, $c_i$ and $\gamma$ constants in the Runge-Kutta schemes~\cref{eq:exp-RK-stiff1,eq:exp-RK-stiff2,eq:exp-RK-stiff3}.

Next, we show that these functions $f_i$  and their derivatives are bounded. These functions can be further decomposed into the following terms
\begin{align}
\begin{split}
\label{eq:RK-coeff}
&\frac{\zeta(t  + \fc h) - \zeta(t)}{\fc h} =   \int_0^1 \beta_{T - t - \fc h s}\de s \in [C_\beta^{-1}, C_{\beta}],
\\
%&\frac{\alpha_{t  + \fc h} - \alpha_{t}}{\fc h} =   \int_0^1 \frac{\beta_{T-t-\fc h s}}{2\sigma_{T-t-\fc hs}^2}\de s \geq \frac{C_\beta^{-1}}{2}\\
&\sigma_{T-t-\fc_1h}\sigma_{T-t-\fc_2h}\frac{\alpha_{t  + \fc h} - \alpha_{t}}{\fc h} =  \sigma_{T-t-\fc_1h}\sigma_{T-t-\fc_2h} \int_0^1 \frac{\beta_{T-t-\fc h s}}{2\sigma_{T-t-\fc hs}^2}\de s \leq \frac{C_\beta}{2} \Bigl(\frac{\sigma_{T - t}}{\sigma_{T-t-\fc h}}\Bigr)^2 ,
\\
&\frac{ \alpha_{t + \fc h} - \alpha_{t} }{ \gamma (\alpha_{t + \fc_2h} - \alpha_{t}) + (\alpha_{t + \fc_3h} - \alpha_{t}) } = 
\frac{\int_0^1 \frac{\beta_{T-t-\fc h s}}{\sigma_{T-t-\fc hs}^2}\de s}{\gamma\int_0^1 \frac{\beta_{T-t-\fc_2h s}}{\sigma_{T-t-\fc_2hs}^2}\de s + \int_0^1 \frac{\beta_{T-t-\fc_3h s}}{\sigma_{T-t-\fc_3hs}^2}\de s} \leq \frac{C_\beta^2}{\gamma+1} \Bigl(\frac{\sigma_{T - t}}{\sigma_{T-t-\fc h}}\Bigr)^2,
\\
&\varphi_k(\alpha_{t  + \fc h} - \alpha_{t}) \leq   e^{\alpha_{t  + \fc h} - \alpha_{t}}  =   e^{\zeta(t  + \fc h) - \zeta(t)} \frac{\sigma_{T - t}}{\sigma_{T-t-\fc h}} .
\end{split}
\end{align}
The first three equations are due to the fact 
that $C_\beta^{-1} \leq \beta_t \leq C_\beta$, the fact that $\sigma_t \in [0,1]$ is increasing,  and the definition of $\frac{\de \zeta(t)}{\de t}$ and $\frac{\de \alpha_t}{\de t}$ in \eqref{eq:derivatives}. The last inequality follows from $\varphi_k(h) = \int_0^1 \frac{x^{k-1}}{(k-1)!}e^{h(1-x)}dx \leq e^h$. The boundedness of the functions $f_i$ in \eqref{eq:RK-f} can be obtained by substituting the following estimate into \eqref{eq:RK-coeff}:
\begin{align*}
     \Bigl(\frac{\sigma_{T - t}}{\sigma_{T - t - \fc h}}\Bigr)^2 
     = 
     \frac{1 - e^{-\int_0^{T-t-\fc h}\beta_s \de s - \int_{T-t-\fc h}^{T-t}\beta_s \de s}}{1 - e^{-\int_0^{T-t-\fc h}\beta_s \de s}}
     \leq 
    \frac{\int_0^{T-t-\fc h}\beta_s \de s + \int_{T-t-\fc h}^{T-t}\beta_s \de s}{\int_0^{T-t-\fc h}\beta_s \de s}
     \leq 
      1 + \fc C_\beta^2 \frac{h}{\tau}.
\end{align*}
Here, we used the inequality $\frac{1 - e^{-x-y}}{1-e^{-x}} \leq \frac{x+y}{x}$ for $x,y >0$, the time domain $t+\fc h \leq T - \tau$, and $h \leq H_{\rm max} \leq \tau$. The estimates for the derivatives of these $f_i$ functions can be obtained by the fact that these terms in \eqref{eq:RK-coeff} are smooth with bounded derivatives. Except that $\sigma_{t}$ has a singularity as $t$ approaches $0$, its behavior is like $\sqrt{t}$. For $t\in[\tau,T]$, we have 
\begin{align*}
      1\geq \sigma_{t} = \sqrt{1 - e^{-\int_0^{t}\beta_s \de s}}
      \geq \sqrt{1 - e^{-\frac{\tau}{C_\beta}}}
      \geq \sqrt{\min\Bigl\{\frac{\tau}{2C_\beta},\frac{1}{2}\Bigr\}} .
\end{align*}
Its derivative satisfies
\begin{equation*}
      |\partial_t \sigma_{t}| = \Bigl|\sigma_t \frac{\beta_t}{2} \Bigl(\frac{1}{\sigma_t^2} - 1\Bigr)\Bigr| \leq \sigma_t 
      C_\beta \max\{\frac{2C_\beta}{\tau}, 1\} .
\end{equation*}
Similarly, for any integer $\ell\in \mathbb Z$
\begin{equation*}
      |\partial_t \sigma^\ell_{t}| = \Bigl|\ell \sigma^{\ell}_t \frac{\beta_t}{2} \Bigl(\frac{1}{\sigma_t^2} - 1\Bigr)\Bigr| \leq \ell\sigma_t ^{\ell}
      C_\beta \max\{\frac{2C_\beta}{\tau}, 1\} .
\end{equation*}
For higher derivatives we treat $\beta_t$ and its derivatives as coefficients, and only track the change of $\sigma_t$. The $\ell$-th derivative of $\sigma_t$ is a Laurent polynomial of $\sigma_t$ with degree between $1-2\ell$ and $1$, and the coefficients of this polynomial involve $\beta_t$ and its derivatives. As a consequence
\begin{align}\label{e:high-der}
    |\partial^\ell_t \sigma_{t}| \leq C\max_{1-2\ell\leq r\leq 1}\{\sigma^r_t\}\leq \frac{2\ell C}{\tau^\ell} \sigma_t, 
\end{align}
where the constant $C$ depends on $C_\beta$. In a word, each derivative gives an extra factor $1/\tau$. 

Back to \eqref{eq:RK-coeff}. By \eqref{e:high-der}, when a term $\sigma_{T-t-\fc h}$ in \eqref{eq:RK-coeff} is differentiated with respect to $h$, it might introduce a factor proportional to $\frac{1}{\tau}$ in front, thereby explaining the $\tau$-scaling in the estimates of \eqref{e:derbound}. This completes the proof.
\end{proof}

%--------------------------------
\section{Proof of \Cref{p:ABCD}}\label{Section: Constant Convention}

The first part of \Cref{p:ABCD} follows from the following lemma, which is an easy consequence of our \Cref{a:score-derivative} and the estimates on derivatives of $\log q_t(y)$ provided in \Cref{Lemma: Hessian estimates}. 
\begin{lemma}\label{lemma: Constant Convention1}
Adopt the assumptions in \Cref{p:ABCD}.  For $t+H \leq T- \tau$, \Cref{i:Vbound} of \Cref{p:ABCD} holds with the following choice of parameters 
\begin{align}\label{e:tbound3}
 \widetilde W ^{(0)} 
  = \sqrt{d} \widetilde W \tau^{-1},  \quad 
  \widetilde W ^{(1)} 
  =  1+\widetilde W \tau^{-1},\quad
   \widetilde K ^{(1)}  = 1+\widetilde K \tau^{-2}, \quad  \widetilde K ^{(2)}  = \widetilde K \tau^{-3}, \quad \widetilde L^{(1)}=1+d\widetilde K \tau^{-2},
\end{align}
and
\begin{align}\label{e:tbound4}
            \overline W ^{(0)} = 2\tau^{-1} \sqrt{d}D, \quad \overline W ^{(1)} = 2\tau^{-1}, \quad \overline K ^{(1)}= 2D^2 \tau^{-2}, \quad \overline K ^{(2)} = 24 D^3 \tau^{-3},\quad \overline L ^{(1)} = 2dD^2 \tau^{-2}.
        \end{align}
\end{lemma}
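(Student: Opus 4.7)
The plan is to treat the two cases $\sV_{t+h}(x) = x + s_{t+h}(x)$ and $V_{t+h}(x) = x + \nabla\log q_{T-(t+h)}(x)$ separately, reducing each bound to (i) the triangle inequality against the identity $x \mapsto x$ and (ii) a pointwise estimate on the score or its derivatives. The bounds for $\sV_{t+h}$ will follow by directly substituting \Cref{a:score-derivative}, while the bounds for $V_{t+h}$ will be extracted from the Hessian estimate lemma referenced at the end of the excerpt, which controls derivatives of $\log q_t$ in terms of $\tau$, $D$, and $d$.

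First I would handle $\sV_{t+h}$. Since $\sV_{t+h}(x) = x + s_{t+h}(x)$ and \Cref{a:score-derivative} gives $\|s_{t+h}(x)\|_2 \leq \tau^{-1}\widetilde W(\sqrt{d} + \|x\|_2)$, a single triangle inequality produces $\|\sV_{t+h}(x)\|_2 \leq \sqrt{d}\widetilde W\tau^{-1} + (1+\widetilde W\tau^{-1})\|x\|_2$, matching $(\widetilde W^{(0)},\widetilde W^{(1)})$. For the first derivative, $\nabla\sV_{t+h}(x) = \id + \nabla s_{t+h}(x)$, so the entrywise $\infty$-bound $|\partial_l s_{t+h}^{(j)}(x)| \leq \widetilde K\tau^{-2}$ immediately gives $\|\nabla\sV_{t+h}(x)\|_\infty \leq 1+\widetilde K\tau^{-2}=\widetilde K^{(1)}$. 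For the operator norm, I would use the elementary bound $\|A\|_{op} \leq d \|A\|_\infty$ applied to $\nabla s_{t+h}(x)$, yielding $\|\nabla\sV_{t+h}(x)\|_{op} \leq 1+d\widetilde K\tau^{-2} = \widetilde L^{(1)}$. The Hessian bound $\|\nabla^2 \sV_{t+h}\|_\infty = \|\nabla^2 s_{t+h}\|_\infty \leq \widetilde K\tau^{-3} = \widetilde K^{(2)}$ is immediate from \Cref{a:score-derivative}.

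Next I would tackle $V_{t+h}$, where the time variable for the forward density satisfies $T-(t+h) \geq \tau$ by the hypothesis $t+H \leq T-\tau$ (and $h\leq H$). The plan is to invoke the Hessian-estimate lemma (cited in the statement) to get pointwise control of $\nabla\log q_s(x)$ and its first two derivatives at times $s \geq \tau$: typical outputs of that lemma are $\|\nabla\log q_s(x)\|_2 \leq \tau^{-1}(\sqrt{d}D + \|x\|_2)$ up to a factor of $2$, $\|\nabla^2\log q_s(x)\|_\infty \leq C D^2\tau^{-2}$, and $\|\nabla^3\log q_s(x)\|_\infty \leq C D^3\tau^{-3}$, with the compact-support diameter $D$ from \Cref{assumption:secon-moment} controlling the conditional moments of $X_0$ given $X_s=x$. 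Substituting into $V_{t+h} = x + \nabla\log q_{T-(t+h)}$ and using $\|A\|_{op}\leq d\|A\|_\infty$ for the operator-norm bound, one recovers exactly $(\overline W^{(0)},\overline W^{(1)},\overline K^{(1)},\overline K^{(2)},\overline L^{(1)})$ as listed in \eqref{e:tbound4}, after absorbing the identity contribution into the constants (which is why, for instance, $\overline W^{(1)}=2\tau^{-1}$ rather than $1+\tau^{-1}$: one uses $\|x\|_2 \leq \tau^{-1}\|x\|_2$ since $\tau\leq 1$, and similarly for the Jacobian bound).

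The only real work will be bookkeeping of the numerical prefactors so that they match \eqref{e:tbound3}--\eqref{e:tbound4} exactly; the qualitative shape of each bound is forced by the two ingredients above. The main potential obstacle is ensuring that the Hessian-estimate lemma truly supplies the $D^2\tau^{-2}$ and $D^3\tau^{-3}$ scaling with the constants $2$ and $24$ stated in \eqref{e:tbound4}; if the prefactors from that lemma are larger, the constants in \eqref{e:tbound4} would need to be slightly enlarged, but none of the downstream arguments are sensitive to these particular universal constants, so this is a cosmetic rather than a structural issue.
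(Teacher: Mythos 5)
Your proposal is correct and follows essentially the same two-step route as the paper: the tilde bounds \eqref{e:tbound3} fall out of \Cref{a:score-derivative} via the triangle inequality and the elementary estimate $\|A\|_{op}\leq d\|A\|_\infty$, while the bar bounds \eqref{e:tbound4} come from \Cref{Lemma: Hessian estimates} combined with $\sigma_{T-(t+h)}^2\gtrsim\tau$, $\lambda_s\leq 1$, $D\geq 1$, and $\tau\leq 1$ to absorb the identity term. Your caveat about the numerical prefactors is well placed — the paper's own proof is equally terse (it gives only one worked example per display and says ``Similarly'' for the rest), so the exact constants $2$ and $24$ in \eqref{e:tbound4} do indeed rest on bookkeeping that the paper does not spell out, and as you observe none of the downstream estimates are sensitive to them.
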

\begin{proof}
    The first claim \eqref{e:tbound3} is a consequence of \Cref{a:score-derivative}. For example $\|\nabla \widehat V_{t+h}(x)\|_{op}\leq 1+\|\nabla \widehat s_{t+h}(x)\|_{op}\leq 1+d \max_{1\leq l,j\leq d}|\partial_{x_l} s_t^{(j)}(x)|\leq 1+d\widetilde K/\tau^2$. 
    The claim \eqref{e:tbound4} follows from the estimates on derivatives of $\log q_t(y)$ in Lemma~\ref{Lemma: Hessian estimates}. 
    For example, we see that $|V^{(i)}_{t+h} (x)| \leq \tau^{-1} (|x_i| + D)$, so $\|V_{t+h} (x)\|_2 \leq 2\tau^{-1} (\sqrt{d}D+\|x\|_2)$, which gives that we can take $\overline W ^{(0)} = 2\tau^{-1} \sqrt{d}D$ and $\overline W ^{(1)} =2\tau^{-1}$. Similarly, $\overline K ^{(1)}= 2D^2 \tau^{-2}$, $\overline K ^{(2)} = 24D^3 \tau^{-3}$, and $\overline L ^{(1)}= 2dD^2 \tau^{-2}$.
\end{proof}

In the rest of this section we prove the second part of \Cref{p:ABCD}. We recall  $\dM_t(x)$ and $\baM_t(x)$ from \eqref{e:deftMt} and \eqref{e: score error p RK method} 
\begin{align}\label{e: score error p RK method copy}
\dM_{t}(x)= \sum_{j=1}^s b_j \widetilde k_j(x), \quad \dphi_H(x)=x+H \dM_{t}(x),\quad \baM_{t} \coloneq \sum_{j=1}^s b_j \overline k_j(x) , \quad \baphi_H(x)=x+H \baM_{t}(x),
\end{align}
with 
\begin{equation}\label{e: p RK interpolation terms copy}
\begin{split}
    & \widetilde k_1 = \sV_{t + c_1 H}\bigl(x\bigr), \quad \overline k_1 = V_{t + c_1 H}\bigl(x\bigr),\\
    &\widetilde k_2 = \sV_{t + c_2 H}\bigl(x + (a_{21} \widetilde k_1) H\bigr), \quad \overline k_2 = V_{t + c_2 H}\bigl(x + (a_{21 } \overline k_1) H\bigr),\\
    &\widetilde k_3 = \sV_{t + c_3 H}\bigl(x + (a_{31} \widetilde k_1 + a_{32} \widetilde k_2) H\bigr), \quad \overline k_3 = V_{t + c_3 H}\bigl(x + (a_{31} \overline k_1 + a_{32} \overline k_2) H\bigr),\\
    &\qquad\qquad\qquad\vdots
    \\
    &\widetilde k_s = \sV_{t + c_s H}\bigl(x + (a_{s1} \widetilde k_1 +  \cdots + a_{s,s-1} \widetilde k_{s-1}) H\bigr), \quad \overline k_s = V_{t + c_s H}\bigl(x + (a_{s1} \overline k_1 +  \cdots + a_{s,s-1} \overline k_{s-1}) H\bigr).
\end{split}
\end{equation}
The coefficients $b_j = b_j(H)$ and $a_{jk} = a_{jk}(H)$ are fixed, and have bounds 
    \begin{align*}
        \max_{1\leq j,k\leq s} | a_{jk}|\leq A_p,\quad \max_{1\leq j\leq s}  |b_j|\leq B_p,
    \end{align*}
according to \eqref{e:RKbound}.

In the following lemmas, we collect estimates for $\widetilde k_i$ and $\overline k_i$ in \eqref{e: p RK interpolation terms copy}, and their derivatives.

\begin{lemma}\label{lemma: infinity norm pth order growth in RK} 
Adopt the assumptions and notations in \Cref{p:ABCD} and \Cref{lemma: Constant Convention1}. 
 For those $\widetilde k_i(x)$'s and $\overline k_i(x)$'s as in \eqref{e: p RK interpolation terms copy}, we define 
        \begin{align*}
            \begin{split}
                &\|\nabla \widetilde k_i(x)\|_{\infty} := \sup_{1\leq l,q \leq d} |\partial_l \widetilde k_i ^{(q)} (x)|, \ \|\nabla ^2 \widetilde k_i(x)\|_{\infty} := \sup_{1\leq l,q,r \leq d} |\partial_{lr}^2 \widetilde k_i ^{(q)} (x)|,
                \\  &\|\nabla \overline k_i(x)\|_{\infty} := \sup_{1\leq l,q \leq d} |\partial_l \overline k_i ^{(q)} (x)|, \ \|\nabla ^2 \overline k_i(x)\|_{\infty}: = \sup_{1\leq l,q,r \leq d} |\partial_{lr}^2 \overline k_i ^{(q)} (x)| .
            \end{split}
        \end{align*}
        We have that, for any $x\in \mathbb{R}^d$, 
        \begin{align*}
                &\sum_{i=1} ^s \|\nabla \widetilde k_i(x) \|_{\infty} \leq s \widetilde K ^{(1)} {(1+ \widetilde K ^{(1)} A_p H d)}^{s-1}, \ 
                 \sum_{i=1} ^s \|\nabla \overline k_i(x) \|_{\infty} \leq s \overline K ^{(1)} {(1+ \overline K ^{(1)} A_p H d)}^{s-1},
        \end{align*}
    and 
         \begin{align*}
            \sum_{i=1} ^s \|\nabla ^2 \widetilde k_i(x) \|_{\infty} \leq 2 \widetilde K ^{(2)} s^3 {(1+ \widetilde K ^{(1)}A_p H d)}^{2s} ,
                \ \sum_{i=1} ^s \|\nabla ^2 \overline k_i(x) \|_{\infty} \leq 2 \overline K ^{(2)} s^3 {(1+ \overline K ^{(1)}A_p H d)}^{2s} .
        \end{align*}
As a corollary, we see that $\|\nabla ^2 \dphi_H \|_{\infty} \coloneq \sup_{x \in \mathbb{R}^d} \|\nabla ^2 \dphi_H (x) \|_{\infty}$ and $\|\nabla ^2 \baphi_H \|_{\infty} \coloneq \sup_{x \in \mathbb{R}^d} \|\nabla ^2 \baphi_H (x) \|_{\infty}$ satisfy that
    \begin{align*}
            \|\nabla ^2 \dphi_H \|_{\infty} \leq 2 H B_p\widetilde K ^{(2)} s^3 {(1+ \widetilde K ^{(1)}A_p H d)}^{2s} ,
                \ \|\nabla ^2 \baphi_H \|_{\infty} \leq 2 H B_p \overline K ^{(2)} s^3 {(1+ \overline K ^{(1)}A_p H d)}^{2s} .
        \end{align*}
Also, if $(\widetilde K ^{(1)} + \overline K ^{(1)}) A_p H d s 2^s \leq 1$, then for each $i \in \llbracket 1 , s \rrbracket$,
    \begin{align*}
        \|\nabla  \widetilde k_i(x) \|_{\infty} \leq 2 \widetilde K ^{(1)}, \quad \|\nabla  \overline k_i(x) \|_{\infty} \leq 2 \overline K ^{(1)}.
    \end{align*}
\end{lemma}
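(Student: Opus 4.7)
The plan is to establish both derivative bounds on $\widetilde k_j$ by induction on $j$, driven by the chain rule together with the pointwise bounds $\|\nabla \widehat V_{t+h}\|_\infty \leq \widetilde K^{(1)}$ and $\|\nabla^2 \widehat V_{t+h}\|_\infty \leq \widetilde K^{(2)}$ provided by \Cref{p:ABCD}(\ref{i:Vbound}). The argument for $\overline k_j$ is identical, with $\widetilde K^{(i)}$ replaced by $\overline K^{(i)}$. Throughout, write $y_j(x) = x + H\sum_{l<j} a_{jl}\widetilde k_l(x)$ so that $\widetilde k_j(x) = \widehat V_{t+c_j H}(y_j(x))$, and set $\alpha := \widetilde K^{(1)} A_p H d$.

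\textbf{Gradient bound.} A single chain rule application gives
\begin{align*}
\partial_p \widetilde k_j^{(m)}(x) = \sum_{r=1}^d (\partial_r \widehat V_{t+c_j H}^{(m)})(y_j(x))\,\partial_p y_j^{(r)}(x),\qquad \partial_p y_j^{(r)} = \delta_{pr} + H\sum_{l<j} a_{jl}\,\partial_p \widetilde k_l^{(r)}.
\end{align*}
Writing $w_j := \|\nabla \widetilde k_j\|_\infty$, taking absolute values, using $|a_{jl}|\leq A_p$, and summing the $d$ inner indices yields the recursion $w_j \leq \widetilde K^{(1)} + \alpha\sum_{l<j} w_l$ with $w_1 \leq \widetilde K^{(1)}$. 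A standard discrete Gronwall induction gives $w_j \leq \widetilde K^{(1)}(1+\alpha)^{j-1}$, and summing in $j$ produces the first claim.

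\textbf{Hessian bound.} Differentiating once more,
\begin{align*}
\partial_{pq}^2 \widetilde k_j^{(m)} = \sum_{r,s'=1}^d (\partial_{s'r}^2 \widehat V^{(m)})(y_j)\,\partial_p y_j^{(r)}\,\partial_q y_j^{(s')} + \sum_{r=1}^d (\partial_r \widehat V^{(m)})(y_j)\,\partial_{pq}^2 y_j^{(r)},
\end{align*}
where $\partial_{pq}^2 y_j^{(r)} = H\sum_{l<j} a_{jl}\,\partial_{pq}^2 \widetilde k_l^{(r)}$. Let $W_j := \|\nabla^2 \widetilde k_j\|_\infty$. The key estimates are $\sum_r |\partial_p y_j^{(r)}| \leq 1 + HA_p d\sum_{l<j} w_l$ and $\sum_r |\partial_{pq}^2 y_j^{(r)}| \leq HA_p d\sum_{l<j} W_l$. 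Substituting the gradient bound $\sum_{l<j} w_l \leq s\widetilde K^{(1)}(1+\alpha)^{s-1}$ from the previous step controls the first term by $C s^2 \widetilde K^{(2)}(1+\alpha)^{2s-2}$, while the second term contributes $\alpha\sum_{l<j} W_l$. A second discrete Gronwall induction on $W_j$, followed by summation in $j$, yields the claimed bound with total growth $(1+\alpha)^{2s}$ and polynomial prefactor $s^3$.

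\textbf{Corollaries.} Since $\widetilde\phi_H(x) = x + H\sum_j b_j\widetilde k_j(x)$, we immediately get $\|\nabla^2\widetilde\phi_H\|_\infty \leq H B_p \sum_j \|\nabla^2\widetilde k_j\|_\infty$, and the bound follows from the Hessian step. For the final assertion, the smallness hypothesis $(\widetilde K^{(1)} + \overline K^{(1)})A_p H d s 2^s \leq 1$ gives $\alpha \leq (s 2^s)^{-1}$, hence $(1+\alpha)^{s-1} \leq e^{(s-1)/(s 2^s)} \leq e^{1/2} < 2$, so $w_j \leq \widetilde K^{(1)}(1+\alpha)^{s-1} \leq 2\widetilde K^{(1)}$. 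The main obstacle will be the bookkeeping of $d$, $s$, and $\alpha$ factors in the Hessian computation: the quadratic dependence on the Jacobian of $y_j$ is precisely what produces the $(1+\alpha)^{2s}$ factor, and one must sum the tensor indices in the right order to avoid accumulating an extraneous power of $d$ when passing from the pointwise bound on $\nabla^2 \widehat V$ to the entrywise norm on $\nabla^2 \widetilde k_j$.
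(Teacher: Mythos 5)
Your setup and the gradient estimate are essentially the paper's argument: the chain rule produces the same recursion $w_j\leq\widetilde K^{(1)}+\alpha\sum_{l<j}w_l$, the paper packages it through partial sums $T_j=\sum_{i\leq j}\|\nabla\widetilde k_i\|_\infty$ while you bound $w_j$ directly, and both routes give $\sum_j w_j\leq s\widetilde K^{(1)}(1+\alpha)^{s-1}$. Your small-$H$ argument at the end is also fine.

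There is, however, a genuine quantitative gap in your Hessian step. Your recursion $W_j\leq\widetilde K^{(2)}\bigl(1+HA_pd\sum_{l<j}w_l\bigr)^2+\alpha\sum_{l<j}W_l$ is correct and identical to the paper's (in the form $T_j'\leq\widetilde K^{(2)}(1+A_pHdT_{j-1})^2+(1+\alpha)T_{j-1}'$). The problem is that you replace the $j$-dependent forcing $\bigl(1+HA_pd\sum_{l<j}w_l\bigr)^2$ by the \emph{uniform} bound coming from $\sum_{l<j}w_l\leq s\widetilde K^{(1)}(1+\alpha)^{s-1}$, i.e.\ you bound it by a quantity of order $s^2(1+\alpha)^{2s}$ independent of $j$, and then run a discrete Gronwall. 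With a constant forcing $F$, Gronwall yields $\sum_j W_j\leq F\cdot\frac{(1+\alpha)^s-1}{\alpha}\leq Fs(1+\alpha)^{s-1}$, so the total is of order $s^3\widetilde K^{(2)}(1+\alpha)^{3s}$, not the claimed $s^3\widetilde K^{(2)}(1+\alpha)^{2s}$. (Also the intermediate claim that the forcing is $\leq Cs^2(1+\alpha)^{2s-2}$ fails for $\alpha\gtrsim 1$; the exponent should be $2s$.)

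The paper avoids this loss by \emph{not} uniformizing the forcing: it unrolls the recursion to
\begin{align*}
T_s'\leq\widetilde K^{(2)}\sum_{i=1}^{s}\bigl(1+A_pHd\,T_{i-1}\bigr)^2(1+\alpha)^{s-i},
\end{align*}
then substitutes the $i$-dependent bound $T_{i-1}\leq(i-1)\widetilde K^{(1)}(1+\alpha)^{i-1}$. The crucial observation is that the largest forcing occurs at the largest $i$, where the amplification factor $(1+\alpha)^{s-i}$ is smallest; expanding the square and summing, the dominant contribution is $\sum_i(i-1)^2\alpha^2(1+\alpha)^{s+i-2}\leq s^3\alpha^2(1+\alpha)^{2s-2}\leq s^3(1+\alpha)^{2s}$, matching the claimed bound. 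To fix your proof, drop the Gronwall induction on $W_j$ in favor of explicitly unrolling $T_j'$ and keeping the $i$-dependence in $T_{i-1}$; otherwise the exponent $(1+\alpha)^{2s}$ is not reachable. (For the paper's downstream uses $\alpha$ is always $\leq 1$, so the weaker $(1+\alpha)^{3s}$ bound would be absorbed into constants, but it does not prove the lemma as stated.)
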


\begin{lemma}\label{lemma: pth order growth in RK}
   Adopt the assumptions and notations in \Cref{p:ABCD} and \Cref{lemma: Constant Convention1}.   For those $\widetilde k_i(x)$'s and $\overline k_i(x)$'s as in \eqref{e: p RK interpolation terms copy}, we have that 
        \begin{align*}
            \begin{split}
                &\sum_{i=1} ^s \|\widetilde k_i(x) \|_{2} \leq s{(1+ \widetilde W ^{(1)}A_p H)}^{s-1} \big[   \widetilde W ^{(0)} + \widetilde W ^{(1)} \|x\|_2 \big] ,
                \\ &\sum_{i=1} ^s \|\overline k_i(x) \|_{2} \leq s {(1+  \overline W ^{(1)}A_p H)}^{s-1} \big[    \overline W^{(0)} + \overline W ^{(1)} \|x\|_2 \big] ,
            \end{split}
        \end{align*}
    and 
         \begin{align*}
            \begin{split}
                &\sum_{i=1} ^s \|\nabla \widetilde k_i(x) \|_{op} \leq s{(1+ \widetilde L ^{(1)}A_p H)}^{s-1}   \widetilde L ^{(1)} ,
                \\ &\sum_{i=1} ^s \|\nabla \overline k_i(x) \|_{op} \leq s {(1+ \overline L ^{(1)}A_p H)}^{s-1} \overline L ^{(1)}.
            \end{split}
        \end{align*}
Moreover, if $(\widetilde W ^{(1)} + \overline W ^{(1)}) A_p H s 2^s \leq 1$, then for each $i \in \llbracket 1 , s \rrbracket$,
    \begin{align*}
        \| \widetilde k_i(x) \|_2 \leq 2(\widetilde W ^{(0)} + \widetilde W ^{(1)}\|x\|_{2}), \quad  \| \overline k_i(x) \|_2 \leq 2(\overline W ^{(0)} + \overline W ^{(1)}\|x\|_{2});
    \end{align*}
if $(\widetilde L ^{(1)} + \overline L ^{(1)}) A_p H s 2^s \leq 1$, then for each $i \in \llbracket 1 , s \rrbracket$,
    \begin{align*}
    \|\nabla \widetilde k_i(x) \|_{op} \leq 2\widetilde L ^{(1)}, \quad \|\nabla \overline k_i(x) \|_{op} \leq 2\overline L ^{(1)}.
    \end{align*}
\end{lemma}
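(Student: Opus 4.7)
The plan is to prove all four claims by a uniform induction-on-stages argument, using the pointwise bounds on $\sV_{t+h}$ and $V_{t+h}$ from \Cref{p:ABCD}(\ref{i:Vbound}) together with the coefficient bound $\max_{j,k}|a_{jk}|\le A_p$ from \Cref{a:bounds on RK}. For convenience set $C(x):=\widetilde W^{(0)}+\widetilde W^{(1)}\|x\|_2$ and recall that by definition
\[
 \widetilde k_i(x)=\sV_{t+c_iH}\!\Bigl(x+H\sum_{j<i}a_{ij}\widetilde k_j(x)\Bigr).
\]
Applying the growth bound $\|\sV_{t+h}(y)\|_2\le \widetilde W^{(0)}+\widetilde W^{(1)}\|y\|_2$ to $y=x+H\sum_{j<i}a_{ij}\widetilde k_j(x)$ and using $|a_{ij}|\le A_p$, the triangle inequality gives the recursion
\[
 \|\widetilde k_i(x)\|_2\le C(x)+\widetilde W^{(1)}A_pH\sum_{j<i}\|\widetilde k_j(x)\|_2.
\]

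From this recursion I claim by induction on $i$ that $\|\widetilde k_i(x)\|_2\le C(x)(1+\widetilde W^{(1)}A_pH)^{i-1}$. The base case $i=1$ is immediate from $\widetilde k_1=\sV_{t+c_1H}(x)$. For the inductive step I plug the hypothesis into the recursion and use the geometric sum identity $\sum_{j=0}^{i-2}r^{j}=(r^{i-1}-1)/(r-1)$ with $r=1+\widetilde W^{(1)}A_pH$, which telescopes exactly into $C(x)r^{i-1}$. Summing over $i=1,\ldots,s$ then yields the asserted bound $\sum_{i}\|\widetilde k_i\|_2\le sC(x)(1+\widetilde W^{(1)}A_pH)^{s-1}$. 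The analogous statement for $\overline k_i$ is proved verbatim, replacing $(\widetilde W^{(0)},\widetilde W^{(1)})$ by $(\overline W^{(0)},\overline W^{(1)})$.

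The operator-norm bound on $\nabla\widetilde k_i$ is handled by the same template, but starting from the chain rule
\[
 \nabla\widetilde k_i(x)=\bigl(\nabla\sV_{t+c_iH}\bigr)\!\Bigl(x+H\sum_{j<i}a_{ij}\widetilde k_j(x)\Bigr)\cdot\Bigl(\mathbb{I}_d+H\sum_{j<i}a_{ij}\nabla\widetilde k_j(x)\Bigr).
\]
Taking operator norms, using submultiplicativity and $\|\nabla\sV_{t+h}\|_{op}\le \widetilde L^{(1)}$, gives the recursion $\|\nabla\widetilde k_i\|_{op}\le \widetilde L^{(1)}+\widetilde L^{(1)}A_pH\sum_{j<i}\|\nabla\widetilde k_j\|_{op}$, and the same inductive/geometric-sum argument produces $\|\nabla\widetilde k_i\|_{op}\le \widetilde L^{(1)}(1+\widetilde L^{(1)}A_pH)^{i-1}$. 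Summing gives the stated estimate, and the argument for $\overline k_i$ is identical.

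For the ``moreover'' assertions I only need to show that the geometric factor $(1+\kappa A_pH)^{s-1}$ can be absorbed into the constant $2$ whenever $\kappa A_pHs\,2^s\le 1$ (with $\kappa$ being $\widetilde W^{(1)},\overline W^{(1)},\widetilde L^{(1)}$ or $\overline L^{(1)}$ as appropriate). This is the elementary inequality $(1+y)^{s-1}\le e^{(s-1)y}\le e^{2^{-s}}\le 2$ for $y\le 1/(s2^{s})$, which is routine. No step here presents any real obstacle; the only care required is bookkeeping the correct constant $A_p$ at each application of the triangle inequality and making sure the induction is anchored by $\widetilde k_1=\sV_{t+c_1H}(x)$ rather than by a stage that already involves the recursion. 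The entire argument is essentially a discrete Gronwall-type bound applied to the Butcher-tableau structure.
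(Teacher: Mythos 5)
Your proposal is correct and takes essentially the same approach as the paper: both are discrete Gronwall/iteration arguments along the Butcher-tableau stages, anchored at $\widetilde k_1=\sV_{t+c_1H}(x)$ and using the growth bounds from \Cref{p:ABCD}\ref{i:Vbound} together with $|a_{jk}|\le A_p$. The only cosmetic difference is that you carry an individual-stage bound $\|\widetilde k_i\|_2\le C(x)(1+\widetilde W^{(1)}A_pH)^{i-1}$ and then sum, whereas the paper directly iterates a recursion for the cumulative sums $T_j=\sum_{i\le j}\|\widetilde k_i\|_2$; these are equivalent, and your version also hands you the "moreover" assertions immediately without having to re-derive an individual bound from the cumulative one.
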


On the other hand, using the explicit estimates for $V_t(x)$ as we did in \Cref{Lemma: Hessian estimates}, we have the following estimates.
\begin{lemma}\label{lemma: infinity norm for k_i(x)}
Adopt the notations in \Cref{p:ABCD}. For those $\overline k_i(x)$'s as in \eqref{e: p RK interpolation terms copy}, we have that     
    \begin{align*}
        \sum_{i=1} ^s \|\overline k_i (x)\|_{\infty} \leq s\tau^{-1}{(1+2\tau ^{-1}A_p H)}^{s-1} \big[D + 2\|x\|_{\infty} \big],
    \end{align*}
    and
    \begin{align*}
        \sum_{i=1} ^s \|\overline k_i (x)\|_{2} \leq 2s\tau^{-1}{(1+2\tau ^{-1}A_p H)}^{s-1} \big[\sqrt{d}D + \|x\|_{2} \big].
    \end{align*}
If we take that $H$ such that $H A_p 2^s s \tau^{-1} \leq 1$, then for each $i=1,2,\dots, s$,
    \begin{align*}
        \|\overline k_i (x)\|_{\infty} \leq 4 \tau^{-1}(\|x\|_{\infty} +D), \quad \|\overline k_i (x)\|_{2} \leq 4\tau ^{-1}(\|x\|_{2} + \sqrt{d} D).
    \end{align*}
\end{lemma}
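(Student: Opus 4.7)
The plan is to exploit the sharp pointwise bound $|V^{(q)}_{t+h}(y)|\leq \tau^{-1}(|y_q|+D)$ used in the proof of \Cref{lemma: Constant Convention1} (which comes from \Cref{Lemma: Hessian estimates}). This immediately yields $\|V_{t+h}(y)\|_\infty\leq \tau^{-1}(\|y\|_\infty+D)$ and, after summing coordinatewise and using $|y_q|\leq \|y\|_\infty$ together with $\sum |y_q|\leq \sqrt d\,\|y\|_2$, the bound $\|V_{t+h}(y)\|_2\leq 2\tau^{-1}(\sqrt d D+\|y\|_2)$ already recorded in the statement of \Cref{lemma: Constant Convention1}. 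Plugging the recursive definition $\overline k_i(x)=V_{t+c_iH}\bigl(x+H\sum_{j<i}a_{ij}\overline k_j(x)\bigr)$ into these bounds and using $|a_{ij}|\leq A_p$ from \Cref{a:bounds on RK} gives, with $u_i\coloneq\|\overline k_i(x)\|_\infty$ and $v_i\coloneq\|\overline k_i(x)\|_2$, the discrete Gronwall-type inequalities
\begin{align*}
u_i &\leq \tau^{-1}\|x\|_\infty+\tau^{-1}D+\tau^{-1}A_pH\sum_{j<i}u_j,\\
v_i &\leq 2\tau^{-1}\sqrt d D+2\tau^{-1}\|x\|_2+2\tau^{-1}A_pH\sum_{j<i}v_j.
\end{align*}

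The central step is to solve these recursions cleanly. I would prove by strong induction on $i\in\{1,\ldots,s\}$ the per-stage estimates
\begin{align*}
u_i\leq\tau^{-1}(1+2\tau^{-1}A_pH)^{i-1}(D+2\|x\|_\infty),\qquad v_i\leq 2\tau^{-1}(1+2\tau^{-1}A_pH)^{i-1}(\sqrt d D+\|x\|_2).
\end{align*}
The base case $i=1$ follows directly from $\overline k_1=V_{t+c_1H}(x)$. For the induction step, setting $\theta\coloneq 2\tau^{-1}A_pH$ and substituting the inductive hypothesis into the sum $\sum_{j<i}u_j$, the geometric series $\sum_{k=0}^{i-2}(1+\theta)^k=((1+\theta)^{i-1}-1)/\theta$ collapses the recursion to the requirement $1+\tfrac12((1+\theta)^{i-1}-1)\leq (1+\theta)^{i-1}$, which holds because $(1+\theta)^{i-1}\geq 1$. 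The factor of $2$ appearing in both the base $(1+2\tau^{-1}A_pH)$ and the bracket $D+2\|x\|_\infty$ is precisely what is needed for this inequality to close; this is the only subtle point and is the reason the statement is phrased with these particular slack factors. The induction for $v_i$ is entirely parallel, the initial factor $2\tau^{-1}$ propagating through the calculation.

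Once the per-stage bounds are established, I would sum from $i=1$ to $s$ and apply the crude but sufficient estimate $\sum_{i=1}^s (1+\theta)^{i-1}\leq s(1+\theta)^{s-1}$ to obtain exactly the two displayed inequalities of the lemma. Finally, for the uniform bound asserted under $HA_p 2^s s\tau^{-1}\leq 1$, I would verify that $(1+2\tau^{-1}A_pH)^{s-1}\leq 2$: for $s=1$ this is trivial; for $s\geq 2$ the hypothesis gives $\theta(s-1)\leq 2(s-1)/(2^s s)\leq 1/2$, so the elementary inequality $(1+a)^n\leq 1+2an$ for $an\leq 1/2$ yields $(1+\theta)^{s-1}\leq 1+2\theta(s-1)\leq 2$. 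Inserting this into the per-stage bounds gives $\|\overline k_i(x)\|_\infty\leq 4\tau^{-1}(\|x\|_\infty+D)$ and $\|\overline k_i(x)\|_2\leq 4\tau^{-1}(\|x\|_2+\sqrt d D)$, completing the argument. The main (mild) obstacle is calibrating the inductive ansatz so that the Gronwall recursion closes without losing the clean closed form; beyond this, the proof is routine.
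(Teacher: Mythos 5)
Your proof is correct and follows essentially the same route as the paper: bound $\|V_{t+h}\|_\infty$ and $\|V_{t+h}\|_2$ pointwise via the score estimates of \Cref{Lemma: Hessian estimates}, plug into the recursive definition of $\overline k_i$, and run a discrete Gronwall argument, then specialize to get the per-stage bounds under the smallness condition on $H$. The only stylistic differences are that you close a direct induction on the per-stage quantities $u_i$, $v_i$ with a geometric ansatz, whereas the paper (following its own proof of \Cref{lemma: pth order growth in RK}) iterates the recursion for the partial sums $T_j$ and then back-substitutes into the recursion to recover the per-stage bound; and you use the sharper pointwise cancellation $|V^{(q)}_{t+h}(y)|\le\tau^{-1}(|y_q|+D)$ from the proof of \Cref{lemma: Constant Convention1} where the paper's displayed proof only uses the cruder triangle-inequality bound $\|V_t(x)\|_\infty\le 2\tau^{-1}\|x\|_\infty+D\tau^{-1}$. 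Both are equivalent in effect; your tighter bound merely produces extra slack in the induction (the ansatz with $\theta=2\tau^{-1}A_pH$ in fact already closes exactly under the cruder recursion), so you could have matched the paper's constants without it.
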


We finally need to estimate those $\widetilde A, \widetilde B, \widetilde C, \widetilde D, \overline A, \overline B, \overline C, \overline D$ in \Cref{p:ABCD}. The second part of \Cref{p:ABCD} is a consequence of the following lemma. 
\begin{lemma}\label{lemma: Constant Convention}
Adopt the assumptions and notations in \Cref{p:ABCD} and \Cref{lemma: Constant Convention1}, and assume that $H >0$ satisfies that 
    \begin{align*}
        H \cdot A_p (\widetilde L ^{(1)} + \overline L^{(1)}+\widetilde W ^{(1)} + \overline W ^{(1)} + d(\widetilde K ^{(1)}  + \overline K ^{(1)} ) ) \leq 1,
    \end{align*}
then we have the following estimates:
    \begin{align*}
        \begin{split}
            &\widetilde A \leq B_p s 2^s \widetilde W ^{(0)}, \quad \overline A \leq B_p s 2^s \overline W ^{(0)}, \quad \widetilde B  \leq B_p s 2^s \widetilde W ^{(1)} , \quad \overline B  \leq B_p s 2^s \overline W ^{(1)} , \\ &\widetilde C  \leq B_p s 2^s \widetilde L ^{(1)} , \quad \overline C  \leq B_p s 2^s \overline L ^{(1)} , \quad \widetilde D \leq B_p s 2^s \widetilde K ^{(1)}, 
            \quad \overline D \leq B_p s 2^s \overline K ^{(1)}, 
            \\  &  \|\nabla ^2 \dphi_H \|_{\infty} \leq 2 H B_p s^3 4^s \widetilde K ^{(2)} , \quad \|\nabla ^2 \baphi_H \|_{\infty} \leq 2 H B_p s^3 4^s \overline K ^{(2)}.
        \end{split}
    \end{align*}
%Also, for $t+H \leq T- \tau$, we have that
%\begin{align}
%            \overline W ^{(0)} \leq 2\tau^{-1} \sqrt{d}D, \quad \overline W ^{(1)} \leq 2\tau^{-1}, \quad \overline K ^{(1)} \leq 2D^2 \tau^{-2}, \quad \overline K ^{(2)} \leq 24 D^3 \tau^{-3},\quad \overline L ^{(1)} \leq 2dD^2 \tau^{-2}.
 %       \end{align}
\end{lemma}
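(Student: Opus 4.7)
The plan is to reduce the lemma to a direct application of the summation bounds in Lemmas~\ref{lemma: infinity norm pth order growth in RK} and~\ref{lemma: pth order growth in RK}, combined with the uniform coefficient bound $|b_j| \leq B_p$ from \Cref{a:bounds on RK}. First I would verify that the standing hypothesis $H \cdot A_p\bigl(\widetilde L^{(1)} + \overline L^{(1)} + \widetilde W^{(1)} + \overline W^{(1)} + d(\widetilde K^{(1)} + \overline K^{(1)})\bigr) \leq 1$ implies, term by term, that each of the quantities $\widetilde W^{(1)} A_p H$, $\overline W^{(1)} A_p H$, $\widetilde L^{(1)} A_p H$, $\overline L^{(1)} A_p H$, $d\widetilde K^{(1)} A_p H$, $d\overline K^{(1)} A_p H$ is at most $1$. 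Consequently every geometric factor of shape $(1+\gamma A_p H)^{s-1}$ in Lemma~\ref{lemma: pth order growth in RK} is bounded by $2^{s-1}$, and every factor $(1+\gamma A_p H d)^{2s}$ in Lemma~\ref{lemma: infinity norm pth order growth in RK} is bounded by $4^{s}$.

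Next, expanding $\dM_t(x) = \sum_{j=1}^s b_j \widetilde k_j(x)$ and using $|b_j| \leq B_p$ together with the summation bound from Lemma~\ref{lemma: pth order growth in RK}, one obtains
\begin{align*}
\|\dM_t(x)\|_2 \leq B_p \sum_{j=1}^s \|\widetilde k_j(x)\|_2 \leq B_p\, s\, 2^{s-1}\bigl(\widetilde W^{(0)} + \widetilde W^{(1)}\|x\|_2\bigr),
\end{align*}
which yields $\widetilde A \leq B_p s 2^s \widetilde W^{(0)}$ and $\widetilde B \leq B_p s 2^s \widetilde W^{(1)}$. The same manipulation, replacing $\|\widetilde k_j\|_2$ first by $\|\nabla \widetilde k_j\|_{op}$ and then by $\|\nabla \widetilde k_j\|_\infty$, produces $\widetilde C \leq B_p s 2^s \widetilde L^{(1)}$ and $\widetilde D \leq B_p s 2^s \widetilde K^{(1)}$. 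The bounds on $\overline A, \overline B, \overline C, \overline D$ follow verbatim with $\widetilde k_j$ replaced by $\overline k_j$ and each tilded constant by its barred counterpart.

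Finally, for the Hessian bound on $\dphi_H$, I differentiate $\dphi_H(x) = x + H\dM_t(x)$ twice to obtain $\nabla^2 \dphi_H = H\sum_j b_j \nabla^2 \widetilde k_j$, and then invoke the Hessian summation bound in Lemma~\ref{lemma: infinity norm pth order growth in RK}:
\begin{align*}
\|\nabla^2 \dphi_H\|_\infty \leq H B_p \sum_{j=1}^s \|\nabla^2 \widetilde k_j\|_\infty \leq H B_p \cdot 2\widetilde K^{(2)} s^3 \cdot 4^s = 2 H B_p s^3 4^s \widetilde K^{(2)},
\end{align*}
and the analogous statement for $\baphi_H$ follows in the same way. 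There is no real obstacle in this lemma: it is a bookkeeping step that repackages the conclusions of the two preceding lemmas into a clean set of effective constants to be used throughout the rest of the paper. The only point requiring some care is the initial verification that the compact hypothesis on $H$ subsumes every individual smallness condition needed to invoke Lemmas~\ref{lemma: infinity norm pth order growth in RK} and~\ref{lemma: pth order growth in RK}, so that each geometric factor can indeed be absorbed into a clean $2^{s-1}$ or $4^s$.
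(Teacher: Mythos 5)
Your proposal matches the paper's proof essentially verbatim: the paper also derives each of $\widetilde A,\widetilde B,\widetilde C,\widetilde D$ and the Hessian bound directly from the summation estimates in Lemmas~\ref{lemma: infinity norm pth order growth in RK} and~\ref{lemma: pth order growth in RK}, using $|b_j|\leq B_p$ and the same observation that the hypothesis on $H$ makes each geometric factor at most $2^{s-1}$ (hence $\leq 2^s$) or $4^s$. No discrepancies.
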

\begin{proof}
    According to Lemma~\ref{lemma: infinity norm pth order growth in RK} and  Lemma~\ref{lemma: pth order growth in RK}, we see that 
    \begin{align*}
        &\widetilde A = \|\dM_t(0)\|_2 \leq B_p s{(1+ \widetilde W ^{(1)}A_p H)}^{s-1} \widetilde W ^{(0)}  \leq B_p s 2^s \widetilde W ^{(0)},\\
        &\widetilde B \leq B_p s{(1+ \widetilde W ^{(1)}A_p H)}^{s-1} \widetilde W ^{(1)}  \leq B_p s 2^s \widetilde W ^{(1)},\\
    &
         \widetilde C = \|\nabla \dM_t(x)\|_{op} \leq B_p s 2^s \widetilde L ^{(1)},\\
    &
        \widetilde D = \|\nabla \dM_t(x)\|_{\infty} = B_p s \widetilde K ^{(1)} {(1+ \widetilde K ^{(1)} A_p H d)}^{s-1} \leq B_p s 2^s \widetilde K ^{(1)},\\
   &
        \|\nabla ^2 \dphi_H \|_{\infty} \leq 2 H B_p\widetilde K ^{(2)} s^3 {(1+ \widetilde K ^{(1)}A_p H d)}^{2s} \leq 2 H B_p s^3 4^s \widetilde K ^{(2)}. 
    \end{align*}
 %   Also, because for any $i \in \llbracket 1, d \rrbracket$, $|\sV_t ^{(i)}(x)|$
    We can similarly obtain the estimates for $\overline A, \overline B, \overline C, \overline D, \|\nabla ^2 \baphi_H \|_{\infty}$.
   % A particular remark for $\|\nabla ^2 \baphi_H \|_{\infty}$ is that by Lemma~\ref{Lemma: Hessian estimates}, one can estimate $\overline K ^{(2)}$ further by $24 D^3 \tau^{-3}$ because we know that $t+H < T-\tau $. We can also explicitly estimate $\overline K ^{(1)}$, $\overline L ^{(0)}$ and $\overline L ^{(1)}$ by \Cref{Lemma: Hessian estimates}.
  %  For example, we see that $|V_t ^{(i)}(x)| \leq \tau^{-1} (|x_i| + D)$, so $\|V_t (x)\|_2 \leq 2\tau^{-1} (\sqrt{d}D+\|x\|_2)$, which gives that we can take $\overline W ^{(0)} \leq 2\tau^{-1} \sqrt{d}D$ and $\overline W ^{(1)} \leq 2\tau^{-1}$. Similarly, $\overline K ^{(1)} \leq 2D^2 \tau^{-2}$, $\overline K ^{(2)} \leq 24D^3 \tau^{-3}$, and $\overline L ^{(1)} \leq 2dD^2 \tau^{-2}$.
\end{proof}

\subsection{Proofs of \Cref{lemma: infinity norm pth order growth in RK}, \Cref{lemma: pth order growth in RK} and \Cref{lemma: infinity norm for k_i(x)}}
\begin{proof}[Proof of \Cref{lemma: infinity norm pth order growth in RK}]\label{s:prove_k}
    We only prove the statement for $\widetilde k_i(x)$'s and the proof for $\overline k_i(x)$'s are similar. 
    By the relation that $\widetilde k_i = \sV_{t + c_i H}\bigl(x + (a_{i1} \widetilde k_1 + a_{i2} \widetilde k_2 + \cdots + a_{i,i-1} \widetilde k_{i-1}) H\bigr)$, we obtain that
        \begin{align*}
    \partial_{l_1} \widetilde k_i ^{(q)} = \sum_{l_2 = 1} ^{d} (\partial_{l_2}\sV_{t + c_i H} ^{(q)})[\delta_{l_1 l_2} + H(a_{i1} \partial_{l_1}\widetilde k_1 ^{(l_2)} + a_{i2} \partial_{l_1} \widetilde k_2 ^{(l_2)} + \cdots + a_{i,i-1} \partial_{l_1} \widetilde k_{i-1} ^{(l_2)}) ],
        \end{align*}
and 
    \begin{align*}
        \begin{split}
            \partial_{l_1 l_3} ^2 \widetilde k_i ^{(q)} &= \bigg( \sum_{l_2  = 1} ^{d} \sum_{l_4  = 1} ^{d} (\partial_{l_2 l_4} ^2 \sV_{t + c_i H} ^{(q)})[\delta_{l_1 l_2} + H(a_{i1} \partial_{l_1}\widetilde k_1 ^{(l_2)} + \cdots + a_{i,i-1} \partial_{l_1} \widetilde k_{i-1} ^{(l_2)}) ]
            \\ &\quad \cdot [\delta_{l_3 l_4} + H(a_{i1} \partial_{l_3}\widetilde k_1 ^{(l_4)} + \cdots + a_{i,i-1} \partial_{l_3} \widetilde k_{i-1} ^{(l_4)}) ] \bigg)
           \\   &\quad  + \sum_{l_2 = 1} ^{d} (\partial_{l_2}\sV_{t + c_i H} ^{(q)})[ H(a_{i1} \partial_{l_1 l_3} ^2\widetilde k_1 ^{(l_2)} \cdots + a_{i,i-1} \partial_{l_1 l_3} ^2 \widetilde k_{i-1} ^{(l_2)})].
        \end{split}
    \end{align*}
Hence,
    \begin{align}\label{e:iteration growth}
        \|\nabla \widetilde k_i(x)\|_{\infty} \leq \widetilde K ^{(1)} [1+ A_p H d (\|\nabla \widetilde k_1(x)\|_{\infty} +\|\nabla \widetilde k_2(x)\|_{\infty} +\cdots+ \|\nabla \widetilde k_{i-1}(x)\|_{\infty}) ].
    \end{align}
If we define $T_j \coloneq \sum_{i=1} ^{j} \|\nabla \widetilde k_i(x)\|_{\infty}$, we see that 
    \begin{align*}
        T_j \leq \widetilde K ^{(1)} + (1+ \widetilde K ^{(1)} A_p H d) T_{j-1}.
    \end{align*}
Also, $\|\nabla \widetilde k_1(x)\|_{\infty} \leq \widetilde K ^{(1)}$.
So, 
    \begin{align}\label{e:iteration result}
        T_s \leq s \widetilde K ^{(1)} {(1+ \widetilde K ^{(1)} A_p H d)}^{s-1}.
    \end{align}
If $\widetilde K ^{(1)} A_p H d s 2^s \leq 1$, then the right hand side of \eqref{e:iteration result} is bounded by $s2^s \widetilde K^{(1)}$. We then put this term on the right hand side of \eqref{e:iteration growth}, we we obtain that for each $i \in \llbracket 1 , s \rrbracket$, $ \|\nabla  \widetilde  k_i(x) \|_{\infty} \leq 2 \widetilde K ^{(1)}$.

Similarly,
    \begin{align*}
        \|\nabla^2 \widetilde k_i(x)\|_{\infty} \leq \widetilde K ^{(2)} {(1+ A_p H d T_{i-1})}^{2} + \widetilde K ^{(1)}A_p H d (\|\nabla ^2 \widetilde k_1(x)\|_{\infty} +\cdots+ \|\nabla  ^2 \widetilde k_{i-1}(x)\|_{\infty}),
    \end{align*}
and then we define $T_j ' \coloneq \sum_{i=1} ^{j} \|\nabla ^2 \widetilde k_i(x)\|_{\infty}$. We see that
    \begin{align*}
        T_j ' \leq \widetilde K ^{(2)} {(1+ A_p H d T_{j-1})}^{2} + (1+ \widetilde K ^{(1)}A_p H d) {T_{j-1} '}.
    \end{align*}
Also, $\|\nabla ^2 \widetilde k_1(x)\|_{\infty} \leq \widetilde K ^{(2)}$. So,
    \begin{align*}
        T_s ' \leq \widetilde K ^{(2)} \sum_{i=1} ^{s} {(1+ A_p H d T_{i-1})}^{2} {(1+ \widetilde K ^{(1)}A_p H d)}^{s-i} \leq 2 \widetilde K ^{(2)} s^3 {(1+ \widetilde K ^{(1)}A_p H d)}^{2s}.
    \end{align*}
For the last inequality, we regard $\xi \coloneq \widetilde K ^{(1)}A_p H d$. Then, 
    \begin{align*}
        \begin{split}
            &\sum_{i=1} ^s {(1+(i-1)\xi(1+\xi)^{i-1})}^2 {(1+\xi)}^{s-i} \leq 2 \sum_{i=1} ^s \big[{(1+\xi)}^{s-i} + {(i-1)}^2 \xi^2 {(1+\xi)}^{s+i-2}\big]
            \\  &\leq 2s\big[{(1+\xi)}^{s-1} + {(s-1)}^2 \xi^2 {(1+\xi)}^{2s-2}\big] \leq 2s {(1+\xi)}^{2s-2} (1+s^2 \xi^2) \leq 2 s^3 {(1+\xi)}^{2s} .
        \end{split}
    \end{align*}

\end{proof}

\begin{proof}[Proof of \Cref{lemma: pth order growth in RK}]
    We only prove the statement for $\widetilde k_i(x)$'s and the proof for $\overline k_i(x)$'s is similar. We first notice that, according to \Cref{i:Vbound} of \Cref{p:ABCD}, for any $x\in \bR^d$ and $0\leq h\leq H$, $\|\widetilde V_{t+h}(x)\|_2 \leq \widetilde W ^{(0)} + \widetilde W ^{(1)} \|x\|_2 $, and 
 $\| \nabla \widetilde V_{t+h}(x) \|_{op} \leq \widetilde L ^{(1)}$. Hence, $\|\widetilde k_1(x)\|_2 \leq \widetilde W ^{(0)} + \widetilde W ^{(1)} \|x\|_2$, $\|\nabla \widetilde k_1(x)\|_{op} \leq \widetilde L ^{(1)}$ according to \eqref{eq:RK-update-k}. By the relation that $\widetilde k_i = \sV_{t + c_i H}\bigl(x + (a_{i1} \widetilde k_1 + a_{i2} \widetilde k_2 + \cdots + a_{i,i-1} \widetilde k_{i-1}) H\bigr)$, we obtain that
        \begin{align*}
            \begin{split}
                &\|\widetilde k_i (x)\|_{2} \leq \widetilde W ^{(0)} + \widetilde W ^{(1)}\|x\|_{2}+ \widetilde W ^{(1)} A_p H (\|\widetilde k_1(x)\|_{2} + \|\widetilde k_2(x)\|_{2} + \cdots + \|\widetilde k_{i-1}(x)\|_{2}),
                \\  & \| \nabla \widetilde k_i (x)\|_{op} \leq \widetilde L ^{(1)} (1+ A_p H (\|\widetilde \nabla k_1(x)\|_{op} + \|\widetilde \nabla k_2(x)\|_{op} + \cdots + \|\widetilde \nabla k_{i-1}(x)\|_{op})).
            \end{split}
        \end{align*}
    If we define $T_j \coloneq \sum_{i=1} ^j \|\widetilde k_i(x) \|_{2}$, $T_j ' \coloneq \sum_{i=1} ^j \|\nabla \widetilde k_i(x) \|_{op}$, we see that
    \begin{align*}
        \begin{split}
            &T_j \leq (\widetilde W ^{(0)} + \widetilde W ^{(1)} \|x\|_{2}) + (1+ \widetilde W ^{(1)} A_p  H) T_{j-1}, \\ & T_j ' \leq \widetilde L ^{(1)}  + (1+ \widetilde L ^{(1)} A_p  H) T_{j-1} ' .
        \end{split}
    \end{align*} 
    Hence,
    \begin{align*}
        \begin{split}
            &T_s \leq {(1+ \widetilde W ^{(1)}A_p H)}^{s-1} \big[(s-1) {(\widetilde W ^{(0)} + \widetilde W ^{(1)}\|x\|_{2})} +  (\widetilde W ^{(0)} + \widetilde W ^{(1)} \|x\|_2) \big] ,
            \\  & T_s ' \leq {(1+ \widetilde L ^{(1)}A_p H)}^{s-1} \big[(s-1) {\widetilde L ^{(1)}} +  {\widetilde L ^{(1)}} \big].
        \end{split}
    \end{align*}
Also, if $\widetilde W ^{(1)} A_p H s 2^s \leq 1$, we see that for each $i \in \llbracket 1, s \rrbracket$,
    \begin{align*}
        \begin{split}
            \|\widetilde k_i (x)\|_{2} &\leq \widetilde W ^{(0)} + \widetilde W ^{(1)}\|x\|_{2}+ \widetilde  W ^{(1)} A_p H (\|\widetilde  k_1(x)\|_{2} + \|\widetilde  k_2(x)\|_{2} + \cdots + \|\widetilde  k_{i-1}(x)\|_{2})
            \\  &\leq \widetilde W ^{(0)} + \widetilde W ^{(1)}\|x\|_{2}+ \widetilde  W^{(1)} A_p H s2^s \big[\widetilde W ^{(0)} + \widetilde W ^{(1)} \|x\|_2 \big] \leq 2(\widetilde W ^{(0)} + \widetilde W ^{(1)}\|x\|_{2}),
        \end{split}
    \end{align*}
and similarly, if $\widetilde L ^{(1)} A_p H s 2^s \leq 1$, 
    \begin{align*}
    \|\nabla \widetilde k_i(x) \|_{op} \leq 2\widetilde L ^{(1)}.
    \end{align*}

\end{proof}

\begin{proof}[Proof of \Cref{lemma: infinity norm for k_i(x)}]
    According to \Cref{Lemma: Hessian estimates}, we see that for any $t$ such that $T-t \geq \tau$, 
        \begin{align*}
            \|V_t(x)\|_{\infty} \leq (1+\sigma_{T-t} ^{-2})\|x\|_{\infty} + D\sigma_{T-t} ^{-2} \leq 2\tau ^{-1}\|x\|_{\infty} + D\tau ^{-1}.
        \end{align*}
    So, for any $i \in \llbracket 1, s \rrbracket$, by the relation that $\overline k_i = V_{t + c_i H}\bigl(x + (a_{i1} \overline k_1 + a_{i2} \overline k_2 + \cdots + a_{i,i-1} \overline k_{i-1}) H\bigr)$, we obtain that
        \begin{align*}
            \|\overline k_i (x)\|_{\infty} \leq \tau^{-1}(D + 2\|x\|_{\infty}) + 2\tau ^{-1}A_p H(\|\overline k_1 (x)\|_{\infty}+ \|\overline k_2 (x)\|_{\infty} + 
 \cdots + \|\overline k_{i-1} (x)\|_{\infty}).
        \end{align*}
By a similar strategy we used in \Cref{lemma: pth order growth in RK}, we see that
    \begin{align*}
        \sum_{i=1} ^s \|\overline k_i (x)\|_{\infty} \leq s\tau^{-1}{(1+2\tau ^{-1}A_p H)}^{s-1} \big[D + 2\|x\|_{\infty} \big].
    \end{align*}

Similarly, by \Cref{Lemma: Hessian estimates}, for any $t$ such that $T-t \geq \tau$, we see that
\begin{align*}
            \|V_t(x)\|_{2}  \leq 2\tau ^{-1}(\|x\|_{2} + \sqrt{d} D),
        \end{align*}
and then
    \begin{align*}
        \sum_{i=1} ^s \|\overline k_i (x)\|_{2} \leq 2s\tau^{-1}{(1+2\tau ^{-1}A_p H)}^{s-1} \big[\sqrt{d}D + \|x\|_{2} \big].
    \end{align*}
\end{proof}

\section{Proof of \Cref{Prop: Conclusion second error term}}\label{Section: second term conclusion}
In this section, we prove \Cref{Prop: Conclusion second error term}. Recall that both $t$ and $H$ are fixed. 
In our proofs, $H$ will also be chosen small such that for any $|h|<H$ and any $x \in \mathbb{R}^d$, as matrices, $\|(\nabla \baphi_h (x) ) ^{-1}\|_{op} \leq 2$ and $\|(\nabla \dphi_h (x) ) ^{-1}\|_{op} \leq 2$. We already illustrated how to choose suitably small $H$ in \Cref{lemma: phi_h invertible}. Similarly, we can assume that when $|h|<H$, as matrices, $\|(\nabla \baphi_h ^{-1})(y)\|_{op} \leq 2$ and $\|(\nabla \dphi_h ^{-1})(y)\|_{op} \leq 2$. This is because, for example, $\|(\nabla \dphi_h ^{-1})(y)\|_{op} = \|(\nabla \dphi_{h}  (\dphi_h ^{-1}(y) ) )^{-1}\|_{op} \leq 2$ for any $y \in \mathbb{R}^d$. Also, $\det[ \nabla (\baphi_h ^{-1}(x))]>0$ and $\det[ \nabla (\dphi_h ^{-1}(x))]>0$ for any $x \in \mathbb{R}^d$ when $|h| <H$.

We recall $I_2$ from \eqref{e:defI2}
    \begin{align}\label{e:defI2copy}
        I_2=\int_{\mathbb{R}^d} | \varrho_{t} (\dphi_H ^{-1}(x)) \cdot | \det[ \nabla (\dphi_H ^{-1}(x))]| -  \varrho_{t} (\baphi_H ^{-1}(x)) \cdot | \det[ \nabla (\baphi_H ^{-1}(x))]| | \de x.
    \end{align}
We consider the interpolation map, which interpolates $\dphi_H ^{-1}(x)$ and $\baphi_H ^{-1}(x)$
\begin{align*}
    \varphi_s(x) \coloneq s\dphi_H ^{-1}(x) + (1-s)\baphi_H ^{-1}(x), \quad s\in [0,1].
\end{align*} 
So, $\varphi_0(x) = \baphi_H ^{-1}(x)$ and $\varphi_1(x)=\dphi_H ^{-1}(x)$. We first need to collect some growth estimates for $\varphi_s(x)$ similar to the second part of  \Cref{p:ABCD}.
\begin{lemma}\label{lemma: Growth of interpolation maps}
    Adopt the assumptions and notations in \Cref{p:ABCD}. Take $H$ small enough such that
        \begin{align*}
            H  (\widetilde B + \overline B + \widetilde C + \overline C + d(\widetilde D + \overline D)) <1/4.
        \end{align*}
   Let $\bm\al = 2 (\widetilde A + \overline A)$, $\bm\beta=2(\widetilde B + \overline B)$, $\bm \gamma=2(\widetilde C + \overline C)$, and $\bm\xi = 2 (\widetilde D + \overline D)$.
    Then, for any $s\in [0,1]$, $\varphi_s(x)$ is a diffeomorphism on $\mathbb{R}^d$, and the following holds $ \| \varphi_s(x) -x\|_2 \leq   H (\bm\alpha+ \bm\beta\|x\|_2)$, $\|\nabla \varphi_s(x) -\mathbb{I}_d \|_{\op} \leq \bm\gamma H$, $\|\nabla \varphi_s(x) -\mathbb{I}_d \|_{\infty} \leq \bm\xi H$. 
\end{lemma}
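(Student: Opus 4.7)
The plan is to transfer the bounds in \Cref{p:ABCD} from $\dphi_H$ and $\baphi_H$ to their inverses, then take the convex combination defining $\varphi_s$, and finally invoke Hadamard--Caccioppoli for global invertibility.

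First, I would establish that $\dphi_H^{-1}$ satisfies $\|\dphi_H^{-1}(x) - x\|_2 \leq H(2\widetilde A + 2\widetilde B \|x\|_2)$, $\|\nabla \dphi_H^{-1}(x) - \mathbb{I}_d\|_{op} \leq 2H\widetilde C$, and $\|\nabla \dphi_H^{-1}(x) - \mathbb{I}_d\|_{\infty} \leq 2H\widetilde D$. For the size bound, writing $y = \dphi_H^{-1}(x)$, \Cref{p:ABCD} gives $\|y - x\|_2 = \|\dphi_H(y) - y\|_2 \leq H(\widetilde A + \widetilde B \|y\|_2)$. The standing assumption forces $H\widetilde B \leq 1/4$, so solving for $\|y\|_2$ yields $\|y\|_2 \leq \tfrac{4}{3}(\|x\|_2 + H\widetilde A)$, and substituting back gives the claim. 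For the derivative bounds, the identity $\nabla \dphi_H^{-1}(x) = (\nabla \dphi_H(\dphi_H^{-1}(x)))^{-1}$ lets us invoke \Cref{lemma: Growth of inverse matrix} with $Q = \nabla \dphi_H(\dphi_H^{-1}(x))$, $\gamma = \widetilde C$, and $\xi = \widetilde D$; its hypotheses $\gamma H < 1/2$ and $2\xi H \leq 1/d$ follow from the standing assumption ($H\widetilde C < 1/4$ and $2H d \widetilde D < 1/2 \leq 1$, where the factor $d$ in the smallness condition is precisely what makes the $\ell^\infty$ hypothesis hold). The identical argument applied to $\baphi_H$ yields the analogous inverse bounds with $\overline A, \overline B, \overline C, \overline D$.

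Second, the linear interpolation identity $\varphi_s(x) - x = s(\dphi_H^{-1}(x) - x) + (1-s)(\baphi_H^{-1}(x) - x)$, together with the matching identity $\nabla \varphi_s(x) - \mathbb{I}_d = s(\nabla \dphi_H^{-1}(x) - \mathbb{I}_d) + (1-s)(\nabla \baphi_H^{-1}(x) - \mathbb{I}_d)$, reduces the three claimed bounds to the triangle inequality. Since $s + (1-s) = 1$, each estimate inherits the sum of the corresponding estimates for $\dphi_H^{-1}$ and $\baphi_H^{-1}$, yielding the claimed bounds with $\bm\alpha = 2(\widetilde A + \overline A)$, $\bm\beta = 2(\widetilde B + \overline B)$, $\bm\gamma = 2(\widetilde C + \overline C)$, and $\bm\xi = 2(\widetilde D + \overline D)$.

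Third, to upgrade $\varphi_s$ to a global diffeomorphism, I would argue exactly as in \Cref{lemma: phi_h invertible}: the operator bound $\|\nabla \varphi_s(x) - \mathbb{I}_d\|_{op} \leq \bm\gamma H < 1/2$ makes $\nabla \varphi_s(x)$ everywhere invertible (so $\varphi_s$ is a local diffeomorphism), and the size bound gives $\|\varphi_s(x)\|_2 \geq (1 - H\bm\beta)\|x\|_2 - H\bm\alpha$, which tends to infinity with $\|x\|_2$ because $H\bm\beta < 1$. Hadamard--Caccioppoli then promotes this to a global diffeomorphism of $\mathbb{R}^d$. There is no real obstacle here; the only subtle point is bookkeeping the single smallness condition so that both \Cref{lemma: Growth of inverse matrix} applies (requiring the factor of $d$ in the $\ell^\infty$ hypothesis) and the inversion of $\|y - x\|_2 \leq H(\widetilde A + \widetilde B\|y\|_2)$ remains valid --- the rest is triangle inequality and a verbatim application of Hadamard--Caccioppoli.
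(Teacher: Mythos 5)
Your proposal is correct and follows essentially the same route as the paper: transfer the displacement and derivative bounds from $\dphi_H, \baphi_H$ to their inverses via \Cref{p:ABCD} and \Cref{lemma: Growth of inverse matrix}, then take the convex combination. The paper's proof actually stops after deriving the bounds for $\dphi_H^{-1}$ (leaving the analogous $\baphi_H^{-1}$ bounds, the convex-combination step, and the Hadamard--Caccioppoli argument implicit), so your more explicit version is, if anything, more complete.
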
 
\begin{proof}
    We only need to find out suitable estimates for $\varphi_0$ and $\varphi_1$, because $\varphi_s = s\varphi_1 + (1-s)\varphi_0$. We do this for $\varphi_1(x) = \dphi_H ^{-1}(x)$ first and the computations for $\varphi_0(x) = \baphi_H ^{-1}(x)$ are similar.

     By \Cref{p:ABCD}, $\|\dphi_H(x) - x\|_2 \leq H(\widetilde A + \widetilde B \|x\|_2)$. So, $\| x - \dphi_H ^{-1}(x) \|_2 \leq H(\widetilde A + \widetilde B \|\dphi_H ^{-1}(x)\|_2) \leq H(\widetilde A + \widetilde B \|\dphi_H ^{-1}(x) - x\|_2 + \widetilde B \|x \|_2)$. If we choose $H \widetilde B < 1/2$ at the beginning, then $\| x - \dphi_H ^{-1}(x) \|_2 \leq 2H (\widetilde A + \widetilde B \|x\|_2)\leq H(\bm\alpha+\bm\beta\|x\|_2)$, for $\bm\alpha\geq 2 (\widetilde A + \overline A)$ and $\bm\beta\geq2(\widetilde B + \overline B)$. 

     By \Cref{lemma: Growth of inverse matrix}, we also notice the following estimates $\|\nabla \dphi_H ^{-1}(x) -\mathbb{I}_d \|_{\op} = \big\|{[(\nabla \dphi_H) (\dphi_H ^{-1}(x))]}^{-1} -\mathbb{I}_d \big\|_{\op} \leq 2 \big\|{[(\nabla \dphi_H) (\dphi_H ^{-1}(x))]} -\mathbb{I}_d \big\|_{\op}  \leq 2H \widetilde C\leq 2\bm\gamma H$, for $\bm\gamma \geq 2 (\widetilde C + \overline C) $. Similarly, $\|\nabla \dphi_H ^{-1}(x) -\mathbb{I}_d \|_{\infty} \leq 2H \widetilde D\leq \bm\xi H$ provided  $\bm\xi= 2 (\widetilde D + \overline D) $ and $2\bm\xi< 1/d$.
    
\end{proof}

Because we can choose $H$ small at the beginning such that all $\varphi_s$'s are diffeomorphisms and $\det[ \nabla (\varphi_s(x))] >0$ for $s \in [0,1]$. So, $I_2$ (recall from \eqref{e:defI2copy}) becomes
    \begin{align}\label{e: interpolation error in second term}
        \begin{split}
            &\int_{\mathbb{R}^d} | \varrho_t(\varphi_0(x)) \cdot  \det[ \nabla (\varphi_0(x))] - \varrho_t(\varphi_1(x)) \cdot  \det[ \nabla (\varphi_1(x))]| \de x
            \\  &\leq \int_0 ^1 \int_{\mathbb{R}^d} \bigg| \frac{d}{ds}  \bigg(\varrho_t(\varphi_s(x)) \cdot  \det[ \nabla (\varphi_s(x))] \bigg)\bigg| \de x \de s
           \leq J_1+J_2.
        \end{split}
    \end{align}
where 
\begin{align}\begin{split}\label{e:defJ1J2}
    &J_1:=\int_0 ^1 \int_{\mathbb{R}^d} \bigg| (\nabla \varrho_t) (\varphi_s(x)) \cdot (\varphi_1(x) - \varphi_0(x)) \det[ \nabla (\varphi_s(x))]\bigg| \de x \de s. \\
    &J_2:=\int_0 ^1 \int_{\mathbb{R}^d}\bigg| \varrho_t(\varphi_s(x)) \cdot \trace \bigg( \bigg( \nabla [\varphi_1(x) - \varphi_0(x)] \bigg) \cdot {(\nabla \varphi_s(x) )}^{-1} \bigg) \det[ \nabla \varphi_s(x)] \bigg| \de x \de s .
\end{split}\end{align}

\subsection{Estimates of $J_1$}
To proceed, we need the following argument to compare two expectations with respect to $\varrho_t$. We postpone the  proofs of \Cref{lemma: score error cut-off general scheme} and \Cref{lemma: exponential error term in cut-off} to \Cref{app:proofs}.
\begin{lemma}\label{lemma: score error cut-off general scheme}
    Assume that $\psi_1,\psi_2$ are two diffeomorphisms on $\mathbb{R}^d$, such that $ \| \psi_1(x) -x\|_2 \leq  H (\bm\al_1 + \bm\beta_1\|x\|_2)$, $\|\nabla\psi_1(x)-x\|_\infty\leq \bm\xi_1 H$,   $ \| \psi_2(x) -x\|_2 \leq  H (\bm\al_2+  \bm\beta_2\|x\|_2)$ and  $\|\nabla\psi_2(x)-x\|_\infty\leq \bm\xi_2 H$ with positive constants $\bm\al_1, \bm\al_2, \bm\beta_1, \bm\beta_2, \bm\xi_1,\bm\xi_2$. We also assume that $Z(x)$ is a map from $\mathbb{R}^d \to \mathbb{R}^d$ such that $\|Z(x)\|_2 \leq A + B\|x\|_2 ^m$, where $m \in \mathbb{Z}_+$. Let $\eta >1$ be a constant. Then, if $H$ satisfies that
        \begin{align}\label{e: H in cut-off scheme}
            100 H \big[(\bm\al_1+\bm\al_2)(\sqrt{d}D + \eta \sigma_{T-t})+(\bm\beta_1+\bm\beta_2)(\sqrt{d}D + \eta \sigma_{T-t})^2+d(\bm\xi_1+\bm\xi_2) \big]  \leq \sigma_{T-t} ^2,
        \end{align}
we have that
        \begin{align*}
            \int_{\mathbb{R}^d}    \varrho_{t}(\psi_1(x)) \big\| Z(x)\big\|_2  \de x
    &\leq   3\int_{\mathbb{R}^d}    \varrho_{t}(\psi_2(x)) \big\| Z(x) \big\|_2 \de x  + 8^m E(m,A,B,\eta),
        \end{align*}
    where $E(m,A,B,\eta)$ is an exponentially small term defined as
        \begin{align*}
            E(m,A,B,\eta) \coloneq \int_{\|z\|_2\geq 2\eta} \frac{1}{{(\sqrt{2\pi} )}^{d}} \cdot e^{-\frac{\|z\|^2}{2}} [(A+2B{(\sqrt{d}D)}^m) + B {(\|z\|_2)} ^m] \de z.
        \end{align*}
\end{lemma}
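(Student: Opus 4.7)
The proof would transfer both sides to the common $y$-space via the change of variables $y = \psi_i(x)$ ($i=1,2$), and then use a cut-off separating the bulk of $\varrho_t(y)\,\rd y$ from its Gaussian-like tail. The factor $3$ on the right-hand side arises from a pointwise comparison on the bulk, where the inverses $\psi_i^{-1}$ are close to the identity and the Jacobians close to $1$, while the tail contributions collect into the exponentially small remainder $8^m E(m,A,B,\eta)$.

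\textbf{Setup via inverse-map estimates.} Under the smallness hypothesis \eqref{e: H in cut-off scheme} on $H$, \Cref{lemma: Growth of inverse matrix} yields the inverse-map bounds
\begin{align*}
\|\psi_i^{-1}(y) - y\|_2 \leq 2H(\bm\al_i + \bm\beta_i \|y\|_2),\quad \|\nabla \psi_i^{-1}(y) - \id_d\|_{\infty} \leq 2H\bm\xi_i,\quad |\det \nabla \psi_i^{-1}(y)| \in [e^{-4Hd\bm\xi_i},\, e^{2Hd\bm\xi_i}],
\end{align*}
and then the change of variables gives
\begin{align*}
\int_{\bR^d} \varrho_t(\psi_i(x))\|Z(x)\|_2\,\rd x = \int_{\bR^d} \varrho_t(y)\,\|Z(\psi_i^{-1}(y))\|_2\,|\det \nabla \psi_i^{-1}(y)|\,\rd y.
\end{align*}

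\textbf{Bulk/tail split.} Using the Gaussian mixture form $\varrho_t(y) = \int_{K_\ast} \muast(x_0)\,\N(y;\lambda_{T-t} x_0, \sigma_{T-t}^2\id_d)\,\rd x_0$, I would substitute $z = (y - \lambda_{T-t} x_0)/\sigma_{T-t}$ in each $y$-integral and split at $\|z\|_2 = 2\eta$. On the bulk $\|z\|_2 \leq 2\eta$ one has $\|y\|_2 \leq \sqrt d\,D + 2\eta \sigma_{T-t}$, and all multiplicative perturbations in the integrand (from $\psi_i^{-1}$, from the Jacobian, and from the growth of $Z$) combine under \eqref{e: H in cut-off scheme} so that the pointwise ratio of the $\psi_1$-integrand to the $\psi_2$-integrand is at most $3$. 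On the tail $\|z\|_2 > 2\eta$, the $H$-smallness gives $\|\psi_i^{-1}(y)\|_2 \leq 2\|y\|_2$, so $\|Z(\psi_i^{-1}(y))\|_2 \leq A + B(2\|y\|_2)^m \leq A + 4^m B\bigl[(\sqrt d\,D)^m + \sigma_{T-t}^m \|z\|_2^m\bigr]$ and $|\det \nabla \psi_i^{-1}(y)| \leq 2$; the resulting standard-Gaussian tail integral in $z$ yields exactly $8^m E(m,A,B,\eta)$, the factor $8^m = 2\cdot 4^m$ arising from the Jacobian bound and the change of scale inside $Z$.

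\textbf{Main obstacle.} The delicate step is the bulk comparison: turning the pointwise closeness up to $1 + O(H)$ into the explicit constant $3$ requires that each term in \eqref{e: H in cut-off scheme}---$\bm\al_i$ paired with $\sqrt d\,D + \eta\sigma_{T-t}$, $\bm\beta_i$ paired with $(\sqrt d\,D + \eta\sigma_{T-t})^2$, and $d\bm\xi_i$ paired with $\sigma_{T-t}^2$---control a specific perturbation (inverse-map displacement, polynomial growth of $Z$, and Jacobian determinant, respectively). Mixing the $\ell^\infty$-bound on $\nabla \psi_i - \id_d$ with the $\ell^2$-bound on $\psi_i - \mathrm{id}$, and then with the polynomial growth of $Z$ of order $m$, forces a careful bookkeeping of constants against the factor of $3$ slack, and I expect this to be the most technical portion of the proof.
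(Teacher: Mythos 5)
Your proposal has a genuine gap in the bulk comparison that breaks the argument. After transferring both integrals to $y$-space via $y = \psi_i(x)$, the two integrands become $\varrho_t(y)\,\|Z(\psi_1^{-1}(y))\|_2\,|\det\nabla\psi_1^{-1}(y)|$ and $\varrho_t(y)\,\|Z(\psi_2^{-1}(y))\|_2\,|\det\nabla\psi_2^{-1}(y)|$, and you then claim a pointwise ratio bound of $3$ on the bulk. But the factors $\|Z(\psi_1^{-1}(y))\|_2$ and $\|Z(\psi_2^{-1}(y))\|_2$ evaluate $Z$ at two different points, and the lemma only assumes a one-sided polynomial growth bound on $Z$, not any continuity or Lipschitz control. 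The ratio $\|Z(\psi_1^{-1}(y))\|_2 / \|Z(\psi_2^{-1}(y))\|_2$ can therefore be arbitrarily large (or $0/0$) even when $\psi_1^{-1}(y)$ and $\psi_2^{-1}(y)$ are close, so the pointwise comparison fails. A related issue is that if instead you try to absorb $\|Z(\psi_i^{-1}(y))\|_2$ into its growth bound $A+B\|\psi_i^{-1}(y)\|_2^m$ on the bulk, you lose the ability to recover the exact quantity $\int\varrho_t(\psi_2(x))\|Z(x)\|_2\,\rd x$ that must appear on the right-hand side.

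The paper's proof sidesteps this cleanly by \emph{not} changing variables on the bulk. It works directly in the $x$-variable on a domain $\cD_{4R} = \{x : \|x - \lambda_{T-t}K_\ast\|_2 \leq 4R\}$ with $R = \eta\sigma_{T-t}$, so $\|Z(x)\|_2$ is the \emph{same} factor in both sides. The actual comparison is a pointwise density-ratio estimate: writing $\varrho_t$ as a Gaussian mixture over $\muast$ and completing the square gives
\begin{align*}
\varrho_t(\psi_1(y)) \leq e^{\frac{\|\psi_1(y)-\psi_2(y)\|_2\,(8R + 2\sqrt{d}D)}{\sigma_{T-t}^2}}\,\varrho_t(\psi_2(y))
\end{align*}
for $y\in\cD_{4R}$, and the hypothesis \eqref{e: H in cut-off scheme} makes the exponent at most $1$, yielding the constant $e \leq 3$. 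The change of variables, the Jacobian determinant bounds, and the polynomial growth of $Z$ are used only in the tail term $\int_{\bR^d\setminus\cD_{4R}}$, where it is harmless to replace $\|Z(x)\|_2$ by $A+B\|x\|_2^m$ since the whole contribution is dumped into $8^m E(m,A,B,\eta)$. Your tail estimate and most of the bookkeeping (the $\ell^\infty$ gradient bounds controlling determinants, the $\ell^2$ displacement bound controlling the growth of $Z$) match the paper's, but the central idea—comparing the densities pointwise on a common domain rather than the full integrands after a change of variables—is what makes the proof go through and is what your plan is missing.
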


We need the following lemma for the  $E(m,A,B,\eta)$ term in \Cref{lemma: score error cut-off general scheme}.

\begin{lemma}\label{lemma: exponential error term in cut-off}
    Let $a$ be a positive constant and $b$ be a positive integer. Assume that $a^2 >  (b-1)$. Then, there is a universal constant $C_u$ such that
    \begin{align}\label{e:exponential tail of Gaussian}
        \int_{r \geq a} e^{-r ^2} r^b \de r \leq a^{b-1} e^{-a^2}.
    \end{align}
    As a corollary, in \Cref{lemma: score error cut-off general scheme}, we take $\eta = 10^{-2} (\bm\beta_1+\bm\beta_2)^{-1} H^{-1/2}$ and assume that 
    \begin{align}\label{e: H in cut-off scheme 1}
            H \leq (2d \log (2d))^{-1} 10^{-6} (\bm\beta_1+\bm\beta_2)^{-2}
        \end{align}
    and 
        \begin{align}\label{e: H in cut-off scheme 2}
            10^3 H \big[(\bm\al_1+\bm\al_2)\sqrt{d}D+(\bm\beta_1+\bm\beta_2)dD^2+d(\bm\xi_1+\bm\xi_2) \big] + H^{\frac{1}{2}} \frac{\bm\al_1+\bm\al_2}{\bm\beta_1+\bm\beta_2} \sigma_{T-t} \leq \sigma_{T-t} ^2.
        \end{align}
    Then, \eqref{e: H in cut-off scheme 1} and \eqref{e: H in cut-off scheme 2} imply \eqref{e: H in cut-off scheme} in \Cref{lemma: score error cut-off general scheme}. Also, under the assumption \eqref{e: H in cut-off scheme 1} only, we can deduce that for $m=1,2,3$, the error term $E(m,A,B,\eta)$ satisfies that 
    \begin{align*}
        E(m,A,B, \eta) \leq {4^m(A+B \sqrt{d}D)}^m \pi^{-\frac{d}{2}} e^{- 10^{-4} (\bm\beta_1+\bm\beta_2)^{-2} H^{-1}}.
    \end{align*}
\end{lemma}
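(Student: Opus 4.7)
The plan is to tackle the three assertions in turn: the sharp tail bound on $\int_a^\infty r^b e^{-r^2}\,dr$, the implication \eqref{e: H in cut-off scheme 1}+\eqref{e: H in cut-off scheme 2}$\Rightarrow$\eqref{e: H in cut-off scheme}, and finally the exponential bound on $E(m,A,B,\eta)$.

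For the tail bound, I would integrate by parts once, writing $r^b e^{-r^2} = -\tfrac{1}{2}r^{b-1}(e^{-r^2})'$, to obtain
\begin{equation*}
I := \int_a^\infty r^b e^{-r^2}\,dr \;=\; \tfrac{1}{2}a^{b-1}e^{-a^2} + \tfrac{b-1}{2}\int_a^\infty r^{b-2}e^{-r^2}\,dr.
\end{equation*}
For $r\geq a$ one has $r^{b-2}\leq r^b/a^2$, so the remainder integral is $\leq I/a^2$. Rearranging yields $I(1-(b-1)/(2a^2))\leq \tfrac12 a^{b-1}e^{-a^2}$, and the hypothesis $a^2>b-1$ forces the bracket above $1/2$, giving $I\leq a^{b-1}e^{-a^2}$. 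This is the universal estimate advertised in \eqref{e:exponential tail of Gaussian}.

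For the implication \eqref{e: H in cut-off scheme 1}+\eqref{e: H in cut-off scheme 2}$\Rightarrow$\eqref{e: H in cut-off scheme}, I would substitute $\eta = 10^{-2}(\bm\beta_1+\bm\beta_2)^{-1}H^{-1/2}$ into the LHS of \eqref{e: H in cut-off scheme} and expand $(\sqrt{d}D+\eta\sigma_{T-t})^2\leq 2dD^2+2\eta^2\sigma_{T-t}^2$. The polynomial-in-$H$ pieces $100H[(\bm\al_1+\bm\al_2)\sqrt{d}D + 2(\bm\beta_1+\bm\beta_2)dD^2 + d(\bm\xi_1+\bm\xi_2)]$ are dwarfed by the leading $10^3$ factor in \eqref{e: H in cut-off scheme 2}; the cross term $100H(\bm\al_1+\bm\al_2)\eta\sigma_{T-t}$ collapses, via the definition of $\eta$, to $H^{1/2}(\bm\al_1+\bm\al_2)/(\bm\beta_1+\bm\beta_2)\,\sigma_{T-t}$, exactly the second summand of \eqref{e: H in cut-off scheme 2}; finally the quadratic term $200H(\bm\beta_1+\bm\beta_2)\eta^2\sigma_{T-t}^2 = 2\cdot 10^{-2}(\bm\beta_1+\bm\beta_2)^{-1}\sigma_{T-t}^2$ is absorbed using $\sigma_{T-t}\leq 1$ and the control of $(\bm\beta_1+\bm\beta_2)^{-1}$ implicit in \eqref{e: H in cut-off scheme 2} (since its second summand forces $\bm\al/\bm\beta$ small relative to $\sigma_{T-t}$, which in the regime of interest prevents $\bm\beta$ from being pathologically small). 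This step is essentially an algebraic matching of five monomials, with constants $100$ vs.\ $10^3$ providing the slack.

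For the exponential bound on $E(m,A,B,\eta)$, I would convert to polar coordinates with $\|z\|_2=r$ and surface area $|S^{d-1}| = 2\pi^{d/2}/\Gamma(d/2)$, reducing to one-dimensional integrals $\int_{2\eta}^\infty r^{d-1+k}e^{-r^2/2}\,dr$ for $k=0$ and $k=m$. The substitution $u=r/\sqrt{2}$ converts these into the form handled by part 1, with $a=\sqrt{2}\eta$. The hypothesis \eqref{e: H in cut-off scheme 1} translates into $\eta^2 \geq 200\,d\log(2d)$, which both validates the tail bound ($2\eta^2\gg d+m-2$ for $m\leq 3$) and leaves a large reserve factor $e^{-\eta^2}$ after writing $e^{-2\eta^2}=e^{-\eta^2}\cdot e^{-\eta^2}$. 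I would use one copy of $e^{-\eta^2}$ together with Stirling's estimate $\Gamma(d/2)\gtrsim (d/(2e))^{d/2}\sqrt{d}$ to dominate the prefactor $(2\pi)^{d/2}\eta^{d-2+m}/\Gamma(d/2)$ and the remaining $\eta^m$ factor, collapsing everything down to an absolute constant times $\pi^{-d/2}$, with $A$ and $B\sqrt{d}D$ contributing through the $4^m(A+B\sqrt{d}D)^m$ envelope.

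The main obstacle will be the quantitative bookkeeping in the last paragraph: verifying that the bound $\eta^2\gtrsim d\log d$ is tight enough to make $e^{-\eta^2}\cdot (2\pi)^{d/2}\eta^{d-2+m}/\Gamma(d/2) \leq C_u$ while landing with precisely the constants $4^m$ and $\pi^{-d/2}$ stated. Concretely, taking logarithms, one needs $(d/2)\log(4e\pi\eta^2/d)+(m-2)\log\eta \ll \eta^2$, which holds because $\log(\eta^2/d) \leq \log(\eta^2)$ grows at most logarithmically while $\eta^2\geq 200 d\log(2d)$ grows like $d\log d$; making the constants match will simply require keeping track of factors of $2$, $\pi$, and $e$ through Stirling.
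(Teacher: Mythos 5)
Your proposal matches the paper's approach in all three parts: the same integration by parts and bootstrapping for the tail bound, direct substitution for the implication, and a polar-coordinates reduction plus the tail bound for $E(m,A,B,\eta)$. Two minor remarks are worth flagging. First, in the last part the Stirling estimate on $\Gamma(d/2)$ is unnecessary: after switching to polar coordinates and substituting $u=r/\sqrt 2$, the angular factor is $|S^{d-1}|/\pi^{d/2}=2/\Gamma(d/2)\leq 4$ uniformly in $d$, and the entire burden falls on $e^{-\eta^2}$ killing $(\sqrt 2\eta)^{d+m-2}$; the paper accomplishes this by bounding $(d+2)\log(2\eta)\leq 2\eta^2$ via monotonicity of $\eta^2/\log\eta$ together with $\eta^2\geq 200d\log(2d)$, which is cleaner than invoking Stirling. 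Second, in the implication step, after expanding $(\sqrt d D+\eta\sigma_{T-t})^2$, the term $100H(\bm\beta_1+\bm\beta_2)\eta^2\sigma_{T-t}^2$ collapses with the chosen $\eta$ to $10^{-2}(\bm\beta_1+\bm\beta_2)^{-1}\sigma_{T-t}^2$ — a term carrying no power of $H$ — and the mechanism you suggest (that \eqref{e: H in cut-off scheme 2} forces $\bm\al/\bm\beta$ small and hence $\bm\beta$ not small) is not actually what makes it harmless; rather, the $\bm\beta_i$ arising from \Cref{lemma: Growth of interpolation maps} and \Cref{lemma: Constant Convention} are always $\geq 1$, which is the implicit fact both your argument and the paper's terse statement rely on.
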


In Lemma~\ref{lemma: score error cut-off general scheme}, we particularly want to let $\psi_2(x)$ be some $\phi_h ^{-1}(x)$'s, for $0<h < H $, because $\del_h \phi_h(y)=V_{t+h}(\phi_h(y))$ with $\phi_{0}(y) = y$ for $y \in \mathbb{R}^d$, and $\varrho_{t+h} (x)= \varrho_{t} (\phi_h ^{-1}(x)) \cdot  \det[ \nabla (\phi_h ^{-1}(x))] $. So, we need to collect some growth estimates for $\phi_h (x)$ and $\phi_h ^{-1}(x)$ like \Cref{lemma: Growth of interpolation maps}.

\begin{lemma}\label{lemma: Growth of the original phi_h map}
   Adopt the assumptions and notations in \Cref{p:ABCD}. Take $H$ small enough such that $H  \cdot (\overline W ^{(1)} + \overline L ^{(1)} + d \overline K ^{(1)}) < 1/100$. Then, for $h < H$,
        \begin{align*}
            \begin{split}
                &\|x - \phi_h(x)\|_2 \leq 4 h {( \overline W ^{(0)} +  \overline W ^{(1)} \|x\|_2)}, \quad \|x - \phi_h ^{-1} (x)\|_2 \leq 8 h {( \overline W ^{(0)} +  \overline W ^{(1)} \|x\|_2)},
                \\  & \| \nabla \phi_{h} (x) -\mathbb{I}_d \|_{\op} \leq 4 \overline L ^{(1)} h, \quad \|(\nabla \phi_{h} ^{-1})(x) -\mathbb{I}_d \|_{\op} \leq 8 \overline L ^{(1)} h,
                \\  & \|\nabla \phi_{h}(x) -\mathbb{I}_d \|_{\infty} \leq 2 \overline K ^{(1)}h, \quad \|(\nabla \phi_{h} ^{-1})(x) -\mathbb{I}_d \|_{\infty} \leq 4 \overline K ^{(1)}h,
                \\  & 0<e^{-8  \overline K ^{(1)} hd} \leq  \det [\nabla \phi_{h}  (x)]  \leq e^{4  \overline K ^{(1)} hd} ,\quad 0< e^{-4  \overline K ^{(1)} hd} \leq  \det [\nabla \phi_{h} ^{-1} (x)] \leq e^{8  \overline K^{(1)} hd},
            \end{split}
        \end{align*}
    for any $x \in \mathbb{R}^d$.
\end{lemma}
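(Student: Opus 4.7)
The plan is to treat each bound as a straightforward ODE estimate on the continuous flow $\phi_h$ defined by $\partial_h \phi_h(x) = V_{t+h}(\phi_h(x))$, $\phi_0(x) = x$, and then to pass to the inverse flow and the determinant by standard linear-algebraic tools already cited in the paper (in particular \cref{lemma: Growth of inverse matrix}). Throughout, the proof will exploit the $L^2$, operator-norm, and $\infty$-norm bounds on $V_{t+h}$ and $\nabla V_{t+h}$ supplied by \cref{p:ABCD}\eqref{i:Vbound}, together with the smallness condition $H(\overline W^{(1)} + \overline L^{(1)} + d\overline K^{(1)}) < 1/100$ to keep all error factors bounded by small universal constants like $2, 4, 8$.

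First I would prove the $L^2$ bound on $\phi_h(x) - x$. Writing $\phi_h(x) - x = \int_0^h V_{t+s}(\phi_s(x))\,\rd s$ and using $\|V_{t+s}(y)\|_2 \leq \overline W^{(0)} + \overline W^{(1)}\|y\|_2$, Gronwall's inequality gives $\|\phi_h(x)\|_2 \leq e^{\overline W^{(1)} h}(\|x\|_2 + \overline W^{(0)} h)$; since $\overline W^{(1)} H < 1/100$ this implies $\|\phi_h(x) - x\|_2 \leq 4h(\overline W^{(0)} + \overline W^{(1)}\|x\|_2)$. For the inverse flow, solving $\phi_h^{-1}(x)$ can be viewed as running a reverse ODE on $[0,h]$ with the same Lipschitz bounds, or alternatively by substituting $y = \phi_h^{-1}(x)$ and using $\|x - \phi_h^{-1}(x)\|_2 = \|\phi_h(y) - y\|_2 \leq 4h(\overline W^{(0)} + \overline W^{(1)}\|y\|_2)$ then solving for $\|y\|_2$ in terms of $\|x\|_2$, producing the factor $8$.

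Next I would analyse $\nabla\phi_h(x)$ via the variational equation $\partial_h(\nabla \phi_h(x)) = (\nabla V_{t+h})(\phi_h(x))\cdot \nabla\phi_h(x)$ with $\nabla\phi_0(x) = \mathbb{I}_d$. Writing $\nabla\phi_h(x) - \mathbb{I}_d = \int_0^h (\nabla V_{t+s})(\phi_s(x))\cdot \nabla\phi_s(x)\,\rd s$, the operator-norm bound $\|\nabla V_{t+s}\|_{\op} \leq \overline L^{(1)}$ together with Gronwall yields $\|\nabla\phi_h(x)\|_{\op} \leq e^{\overline L^{(1)} h} \leq 2$, whence $\|\nabla\phi_h(x) - \mathbb{I}_d\|_{\op} \leq 4\overline L^{(1)} h$. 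The same integral argument with the $\infty$-norm bound $\|\nabla V_{t+s}\|_\infty \leq \overline K^{(1)}$ produces $\|\nabla\phi_h(x) - \mathbb{I}_d\|_\infty \leq 2\overline K^{(1)} h$, where care must be taken because $\|\cdot\|_\infty$ is not sub-multiplicative: one controls the matrix product entrywise by bounding $\|\nabla\phi_s(x)\|_{\op}$ first and noting that $\|AB\|_\infty \leq \|A\|_\infty \|B\|_{\op}\sqrt{d}$ would give an unwanted $\sqrt d$, so instead I would bootstrap directly by differentiating entrywise and using the smallness of $\overline K^{(1)} H d$. With these in hand, the inverse-matrix estimates on $\nabla\phi_h^{-1}(x) = (\nabla\phi_h(\phi_h^{-1}(x)))^{-1}$ follow immediately from \cref{lemma: Growth of inverse matrix} applied with $Q = \nabla\phi_h(\phi_h^{-1}(x))$, $\gamma = 4\overline L^{(1)}$, $\xi = 2\overline K^{(1)}$, doubling each constant.

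Finally, the determinant bounds come from Jacobi's formula: $\tfrac{\rd}{\rd h}\log\det[\nabla\phi_h(x)] = \operatorname{tr}\bigl((\nabla V_{t+h})(\phi_h(x))\bigr)$, which is bounded in absolute value by $d\,\overline K^{(1)}$ since every diagonal entry is at most $\overline K^{(1)}$ in absolute value. Integrating over $[0,h]$ gives $e^{-d\overline K^{(1)} h} \leq \det[\nabla\phi_h(x)] \leq e^{d\overline K^{(1)} h}$; the slightly looser constants $4$ and $8$ stated in the lemma then follow, and the inverse determinant bound is obtained by the identity $\det[\nabla\phi_h^{-1}(x)] = 1/\det[\nabla\phi_h(\phi_h^{-1}(x))]$. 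The only mildly delicate step is the $\infty$-norm estimate on $\nabla\phi_h$ noted above: a naive application of sub-multiplicativity fails because $\|\cdot\|_\infty$ on matrices (viewed as the maximum entry) is not sub-multiplicative, so I would instead derive it by differentiating each entry of $\nabla\phi_h$ and iterating the smallness condition $\overline K^{(1)} H d < 1/100$ to absorb cross-terms — this is the only place where the hypothesis $d \overline K^{(1)} H \ll 1$ is genuinely used, and I expect it to be the main technical obstacle.
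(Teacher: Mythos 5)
Your proposal is correct and for the flow, variational-equation, and $\infty$-norm bounds it follows essentially the same Gronwall-type differentiation used in the paper; in particular, you correctly identify that the $\infty$-norm estimate must be done entrywise, decomposing $\partial_i\phi_h^{(k)}=\delta_{ik}+(\partial_i\phi_h^{(k)}-\delta_{ik})$ so that the differential inequality has the form $\partial_h\|\nabla\phi_h-\mathbb{I}_d\|_\infty\le \overline K^{(1)}(1+d\|\nabla\phi_h-\mathbb{I}_d\|_\infty)$ with the dimension factor $d$ absorbed via $d\overline K^{(1)}H<1/100$, which is exactly the paper's mechanism. The one place you genuinely diverge is the determinant bound: you invoke Jacobi's formula $\tfrac{\rd}{\rd h}\log\det[\nabla\phi_h(x)]=\operatorname{tr}\bigl((\nabla V_{t+h})(\phi_h(x))\bigr)$ together with $|\operatorname{tr}(\nabla V_{t+h})|\le d\overline K^{(1)}$, which is cleaner and gives the tighter symmetric bound $e^{-d\overline K^{(1)}h}\le\det\nabla\phi_h\le e^{d\overline K^{(1)}h}$ directly from the flow structure; the paper instead deduces the (asymmetric) constants $4,8$ by applying the general linear-algebra estimate of \cref{lemma: Growth of inverse matrix} to the matrix $Q=\nabla\phi_h(x)$ using the already-derived $\infty$-norm bound, a route that also works for the discrete-time maps $\dphi_h,\baphi_h$ elsewhere in the paper where no ODE is available. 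Both give what the lemma states; your Jacobi argument is more elementary here but is specific to the continuous flow.
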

\begin{proof}
    By definition, $\del_h \phi_h(y)=V_{t+h}(\phi_h(y))$ with $\phi_{0}(y) = y$ for $y \in \mathbb{R}^d$. For $h < H$,
    \begin{align*}
            \bigg | \frac{d}{dh} \|x - \phi_h(x)\|_2 ^2 \bigg | = \bigg | 2 V_{t+h}(\phi_h(x)) \cdot (\phi_h(x) - x) \bigg | \leq 2 ( \overline W ^{(0)} + \overline W ^{(1)} \|\phi_h(x)\|_2) \|x - \phi_h(x)\|_2.
    \end{align*}
    So, 
    \begin{align*}
            \bigg | \frac{d}{dh} \|x - \phi_h(x)\|_2 \bigg | \leq 2 (\overline W ^{(0)} +  \overline W ^{(1)} \|\phi_h(x)\|_2) \leq 2 \overline W ^{(1)} \|\phi_h(x)- x\|_2 + 2 ( \overline W ^{(0)} +  \overline W ^{(1)} \|x\|_2).
    \end{align*}
    Hence, for $h<H$,
        \begin{align*}
            \|x - \phi_h(x)\|_2 \leq \frac{e^{(2  \overline W ^{(1)} )h} - 1}{  \overline W ^{(1)} } {( \overline W ^{(0)} +  \overline W ^{(1)} \|x\|_2)} \leq e^{(2 \overline W^{(1)} )h} (2h  ){( \overline W ^{(0)} +  \overline W ^{(1)} \|x\|_2)}.
        \end{align*}
    Because $H  \cdot  \overline W ^{(1)} < 1/100$, we can obtain that $\|x - \phi_h(x)\|_2 \leq 4h {( \overline W ^{(0)} + \overline W ^{(1)} \|x\|_2)}$ for any $x \in \mathbb{R}^d$. Hence,  $\|x - \phi_h ^{-1}(x)\|_2 \leq 4 h {( \overline W ^{(0)} + \overline W ^{(1)} \|\phi_h ^{-1}(x)\|_2)} \leq 4 h {( \overline W ^{(0)} + \overline W ^{(1)}\|x\|_2 +  \overline W ^{(1)} \|\phi_h ^{-1}(x)-x\|_2)}$.  
    Because $H  \cdot \overline W ^{(1)} < 1/100$, we can obtain that $\|x - \phi_h ^{-1} (x)\|_2 \leq 8 h {( \overline W ^{(0)} + \overline W ^{(1)} \|x\|_2)}$ for any $x \in \mathbb{R}^d$.

    Next, we need to find a suitable bound for $\|\nabla \phi_h(x) -\mathbb{I}_d \|_{\op}$. Again, by definition, $\del_h \phi_h(y)=V_{t+h}(\phi_h(y))$ with $\phi_{0}(y) = y$ for $y \in \mathbb{R}^d$. For any fixed vector $v \in \mathbb{R}^d$ with $\|v\|_2 =1$, we see that
    \begin{align*}
        \begin{split}
            &\partial_h (\nabla \phi_h(y) \cdot v - v) = (\nabla V_{t+h})(\phi_h(y)) \cdot \nabla \phi_h(y) \cdot v \\ &= (\nabla V_{t+h})(\phi_h(y)) \cdot (\nabla \phi_h(y) \cdot v - v) + (\nabla V_{t+h})(\phi_h(y)) \cdot v.
        \end{split}
    \end{align*}
By considering the derivative of $\|\nabla \phi_h(y) \cdot v - v\|_2 ^2$, and the fact that $\|\nabla V_{t+h}(x) \|_{op} \leq \overline L ^{(1)} $ for any $x \in \mathbb{R}^d$, one can obtain that
    \begin{align*}
        \del_h \|\nabla \phi_h(y) \cdot v - v\|_2 \leq 2\overline L ^{(1)}  (\|\nabla \phi_h(y) \cdot v - v\|_2 + 1).
    \end{align*}
Hence, since $H \cdot \overline L ^{(1)} < 1/100$, we obtain that for any $h<H$,
    \begin{align*}
        \|\nabla \phi_h(y) \cdot v - v\|_2 \leq 4 \overline L ^{(1)}  \cdot h.
    \end{align*}
Hence, $\|\nabla \phi_{h}(y) -\mathbb{I}_d \|_{op} \leq 4 \overline L ^{(1)} h$ for any $y \in \mathbb{R}^d$. By Lemma~\ref{lemma: Growth of inverse matrix}, because $H  \cdot \overline L ^{(1)} < 1/100$, we see that $\|\nabla (\phi_{h} ^{-1}(y)) -\mathbb{I}_d \|_{op} = \|{(\nabla \phi_{h} (\phi_{h} ^{-1}(y)))}^{-1} -\mathbb{I}_d \|_{op} \leq 8 \overline L ^{(1)} h$ for any $y \in \mathbb{R}^d$.

For $\|\nabla \phi_{h}(x) -\mathbb{I}_d \|_{\infty}$, we consider the evolution of $\partial_i \phi_h ^{(j)} (x) - \delta_{ij}$ for arbitrary $i,j \in \llbracket 1 , d \rrbracket$. We see that
    \begin{align*}
        \begin{split}
            &\del_h |\partial_i \phi_h ^{(j)}(x) - \delta_{ij}| \leq |\del_h (\partial_i \phi_h ^{(j)}(x) - \delta_{ij})| = \bigg | \sum_{k=1} ^d \partial_i \phi_h ^{(k)} (x) \partial_k V_{t+h} ^{(j)}(\phi_h(x)) \bigg| 
            \\  &= \bigg| \partial_i V_{t+h} ^{(j)}(\phi_h(x))+ \sum_{k=1} ^d (\partial_i \phi_h ^{(k)} (x) - \delta_{ik}) \partial_k V_{t+h} ^{(j)}(\phi_h(x)) \bigg|
            \\  &\leq \overline K ^{(1)} \bigg(1+ \sum_{k=1} ^d |\partial_i \phi_h ^{(k)} (x) - \delta_{ik}|\bigg)\leq \overline K ^{(1)} (1+ d\|\nabla \phi_{h}(x) -\mathbb{I}_d \|_{\infty}).
        \end{split}
    \end{align*}
Hence,
    \begin{align*}
        \del_h  \|\nabla \phi_{h}(x) -\mathbb{I}_d \|_{\infty} \leq \overline K ^{(1)} d\|\nabla \phi_{h}(x) -\mathbb{I}_d \|_{\infty} + \overline K ^{(1)}.
    \end{align*}
Hence, for $h <H$, we have that 
    \begin{align*}
        \|\nabla \phi_{h}(x) -\mathbb{I}_d \|_{\infty} \leq \frac{e^{(\overline K ^{(1)} d)h} - 1}{  d}   \leq e^{(\overline K ^{(1)} d)h} (\overline K ^{(1)}h  ) \leq 2 \overline K ^{(1)}h ,
    \end{align*}
where the last inequality is because $H  \cdot d \overline K ^{(1)} < 1/100$.
\end{proof}

Now, we are able to estimate $J_1$ from \eqref{e:defJ1J2}. 

\begin{lemma}\label{lemma: interpolation error term 1}
Adopt the assumptions and notations in \Cref{lemma: Growth of interpolation maps}. If $H >0$ is small such that 
\begin{align*}
            400 H \big[\bm\al (\sqrt{d}D + \eta \sigma_{T-t})  + \bm\beta (\sqrt{d}D + \eta \sigma_{T-t})^2 + \bm\xi d\big]   \leq \sigma_{T-t} ^2,
    \end{align*}
then,
        \begin{align*}
            \begin{split}
            &J_1=\int_0 ^1 \int_{\mathbb{R}^d} \bigg| (\nabla \varrho_t) (\varphi_s(x)) \cdot (\varphi_1(x) - \varphi_0(x)) \det[ \nabla (\varphi_s(x))] \bigg| \de x \de s
        \\          
                & \leq\frac{10^3 }{({\sigma_{T-t})}^2} \int_{\mathbb{R}^d} \varrho_t (x) \cdot \big(   \sqrt{d} D+  \|x\|_2 \big) \cdot \| \baphi_H(x) - \dphi_H(x) \|_2   \de x \\   & \quad +\frac{10^4 }{({\sigma_{T-t})}^2} E(1,H\sqrt{d} D(\widetilde A + \overline A),H(\widetilde A + \overline A) + H\sqrt{d} D (\widetilde B + \overline B),\eta)
                \\  & \quad +\frac{10^5 }{({\sigma_{T-t})}^2} E(2,0, H(\widetilde B + \overline B), \eta) .
            \end{split}
        \end{align*}   
\end{lemma}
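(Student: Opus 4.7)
The plan is to bound the three factors in the integrand of $J_1$ separately—the gradient $\nabla \varrho_t$, the difference $\varphi_1-\varphi_0$, and the Jacobian determinant $\det\nabla\varphi_s$—then to reduce the resulting expression to an integral against the known density $\varrho_t$ via \Cref{lemma: score error cut-off general scheme}. Throughout, the assumption on $H$ in the hypothesis will be used to pick up factors of at most $2$ from each Jacobian/inverse-map estimate.

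\medskip
\noindent\textbf{Step 1 (pointwise factor bounds).} Writing $\nabla\varrho_t=\varrho_t\nabla\log\varrho_t$ and using the sharp forward-score bound $\|\nabla\log\varrho_t(y)\|_2\leq C\sigma_{T-t}^{-2}(\sqrt d D+\|y\|_2)$—which follows from the Tweedie-type identity $\nabla\log q_s(y)=\sigma_s^{-2}(\lambda_s\,\mathbb E[X_0\mid X_s=y]-y)$ together with $\|\mathbb E[X_0\mid X_s=y]\|_2\leq\sqrt dD$ coming from \Cref{assumption:secon-moment}—I will obtain
\[
|(\nabla\varrho_t)(\varphi_s(x))|\;\leq\; C\sigma_{T-t}^{-2}\,\varrho_t(\varphi_s(x))\,(\sqrt dD+\|\varphi_s(x)\|_2).
\]
For the Jacobian, \Cref{lemma: Growth of interpolation maps} gives $\|\nabla\varphi_s(x)-\mathbb I_d\|_\infty\leq\bm\xi H$, so \Cref{lemma: Growth of inverse matrix} yields $\det\nabla\varphi_s(x)\leq e^{2\bm\xi Hd}\leq 2$ under the smallness of $H$. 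The bound $\|\varphi_s(x)-x\|_2\leq H(\bm\al+\bm\beta\|x\|_2)$ from the same lemma then absorbs $\varphi_s(x)$ into $x$ in the linear factor: $\sqrt dD+\|\varphi_s(x)\|_2\leq 2(\sqrt dD+\|x\|_2)$.

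\medskip
\noindent\textbf{Step 2 (collapsing $\varphi_1-\varphi_0$).} Setting $y=\baphi_H^{-1}(x)$ and using $\varphi_1-\varphi_0=\dphi_H^{-1}-\baphi_H^{-1}$, the identity
\[
\dphi_H^{-1}(x)-\baphi_H^{-1}(x)\;=\;\dphi_H^{-1}(\baphi_H(y))-\dphi_H^{-1}(\dphi_H(y))
\]
combined with $\|\nabla\dphi_H^{-1}\|_{op}\leq 2$ (from \Cref{p:ABCD} and \Cref{lemma: Growth of inverse matrix}) gives
\[
\|\varphi_1(x)-\varphi_0(x)\|_2\;\leq\; 2\,\|(\dphi_H-\baphi_H)(\baphi_H^{-1}(x))\|_2.
\]

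\medskip
\noindent\textbf{Step 3 (change of variables and density swap).} Changing variables $y=\baphi_H^{-1}(x)$ with $|\det\nabla\baphi_H(y)|\leq 2$, and using $\|\baphi_H(y)\|_2\leq\|y\|_2+H(\overline A+\overline B\|y\|_2)$ to compare the linear factor at $\baphi_H(y)$ with the one at $y$, the integral collapses to
\[
J_1\;\leq\;\frac{C}{\sigma_{T-t}^2}\int_0^1\!\!\int_{\mathbb R^d}\varrho_t(\psi_s(y))\,(\sqrt dD+\|y\|_2)\,\|(\dphi_H-\baphi_H)(y)\|_2\,dy\,ds,
\]
where $\psi_s(y)\coloneq\varphi_s(\baphi_H(y))$ satisfies $\|\psi_s(y)-y\|_2\leq H(\bm\al_1+\bm\beta_1\|y\|_2)$ and $\|\nabla\psi_s(y)-\mathbb I_d\|_\infty\leq\bm\xi_1 H$ for constants $\bm\al_1,\bm\beta_1,\bm\xi_1$ comparable to $\bm\al,\bm\beta,\bm\xi$ (computed by chain rule). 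Now apply \Cref{lemma: score error cut-off general scheme} with $\psi_1=\psi_s$, $\psi_2=\mathrm{id}$, and $Z(y)=(\sqrt dD+\|y\|_2)\,\|(\dphi_H-\baphi_H)(y)\|_2$. Since $\|(\dphi_H-\baphi_H)(y)\|_2=H\|\dM_t(y)-\baM_t(y)\|_2\leq H\bigl((\widetilde A+\overline A)+(\widetilde B+\overline B)\|y\|_2\bigr)$, the weight splits as
\[
\|Z(y)\|_2 \;\leq\; H\sqrt dD(\widetilde A+\overline A)+\bigl[H(\widetilde A+\overline A)+H\sqrt dD(\widetilde B+\overline B)\bigr]\|y\|_2+H(\widetilde B+\overline B)\|y\|_2^2,
\]
so the $m=1$ and $m=2$ pieces produce exactly the $E(1,H\sqrt dD(\widetilde A+\overline A),H(\widetilde A+\overline A)+H\sqrt dD(\widetilde B+\overline B),\eta)$ and $E(2,0,H(\widetilde B+\overline B),\eta)$ terms claimed in the statement, while the main term becomes the desired integral $\int\varrho_t(y)(\sqrt dD+\|y\|_2)\|\baphi_H(y)-\dphi_H(y)\|_2\,dy$ with prefactor of order $\sigma_{T-t}^{-2}$.

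\medskip
\noindent\textbf{Main obstacle.} The delicate point is verifying that the hypothesis on $H$ in the present lemma actually implies hypothesis \eqref{e: H in cut-off scheme} of \Cref{lemma: score error cut-off general scheme} applied to the composed map $\psi_s=\varphi_s\circ\baphi_H$: this requires explicitly computing $\bm\al_1,\bm\beta_1,\bm\xi_1$ in terms of $\bm\al,\bm\beta,\bm\xi,\widetilde A,\overline A,\widetilde B,\overline B,\widetilde D,\overline D$ via the chain rule, then tracking the numerical constants so that the prefactors collapse to exactly $10^3,10^4,10^5$—accounting for the factor $3$ from the density-swap lemma, several factors of $2$ from \Cref{lemma: Growth of inverse matrix}, and the factor $8^m$ in front of $E(m,\cdot,\cdot,\eta)$.
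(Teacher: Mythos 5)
Your proposal is correct and follows essentially the same route as the paper's proof: pointwise bound on $\nabla\varrho_t$ via the score bound, a Lipschitz bound on one of the inverse maps to collapse $\varphi_1-\varphi_0$ into $\dphi_H-\baphi_H$, change of variables, then the density-swap \Cref{lemma: score error cut-off general scheme} against the identity map, followed by the $m=1,2$ split of the polynomial bound on $\|Z\|_2$ to produce the two $E$-terms. The only cosmetic difference is that the paper changes variables through $\dphi_H$ (so the swap is applied to $\psi_1=\varphi_s\circ\dphi_H$) while you push through $\baphi_H$ (so $\psi_1=\varphi_s\circ\baphi_H$); by the symmetry of the two maps and the fact that $\dphi_H$ and $\baphi_H$ satisfy the same growth bounds from \Cref{p:ABCD}, this is equivalent. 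The constant-tracking concern you flag at the end is exactly the accounting the paper carries out (Jacobian $\leq 5$, Lipschitz factors of $2$, the factor $3$ and $8^m$ from the swap lemma, and absorbing the $H$-small corrections into the linear factor), and your stated $H$-smallness hypothesis is what makes it work.
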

\begin{proof}
    According to Lemma~\ref{Lemma: Hessian estimates}, we see that $\|\nabla \log \varrho_t (y)\|_2 \leq \frac{2}{{\sigma_{T-t}}^2}(\|y\|_2 + \sqrt{d}D)$ for any $y \in \mathbb{R}^d$. $\big\| {(\dphi_H ^{-1})}(x) - {(\baphi_H ^{-1})}(x)\big\|_{2} \leq 2 \big\| {(\baphi_H \circ \dphi_H ^{-1})}(x) - x\big\|_{2}$ because $\| \nabla \baphi_H ^{-1} (y)\|_{op} = \| {(\nabla \baphi_H(\baphi_H ^{-1}(y)))}^{-1} \|_{op}  \leq 2$ for any $y \in \mathbb{R}^d$. Hence, for any $s \in [0,1]$,
        \begin{align}\label{e: interpolation error step 1}
            \begin{split}
                &\int_{\mathbb{R}^d} \bigg| (\nabla \varrho_t) (\varphi_s(x)) \cdot (\varphi_1(x) - \varphi_0(x)) \det[ \nabla (\varphi_s(x))] \bigg| \de x
                \\  &\leq \frac{4}{{(\sigma_{T-t})}^2} \int_{\mathbb{R}^d} \varrho_t (\varphi_s(x)) \cdot \big( \|\varphi_s(x)\|_2 + \sqrt{d} D\big) \cdot \big\| {(\baphi_H \circ \dphi_H ^{-1})}(x) - x\big\|_{2}  \big| \det[ \nabla (\varphi_s(x))] \big| \de x 
                \\  &\leq \frac{4}{{(\sigma_{T-t})}^2} \int_{\mathbb{R}^d} \varrho_t (\varphi_s(\dphi_H(x))) \cdot \big( \|\varphi_s(\dphi_H(x))\|_2 + \sqrt{d} D\big) \cdot \| \baphi_H(x) - \dphi_H(x) \|_2  \big| \det[ \nabla (\varphi_s(\dphi_H(x)))] \big| \de x .
            \end{split}
        \end{align}
    Notice that, by Lemma~\ref{lemma: Growth of interpolation maps}, $\|\varphi_s(\dphi_H(x)) - x\|_2 \leq \|\varphi_s(\dphi_H(x)) - \dphi_H(x)\|_2 + \|\dphi_H(x) - x\|_2 \leq  H(\bm\alpha+ \bm\beta\|\dphi_H(x)\|_2) + \|\dphi_H(x) - x\|_2 \leq  H(\bm\alpha + \bm\beta\|x\|_2) + (1+ \bm\beta H)\|\dphi_H(x) - x\|_2 \leq  H(\bm\alpha +\bm\beta \|x\|_2) + H(1+ \bm\beta H)(\widetilde A + \widetilde B \|x\|_2) = H[\bm\alpha + (1+ \bm\beta H)\widetilde A + (\bm\beta + (1+ \bm\beta H)  \widetilde B) \|x\|_2] \leq 4 H[\bm\alpha +   \bm\beta\|x\|_2]$, where we used the choice of $H$ in Lemma~\ref{lemma: Growth of interpolation maps} to get $(1+ \bm\beta H) \leq 2$. By Lemma~\ref{lemma: Growth of interpolation maps} again, $\|\nabla \varphi_s(\dphi_H(x)) \cdot \nabla \dphi_H(x) -\mathbb{I}_d \|_{\infty} \leq \bm\xi H \|\nabla \dphi_H(x)\|_{\infty} + \|\nabla \dphi_H(x) - \mathbb{I}_d \|_{\infty} \leq \bm\xi H + (1+ \bm\xi H) \|\nabla \dphi_H(x) - \mathbb{I}_d \|_{\infty} \leq (\bm\xi+ (1+ \bm\xi H)  \widetilde D)H \leq 3 \bm\xi H $,  where by the assumption of $H$, we have that $(1+ \bm\xi H) \leq 2$. According to the proof of Lemma~\ref{lemma: Growth of inverse matrix}, we see that $\big| \det[ \nabla (\varphi_s(\dphi_H(x)))] \big| \leq e ^ {6 \bm\xi Hd} \leq 5$ by the assumption on $H$.

    So, \eqref{e: interpolation error step 1} can be bounded by
        \begin{align*}
            \frac{80}{({\sigma_{T-t})}^2} \int_{\mathbb{R}^d} \varrho_t (\varphi_s(\dphi_H(x))) \cdot \big( (H\bm\alpha  + \sqrt{d} D)+ (1+\bm\beta H) \|x\|_2 \big) \cdot \| \baphi_H(x) - \dphi_H(x) \|_2   \de x .
        \end{align*}
        If we choose $H$ such that
            \begin{align*}
            200 H \big[\bm\alpha  + \bm\beta(\sqrt{d}D + \eta \sigma_{T-t}) \big] \leq \sigma_{T-t},
        \end{align*}
        where $\eta >1$ is a constant, then the above equation can be further bounded by
            \begin{align*}
            \frac{200}{({\sigma_{T-t})}^2} \int_{\mathbb{R}^d} \varrho_t (\varphi_s(\dphi_H(x))) \cdot \big(   \sqrt{d} D+  \|x\|_2 \big) \cdot \| \baphi_H(x) - \dphi_H(x) \|_2   \de x .
        \end{align*}
    We also see that $\|\baphi_H(x) - \dphi_H(x)\|_2 \leq \|\baphi_H(x) - x\|_2+ \|\dphi_H(x) - x\|_2 \leq H((\widetilde A + \overline A) + (\widetilde B + \overline B) \|x\|_2 )$. Hence, by Lemma~\ref{lemma: score error cut-off general scheme} and the growth of $\varphi_s(\dphi_H(x))$, i.e., $\|\varphi_s(\dphi_H(x)) - x\|_2 \leq 4 H[\bm\alpha +   \bm\beta\|x\|_2]$ we obtained above, the above equation can be further bounded by
        \begin{align*}
            \begin{split}
                &\frac{10^3 }{({\sigma_{T-t})}^2} \int_{\mathbb{R}^d} \varrho_t (x) \cdot \big(   \sqrt{d} D+  \|x\|_2 \big) \cdot \| \baphi_H(x) - \dphi_H(x) \|_2   \de x \\   &+\frac{10^4 }{({\sigma_{T-t})}^2} E(1,H\sqrt{d} D(\widetilde A + \overline A),H(\widetilde A + \overline A) + H\sqrt{d} D (\widetilde B + \overline B),\eta)
                \\  &+\frac{10^5 }{({\sigma_{T-t})}^2} E(2,0, H(\widetilde B + \overline B), \eta) .
            \end{split}
        \end{align*}
\end{proof}

\subsection{Estimates of $J_2$}
For the other term $J_2$ in \eqref{e:defJ1J2}, we have the following lemmas.
\begin{lemma}\label{lemma: trace term in interpolation maps}
Adopt the assumptions and notations in \Cref{lemma: Growth of interpolation maps}.
    For any $x\in \mathbb{R}^d$ and any $s \in [0,1]$,
    \begin{align*}
        \begin{split}
            &\bigg| \trace \big[ \big( \nabla [\varphi_1(x) - \varphi_0(x)] \big) \cdot {(\nabla \varphi_s(x) )}^{-1} \big] \bigg|
        \\  & \leq 20d^{\frac{5}{2}}\|\nabla^2 \baphi_H \|_{\infty}\big\| {(\baphi_H \circ \dphi_H ^{-1})}(x) - x\big\|_{2} + 20\sum_{i=1} ^d \sum_{j=1} ^d \big| \partial_i \dphi_H ^{(j)} (\dphi_H ^{-1}(x)) - \partial_i \baphi_H ^{(j)} (\dphi_H ^{-1}(x)) \big|. 
        \end{split}
    \end{align*}
\end{lemma}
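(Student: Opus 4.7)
\textbf{Proof plan for Lemma~\ref{lemma: trace term in interpolation maps}.} The plan is to linearize the gradient of the difference $\varphi_1-\varphi_0=\dphi_H^{-1}-\baphi_H^{-1}$ through the inverse-function formula and then control the trace entry-by-entry. Writing $P := (\nabla\dphi_H)(\dphi_H^{-1}(x))$ and $Q := (\nabla\baphi_H)(\baphi_H^{-1}(x))$, we have $\nabla\varphi_1(x)=P^{-1}$ and $\nabla\varphi_0(x)=Q^{-1}$, so by the matrix identity $P^{-1}-Q^{-1}=P^{-1}(Q-P)Q^{-1}$,
\begin{equation*}
\nabla[\varphi_1-\varphi_0](x)\cdot(\nabla\varphi_s(x))^{-1}=P^{-1}(Q-P)Q^{-1}(\nabla\varphi_s(x))^{-1}.
\end{equation*}
The crucial decomposition is $Q-P = R_1+R_2$, where
\begin{equation*}
R_1=(\nabla\baphi_H)(\baphi_H^{-1}(x))-(\nabla\baphi_H)(\dphi_H^{-1}(x)),\qquad R_2=(\nabla\baphi_H)(\dphi_H^{-1}(x))-(\nabla\dphi_H)(\dphi_H^{-1}(x)).
\end{equation*}
The intuition is that $R_1$ is an evaluation-point mismatch that should be controlled by second derivatives of $\baphi_H$, while $R_2$ directly produces the entrywise difference appearing in the conclusion.

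First I would collect operator-norm bounds, exactly as in the preamble to this section: under the smallness of $H$ from Lemma~\ref{lemma: Growth of interpolation maps}, we have $\|P^{-1}\|_{\op},\|Q^{-1}\|_{\op},\|(\nabla\varphi_s(x))^{-1}\|_{\op}\le 2$. Setting $A:=Q^{-1}(\nabla\varphi_s(x))^{-1}P^{-1}$, cyclicity of the trace yields
\begin{equation*}
\trace\bigl[\nabla(\varphi_1-\varphi_0)(x)\cdot(\nabla\varphi_s(x))^{-1}\bigr]=\trace[(R_1+R_2)A],
\end{equation*}
and $|\trace(RA)|=\bigl|\sum_{i,j} R_{ij}A_{ji}\bigr|\le \max_{i,j}|A_{ij}|\cdot\sum_{i,j}|R_{ij}|\le \|A\|_{\op}\cdot\sum_{i,j}|R_{ij}|\le 8\sum_{i,j}|R_{ij}|$. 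This is the key reduction: the trace is dominated by the sum of absolute values of the entries of $R_1+R_2$, with the other three factors contributing only the harmless constant $8$.

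For $R_2$, its $(j,i)$-entry is exactly $\partial_i\baphi_H^{(j)}(\dphi_H^{-1}(x))-\partial_i\dphi_H^{(j)}(\dphi_H^{-1}(x))$, so $8\sum_{i,j}|(R_2)_{ji}|$ is bounded by the second term in the desired estimate (with room $8\le 20$). For $R_1$, the mean value theorem applied coordinate-wise gives
\begin{equation*}
|(R_1)_{ji}|=|\partial_i\baphi_H^{(j)}(\baphi_H^{-1}(x))-\partial_i\baphi_H^{(j)}(\dphi_H^{-1}(x))|\le \|\nabla^2\baphi_H\|_\infty\cdot\|\baphi_H^{-1}(x)-\dphi_H^{-1}(x)\|_1,
\end{equation*}
and $\|\cdot\|_1\le\sqrt d\,\|\cdot\|_2$ upgrades this to $\sqrt d\,\|\nabla^2\baphi_H\|_\infty\|\baphi_H^{-1}(x)-\dphi_H^{-1}(x)\|_2$. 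Summing over the $d^2$ pairs $(i,j)$ produces a factor $d^{5/2}$. Finally, since $\baphi_H(\baphi_H^{-1}(x))=x$ and $\|\nabla\baphi_H^{-1}\|_{\op}\le 2$, the mean value theorem gives $\|\baphi_H^{-1}(x)-\dphi_H^{-1}(x)\|_2\le 2\|\baphi_H(\dphi_H^{-1}(x))-x\|_2=2\|(\baphi_H\circ\dphi_H^{-1})(x)-x\|_2$. Combining the two contributions yields the stated bound with constants $16$ and $8$, both comfortably below $20$.

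The only mildly delicate point is the entrywise/operator-norm bookkeeping: the bound $|A_{ij}|\le\|A\|_{\op}$ must be applied rather than passing through Frobenius or entrywise max norms for $A$, otherwise spurious factors of $d$ appear and the dimensional power in the first term degrades beyond $d^{5/2}$. Everything else is routine manipulation using results already established in Lemma~\ref{lemma: Growth of interpolation maps} and Lemma~\ref{lemma: Growth of inverse matrix}.
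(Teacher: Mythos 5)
Your proposal follows essentially the same route as the paper: the same matrix identity $P^{-1}-Q^{-1}=P^{-1}(Q-P)Q^{-1}$ followed by cyclicity of the trace to isolate $Q-P$ against the bounded matrix $Q^{-1}(\nabla\varphi_s)^{-1}P^{-1}$, the same entrywise bound via $|A_{ij}|\le\|A\|_{\op}\le 8$, the same two-term splitting of $Q-P$ into an evaluation-point mismatch (controlled by $\|\nabla^2\baphi_H\|_\infty$ and the $\ell^1\!\to\!\ell^2$ comparison) and a direct gradient difference, and the same final bound $\|\baphi_H^{-1}(x)-\dphi_H^{-1}(x)\|_2\le 2\|(\baphi_H\circ\dphi_H^{-1})(x)-x\|_2$ via $\|\nabla\baphi_H^{-1}\|_{\op}\le 2$. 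The constants $16$ and $8$ you obtain match the paper's intermediate bound before rounding up to $20$, so the argument is correct and equivalent.
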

\begin{proof}
    Notice that $\nabla \varphi_1(x) = \nabla \dphi_H ^{-1}(x) = {(\nabla \dphi_H(\dphi_H ^{-1}(x)))}^{-1}$, which we can denote as $Q_1 ^{-1}$. Similarly, we denote $\nabla \varphi_0(x)$ as $Q_0 ^{-1}$ and denote ${(\nabla \varphi_s(x) )}^{-1}$ as $Q_s$.
        \begin{align*}
            \trace [(Q_1 ^{-1} - Q_0 ^{-1}  ) Q_s] = \trace [Q_1 ^{-1} (Q_0  - Q_1  ) Q_0 ^{-1} Q_s] = \trace [(Q_0  - Q_1  ) Q_0 ^{-1} Q_s Q_1 ^{-1} ].
        \end{align*}
By our assumption, $\|{(\nabla \dphi_H(y))}^{-1}\|_{op} \leq 2$ and $\|{(\nabla \baphi_H(y))}^{-1}\|_{op} \leq 2$ for any $y \in \mathbb{R}^d$. By the choice of $H$ in Lemma~\ref{lemma: Growth of interpolation maps}, $\bm\gamma H < 1/2$. It follows from Lemma~\ref{lemma: Growth of inverse matrix} that $\|{(\nabla \varphi_s(x) )}^{-1}\|_{op} \leq 2$. If we denote the matrix $Q_0 ^{-1} Q_s Q_1 ^{-1}$ as $P$, we see that $\|P\|_{op} \leq 8$, and then each of its element, say $P_{ij}$, satisfies $|P_{ij}| \leq 8$. Hence, $\big|\trace [(Q_0  - Q_1  )P ] \big| \leq 8 \sum_{i=1} \sum_{j=1} |{(Q_0  - Q_1  )}_{ij}|$. We also notice that, since $\| \nabla \baphi_H ^{-1} (y)\|_{op} = \| {(\nabla \baphi_H(\baphi_H ^{-1}(y)))}^{-1} \|_{op}  \leq 2$ for any $y \in \mathbb{R}^d$, 
    \begin{align*}
        \begin{split}
            &|{(Q_0  - Q_1  )}_{ij}| \leq \big| \partial_i \dphi_H ^{(j)} (\dphi_H ^{-1}(x)) - \partial_i \baphi_H ^{(j)} (\dphi_H ^{-1}(x)) \big| + \big| \partial_i \baphi_H ^{(j)} (\dphi_H ^{-1}(x)) - \partial_i \baphi_H ^{(j)} (\baphi_H ^{-1}(x)) \big|
            \\  & \leq \big| \partial_i \dphi_H ^{(j)} (\dphi_H ^{-1}(x)) - \partial_i \baphi_H ^{(j)} (\dphi_H ^{-1}(x)) \big| + \|\nabla^2 \baphi_H \|_{\infty}\sum_{k=1} ^d \big| {(\dphi_H ^{-1})}^{(k)}(x) - {(\baphi_H ^{-1})}^{(k)}(x)\big|
            \\  & \leq \big| \partial_i \dphi_H ^{(j)} (\dphi_H ^{-1}(x)) - \partial_i \baphi_H ^{(j)} (\dphi_H ^{-1}(x)) \big| + \sqrt{d}\|\nabla^2 \baphi_H \|_{\infty}\big\| {(\dphi_H ^{-1})}(x) - {(\baphi_H ^{-1})}(x)\big\|_{2}
            \\  & \leq \big| \partial_i \dphi_H ^{(j)} (\dphi_H ^{-1}(x)) - \partial_i \baphi_H ^{(j)} (\dphi_H ^{-1}(x)) \big| + 2\sqrt{d}\|\nabla^2 \baphi_H \|_{\infty}\big\| {(\baphi_H \circ \dphi_H ^{-1})}(x) - x\big\|_{2}.
        \end{split}
    \end{align*}

\end{proof}

\begin{lemma}\label{lemma: interpolation error term 2}
Adopt the assumptions and notations in \Cref{lemma: Growth of interpolation maps}. If $H >0$ is small such that
\begin{align*}
            4000 H \big[\bm\alpha (\sqrt{d}D + \eta \sigma_{T-t}) + \bm\beta(\sqrt{d}D + \eta \sigma_{T-t})^2 + \bm\xi d \big] \leq \sigma_{T-t}^2,
    \end{align*}
then,
    \begin{align*}
        \begin{split}
            &J_2= \int_0 ^1 \int_{\mathbb{R}^d} \varrho_t(\varphi_s(x)) \bigg| \trace \bigg( \bigg( \nabla [\varphi_1(x) - \varphi_0(x)] \bigg) \cdot {(\nabla \varphi_s(x) )}^{-1} \bigg) \det[ \nabla \varphi_s(x)] \bigg| \de x \de s
            \\  &\leq 600 d^{\frac{5}{2}}\|\nabla^2 \baphi_H \|_{\infty} \int_{\mathbb{R}^d} \varrho_t(x)   \big\|  \baphi_H (x) - \dphi_H(x) \big\|_{2}  \de x 
            + 600\sum_{i=1} ^d \sum_{j=1} ^d \int_{\mathbb{R}^d} \varrho_t(x) \big| \partial_i \dphi_H ^{(j)} (x) - \partial_i \baphi_H ^{(j)} (x) \big| \de x \\    & + 1600 d^{\frac{5}{2}}\|\nabla^2 \baphi_H \|_{\infty}E(1,H (\widetilde A + \overline A),H (\widetilde B + \overline B), \eta) + 1600 d^2 E(1,H (\widetilde D + \overline D),0, \eta).
        \end{split}
    \end{align*}
\end{lemma}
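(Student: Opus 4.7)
\medskip
\noindent\textbf{Proof plan for \Cref{lemma: interpolation error term 2}.}
The plan is to first reduce the integrand pointwise via \Cref{lemma: trace term in interpolation maps}, then change variables to remove the $\dphi_H^{-1}$ compositions, and finally apply \Cref{lemma: score error cut-off general scheme} to push the weight $\varrho_t(\varphi_s(x))$ to the target weight $\varrho_t(x)$. Specifically, the pointwise bound from \Cref{lemma: trace term in interpolation maps} gives
\begin{align*}
J_2 \;\leq\; &\,20 d^{5/2}\|\nabla^2\baphi_H\|_{\infty}\!\int_0^1\!\!\int_{\R^d} \varrho_t(\varphi_s(x))\,|\det\nabla\varphi_s(x)|\,\bigl\|(\baphi_H\!\circ\dphi_H^{-1})(x)-x\bigr\|_2\,\rd x\,\rd s\\
&+20\!\int_0^1\!\!\int_{\R^d}\varrho_t(\varphi_s(x))\,|\det\nabla\varphi_s(x)|\sum_{i,j}\bigl|\partial_i\dphi_H^{(j)}(\dphi_H^{-1}(x))-\partial_i\baphi_H^{(j)}(\dphi_H^{-1}(x))\bigr|\,\rd x\,\rd s.
\end{align*}

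\medskip
Next, for each $s\in[0,1]$, make the change of variables $y=\dphi_H^{-1}(x)$, so $x=\dphi_H(y)$ and $\rd x=|\det\nabla\dphi_H(y)|\,\rd y$. Define $\psi_s(y):=\varphi_s(\dphi_H(y))$. The two Jacobians $|\det\nabla\varphi_s(\dphi_H(y))|$ and $|\det\nabla\dphi_H(y)|$ are each at most $e^{O(\bm\xi Hd)}\leq 2$ under the assumed bound on $H$, exactly as in the proof of \Cref{lemma: interpolation error term 1}. One then checks (combining \Cref{lemma: Growth of interpolation maps} with \Cref{p:ABCD}) that $\psi_s$ satisfies $\|\psi_s(y)-y\|_2\leq 4H(\bm\alpha+\bm\beta\|y\|_2)$ and $\|\nabla\psi_s(y)-\mathbb{I}_d\|_{\infty}\leq 3\bm\xi H$, uniformly in $s\in[0,1]$, so the hypotheses of \Cref{lemma: score error cut-off general scheme} apply with $\psi_1=\psi_s$, $\psi_2=\mathrm{id}$.

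\medskip
Applying \Cref{lemma: score error cut-off general scheme} with $Z(y)=\baphi_H(y)-\dphi_H(y)$, which by \Cref{p:ABCD} satisfies $\|Z(y)\|_2\leq H(\widetilde A+\overline A)+H(\widetilde B+\overline B)\|y\|_2$ (so $m=1$, $A=H(\widetilde A+\overline A)$, $B=H(\widetilde B+\overline B)$), controls the first summand by
$3\int_{\R^d}\varrho_t(y)\|\baphi_H(y)-\dphi_H(y)\|_2\,\rd y+8\,E(1,H(\widetilde A+\overline A),H(\widetilde B+\overline B),\eta)$. For the second summand, apply the same lemma separately to each pair $(i,j)$ with $Z(y)=|\partial_i\dphi_H^{(j)}(y)-\partial_i\baphi_H^{(j)}(y)|$; by \Cref{p:ABCD} this is bounded by $H(\widetilde D+\overline D)$, so $m=0$, $A=H(\widetilde D+\overline D)$, $B=0$, producing the $d^2\,E(1,H(\widetilde D+\overline D),0,\eta)$ contribution. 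Combining the prefactor $20$ with the Jacobian factor $\leq 4$ and the factor $3$ from the cut-off lemma gives the constant $600$ in the main terms, and the factor $8^m\cdot 20\cdot 4\cdot(\text{Jacobians})$ yields the $1600$ in the error terms. The quantitative hypothesis \eqref{e: H in cut-off scheme} required by the cut-off lemma is implied by the hypothesis of the present lemma $4000H[\bm\alpha(\sqrt dD+\eta\sigma_{T-t})+\bm\beta(\sqrt dD+\eta\sigma_{T-t})^2+\bm\xi d]\leq\sigma_{T-t}^2$, since the relevant $\bm\alpha_1+\bm\alpha_2,\bm\beta_1+\bm\beta_2,\bm\xi_1+\bm\xi_2$ for the pair $(\psi_s,\mathrm{id})$ are comparable to the composed constants $\bm\alpha,\bm\beta,\bm\xi$ up to absolute factors.

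\medskip
The main obstacle will be bookkeeping of constants rather than any conceptual hurdle: specifically (i) verifying uniformly in $s\in[0,1]$ that the composition $\psi_s=\varphi_s\circ\dphi_H$ keeps the growth in $\|y\|_2$ and in $\|\nabla\psi_s-\mathbb{I}_d\|_{\infty}$ within the hypotheses of \Cref{lemma: score error cut-off general scheme}, and (ii) tracking the two Jacobian factors so that their product contributes at most a small absolute constant after using $\|\nabla\varphi_s-\mathbb{I}_d\|_{\infty}\leq\bm\xi H$ and $\|\nabla\dphi_H-\mathbb{I}_d\|_{\infty}\leq \widetilde DH\leq \bm\xi H$. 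Both checks are routine given \Cref{lemma: Growth of interpolation maps} and \Cref{p:ABCD}, and after these the stated bound follows by collecting terms.
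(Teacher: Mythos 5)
Your proposal is correct and follows essentially the same route as the paper's own proof: change of variables to remove the $\dphi_H^{-1}$ composition (the paper folds this change of variables into the single composite Jacobian $\det[\nabla(\varphi_s\circ\dphi_H)]$, whereas you write it as a product of two Jacobians; these agree by the chain rule), then the pointwise trace bound from \Cref{lemma: trace term in interpolation maps}, then \Cref{lemma: score error cut-off general scheme} with $\psi_2=\mathrm{id}$ to transfer the weight $\varrho_t(\varphi_s(\dphi_H(\cdot)))$ to $\varrho_t(\cdot)$. Two small bookkeeping points: (i) your stated Jacobian bound of $4$ would produce a constant of $20\cdot 4\cdot 3=240$ in the main terms, not $600$; the paper uses the cruder composite bound $e^{6\bm\xi Hd}\leq 10$ so its arithmetic $20\cdot 10\cdot 3=600$ lands exactly on the claimed constant. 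Since $240\leq 600$ this is not an error, but your derivation would in fact prove a slightly sharper constant than you assert. (ii) The cut-off lemma's hypothesis requires $m\in\mathbb Z_+$, so you should take $m=1$ (with $B=0$) rather than $m=0$ for the second summand; since $B=0$ kills the $m$-dependence in the definition of $E$ this is cosmetic, and it matches the $E(1,H(\widetilde D+\overline D),0,\eta)$ that you and the paper both write in the final bound.
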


\begin{proof}

By change of variables, we see that $J_2$ equals to
    \begin{align*}
        \begin{split}
            &\int_0 ^1 \int_{\mathbb{R}^d} \varrho_t(\varphi_s(\dphi_H(x))) \bigg| \trace \bigg( \bigg( [(\nabla \varphi_1)( \dphi_H(x)) - (\nabla \varphi_0)(\dphi_H(x))] \bigg) \cdot {[(\nabla \varphi_s)(\dphi_H(x) )]}^{-1} \bigg) 
            \\  & \quad \cdot \det[ \nabla (\varphi_s  (\dphi_H (x)))] \bigg| \de x \de s.
        \end{split}
    \end{align*}
By the proof of Lemma~\ref{lemma: interpolation error term 1}, 
$\big| \det[ \nabla (\varphi_s(\dphi_H(x)))] \big| \leq e ^ {6\bm\xi Hd} \leq 10$ according to the assumption on $H$. Together with Lemma~\ref{lemma: trace term in interpolation maps}, the above equation is bounded by
    \begin{align*}
        \begin{split}
            &200 d^{\frac{5}{2}}\|\nabla^2 \baphi_H \|_{\infty} \int_0 ^1 \int_{\mathbb{R}^d} \varrho_t(\varphi_s(\dphi_H(x)))   \big\|  \baphi_H (x) - \dphi_H(x) \big\|_{2}  \de x \de s
            \\  &+ 200\sum_{i=1} ^d \sum_{j=1} ^d \int_0 ^1 \int_{\mathbb{R}^d} \varrho_t(\varphi_s(\dphi_H(x))) \big| \partial_i \dphi_H ^{(j)} (x) - \partial_i \baphi_H ^{(j)} (x) \big| \de x \de s. 
        \end{split}
    \end{align*}
We have seen that $\|\baphi_H(x) - \dphi_H(x)\|_2 \leq \|\baphi_H(x) - x\|_2+ \|\dphi_H(x) - x\|_2 \leq H((\widetilde A + \overline A) + (\widetilde B + \overline B) \|x\|_2 )$. Also, $\big| \partial_i \dphi_H ^{(j)} (x) - \partial_i \baphi_H ^{(j)} (x) \big| \leq H (\widetilde D + \overline D)$. In the proof of Lemma~\ref{lemma: interpolation error term 1}, we showed that $\|\varphi_s(\dphi_H(x)) - x\|_2 \leq 4 H[\bm\alpha+  \bm\beta\|x\|_2]$. Hence, by Lemma~\ref{lemma: score error cut-off general scheme}, the above term can be bounded by
    \begin{align*}
        \begin{split}
            &600 d^{\frac{5}{2}}\|\nabla^2 \baphi_H \|_{\infty} \int_{\mathbb{R}^d} \varrho_t(x)   \big\|  \baphi_H (x) - \dphi_H(x) \big\|_{2}  \de x 
            + 600\sum_{i=1} ^d \sum_{j=1} ^d \int_{\mathbb{R}^d} \varrho_t(x) \big| \partial_i \dphi_H ^{(j)} (x) - \partial_i \baphi_H ^{(j)} (x) \big| \de x \\    & + 1600 d^{\frac{5}{2}}\|\nabla^2 \baphi_H \|_{\infty}E(1,H (\widetilde A + \overline A),H (\widetilde B + \overline B), \eta) + 1600d^2 E(1,H (\widetilde D + \overline D),0, \eta),
        \end{split}
    \end{align*}
    provided that 
    \begin{align*}
            4000 H \big[\bm\alpha (\sqrt{d}D + \eta \sigma_{T-t}) + \bm\beta(\sqrt{d}D + \eta \sigma_{T-t})^2 \big] \leq \sigma_{T-t} ^2,
    \end{align*}
    according to Lemma~\ref{lemma: score error cut-off general scheme}.

\end{proof}

\subsection{Estimates of $I_2$}
Combine \eqref{e: interpolation error in second term}, Lemma~\ref{lemma: interpolation error term 1}, and Lemma~\ref{lemma: interpolation error term 2}, we see that $I_2$ from  \eqref{e:defI2}
can be bounded by 
\begin{align}\begin{split}\label{e:I2bb}
&I_2 \leq J_1+J_2      
                 \leq\frac{10^3 }{({\sigma_{T-t})}^2} \int_{\mathbb{R}^d} \varrho_t (x) \cdot \big(   \sqrt{d} D+  \|x\|_2 \big) \cdot \| \baphi_H(x) - \dphi_H(x) \|_2   \de x \\  
                &+600 d^{\frac{5}{2}}\|\nabla^2 \baphi_H \|_{\infty} \int_{\mathbb{R}^d} \varrho_t(x)   \big\|  \baphi_H (x) - \dphi_H(x) \big\|_{2}  \de x+ 600\sum_{i=1} ^d \sum_{j=1} ^d \int_{\mathbb{R}^d} \varrho_t(x) \big| \partial_i \dphi_H ^{(j)} (x) - \partial_i \baphi_H ^{(j)} (x) \big| \de x \\
                &  +\frac{10^4 }{({\sigma_{T-t})}^2} E(1,H\sqrt{d} D(\widetilde A + \overline A),H(\widetilde A + \overline A) + H\sqrt{d} D (\widetilde B + \overline B),\eta)
                +\frac{10^5 }{({\sigma_{T-t})}^2} E(2,0, H(\widetilde B + \overline B), \eta) \\
                &+ 1600 d^{\frac{5}{2}}\|\nabla^2 \baphi_H \|_{\infty}E(1,H (\widetilde A + \overline A),H (\widetilde B + \overline B), \eta) + 1600 d^2 E(1,H (\widetilde D + \overline D),0, \eta).
\end{split}\end{align}
which involves terms in the following forms
    \begin{align}\begin{split}\label{e:three_term}
        &\int_{\mathbb{R}^d} \varrho_t(x)   \big\|  \baphi_H (x) - \dphi_H(x) \big\|_{2}  \de x, \\
    & \int_{\mathbb{R}^d} \varrho_t(x)  \|x\|_2 \big\|  \baphi_H (x) - \dphi_H(x) \big\|_{2}  \de x ,  \\ &\int_{\mathbb{R}^d} \varrho_t(x) \big| \partial_i \dphi_H ^{(j)} (x) - \partial_i \baphi_H ^{(j)} (x) \big| \de x,
   \end{split} \end{align}
and exponentially small terms $E$'s. For these exponentially small terms $E$'s, we use the fact that $\sigma_{T-t}^2 \geq C_u (T-t) \geq C_u \tau$, the bound of $\|\nabla^2 \baphi_H \|_{\infty} \leq 48 H B_p s^3 4^s D^3/\tau^3$ from \Cref{lemma: Constant Convention} and \Cref{lemma: Constant Convention1},  and use \Cref{lemma: exponential error term in cut-off}, we can bound the exponentially small terms in \eqref{e:I2bb} as
\begin{align}\label{e:I2bb2}
    C(s,A_p, B_p ,\widetilde K, \widetilde W) \cdot \frac{d^2 H D^4}{\tau^4}\cdot \pi^{-\frac{d}{2}} e^{-\frac{1}{10^4 (4\bm\beta)^{2} H} },
\end{align}
provided the assumption \eqref{e:Hcondition} for $H$ holds. Here,  $\bm\beta$ is defined in \Cref{lemma: Growth of interpolation maps}. This is because we choose $\eta = 10^{-2} (4\bm\beta)^{-1} H^{-1/2}$ in \Cref{lemma: exponential error term in cut-off}, and \eqref{e:Hcondition} implies that $H \leq (2d \log (2d))^{-1} 10^{-6} (4\bm\beta)^{-2}$.

In the rest of the subsection, we estimate these three terms in \eqref{e:three_term}. Since $\dphi_h(y) = y + h \dM_t(y)$ and $\baphi_h(y) = y + h \baM_t(y)$, the term $\|  \baphi_H (x) - \dphi_H(x) \|_{2}$ reduces to estimates on $\|\dM_t (x) - \baM_t(x)\|_2$. For the term $\int_{\mathbb{R}^d} \varrho_t(x) \big| \partial_i \dphi_H ^{(j)} (x) - \partial_i \baphi_H ^{(j)} (x) \big| \de x$, notice that
    \begin{align*}
        \begin{split}
            \int_{\mathbb{R}^d} \varrho_t(x) \big| \partial_i \dphi_H ^{(j)} (x) - \partial_i \baphi_H ^{(j)} (x) \big| \de x &\leq \int_{\mathbb{R}^d} \big|\partial_i\varrho_t(x) \big| \cdot \big|  \dphi_H ^{(j)} (x) -  \baphi_H ^{(j)} (x) \big| \de x 
            \\  & \quad + \int_{\mathbb{R}^d}  \big| \partial_i \big(\varrho_t(x) (\dphi_H ^{(j)} (x) -  \baphi_H ^{(j)} (x))\big) \big| \de x,
        \end{split}
    \end{align*}
where we can estimate $|\partial_i\varrho_t(x)|$ by $|\varrho_t(x)| \sigma_{T-t} ^{-2} (|x_i| + \lambda_{T-t} D)$ according to Lemma~\ref{Lemma: Hessian estimates}, and we can estimate the term $\int_{\mathbb{R}^d}  \big| \partial_i \big(\varrho_t(x) (\dphi_H ^{(j)} (x) -  \baphi_H ^{(j)} (x))\big) \big| \de x$ by $\int_{\mathbb{R}^d}  \big|  \varrho_t(x) (\dphi_H ^{(j)} (x) -  \baphi_H ^{(j)} (x)) \big| \de x$ multiplying $\int_{\mathbb{R}^d}  \big| \partial_{ii} ^2 \big(\varrho_t(x) (\dphi_H ^{(j)} (x) -  \baphi_H ^{(j)} (x))\big) \big| \de x$, according to the Gagliardo-Nirenberg inequality Lemma~\ref{lemma: Gagliardo-Nirenberg}.

\begin{lemma}\label{lemma: Second derivative bound on errors}
   Adopt the assumptions and notations in \Cref{p:ABCD}. Take $H$ small enough such that
    \begin{align*}
        H \cdot d A_p(\overline K ^{(1)} + \widetilde K ^{(1)}) \leq 1/10.  
    \end{align*}
Then, there is a universal constant $C_u >0$, such that
        \begin{align*}
            \begin{split}
                &\quad \sum_{i=1} ^d \sum_{j=1} ^d \int_{\mathbb{R}^d} \varrho_t(x) \big| \partial_i \dphi_H ^{(j)} (x) - \partial_i \baphi_H ^{(j)} (x) \big| \de x \\   &\leq \frac{C_u  d}{\sigma_{T-t} ^2} {\bigg (  \int_{\mathbb{R}^d}  \varrho_t(x) \| (\dphi_H (x) -  \baphi_H  (x)) \|_{2} \de x \bigg)}^{\frac{1}{2}}{\bigg ( \int_{\mathbb{R}^d}   \varrho_t(x) (\|x\|_2 ^2 + dD^2) \|\dphi_H  (x) -  \baphi_H (x) \|_2 \de x \bigg)}^{\frac{1}{2}} 
            \\ &+   \frac{C_u {H}^{\frac{1}{2}} d^{\frac{7}{4}} {B_p}^{\frac{1}{2}}}{\sigma_{T-t}} {\bigg (  \int_{\mathbb{R}^d}  \varrho_t(x) \| (\dphi_H (x) -  \baphi_H  (x)) \|_{2} \de x \bigg)}^{\frac{1}{2}} {\big[  D   (\widetilde K ^{(1)} +\overline K ^{(1)} )   + \sigma_{T-t} ^2(\widetilde K ^{(2)} +\overline K ^{(2)} )\big]}^{\frac{1}{2}} 
            \\  &+ \frac{d}{\sigma_{T-t} ^2}  \int_{\mathbb{R}^d} \varrho_t(x)  (\|x\|_2 + \sqrt{d} D) \cdot \|  \dphi_H  (x) -  \baphi_H (x) \|_2 \de x .
            \end{split}
        \end{align*}
\end{lemma}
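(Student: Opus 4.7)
The plan is to start from the pointwise decomposition
\begin{equation*}
\varrho_t\,\partial_i \Delta^{(j)} \;=\; \partial_i\!\bigl(\varrho_t \Delta^{(j)}\bigr) \;-\; (\partial_i \varrho_t)\,\Delta^{(j)},
\end{equation*}
where $\Delta \coloneq \dphi_H - \baphi_H$. The second, ``boundary-type'' piece is handled directly: the first-order log-density bound $|\partial_i \varrho_t| \leq C \sigma_{T-t}^{-2}(|x_i|+D)\varrho_t$ from \Cref{Lemma: Hessian estimates}, together with $\sum_j |\Delta^{(j)}| = \|\Delta\|_1 \leq \sqrt d\, \|\Delta\|_2$ and $\sum_i(|x_i|+D) \leq \sqrt d\,\|x\|_2 + dD$, yields exactly the third summand $\frac{d}{\sigma_{T-t}^2}\int \varrho_t(\|x\|_2 + \sqrt d\, D)\|\Delta\|_2\,\dd x$ on the right-hand side.

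For the first, ``derivative-type'' piece, I would apply the one-dimensional Gagliardo--Nirenberg inequality $\|\partial_i f\|_{L^1} \leq 2\,\|f\|_{L^1}^{1/2}\|\partial_{ii}^2 f\|_{L^1}^{1/2}$ fiberwise in the $x_i$-direction (integrating the remaining coordinates out via Fubini and Cauchy--Schwarz) to $f = \varrho_t \Delta^{(j)}$, and then sum in $(i,j)$ by Cauchy--Schwarz. A Leibniz expansion gives $\partial_{ii}^2(\varrho_t \Delta^{(j)}) = (\partial_{ii}^2 \varrho_t)\Delta^{(j)} + 2(\partial_i\varrho_t)(\partial_i\Delta^{(j)}) + \varrho_t\,\partial_{ii}^2 \Delta^{(j)}$. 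I would bound each summand using ingredients already established in the paper: the second-order estimate $\sum_i|\partial_{ii}^2\varrho_t| \leq \varrho_t\bigl[\sigma_{T-t}^{-4}(\|x\|_2^2 + dD^2) + d\sigma_{T-t}^{-2}\bigr]$ from \Cref{Lemma: Hessian estimates}; the pointwise gradient bound $|\partial_i\Delta^{(j)}| \leq H(\widetilde D + \overline D) \leq CHB_p(\widetilde K^{(1)} + \overline K^{(1)})$ from \Cref{p:ABCD} and \Cref{lemma: Constant Convention}; and the Hessian bound $|\partial_{ii}^2\Delta^{(j)}| \leq CHB_p(\widetilde K^{(2)} + \overline K^{(2)})$ coming from the control on $\|\nabla^2 \dphi_H\|_\infty$ in \Cref{lemma: infinity norm pth order growth in RK}.

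Each of the three resulting contributions corresponds to one piece of the target bound. The $(\partial_{ii}^2\varrho_t)\Delta^{(j)}$ term, after the second-order estimate and Cauchy--Schwarz in $(i,j)$, produces the first summand $\frac{d}{\sigma_{T-t}^2}\bigl(\int\varrho_t\|\Delta\|_2\bigr)^{1/2}\bigl(\int\varrho_t(\|x\|_2^2+dD^2)\|\Delta\|_2\bigr)^{1/2}$. The $\varrho_t\,\partial_{ii}^2\Delta^{(j)}$ term produces the $\sigma_{T-t}^2(\widetilde K^{(2)} + \overline K^{(2)})$ part of the second summand (the factor $\sigma_{T-t}^2$ under the square root is bookkeeping: the $1/\sigma_{T-t}$ prefactor in the target estimate is extracted from it). The cross term $(\partial_i\varrho_t)(\partial_i\Delta^{(j)})$ produces the $D(\widetilde K^{(1)} + \overline K^{(1)})$ part, after using the crude moment estimate $\int\varrho_t(\|x\|_1 + dD)\,\dd x \leq CdD$, valid because $\varrho_t = q_{T-t}$ is the convolution of $\muast$, supported in $\{\|x\|_\infty \leq D\}$, with a Gaussian of variance $\sigma_{T-t}^2 \leq 1 \leq D^2$ (recalling $D \geq 1$ from \Cref{assumption:secon-moment}). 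The smallness assumption on $H$ is used only to absorb the $(1 + \widetilde K^{(1)}A_p H d)^{s-1}$ factors from \Cref{lemma: infinity norm pth order growth in RK} into an absolute constant $C_u$.

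The main obstacle is the dimensional bookkeeping: the outer prefactors $d$ and $d^{7/4}$ have to be reconstructed by carefully combining the factor $d^{3/4}(\int\varrho_t\|\Delta\|_2)^{1/2}$ that arises from $\sum_j\bigl(\int\varrho_t|\Delta^{(j)}|\bigr)^{1/2} \leq d^{1/2}\bigl(\sum_j\int\varrho_t|\Delta^{(j)}|\bigr)^{1/2} \leq d^{3/4}(\int\varrho_t\|\Delta\|_2)^{1/2}$ with a further $d^{1/4}$ or $d^{1}$ coming from the inner second-derivative integrals, and in the cross term one must additionally notice that the natural bound $\sigma_{T-t}^{-2}\cdot HB_p(\widetilde K^{(1)}+\overline K^{(1)})\cdot dD$ slots under the Gagliardo--Nirenberg square root precisely as $\sigma_{T-t}^{-1}\cdot H^{1/2}B_p^{1/2}[D(\widetilde K^{(1)}+\overline K^{(1)})]^{1/2}$. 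Once this accounting is verified the three summands appear exactly as claimed, and no input beyond the already cited lemmas and \Cref{lemma: Gagliardo-Nirenberg} is needed.
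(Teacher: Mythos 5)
Your proposal matches the paper's proof essentially line by line: the pointwise decomposition $\varrho_t\,\partial_i\Delta^{(j)} = \partial_i(\varrho_t\Delta^{(j)}) - (\partial_i\varrho_t)\Delta^{(j)}$, direct control of the boundary piece via the first-order log-density bound from \Cref{Lemma: Hessian estimates}, and Gagliardo--Nirenberg plus a Leibniz expansion on $\partial_{ii}^2(\varrho_t\Delta^{(j)})$ giving exactly the three contributions $(\partial_{ii}^2\varrho_t)\Delta^{(j)}$, $(\partial_i\varrho_t)(\partial_i\Delta^{(j)})$, $\varrho_t\,\partial_{ii}^2\Delta^{(j)}$ are precisely the paper's steps, with the same Cauchy--Schwarz summation in $(i,j)$ producing the $d$ and $d^{7/4}$ prefactors. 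The only cosmetic differences are that you phrase Gagliardo--Nirenberg fiberwise (the paper's \Cref{lemma: Gagliardo-Nirenberg} is already in $d$-dimensional form) and substitute an ad hoc moment bound where the paper invokes \Cref{lemma: moments of L^2 and infinity norms}; both are equivalent.
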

\begin{proof}
    Notice that
    \begin{align}\label{e:ht1}
        \begin{split}
            &\sum_{i=1} ^d \sum_{j=1} ^d \int_{\mathbb{R}^d} \varrho_t(x) \big| \partial_i \dphi_H ^{(j)} (x) - \partial_i \baphi_H ^{(j)} (x) \big| \de x \\ &\leq \sum_{i=1} ^d \sum_{j=1} ^d \int_{\mathbb{R}^d} \big|\partial_i\varrho_t(x) \big| \cdot \big|  \dphi_H ^{(j)} (x) -  \baphi_H ^{(j)} (x) \big| \de x 
             + \sum_{i=1} ^d \sum_{j=1} ^d\int_{\mathbb{R}^d}  \big| \partial_i \big(\varrho_t(x) (\dphi_H ^{(j)} (x) -  \baphi_H ^{(j)} (x))\big) \big| \de x.
        \end{split}
    \end{align}

We estimate the second part first. According to the Gagliardo-Nirenberg inequality Lemma~\ref{lemma: Gagliardo-Nirenberg}, we see that
    \begin{align*}
        \begin{split}
            &\sum_{i=1} ^d \sum_{j=1} ^d\int_{\mathbb{R}^d}  \big| \partial_i \big(\varrho_t(x) (\dphi_H ^{(j)} (x) -  \baphi_H ^{(j)} (x))\big) \big| \de x 
            \\  &\leq C_u  \sum_{i=1} ^d \sum_{j=1} ^d {\bigg (  \int_{\mathbb{R}^d}  \big|  \varrho_t(x) (\dphi_H ^{(j)} (x) -  \baphi_H ^{(j)} (x)) \big| \de x \bigg)}^{\frac{1}{2}}{\bigg (\int_{\mathbb{R}^d}  \big| \partial_{ii} ^2 \big(\varrho_t(x) (\dphi_H ^{(j)} (x) -  \baphi_H ^{(j)} (x))\big) \big| \de x \bigg)}^{\frac{1}{2}}
            \\  &\leq C_u  \sum_{i=1} ^d {\bigg (  \sum_{j=1} ^d\int_{\mathbb{R}^d}  \big|  \varrho_t(x) (\dphi_H ^{(j)} (x) -  \baphi_H ^{(j)} (x)) \big| \de x \bigg)}^{\frac{1}{2}}{\bigg ( \sum_{j=1} ^d\int_{\mathbb{R}^d}  \big| \partial_{ii} ^2 \big(\varrho_t(x) (\dphi_H ^{(j)} (x) -  \baphi_H ^{(j)} (x))\big) \big| \de x \bigg)}^{\frac{1}{2}}
            \\  &\leq C_u  \sum_{i=1} ^d {\bigg (  \sqrt{d}\int_{\mathbb{R}^d}  \varrho_t(x) \| (\dphi_H (x) -  \baphi_H  (x)) \|_{2} \de x \bigg)}^{\frac{1}{2}}{\bigg ( \sum_{j=1} ^d\int_{\mathbb{R}^d}  \big| \partial_{ii} ^2 \big(\varrho_t(x) (\dphi_H ^{(j)} (x) -  \baphi_H ^{(j)} (x))\big) \big| \de x \bigg)}^{\frac{1}{2}}
            \\  &\leq C_u  d^{\frac{3}{4}} {\bigg (  \int_{\mathbb{R}^d}  \varrho_t(x) \| (\dphi_H (x) -  \baphi_H  (x)) \|_{2} \de x \bigg)}^{\frac{1}{2}}{\bigg ( \sum_{i=1} ^d \sum_{j=1} ^d\int_{\mathbb{R}^d}  \big| \partial_{ii} ^2 \big(\varrho_t(x) (\dphi_H ^{(j)} (x) -  \baphi_H ^{(j)} (x))\big) \big| \de x \bigg)}^{\frac{1}{2}},
        \end{split}
    \end{align*}
where we also used H{\" o}lder inequality. 
We notice that 
    \begin{align*}
        \begin{split}
            & \sum_{i=1} ^d \sum_{j=1} ^d\int_{\mathbb{R}^d}  \big| \partial_{ii} ^2 \big(\varrho_t(x) (\dphi_H ^{(j)} (x) -  \baphi_H ^{(j)} (x))\big) \big| \de x \leq \sum_{i=1} ^d \int_{\mathbb{R}^d}  |\partial_{ii} ^2 \varrho_t(x) |\sqrt{d}\|\dphi_H  (x) -  \baphi_H (x) \|_2 \de x
            \\  &+ 2d \sum_{i=1} ^d \int_{\mathbb{R}^d}  |\partial_{i} \varrho_t(x) | \|\nabla \dphi_H  (x) -  \nabla \baphi_H (x) \|_{\infty} \de x + d^2 \int_{\mathbb{R}^d}  | \varrho_t(x) | \|\nabla^2 \dphi_H  (x) -  \nabla ^2\baphi_H (x) \|_{\infty} \de x.
        \end{split}
    \end{align*}
We then need to use Lemma~\ref{lemma: infinity norm pth order growth in RK}. We see that by the assumption on $H$,
    \begin{align*}
       & \|\nabla \dphi_H  (x) -  \nabla \baphi_H (x) \|_{\infty} \leq HB_p s 2^s (\widetilde K ^{(1)} +\overline K ^{(1)} ),
   \\
        &\|\nabla ^2 \dphi_H  (x) -  \nabla ^2 \baphi_H (x) \|_{\infty} \leq 2HB_p s^3 4^s (\widetilde K ^{(2)} +\overline K ^{(2)} ).
    \end{align*}
According to Lemma~\ref{Lemma: Hessian estimates}, we know that 
    \begin{align}\label{e:ratio density bounds}
        \sum_{i=1} ^d \bigg| \frac{\partial _{i} \varrho_t(x)}{\varrho_t(x)} \bigg| \leq \frac{(\sqrt{d}\|x\|_2 + d D) }{\sigma_{T-t} ^2}, \quad  \sum_{i=1} ^d \bigg| \frac{\partial^2 _{ii} \varrho_t(x)}{\varrho_t(x)} \bigg| \leq \frac{3(\|x\|_2 ^2 + dD^2) }{\sigma_{T-t} ^4}.
    \end{align}
   By~\Cref{lemma: moments of L^2 and infinity norms}, we know that
    \begin{align*}
        \int_{\mathbb{R}^d} \|x\|_2 ^m \varrho_t(x) \ \de x \leq C(m) {(\sqrt{d}D)}^m.
    \end{align*}
Also,
    \begin{align*}
        \begin{split}
            &\sum_{i=1} ^d \int_{\mathbb{R}^d}  |\partial_{ii} ^2 \varrho_t(x) |\sqrt{d}\|\dphi_H  (x) -  \baphi_H (x) \|_2 \de x \leq \frac{3\sqrt{d}}{\sigma_{T-t} ^4}\int_{\mathbb{R}^d}   \varrho_t(x) (\|x\|_2 ^2 + dD^2) \|\dphi_H  (x) -  \baphi_H (x) \|_2 \de x.  
        \end{split}
    \end{align*}
Hence, there is a universal constant $C_u >0$ such that
    \begin{align*}
        \begin{split}
            &\sum_{i=1} ^d \sum_{j=1} ^d\int_{\mathbb{R}^d}  \big| \partial_{ii} ^2 \big(\varrho_t(x) (\dphi_H ^{(j)} (x) -  \baphi_H ^{(j)} (x))\big) \big| \de x \leq \frac{3\sqrt{d}}{\sigma_{T-t} ^4}\int_{\mathbb{R}^d}   \varrho_t(x) (\|x\|_2 ^2 + dD^2) \|\dphi_H  (x) -  \baphi_H (x) \|_2 \de x
            \\  &+C_u \frac{HB_p d^2}{\sigma_{T-t} ^2}\big[  D   (\widetilde K ^{(1)} +\overline K ^{(1)} )   + \sigma_{T-t} ^2(\widetilde K ^{(2)} +\overline K ^{(2)} )\big].
        \end{split}
    \end{align*}
So,
    \begin{align}
        \begin{split}\label{e:ht2}
            & \quad \sum_{i=1} ^d \sum_{j=1} ^d\int_{\mathbb{R}^d}  \big| \partial_i \big(\varrho_t(x) (\dphi_H ^{(j)} (x) -  \baphi_H ^{(j)} (x))\big) \big| \de x  
            \\  & \leq \frac{C_u  d}{\sigma_{T-t} ^2} {\bigg (  \int_{\mathbb{R}^d}  \varrho_t(x) \| (\dphi_H (x) -  \baphi_H  (x)) \|_{2} \de x \bigg)}^{\frac{1}{2}}{\bigg ( \int_{\mathbb{R}^d}   \varrho_t(x) (\|x\|_2 ^2 + dD^2) \|\dphi_H  (x) -  \baphi_H (x) \|_2 \de x \bigg)}^{\frac{1}{2}} 
            \\ &+   \frac{C_u {H}^{\frac{1}{2}} d^{\frac{7}{4}} {B_p}^{\frac{1}{2}}}{\sigma_{T-t}} {\bigg (  \int_{\mathbb{R}^d}  \varrho_t(x) \| (\dphi_H (x) -  \baphi_H  (x)) \|_{2} \de x \bigg)}^{\frac{1}{2}} {\big[  D   (\widetilde K ^{(1)} +\overline K ^{(1)} )   + \sigma_{T-t} ^2(\widetilde K ^{(2)} +\overline K ^{(2)} )\big]}^{\frac{1}{2}} .
        \end{split}
    \end{align}
    Finally, by \eqref{e:ratio density bounds} again, we notice that
        \begin{align}
            \begin{split}\label{e:ht3}
                &\sum_{i=1} ^d \sum_{j=1} ^d \int_{\mathbb{R}^d} \big|\partial_i\varrho_t(x) \big| \cdot \big|  \dphi_H ^{(j)} (x) -  \baphi_H ^{(j)} (x) \big| \de x \leq \frac{d}{\sigma_{T-t} ^2}  \int_{\mathbb{R}^d} \varrho_t(x)  (\|x\|_2 + \sqrt{d} D) \cdot \|  \dphi_H  (x) -  \baphi_H (x) \|_2 \de x .
            \end{split}
        \end{align}
The claim of \Cref{lemma: Second derivative bound on errors} follows from combining \eqref{e:ht1}, \eqref{e:ht2} and \eqref{e:ht3}.
\end{proof}

Next, we need to use the explicit formula \eqref{e:deftMt} and \eqref{e: score error p RK method} to explicitly estimate $\|\dM_t (x) - \baM_t(x) \|_{2}$ and $\|\dM_t (x) - \baM_t(x) \|_{\infty}$. We have the following lemma.

\begin{lemma}\label{lemma: pth order score error}
  Adopt the assumptions and notations in \Cref{p:ABCD}. Take $H$ small enough such that $t+H \leq T-\tau$, and
    \begin{align*}
        H \cdot  A_p s 2 ^s (\widetilde L ^{(1)} +\overline L ^{(1)} + d \overline K ^{(1)} + \overline W ^{(1)}) \leq 1/10.
    \end{align*}
    For $\dM_t, \baM_t$ as in \eqref{e:deftMt} and \eqref{e: score error p RK method}, we have that 
        \begin{align*}
            \begin{split}
                \| \dM_t (x) - \baM_t(x) \|_2 &\leq 2 B_p \bigg[ \| \sV_{t + c_s H}(\psi_{s,H}(x)) - V_{t + c_s H}(\psi_{s,H}(x)) \|_2 \\ & \quad +  \| \sV_{t + c_{s-1} H}(\psi_{s-1,H}(x)) - V_{t + c_{s-1} H}(\psi_{s-1,H}(x)) \|_2
                \\  &\quad +  \| \sV_{t + c_{s-2} H}(\psi_{s-2,H}(x)) - V_{t + c_{s-2} H}(\psi_{s-2,H}(x)) \|_2
                \\  &\quad + \cdots +   \| \sV_{t + c_{1} H}(\psi_{1,H}(x)) - V_{t + c_{1} H}(\psi_{1,H}(x)) \|_2\bigg],
            \end{split}
        \end{align*}
    where each $\psi_{j,H}$ for $j\in\llbracket 1,s\rrbracket$ is a diffeomorphism on $\mathbb{R}^d$, and satisfies that
            \begin{align*}
            \begin{split}
                & \|\psi_{j,H} ^{-1} (x) -x\|_2 \leq 2A_p H s{2}^{s} \big[   \overline W ^{(0)}+\overline W ^{(1)} \|x\|_{2}  \big] ,
                \\  & \big \|(\nabla \psi_{j,H} ^{-1})(x) -\mathbb{I}_d\big \|_{op} \leq 2 A_p H s 2^s  \overline L ^{(1)} \leq 1/10,
                \\  &\big \|(\nabla \psi_{j,H} ^{-1})(x) -\mathbb{I}_d\big \|_{\infty}  \leq 2 A_p H s 2^s  \overline K ^{(1)} ,
                \\  &e^{-8 A_p  s 2^s  \overline K ^{(1)} H d} \leq | \det[(\nabla \psi_{j,H} ^{-1})(x)]| \leq e^{4 A_p  s 2^s  \overline K ^{(1)} H d}.
            \end{split}
        \end{align*}
Also, if we further assume that $H>0$ is small such that 
    \begin{align*}
        H \cdot  A_p d s 2 ^s (\overline K ^{(1)} +\widetilde K ^{(1)} ) \leq 1/10,
    \end{align*}
then
    \begin{align*}
            \begin{split}
                \| \dM_t (x) - \baM_t(x) \|_{\infty} &\leq 2 B_p \bigg[ \| \sV_{t + c_s H}(\psi_{s,H}(x)) - V_{t + c_s H}(\psi_{s,H}(x)) \|_{\infty} \\ & \quad +  \| \sV_{t + c_{s-1} H}(\psi_{s-1,H}(x)) - V_{t + c_{s-1} H}(\psi_{s-1,H}(x)) \|_{\infty}
                \\  &\quad +  \| \sV_{t + c_{s-2} H}(\psi_{s-2,H}(x)) - V_{t + c_{s-2} H}(\psi_{s-2,H}(x)) \|_{\infty}
                \\  &\quad + \cdots +   \| \sV_{t + c_{1} H}(\psi_{1,H}(x)) - V_{t + c_{1} H}(\psi_{1,H}(x)) \|_{\infty}\bigg].
            \end{split}
        \end{align*}
\end{lemma}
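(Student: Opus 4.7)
The plan is to introduce stage-wise evaluation maps and set up a discrete Gr\"onwall recursion along the Runge--Kutta stages. Define the ``true-score'' intermediate points
\[
  \psi_{1,H}(x)\coloneq x,\qquad \psi_{j,H}(x)\coloneq x+H\sum_{k=1}^{j-1}a_{jk}\,\overline k_k(x)\quad(j\geq 2),
\]
and the ``approximate-score'' intermediate points $\widetilde z_j(x)\coloneq x+H\sum_{k<j}a_{jk}\widetilde k_k(x)$. Note that $\psi_{j,H}$ is the argument at which $V_{t+c_jH}$ is evaluated in the $\overline k_j$ recursion, and it is built purely from the $\overline k_k$'s, so its behaviour can be controlled by the ``true-score'' constants $\overline W^{(0)},\overline W^{(1)},\overline L^{(1)},\overline K^{(1)}$. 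For each stage we then decompose
\[
  \widetilde k_j(x)-\overline k_j(x)=\bigl[\sV_{t+c_jH}(\psi_{j,H}(x))-V_{t+c_jH}(\psi_{j,H}(x))\bigr]+\bigl[\sV_{t+c_jH}(\widetilde z_j(x))-\sV_{t+c_jH}(\psi_{j,H}(x))\bigr].
\]
The first bracket is the score error evaluated at $\psi_{j,H}(x)$; the second is a Lipschitz term controlled by $\|\nabla\sV_{t+c_jH}\|_{\op}\leq\widetilde L^{(1)}$ (from \Cref{p:ABCD}) applied to $\widetilde z_j-\psi_{j,H}=H\sum_{k<j}a_{jk}(\widetilde k_k-\overline k_k)$.

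Writing $e_j(x)\coloneq\widetilde k_j(x)-\overline k_j(x)$ and $G_j(x)\coloneq\|\sV_{t+c_jH}(\psi_{j,H}(x))-V_{t+c_jH}(\psi_{j,H}(x))\|_2$, the above decomposition yields the recursion $\|e_j\|_2\leq G_j+\widetilde L^{(1)}A_pH\sum_{k<j}\|e_k\|_2$. A discrete Gr\"onwall argument gives $\sum_{k=1}^s\|e_k\|_2\leq(1+\widetilde L^{(1)}A_pH)^{s-1}\sum_{j=1}^sG_j$. Under the standing hypothesis $HA_ps2^s(\widetilde L^{(1)}+\overline L^{(1)}+\overline W^{(1)})\leq 1/10$ the prefactor is at most $2$, and using $|b_j|\leq B_p$ we conclude $\|\dM_t(x)-\baM_t(x)\|_2\leq B_p\sum_j\|e_j\|_2\leq 2B_p\sum_jG_j(x)$, which is the first claim.

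The properties of $\psi_{j,H}$ follow from the uniform stagewise bounds on $\overline k_k$. Under the given smallness of $H$, \Cref{lemma: pth order growth in RK} provides $\|\overline k_k(x)\|_2\leq 2(\overline W^{(0)}+\overline W^{(1)}\|x\|_2)$ and $\|\nabla\overline k_k\|_{\op}\leq 2\overline L^{(1)}$, and \Cref{lemma: infinity norm pth order growth in RK} provides $\|\nabla\overline k_k\|_{\infty}\leq 2\overline K^{(1)}$. Summing over $k<j\leq s$ and multiplying by $HA_p$ gives the stated bounds on $\psi_{j,H}(x)-x$, $\nabla\psi_{j,H}-I_d$ in operator norm, and $\nabla\psi_{j,H}-I_d$ in $\infty$-norm (the coarse factor $2^s$ in the statement is harmless since $s\leq s2^s$). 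Invertibility of $\psi_{j,H}$ follows exactly as in \Cref{lemma: phi_h invertible}: the operator norm estimate makes $\nabla\psi_{j,H}$ invertible everywhere, and $\|\psi_{j,H}(x)\|_2\to\infty$ as $\|x\|_2\to\infty$, so Hadamard--Cacciopoli applies. Finally, applying \Cref{lemma: Growth of inverse matrix} to $Q=\nabla\psi_{j,H}(\psi_{j,H}^{-1}(x))$ converts the $\op$- and $\infty$-norm bounds into the corresponding bounds on $\nabla\psi_{j,H}^{-1}$ and yields the double-exponential determinant inequality; the growth bound on $\psi_{j,H}^{-1}(x)-x$ is obtained by a bootstrap of $\|\psi_{j,H}^{-1}(x)-x\|_2\leq HA_ps\cdot 2(\overline W^{(0)}+\overline W^{(1)}\|\psi_{j,H}^{-1}(x)\|_2)$.

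The $\ell^{\infty}$-claim proceeds identically, with two bookkeeping changes. First, the Lipschitz propagation step becomes $\|\sV(y)-\sV(z)\|_{\infty}\leq d\widetilde K^{(1)}\|y-z\|_{\infty}$, coming from $\|\nabla\sV\|_{\infty}\leq\widetilde K^{(1)}$ summed over $d$ coordinates; this is precisely where the additional factor of $d$ in the new hypothesis $HA_pds2^s(\overline K^{(1)}+\widetilde K^{(1)})\leq 1/10$ enters, keeping the Gr\"onwall prefactor $\leq 2$. Second, the argument for $\|\widetilde z_j-\psi_{j,H}\|_{\infty}\leq HA_p\sum_{k<j}\|e_k\|_{\infty}$ is unchanged. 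The main obstacle throughout is not any single conceptual step but the careful choice of pivot point: using $\psi_{j,H}$ (true-score stages) rather than $\widetilde z_j$ (approximate-score stages) is what makes the error-evaluation point controllable by $\overline W,\overline L,\overline K$ alone, independently of the approximation quality of $s_t$; this is what allows the downstream estimates in \Cref{Section: second term conclusion} to apply \Cref{lemma: score error cut-off general scheme} with uniform constants.
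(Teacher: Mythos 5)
Your proposal reproduces the paper's argument essentially verbatim: same pivot point $\psi_{j,H}$ built from the true-score stages $\overline k_k$, same two-term decomposition of $\widetilde k_j-\overline k_j$ into a score-error term at $\psi_{j,H}$ plus a Lipschitz propagation term, same discrete Gr\"onwall recursion $\mathbf T_j\leq(1+\widetilde L^{(1)}A_pH)\mathbf T_{j-1}+G_j$, and the same appeals to \Cref{lemma: pth order growth in RK}, \Cref{lemma: infinity norm pth order growth in RK}, \Cref{lemma: Growth of inverse matrix}, and the Hadamard--Cacciopoli argument for the stated properties of $\psi_{j,H}$. Your $\ell^\infty$ variant, with the explicit factor of $d$ arising from $|\sV^{(q)}(y)-\sV^{(q)}(z)|\leq\widetilde K^{(1)}\|y-z\|_1\leq d\widetilde K^{(1)}\|y-z\|_\infty$, is a slightly more explicit rendering of the paper's terse ``similar proof holds coordinatewise'' remark but is the same calculation.
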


\begin{proof}
    First, $\| \dM_t (x) - \baM_t(x) \|_2\leq B_p \sum_{i=1} ^s \| \widetilde k_i - \overline k_i\|_2$. For each $\| \widetilde k_i - \overline k_i\|_2$, denoted as $\epsilon_i$, according to \eqref{e: p RK interpolation terms}, we see that 
        \begin{align*}
            \begin{split}
                \epsilon_i &\leq \| \sV_{t + c_i H}\bigl(x + (a_{i1} \widetilde k_1 + a_{i2} \widetilde k_2 + \cdots + a_{i,i-1} \widetilde k_{i-1}) H\bigr) \\ & \quad - \sV_{t + c_i H}\bigl(x + (a_{i1} \overline k_1 + a_{i2} \overline k_2 + \cdots + a_{i,i-1} \overline k_{i-1}) H\bigr)\|_{2}
                \\  &+ \| \sV_{t + c_i H}\bigl(x + (a_{i1} \overline k_1 + a_{i2} \overline k_2 + \cdots + a_{i,i-1} \overline k_{i-1}) H\bigr) \\ & \quad - V_{t + c_i H}\bigl(x + (a_{i1} \overline k_1 + a_{i2} \overline k_2 + \cdots + a_{i,i-1} \overline k_{i-1}) H\bigr)\|_{2}
                \\  &\leq \widetilde L ^{(1)} A_p H (\epsilon_1 + \epsilon_2 + \cdots + \epsilon_{i-1}) + \| \sV_{t + c_i H}(\psi_{i,H}(x)) - V_{t + c_i H}(\psi_{i,H}(x)) \|_2,
            \end{split}
        \end{align*}
    where we used the notation $\psi_{i,H} (x) \coloneq x + (a_{i1} \overline k_1 + a_{i2} \overline k_2 + \cdots + a_{i,i-1} \overline k_{i-1}) H$. If we define $\mathbf{T}_j \coloneq \sum_{i=1} ^j \epsilon_i$, we see that $\mathbf{T}_j \leq (1+\widetilde L ^{(1)} A_p H)\mathbf{T}_{j-1} +  \| \sV_{t + c_j H}(\psi_{j,H}(x)) - V_{t + c_j H}(\psi_{j,H}(x)) \|_2$. Hence,
        \begin{align*}
            \begin{split}
                \mathbf{T}_s &\leq \| \sV_{t + c_s H}(\psi_{s,H}(x)) - V_{t + c_i H}(\psi_{i,H}(x)) \|_2 \\ & \quad +(1+\widetilde L ^{(1)} A_p H) \| \sV_{t + c_{s-1} H}(\psi_{s-1,H}(x)) - V_{t + c_{s-1} H}(\psi_{s-1,H}(x)) \|_2
                \\  &\quad + {(1+\widetilde L ^{(1)} A_p H)}^2 \| \sV_{t + c_{s-2} H}(\psi_{s-2,H}(x)) - V_{t + c_{s-2} H}(\psi_{s-2,H}(x)) \|_2
                \\  &\quad + \cdots + {(1+\widetilde L ^{(1)} A_p H)}^{s-1}  \| \sV_{t + c_{1} H}(\psi_{1,H}(x)) - V_{t + c_{1} H}(\psi_{1,H}(x)) \|_2.
            \end{split}
        \end{align*}
    According to \Cref{lemma: pth order growth in RK}, \Cref{lemma: infinity norm pth order growth in RK}, and the assumption on $H$, for every $j \in \llbracket 1, s \rrbracket$,
        \begin{align*}
            \begin{split}
                & \|\psi_{j,H}(x) -x\|_2 \leq  A_p H \sum_{i=1} ^{j-1} \|\overline k_i(x) \|_{2} \leq A_p H s{2}^{s} \big[   \overline W ^{(0)}+\overline W ^{(1)} \|x\|_{2}  \big],
                \\  & \big \|\nabla \psi_{j,H}(x) -\mathbb{I}_d\big \|_{op} \leq A_p H \sum_{i=1} ^{j-1} \|\nabla \overline k_i(x) \|_{op} \leq A_p H s 2^s  \overline L ^{(1)},
                \\  & \|\nabla \psi_{j,H}(x) -\mathbb{I}_d\big \|_{\infty} \leq A_p H \sum_{i=1} ^{j-1} \|\nabla \overline k_i(x) \|_{\infty} \leq A_p H s 2^s \overline K ^{(1)} .
            \end{split}
        \end{align*}
    By the assumption on $H$, all these $\psi_{j,H}$'s are diffeomorphisms on $\mathbb{R}^d$. Similar to the proof of \Cref{lemma: Growth of inverse matrix} and \Cref{lemma: Growth of the original phi_h map}, we have that 
        \begin{align*}
            \begin{split}
                & \|\psi_{j,H} ^{-1} (x) -x\|_2 \leq 2\|\psi_{j,H}(x) -x\|_2 \leq 2 A_p H s{2}^{s} \big[   \overline W ^{(0)}+\overline W ^{(1)} \|x\|_{2}  \big] ,
                \\  & \big \|(\nabla \psi_{j,H} ^{-1})(x) -\mathbb{I}_d\big \|_{op} \leq 2 \big \|(\nabla \psi_{j,H} )(x) -\mathbb{I}_d\big \|_{op} \leq 2 A_p H s 2^s  \overline L ^{(1)},
                \\  & \big \|(\nabla \psi_{j,H} ^{-1})(x) -\mathbb{I}_d\big \|_{\infty} \leq 2 \big \|(\nabla \psi_{j,H} )(x) -\mathbb{I}_d\big \|_{\infty} \leq 2 A_p H s 2^s  \overline K ^{(1)}.
            \end{split}
        \end{align*}

One can similarly obtain the estimate for $\| \dM_t (x) - \baM_t(x) \|_{\infty}$. Indeed, similar proof holds true when one replaces $\| \dM_t (x) - \baM_t(x) \|_{\infty}$ with the $j$-th coordinate $|\dM_t ^{(j)} (x) - \baM_t ^{(j)} (x)|$ and replaces those $\| \sV_{t + c_{1} H}(\psi_{1,H}(x)) - V_{t + c_{1} H}(\psi_{1,H}(x)) \|_{\infty}$ with $|\sV_{t + c_{1} H} ^{(j)} (\psi_{1,H}(x)) - V_{t + c_{1} H} ^{(j)} (\psi_{1,H}(x)) |$.
        
\end{proof}

\begin{lemma}\label{lemma: errors of phi_H}
    Use the notation $\mathbb{B} \coloneq \frac{A_p+B_p+1}{B_p+1} \cdot (4\bm \beta+1)$, and take $\eta = 10^{-2} \mathbb{B}^{-1} H^{-1/2}$ for $\bm\beta$ obtained in \Cref{lemma: Growth of interpolation maps}.
      Adopt the assumptions and notations in \Cref{p:ABCD}. Take $H$ small enough such that 
    \begin{align}\label{e:H small logd 1}
        H \leq (2d \log (2d))^{-1} 10^{-6} \mathbb{B}^{-2}
    \end{align}
    and
    \begin{align}\label{e:H small logd 2}
        4000 H \cdot (A_p+B_p + 1)  s{2}^{s} \big[ \widetilde L ^{(1)} +\overline L ^{(1)} +  \overline W ^{(0)} \sqrt{d}D  + (\overline W ^{(0)})^2+ \overline W ^{(1)} d D^2 + d \overline K ^{(1)}   \big] \leq \sigma_{T-t} ^2.
    \end{align}
    Then, we have that
    \begin{align}
        \begin{split}\label{e:goal0}
            &\int_{\mathbb{R}^d} \varrho_t(x)  \big\|    \dphi_H(x) -\baphi_H (x)\big\|_{2}  \de x \\   &\leq C_u H B_p \bigg( \sum_{r=1} ^s \int_{\mathbb{R}^d} \varrho_{t+c_r H} (x) \| \sV_{t + c_{r} H}(x) - V_{t + c_{r} H}(x) \|_2  \de x 
        \\  & \quad +s\pi^{-\frac{d}{2}} e^{-\frac{1}{10^4 \mathbb{B}^{2} H} }\big[(\widetilde W ^{(0)} + \overline W ^{(0)}) + \sqrt{d}D (\widetilde W ^{(1)} + \overline W ^{(1)})\big] \bigg).
        \end{split}
    \end{align}
    Similarly, for $m=1,2$, there is a universal constant $C_u >0$ such that
    \begin{align}
        \begin{split}\label{e:goal1}
            &\int_{\mathbb{R}^d} \varrho_t(x) \|x\|^m_2 \big\|    \dphi_H(x) -\baphi_H (x)\big\|_{2}  \de x \\   &\leq C_u H B_p \bigg( \sum_{r=1} ^s \int_{\mathbb{R}^d} \varrho_{t+c_r H} (x) \|x\|^m_2 \| \sV_{t + c_{r} H}(x) - V_{t + c_{r} H}(x) \|_2  \de x 
        \\  & \quad +s\pi^{-\frac{d}{2}} e^{-\frac{1}{10^4 \mathbb{B}^{2} H} }(\sqrt d D)^m \big[(\widetilde W ^{(0)} + \overline W ^{(0)})^m + \sqrt{d}D (\widetilde W ^{(1)} + \overline W ^{(1)})^{m+1}\big] \bigg)
        \\   &\leq C_u H B_p (\sqrt{d}D)^m\bigg( \sum_{r=1} ^s {\bigg(\int_{\mathbb{R}^d} \varrho_{t+c_r H} (x)  \| \sV_{t + c_{r} H}(x) - V_{t + c_{r} H}(x) \|_2 ^2  \de x \bigg)}^{\frac{1}{2}}
        \\  & \quad +s\pi^{-\frac{d}{2}} e^{-\frac{1}{10^4 \mathbb{B}^{2} H} } \big[(\widetilde W ^{(0)} + \overline W ^{(0)})^m + \sqrt{d}D (\widetilde W ^{(1)} + \overline W ^{(1)})^{m+1} \big]\bigg).
        \end{split}
    \end{align}
\end{lemma}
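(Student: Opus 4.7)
The plan is to reduce both integrals on the left-hand sides of \eqref{e:goal0} and \eqref{e:goal1} to score-error quantities of the form $\int \varrho_{t+c_r H}(y)\,\|\sV_{t+c_rH}(y)-V_{t+c_rH}(y)\|_2\,\rd y$, by first using \Cref{lemma: pth order score error} to turn $\|\dphi_H(x)-\baphi_H(x)\|_2=H\|\dM_t(x)-\baM_t(x)\|_2$ into a sum of score-discrepancies evaluated at $\psi_{r,H}(x)$, and then using \Cref{lemma: score error cut-off general scheme} (combined with the flow identity $\varrho_{t+c_rH}(y)=\varrho_t(\phi_{c_rH}^{-1}(y))\,|\det\nabla\phi_{c_rH}^{-1}(y)|$) to re-center the base measure from $\varrho_t$ onto $\varrho_{t+c_rH}$.

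First, apply \Cref{lemma: pth order score error}, which gives
\begin{align*}
\|\dM_t(x)-\baM_t(x)\|_2 \;\leq\; 2B_p\sum_{r=1}^s \big\|\sV_{t+c_rH}(\psi_{r,H}(x))-V_{t+c_rH}(\psi_{r,H}(x))\big\|_2,
\end{align*}
together with explicit bounds on $\psi_{r,H}$, its inverse, and $|\det\nabla\psi_{r,H}^{-1}|$. For each $r$, performing the change of variables $y=\psi_{r,H}(x)$ rewrites the target integral as $\int_{\mathbb R^d}\varrho_t(\psi_{r,H}^{-1}(y))\,\|\sV_{t+c_rH}(y)-V_{t+c_rH}(y)\|_2\,|\det\nabla\psi_{r,H}^{-1}(y)|\,\rd y$. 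The idea is then to compare this with the same integrand but weighted by $\varrho_t(\phi_{c_rH}^{-1}(y))\,|\det\nabla\phi_{c_rH}^{-1}(y)|=\varrho_{t+c_rH}(y)$. This is precisely where \Cref{lemma: score error cut-off general scheme} applies with $\psi_1=\psi_{r,H}^{-1}$, $\psi_2=\phi_{c_rH}^{-1}$, and $Z(y)=\sV_{t+c_rH}(y)-V_{t+c_rH}(y)$; the growth constants $\bm\al_i,\bm\beta_i,\bm\xi_i$ are controlled by \Cref{lemma: pth order score error} (for $\psi_{r,H}^{-1}$) and by \Cref{lemma: Growth of the original phi_h map} (for $\phi_{c_rH}^{-1}$), all of order $H\cdot(\text{constants depending on }A_p,B_p,s,\overline W^{(\bullet)},\overline K^{(\bullet)},\overline L^{(1)})$. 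Using $\|Z(y)\|_2\leq (\widetilde W^{(0)}+\overline W^{(0)})+(\widetilde W^{(1)}+\overline W^{(1)})\|y\|_2$, we are in the case $m=1$ of \Cref{lemma: score error cut-off general scheme}, and the hypotheses \eqref{e: H in cut-off scheme 1} and \eqref{e: H in cut-off scheme 2} are precisely the hypotheses \eqref{e:H small logd 1} and \eqref{e:H small logd 2} on $H$ after inserting $\mathbb B$ and our choice $\eta=10^{-2}\mathbb B^{-1}H^{-1/2}$. Then \Cref{lemma: exponential error term in cut-off} bounds the residual $E(1,\cdot,\cdot,\eta)$ term by $\pi^{-d/2}e^{-1/(10^4\mathbb B^2 H)}$ times a factor $\lesssim (\widetilde W^{(0)}+\overline W^{(0)})+\sqrt dD(\widetilde W^{(1)}+\overline W^{(1)})$. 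Multiplying by the prefactor $H\cdot 2B_p$ and summing over $r=1,\dots,s$ yields \eqref{e:goal0}.

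For \eqref{e:goal1} the only modification is the extra weight $\|x\|_2^m$, which we carry along through the $\psi_{r,H}$ change of variables: since $\|\psi_{r,H}^{-1}(y)\|_2\leq \|y\|_2+2A_psH2^s(\overline W^{(0)}+\overline W^{(1)}\|y\|_2)\leq 2(\|y\|_2+\sqrt dD)$ under our hypothesis on $H$, we get $\|x\|_2^m\leq C_u(\|y\|_2^m+(\sqrt dD)^m)$. Then we apply the same cut-off comparison to $\widetilde Z(y):=\|y\|_2^m(\sV_{t+c_rH}(y)-V_{t+c_rH}(y))$, which satisfies a bound $\|\widetilde Z(y)\|_2\leq \widetilde A+\widetilde B\|y\|_2^{m+1}$ with $\widetilde A\lesssim (\sqrt dD)^m(\widetilde W^{(0)}+\overline W^{(0)})$ and $\widetilde B\lesssim \widetilde W^{(1)}+\overline W^{(1)}$; this falls in the $m+1\in\{2,3\}$ branch of \Cref{lemma: exponential error term in cut-off}, giving the claimed exponential tail $\pi^{-d/2}e^{-1/(10^4\mathbb B^2H)}$ with the stated polynomial prefactor. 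The final line of \eqref{e:goal1} then follows from Cauchy--Schwarz,
\begin{align*}
\int\varrho_{t+c_rH}(y)\|y\|_2^m\|\sV_{t+c_rH}(y)-V_{t+c_rH}(y)\|_2\rd y
\leq \Big(\int\varrho_{t+c_rH}\|y\|_2^{2m}\Big)^{\frac12}\Big(\int\varrho_{t+c_rH}\|\sV_{t+c_rH}-V_{t+c_rH}\|_2^2\Big)^{\frac12},
\end{align*}
combined with the moment bound $\int\varrho_{t+c_rH}\|y\|_2^{2m}\leq C(m)(\sqrt dD)^{2m}$ from \Cref{lemma: moments of L^2 and infinity norms}.

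The main obstacle is bookkeeping: verifying that the composite growth constants of $\psi_{r,H}^{-1}$ and $\phi_{c_rH}^{-1}$ are dominated by $\mathbb B$ so that the hypotheses of \Cref{lemma: score error cut-off general scheme} and \Cref{lemma: exponential error term in cut-off} reduce cleanly to the two stated conditions \eqref{e:H small logd 1} and \eqref{e:H small logd 2}, and then checking that the exponential tail factor coming from $\eta=10^{-2}\mathbb B^{-1}H^{-1/2}$ comes out with exponent $-1/(10^4\mathbb B^2H)$ uniformly in $r$. Once these constants are aligned, the remainder is a direct assembly of the three lemmas.
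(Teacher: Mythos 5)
Your proposal is correct and follows essentially the same route as the paper: decompose via \Cref{lemma: pth order score error}, change variables to move the weight onto $\psi_{r,H}^{-1}$, re-center the base measure from $\varrho_t\circ\psi_{r,H}^{-1}$ to $\varrho_t\circ\phi_{c_rH}^{-1}\cdot|\det\nabla\phi_{c_rH}^{-1}|=\varrho_{t+c_rH}$ via \Cref{lemma: score error cut-off general scheme} and \Cref{lemma: Growth of the original phi_h map}, control the tail with \Cref{lemma: exponential error term in cut-off}, and finish with Cauchy--Schwarz plus \Cref{lemma: moments of L^2 and infinity norms}. The only detail worth flagging is that for the weighted case you package $\widetilde Z(y)=\|y\|_2^m(\sV-V)$ into a single degree-$(m+1)$ bound, whereas the paper splits $\|x\|_2^m\|\sV-V\|_2$ into a degree-$m$ and degree-$(m+1)$ piece, producing the pair $E(m,\cdot)$ and $E(m+1,\cdot)$; both lead to the same final tail.
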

\begin{proof}
First, the assumptions \eqref{e:H small logd 1} and \eqref{e:H small logd 2} enable us to use \Cref{lemma: pth order score error}, \Cref{lemma: score error cut-off general scheme}, and \Cref{lemma: exponential error term in cut-off}.

Then, by \Cref{lemma: pth order score error}, we have
\begin{align}         
\begin{split}\label{e:start}
&\int_{\mathbb{R}^d} \varrho_t(x) \|x\|_2^m  \big\|    \dphi_H(x) -\baphi_H (x)\big\|_{2}  \de x \\ 
 &\leq      
 2 H B_p\int_{\mathbb{R}^d} \varrho_t(x) \|x\|_2^m  \bigg[ \| \sV_{t + c_s H}(\psi_{s,H}(x)) - V_{t + c_s H}(\psi_{s,H}(x)) \|_2 \\ & \quad +  \| \sV_{t + c_{s-1} H}(\psi_{s-1,H}(x)) - V_{t + c_{s-1} H}(\psi_{s-1,H}(x)) \|_2
                \\  &\quad +  \| \sV_{t + c_{s-2} H}(\psi_{s-2,H}(x)) - V_{t + c_{s-2} H}(\psi_{s-2,H}(x)) \|_2
                \\  &\quad + \cdots +   \| \sV_{t + c_{1} H}(\psi_{1,H}(x)) - V_{t + c_{1} H}(\psi_{1,H}(x)) \|_2\bigg] \de x.
            \end{split}
        \end{align}
The right-hand side of \eqref{e:start} is a sum of $s$ terms, and they can be analyzed in the same way. So, we only study the term corresponding to $\| \sV_{t + c_{1} H}(\psi_{1,H}(x)) - V_{t + c_{1} H}(\psi_{1,H}(x)) \|_2$. We notice that $\varrho_{t+c_1 H} (x) = \varrho_{t} (\phi_{c_1 H} ^{-1}(x)) \cdot  \det[ \nabla (\phi_{c_1 H} ^{-1}(x))] $. The estimates for $\phi_{c_1 H} ^{-1}(x)$ are in \Cref{lemma: Growth of the original phi_h map}. Those determinants term can be bounded by universal constants according to our assumption on $H$.  By change of variables,  Lemma~\ref{lemma: Growth of the original phi_h map},  Lemma~\ref{lemma: Growth of inverse matrix} and Lemma~\ref{lemma: pth order score error}, we see that for $m\geq 1$
\begin{align}
    \begin{split}\label{e:start1}
        &\int_{\mathbb{R}^d} \varrho_t(x)  \|x\|^m_2 \| \sV_{t + c_{1} H}(\psi_{1,H}(x)) - V_{t + c_{1} H}(\psi_{1,H}(x)) \|_2  \de x
        \\  &\leq 4\int_{\mathbb{R}^d} \varrho_t(\psi_{1,H} ^{-1}(x)) \|x\|^m_2 \| \sV_{t + c_{1} H}(x) - V_{t + c_{1} H}(x) \|_2  \de x 
        \\  &\leq 24\int_{\mathbb{R}^d} \varrho_t(\phi_{c_1 H} ^{-1}(x))  \cdot  \det[ \nabla (\phi_{c_1 H} ^{-1}(x))] \cdot \|x\|^m_2 \| \sV_{t + c_{1} H}(x) - V_{t + c_{1} H}(x) \|_2  \de x 
        \\  & \quad 
        + 32 E(m,0,(\widetilde W ^{(0)} + \overline W ^{(0)}),\eta) + 200 E(m+1,0,(\widetilde W ^{(1)} + \overline W ^{(1)}),\eta) 
        \\  &=24\int_{\mathbb{R}^d} \varrho_{t+c_1 H} (x) \|x\|^m_2 \| \sV_{t + c_{1} H}(x) - V_{t + c_{1} H}(x) \|_2  \de x 
        \\  & \quad +  32 E(m,0,(\widetilde W ^{(0)} + \overline W ^{(0)}),\eta) + 200 E(m+1,0,(\widetilde W ^{(1)} + \overline W ^{(1)}),\eta) .
    \end{split}
\end{align}
where the first inequality is by change of variables and the growth estimate on $\psi_{1,H} ^{-1}(x)$ in Lemma~\ref{lemma: pth order score error}, and the second inequality is by Lemma~\ref{lemma: score error cut-off general scheme} because $\| \sV_{t + c_{1} H}(x) - V_{t + c_{1} H}(x) \|_2 \leq (\widetilde W ^{(0)} + \overline W ^{(0)}) + (\widetilde W ^{(1)} + \overline W ^{(1)})\|x\|_2$, and we use the growth estimates for  $\psi_{1,H} ^{-1}(x)$ and $\phi_{c_1 H} ^{-1}(x)$ in Lemma~\ref{lemma: pth order score error} and \Cref{lemma: Growth of the original phi_h map}.
Similarly, for $m=0$, we have
\begin{align}\label{e:start2}
    \begin{split}
        &\int_{\mathbb{R}^d} \varrho_t(x)   \| \sV_{t + c_{1} H}(\psi_{1,H}(x)) - V_{t + c_{1} H}(\psi_{1,H}(x)) \|_2  \de x
       \\  &\leq 24\int_{\mathbb{R}^d} \varrho_{t+c_1 H} (x) \| \sV_{t + c_{1} H}(x) - V_{t + c_{1} H}(x) \|_2  \de x 
        + 32 E(1,(\widetilde W ^{(0)} + \overline W ^{(0)}),(\widetilde W ^{(1)} + \overline W ^{(1)}),\eta)  .
   \end{split}
\end{align}
Hence, by our choice of $\eta$ in the assumption (as in Lemma~\ref{lemma: exponential error term in cut-off}), after plugging \eqref{e:start1} and \eqref{e:start2} into \eqref{e:start}, we conclude that there is a universal constant $C_u >0$ such that for $m=0$
   \begin{align}
        \begin{split}\label{e:start3}
            &\int_{\mathbb{R}^d} \varrho_t(x) \big\|  \baphi_H (x) - \dphi_H(x) \big\|_{2}  \de x \leq C_u H B_p \bigg( \sum_{r=1} ^s \int_{\mathbb{R}^d} \varrho_{t+c_r H} (x) \| \sV_{t + c_{r} H}(x) - V_{t + c_{r} H}(x) \|_2  \de x 
        \\  & \quad +s \pi^{-\frac{d}{2}} e^{-\frac{1}{10^4 \mathbb{B}^{2} H} }  \big[(\widetilde W ^{(0)} + \overline W ^{(0)}) + \sqrt{d}D{(\widetilde W ^{(1)} + \overline W ^{(1)})} \big]\bigg).
        \end{split}
    \end{align}
The claim \eqref{e:goal0} follows.   
Similarly, for $m\geq 1$, we have 
    \begin{align}
        \begin{split}\label{e:start4}
            &\int_{\mathbb{R}^d} \varrho_t(x) \|x\|_2^m \big\|  \baphi_H (x) - \dphi_H(x) \big\|_{2}  \de x \leq C_u H B_p \bigg( \sum_{r=1} ^s \int_{\mathbb{R}^d} \varrho_{t+c_r H} (x) \|x\|_2^m  \| \sV_{t + c_{r} H}(x) - V_{t + c_{r} H}(x) \|_2  \de x 
        \\  & \quad +s\pi^{-\frac{d}{2}} e^{-\frac{1}{10^4 \mathbb{B}^{2} H} }(\sqrt d D)^m \big[(\widetilde W ^{(0)} + \overline W ^{(0)})^m + \sqrt{d}D (\widetilde W ^{(1)} + \overline W ^{(1)})^{m+1}\big] \bigg).
        \end{split}
    \end{align}
    Moreover, H{\"o}lder inequality  and \Cref{lemma: moments of L^2 and infinity norms} imply
\begin{align}\begin{split}\label{e:start5}
    &\int_{\mathbb{R}^d} \varrho_{t+c_r H} (x) \|x\|_2^m  \| \sV_{t + c_{r} H}(x) - V_{t + c_{r} H}(x) \|_2  \de x\\
    &\leq \left(\int_{\mathbb{R}^d} \varrho_{t+c_r H} (x) \|x\|_2^{2m}  \de x  \right)^{1/2}\left(\int_{\mathbb{R}^d} \varrho_{t+c_r H} (x)  \| \sV_{t + c_{r} H}(x) - V_{t + c_{r} H}(x) \|^2_2  \de x \right)^{1/2}\\
    &\leq C_u (\sqrt d D)^m\left(\int_{\mathbb{R}^d} \varrho_{t+c_r H} (x) \| \sV_{t + c_{r} H}(x) - V_{t + c_{r} H}(x) \|^2_2  \de x \right)^{1/2}.
\end{split}\end{align}  
    The statement \eqref{e:goal1} follows from combining \eqref{e:start4} and \eqref{e:start5}.

\end{proof}

\subsection{Proof of \Cref{Prop: Conclusion second error term}}

\begin{proof}[Proof of \Cref{Prop: Conclusion second error term}]
    The proof is basically by combining the estimates in this section and use the fact that $\sigma_{T-t}^2 \geq C_u (T-t) \geq C_u \tau$. We recall from \Cref{lemma: Second derivative bound on errors}, 
     \begin{align}\label{e:I2bb3}
            \begin{split}
                &\sum_{i=1} ^d \sum_{j=1} ^d \int_{\mathbb{R}^d} \varrho_t(x) \big| \partial_i \dphi_H ^{(j)} (x) - \partial_i \baphi_H ^{(j)} (x) \big| \de x  
            \\  &\leq \frac{C_u d}{\tau}  \int_{\mathbb{R}^d} \varrho_t(x)  \|x\|_2  \cdot \|  \dphi_H  (x) -  \baphi_H (x) \|_2 \de x + \frac{C_u d^{3/2}D}{\tau}  \int_{\mathbb{R}^d} \varrho_t(x)   \cdot \|  \dphi_H  (x) -  \baphi_H (x) \|_2 \de x \\
            &+\frac{C_u  d}{\tau} {\bigg (  \int_{\mathbb{R}^d}  \varrho_t(x) \| (\dphi_H (x) -  \baphi_H  (x)) \|_{2} \de x \bigg)}^{\frac{1}{2}}{\bigg ( \int_{\mathbb{R}^d}   \varrho_t(x) \|x\|_2 ^2 \|\dphi_H  (x) -  \baphi_H (x) \|_2 \de x \bigg)}^{\frac{1}{2}} \\
            &+\frac{C(\widetilde K) {H}^{\frac{1}{2}} d^{\frac{7}{4}} {B_p}^{\frac{1}{2}}D^{3/2}}{\tau^{3/2}} {\bigg (  \int_{\mathbb{R}^d}  \varrho_t(x) \| (\dphi_H (x) -  \baphi_H  (x)) \|_{2} \de x \bigg)}^{\frac{1}{2}}, 
            \end{split}
        \end{align}
    where we used H{\"o}lder's inequality and \Cref{lemma: Constant Convention1} to simplify the expression. 

    We recall the bound $\|\nabla^2 \baphi_H \|_{\infty} \leq 48 H B_p s^3 4^s D^3/\tau^3$ from \Cref{lemma: Constant Convention} and \Cref{lemma: Constant Convention1}. By our assumption \eqref{e:Hcondition}, and $\overline{K}^{(1)}=2D^2\tau^{-2}$ from \eqref{e:tbound2}, we have $B_p dH\overline{K}^{(1)}=2B_pdHD^2\tau^{-2}\leq 1$.
    Using \eqref{e:I2bb3}, we can estimate the first three terms in \eqref{e:I2bb} as
    \begin{align}\begin{split}
        &C(s, B_p, \widetilde K)\Bigg[\frac{d^{3/2} D}{\tau}
        \int_{\mathbb{R}^d} \varrho_t (x) \cdot  \| \baphi_H(x) - \dphi_H(x) \|_2   \de x +    \frac{ d}{\tau}
        \int_{\mathbb{R}^d} \varrho_t (x) \cdot \|x\|_2  \cdot \| \baphi_H(x) - \dphi_H(x) \|_2   \de x\\
        &+\frac{  d}{\tau} {\bigg (  \int_{\mathbb{R}^d}  \varrho_t(x) \| (\dphi_H (x) -  \baphi_H  (x)) \|_{2} \de x \bigg)}^{\frac{1}{2}}{\bigg ( \int_{\mathbb{R}^d}   \varrho_t(x) \|x\|_2 ^2 \|\dphi_H  (x) -  \baphi_H (x) \|_2 \de x \bigg)}^{\frac{1}{2}} \\
            &+\frac{{H}^{\frac{1}{2}} d^{\frac{7}{4}} D^{3/2}}{\tau^{3/2}} {\bigg (  \int_{\mathbb{R}^d}  \varrho_t(x) \| (\dphi_H (x) -  \baphi_H  (x)) \|_{2} \de x \bigg)}^{\frac{1}{2}}\Bigg], 
    \end{split}\end{align}
where $C(s, B_p, \widetilde K)$ is a constant depending only on $s, B_p, \widetilde K$. Also, the exponential term  $e^{-\frac{1}{10^4 (4\bm\beta)^{2} H} }$ in \eqref{e:I2bb2} can be replaced by $e^{-\frac{1}{10^4 (\mathbb{B})^{2} H} }$ because $\mathbb{B} \geq 4\bm\beta$. Putting together \eqref{e:I2bb}, \eqref{e:I2bb2}, \Cref{lemma: errors of phi_H} and \eqref{e:I2bb3}, we conclude:
        \begin{align}\label{e:final1}
        \begin{split}
        &\int_{\mathbb{R}^d} | \varrho_{t} (\dphi_H ^{-1}(x)) \cdot | \det[ \nabla (\dphi_H ^{-1}(x))]| -  \varrho_{t} (\baphi_H ^{-1}(x)) \cdot | \det[ \nabla (\baphi_H ^{-1}(x))]| | \de x
            \\  & \leq  C(s, B_p, \widetilde K, \widetilde W) H  d^{\frac{3}{2}} \tau^{-1} D\bigg[   \bigg( \sum_{r=1} ^s {\bigg(\int_{\mathbb{R}^d} \varrho_{t+c_r H} (x)  \| \sV_{t + c_{r} H}(x) - V_{t + c_{r} H}(x) \|_2 ^2  \de x \bigg)}^{\frac{1}{2}}\bigg)   
        \\  &\quad + d^{\frac{1}{4}} D^{\frac{1}{2}} \tau^{-\frac{1}{2}}    \bigg( \sum_{r=1} ^s {\bigg(\int_{\mathbb{R}^d} \varrho_{t+c_r H} (x)  \| \sV_{t + c_{r} H}(x) - V_{t + c_{r} H}(x) \|_2 ^2  \de x \bigg)}^{\frac{1}{4}} \bigg)
      +  \tau^{-4} d^2 D^6    {\pi}^{-d/4} e^{-\frac{1}{2\cdot 10^4 (\mathbb{B})^{2} H} }  \bigg],
        \end{split}
    \end{align}
where $C(s, B_p, \widetilde K, \widetilde W)$ is a constant depending only on $s, B_p, \widetilde K, \widetilde W$.

To further adopt the second assumption \eqref{e:score_bound} in \Cref{t:RK_1step} and to simplify our notations, we denote 
    \begin{align*}
        \epsilon_{r} \coloneq \int_{\mathbb{R}^d} \varrho_{t + c_{r} H} (x)  \| \sV_{t + c_{r} H}(x) - V_{t + c_{r} H}(x) \|_2 ^2  \de x , \quad \eta_{r} \coloneq \sigma_{T-(t +c_r H)} ^2.
    \end{align*}
According to the above inequality \eqref{e:final1}, we see that
    \begin{align*}
        \begin{split}
             &\sum_{r=1} ^s \epsilon_{r} ^{\frac{1}{2}} =  \sum_{r=1} ^s \left(\eta_{r} \epsilon_{r} \right) ^{\frac{1}{2}} \eta_{r} ^{-\frac{1}{2}} \leq \left(\sum_{r=1} ^s \eta_{r} \epsilon_{r} \right) ^{\frac{1}{2}} \left(\sum_{r=1} ^s \eta_{r} ^{-1} \right) ^{\frac{1}{2}}  \leq \varepsilon_{\rm score}(t) s ^{\frac{1}{2}} \tau ^{-\frac{1}{2}},
        \end{split}
    \end{align*}
where in the last inequality, we used the second assumption in \Cref{t:RK_1step} and the fact that $\sigma_t ^2 \geq \tau$ for any $t \in [\tau,T]$.
Similarly, 
    \begin{align*}
        \sum_{r=1} ^s \epsilon_{r} ^{\frac{1}{4}}
        \leq (\varepsilon_{\rm score} (t)) ^{\frac{1}{2}} s^{\frac{3}{4}} \tau ^{-\frac{1}{4}}.
    \end{align*}
Hence, there is a $C_{\rm disc} $ depending on $A_p, B_p,  s, p, D,\widetilde K, \widetilde W$, such that 
    \begin{align*}
        \begin{split}
        &\int_{\mathbb{R}^d} | \varrho_{t} (\dphi_H ^{-1}(x)) \cdot | \det[ \nabla (\dphi_H ^{-1}(x))]| -  \varrho_{t} (\baphi_H ^{-1}(x)) \cdot | \det[ \nabla (\baphi_H ^{-1}(x))]| | \de x
            \\  & \leq C_{\rm disc}  \tau^{-2}  H\bigg[  d^{\frac{7}{4}}   \cdot (\varepsilon_{\rm score}(t)) ^{\frac{1}{2}} +  \tau^{-3}  {\pi}^{-\frac{d}{10}} e^{-\frac{1}{2\cdot 10^4 (\mathbb{B})^{2} H} }   \bigg] .
        \end{split}
    \end{align*}
Finally, because $\mathbb{B} = \frac{A_p+B_p+1}{B_p+1} \cdot (4\bm \beta+1)$ and $\bm \beta \leq 2B_p s 2^s (\widetilde W ^{(1)} + \overline W ^{(1)})$ according to \Cref{lemma: Growth of interpolation maps} and \Cref{lemma: Constant Convention}, we see that $2\cdot 10^4 (\mathbb{B})^{2} \leq \xi$.
This finishes  the proof of \Cref{Prop: Conclusion second error term}.
\end{proof}

%----------------------------------
\section{Proof of \Cref{Prop: Conclusion third error term}}\label{Section: third term conclusion}

In this section, we prove \Cref{Prop: Conclusion third error term}. Notice that we already assumed that $t+H \leq T- \tau$ and $t,H$ are fixed. We will also use the estimates for $\overline K ^{(1)}$ and $\overline L ^{(1)}$ as shown in~\Cref{p:ABCD}.

We recall $I_3$ from \eqref{e:defI3} that
    \begin{align}\label{e:I3_copy}
       I_3 = \int_{\mathbb{R}^d} |\overline \varrho_{t+H}(x)-\varrho_{t+H}(x)| \de x,
    \end{align}
where $\overline \varrho_{t+H}(x)$ is as defined in \eqref{e:brhoth}.

For any $0\leq h\leq H$, we introduce the following function $F_h(x)$
    \begin{align}\label{e:Phitmap0}
        F_h(x) \coloneq x + h \sum_{j=1}^s b_j(h)  k_j(x),
    \end{align}
with
    \begin{align}\begin{split}\label{e:RK2}
    &k_1(x) = V_{t + h c_1 }(x),\\
    & k_2(x) =  V_{t + h c_2 }\bigl(x + h(a_{21}(h) k_1(x)) \bigr),\\
    & k_3(x) = V_{t + h c_3 }\bigl(x + h (a_{31}(h) k_1 + a_{32}(h) k_2(x)) \bigr),\\
    &\qquad\qquad\qquad\vdots
    \\
    & k_s(x) =  V_{t + h c_s }\bigl(x + h (a_{s1}(h) k_1(x) + a_{s2}(h) k_2 + \cdots + a_{s,s-1}(h) k_{s-1}(x)) \bigr).
\end{split}\end{align}
We remark that these $k_i(x)$'s depend on $h$, and they are different from the $\overline k_i(x)$'s we defined in \eqref{e: p RK interpolation terms}, because now we use a varying $h$ instead of a fixed $H$. But we see that $\baphi_H(x) = F_H(x)$, and $\overline{\varrho}_{t+H}$ is also the pushforward of the law $\varrho_t$ by $F_H(x)$. Denote
\begin{align}\label{e:defbaY}
    \overline{Y}_{t+h}=F_h(Y_t), \quad 0\leq h\leq H, \quad \overline{Y}_{t}=Y_t,
\end{align}
then $\overline{\varrho}_{t+H}$ is the law of $\overline{Y}_{t+H}$. Moreover, $k_i(x)$ enjoys the same estimates as $\bar k_i(x)$ as in \Cref{lemma: infinity norm pth order growth in RK}, \Cref{lemma: pth order growth in RK} and \Cref{lemma: infinity norm for k_i(x)} with $H$ replaced by $h$.
    % \begin{align*}
    %    \int_{\mathbb{R}^d}(\varrho_{t} (\baphi_H ^{-1}(x)) \cdot | \det[ \nabla (\baphi_H ^{-1}(x))]| -  \varrho_{t} (\phi_H ^{-1}(x)) \cdot | \det[ \nabla (\phi_H ^{-1}(x))]| )  \de x = \int_{\mathbb{R}^d}( \overline \varrho_{t+H}(x)-\varrho_{t+H}(x)) \de x.
   % \end{align*}
   
   Because $V_{t+h}(\cdot)$ is differentiable in $h$ on $[0,H]$, one can see from the above construction \eqref{e:RK2}, that $F_h(\cdot)$ is also differentiable in $h$. By taking $h$ derivative on both sides of \eqref{e:defbaY}
\begin{align}\label{e:dttY}
    \del_h \overline Y_{t+h} =\del_h F_h(\overline Y_{t}),\quad 0\leq h\leq H.
\end{align}

Let us first assume that for all $0\leq h\leq H$, $F_h$ is invertible, which will be verified later in \Cref{lemma: high_order_error}. Let 
\begin{align}\label{e:discreteODE}
    \overline V_{t+h}(x)=(\partial_h F_h)(F_h ^{-1}(x)),
\end{align}
then we can rewrite \eqref{e:dttY} as the following ODE flow
\begin{align}\label{e:dttY2}
     \del_h \overline Y_{t+h} = \overline V_{t+h}(\overline Y_{t+h}),\quad 0\leq h\leq H.
\end{align}
We also recall the ODE flow for $Y_{t+h}$ from \eqref{eq:reverse-ode}
\begin{align}\label{e:Ytflow}
     \del_h  Y_{t+h} =V_{t+h}( Y_{t+h}),\quad 0\leq h\leq H.
\end{align}
Then, according to \Cref{theorem: L^1 error}, using \eqref{e:dttY2} and \eqref{e:Ytflow} we can bound the total variation distance in \eqref{e:I3_copy}, as 
    \begin{align}\label{e:I3_copy2}
        I_3\leq \int_{0} ^H \int_{\mathbb{R}^d}  \bigg|  \nabla \cdot( (\varrho_{t+h}(x)(V_{t+h}(x) - \overline V_{t+h}(x) )) \bigg|\de x \de h.
    \end{align}

The following lemma states that for $H$ small enough, for any $0\leq h\leq H$, $F_h$ is invertible, and $\overline V_{t+h}(\cdot)$ is close to $V_{t+h}(\cdot)$ up to an error of size $\cO(H^p)$.

\begin{lemma}\label{lemma: high_order_error}
Adopt the assumptions in \Cref{Prop: Conclusion third error term}.
Denote $B:=1+A_p+B_p$. There exists a large constant $C=C(p,s, B)$, such that if $20 s 2^sBHd D^2 \tau^{-2}\leq 1$, then the following holds. For any $0\leq h\leq H$, $F_h$
    is a global diffeomorphism from $\bR^d$ to $\bR^d$. We denote its functional inverse as $F^{-1}_{h}(x)$, and
    \begin{align}\label{e:def_wtV}
      \overline V_{t+h}(x)=  (\del_h F_h)(F_h ^{-1} (x)).
    \end{align}
Then, for any $0\leq h\leq H$
    \begin{align}\label{e:stdiff}
        \begin{split}\|\overline V_{t+h}(x)- V_{t+h}(x)\|_{\infty} &\leq C\Big[  {(H\sqrt{d}(\|x\|_{2} + \sqrt{d} D))}^p {(\tau^{-1})}^{2^{p+1} p! + 2p} {(\|x\|_{\infty} +D)}^{3^{p+1} p!}\\
        &+ {(H\sqrt{d}(\|x \|_{2} + \sqrt{d} D))}^{p+1} {(\tau^{-1})}^{2^{p+2} (p+1)! + 2(p+1)} {(\|x\|_{\infty} +D)}^{3^{p+2} (p+1)!}\Big],
   \end{split} \end{align}
   and the same estimate holds for $ \|\nabla (\overline V_{t+h}(x)- V_{t+h}(x))\|_{\infty}$.
\end{lemma}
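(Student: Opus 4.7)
The core idea is to recognize that $F_h(x)$ is precisely the $p$-th order Runge--Kutta scheme applied, with step size $h$, to the \emph{exact} velocity field $V_{t+\cdot}$; consequently, $F_h$ should approximate the true flow $\phi_h$ with local error $\mathcal{O}(h^{p+1})$, and this can be converted into a bound on $\overline V_{t+h} - V_{t+h}$. I will therefore: (1) establish invertibility of $F_h$, (2) derive Taylor expansions in $h$ for $F_h(y)$ and $\phi_h(y)$ and match $p+1$ orders using the RK order conditions, then (3) translate the flow--level error into the claimed pointwise velocity bound.

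\textbf{Step 1: Invertibility of $F_h$.} Writing $F_h(x)=x+h\sum_{j=1}^s b_j(h) k_j(x)$, the $k_j(x)$ satisfy the same growth estimates as the $\overline k_j(x)$ of \Cref{lemma: pth order growth in RK} (with $H$ replaced by $h$), because only the coefficients $a_{jk}(h),b_j(h)$ now depend on $h$ and these are bounded by \Cref{a:bounds on RK}. Combined with $\overline{L}^{(1)},\overline{K}^{(1)}=\mathcal{O}(dD^2\tau^{-2})$ from \Cref{p:ABCD}, the smallness hypothesis $20 s 2^s BhdD^2\tau^{-2}\le 1$ gives $\|\nabla F_h(x)-\mathbb{I}_d\|_{\op}\le 1/2$ and the coercivity $\|F_h(x)\|_2\to\infty$ as $\|x\|_2\to\infty$, so the Hadamard--Cacciopoli theorem (as in \Cref{lemma: phi_h invertible}) yields that $F_h$ is a global diffeomorphism and furnishes the bound $\|(\nabla F_h)^{-1}\|_{\op}\le 2$ that we will need later.

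\textbf{Step 2: Local truncation error.} Fix $y\in\bR^d$ and expand both $h\mapsto F_h(y)$ and $h\mapsto \phi_h(y)$ in Taylor series around $h=0$ to order $p+1$. By the classical RK order conditions (satisfied by construction of the $p$-th order Butcher tableau in \Cref{tab:butcher-specific}) the first $p$ derivatives in $h$ agree, so
\begin{align*}
F_h(y)-\phi_h(y)=\mathcal{O}(h^{p+1}),\qquad \del_h F_h(y)-V_{t+h}(\phi_h(y))=\mathcal{O}(h^p).
\end{align*}
The explicit constants are obtained by iterated differentiation of $\del_h\phi_h=V_{t+h}(\phi_h)$: each $h$-derivative produces, via the chain rule and the forward Fokker--Planck identity, a sum of terms each of which differentiates $V_{t+h}$ once more in either $t$ or $x$ and multiplies by one more copy of $V_{t+h}$. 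Using \Cref{Lemma: Hessian estimates} (each spatial or time derivative costs $\tau^{-1}$, each factor of $V_{t+h}$ costs $\tau^{-1}(\|x\|_2+\sqrt{d}D)$) and \Cref{a:bounds on RK} to handle $\partial_h^\ell a_{jk}, \partial_h^\ell b_j$ on the $F_h$ side, a crude inductive bound of the form $\mathcal{B}_{k+1}\le C\tau^{-1}(\|x\|_\infty+D)\mathcal{B}_k^{\,2}$ for the $k$-th derivative produces the claimed factorial powers $2^{p+1}p!, 3^{p+1}p!$ after $p+1$ iterations.

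\textbf{Step 3: Passage from flow to velocity.} Setting $y\coloneq F_h^{-1}(x)$ in the definition \eqref{e:def_wtV},
\begin{align*}
\overline V_{t+h}(x)-V_{t+h}(x)
=\bigl[\del_h F_h(y)-V_{t+h}(\phi_h(y))\bigr]+\bigl[V_{t+h}(\phi_h(y))-V_{t+h}(F_h(y))\bigr].
\end{align*}
The first bracket is $\mathcal{O}(h^p)$ by Step 2; the second is bounded by $\|\nabla V_{t+h}\|_\infty\cdot\|\phi_h(y)-F_h(y)\|=\mathcal{O}(h^{p+1})$ via the mean value theorem, again using \Cref{p:ABCD} to control $\|\nabla V_{t+h}\|_\infty$. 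Substituting $y=F_h^{-1}(x)$ and using $\|y\|_2\lesssim \|x\|_2+\sqrt{d}D$ (from Step 1) yields \eqref{e:stdiff}. The analogous bound on $\|\nabla(\overline V_{t+h}-V_{t+h})\|_\infty$ is obtained by differentiating the above identity once in $x$: the derivative of $F_h^{-1}$ is controlled via Step 1, and each extra spatial derivative of $V_{t+h}$ costs only an additional factor $\tau^{-1}$, which is absorbed into the exponents already appearing in \eqref{e:stdiff}.

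\textbf{Main obstacle.} The conceptual picture above is standard numerical ODE theory; the difficulty is entirely in \emph{tracking constants}. The double-exponential-in-$p$ powers $(\tau^{-1})^{2^{p+1}p!+2p}$ and $(\|x\|_\infty+D)^{3^{p+1}p!}$ are delicate, because each time-derivative of $\phi_h$ generates, via the chain rule, both extra spatial derivatives of $V_{t+h}$ (each producing a factor $\tau^{-1}$) and multiplicative factors of $V_{t+h}$ itself (each producing a factor $\tau^{-1}(\|x\|+\sqrt{d}D)$). Bookkeeping these through $p+1$ iterations, while simultaneously handling the $h$-dependence of the RK coefficients on the $F_h$ side (where \Cref{a:bounds on RK} supplies one derivative's worth of $\tau^{-1}$ per application of $\partial_h$), is the technically demanding part. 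A clean recursion, rather than direct combinatorial enumeration, seems essential to obtain bounds of the stated form without extraneous factors.
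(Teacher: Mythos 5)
Your proposal is correct in outline but departs from the paper's route in a way worth noting. Both proofs begin identically: invertibility of $F_h$ via the Jacobian bound and Hadamard--Cacciopoli, and matching of the first $p$ Taylor coefficients at $h=0$ via the RK order conditions. The divergence is in how the remainder is organized. The paper, after using the exact flow $\Phi_h$ only to \emph{verify} the Taylor matching, writes the error directly as
$V_{t+h}(F_h(y)) - \overline V_{t+h}(F_h(y)) = R_1(h,y) - R_2(h,y)$,
with both remainders expressed as integral forms of $\del_\tau^p V_{t+\tau}(F_\tau(y))$ and $\del_\tau^p \overline V_{t+\tau}(F_\tau(y))$, i.e.\ derivatives \emph{along the explicit polynomial-in-$h$ trajectory} $F_\tau(y)$. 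This makes \Cref{lemma:V derivative bound} tractable: the set $\cD_m$ has a simple closed form because each $\del_h$ hitting $F_h=y+h\sum_j b_j(h)k_j(y)$ produces terms with explicit structure. Your decomposition, by contrast, adds and subtracts $V_{t+h}(\phi_h(y))$ and thereby re-introduces the exact flow into the remainder bounds: the first bracket needs $\del_h^{p+1}\phi_h(y)=\del_h^p V_{t+h}(\phi_h(y))$, and the second bracket needs $\|\phi_h(y)-F_h(y)\|$, which again requires $\del_h^{p+1}\phi_h$. Controlling these is doable via the variational recursion for $\phi_h$ (using \Cref{Lemma: Hessian estimates}, \Cref{lemma: time-space derivatives log qt}, and the growth estimates of \Cref{lemma: Growth of the original phi_h map}), but it is a separate inductive argument that does not piggyback on \Cref{lemma:V derivative bound}, and you would need an $\ell^\infty$ growth estimate for $\phi_h(y)$ that the paper never explicitly states for $\phi_h$. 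One smaller caveat: the quadratic recursion $\mathcal B_{k+1}\le C\tau^{-1}(\|x\|_\infty+D)\mathcal B_k^2$ that you sketch does not actually produce the paper's exponents $2^{p+1}p!$, $3^{p+1}p!$ — it would give exponents of order $2^p$ rather than $2^{p+1}p!$, because the true Fa\`a di Bruno expansion of $\del_h^{k}V(\phi_h)$ has $\mathcal O(k!)$ terms with products of varying degree, not a simple doubling. This does not break the argument (the theorem only requires some $\gamma_1,\gamma_2$ depending on $p$), but your proof would not recover the specific constants in the statement; the paper's $\cD_m$ enumeration is precisely what pins them down.
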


Before we give the proof for \Cref{lemma: high_order_error}, we need another lemma.

\begin{lemma}\label{lemma:V derivative bound}Adopt the assumptions in \Cref{Prop: Conclusion third error term}.
Denote $B:=1+A_p+B_p$. There exists a large constant $C=C(p,s, B)$, such that if $20 s 2^sBHd D^2 \tau^{-2}\leq 1$, then the following holds for any $0 \leq h \leq H$.
    \begin{align}\begin{split}\label{e:dtVbound}
        \|\del_h^p \baV_{t+h}(F_h(x))\|_\infty, \ 
         \|\del_h^p\nabla \baV_{t+h}(F_h(x))\|_\infty&\leq \frac{C  {[\sqrt{d}(\|x\|_{2} + \sqrt{d} D)]}^p  {(\|x\|_{\infty} +D)}^{3^{p+1} p!}}{\tau^{2^{p+1} p! + 2p}}\\
         &+Ch \frac{  {[\sqrt{d}(\|x\|_{2} + \sqrt{d} D)]}^{p+1}  {(\|x\|_{\infty} +D)}^{3^{p+2} (p+1)!}}{\tau^{2^{p+2} (p+1)! + 2(p+1)}},
    \end{split}\end{align}
and 
\begin{align}\begin{split}\label{e:dhVbound}
        \|\del_h^p V_{t+h}(F_h(x))\|_\infty , \ 
         \|\del_h^p\nabla V_{t+h}(F_h(x))\|_\infty
         &\leq \frac{C  {[\sqrt{d}(\|x\|_{2} + \sqrt{d} D)]}^p  {(\|x\|_{\infty} +D)}^{3^{p+1} p!}}{\tau^{2^{p+1} p! + 2p}}\\
         &+Ch \frac{  {[\sqrt{d}(\|x\|_{2} + \sqrt{d} D)]}^{p+1}  {(\|x\|_{\infty} +D)}^{3^{p+2} (p+1)!}}{\tau^{2^{p+2} (p+1)! + 2(p+1)}}.
    \end{split}\end{align}
\end{lemma}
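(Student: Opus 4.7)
The plan is to exploit the key identity
\[
\bar V_{t+h}(F_h(x)) \;=\; (\partial_h F_h)(x),
\]
which follows directly from the definition \eqref{e:def_wtV} and completely bypasses the need to differentiate $F_h^{-1}$. Consequently, bounding $\partial_h^p \bar V_{t+h}(F_h(x))$ reduces to bounding $\partial_h^{p+1} F_h(x)$ for the explicit expression \eqref{e:Phitmap0}–\eqref{e:RK2}, which is a finite sum of products of the scalar coefficients $b_j(h),\,a_{jk}(h)$ and the stage functions $k_j(x)$. This is a purely mechanical differentiation problem, and the required input bounds are already available: \Cref{a:bounds on RK} controls derivatives of the coefficients with $|\partial_h^\ell a_{jk}(h)|, |\partial_h^\ell b_j(h)| \lesssim \tau^{-\ell}$, while the estimates in \Cref{s:prel-estim-q} (packaged through \Cref{p:ABCD} and the explicit bounds on $\log \varrho_t$ from \Cref{Lemma: Hessian estimates}) control mixed space–time derivatives of $V_{t+h}$ as powers of $\tau^{-1}$ with polynomial dependence on $\|x\|_\infty+D$.

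The first step is to prove by induction on $\ell$ (for $0\le \ell\le p+1$) an estimate of the form
\[
\|\partial_h^\ell k_j(x)\|_\infty \;\lesssim\; \frac{[\sqrt d(\|x\|_2+\sqrt dD)]^{\ell}\,(\|x\|_\infty+D)^{3^{\ell+1}\ell!}}{\tau^{2^{\ell+1}\ell!+2\ell}},
\]
together with an analogous bound for $\nabla k_j$. The inductive step applies Fa{\`a} di Bruno to $k_j(x)=V_{t+hc_j}\bigl(x+h\sum_{k<j}a_{jk}(h)k_k(x)\bigr)$: each $\partial_h$ either hits the outer $V$ (producing a $\partial_t V$ with an extra $\tau^{-2}$ factor and one extra factor of $(\|x\|_\infty+D)$ from \Cref{Lemma: Hessian estimates}), or hits the argument, producing $\nabla V$ contracted with $\partial_h[\,\cdot\,]$, which by Leibniz reduces to lower-order $\partial_h^{\ell'} k_k$ and coefficient derivatives controlled by \Cref{a:bounds on RK}. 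Summing the product rule over the $O(s\cdot 2^\ell)$ terms and using the inductive estimates yields the claim. Substituting this into $\partial_h^{p+1}F_h(x)=\partial_h^p\bigl(\sum_j b_j(h)k_j(x)+h(\cdots)\bigr)$ via another round of Leibniz and recognizing the $h$-factor in the second contribution of \eqref{e:dtVbound} as the unused $h$ from the original definition $F_h=x+h(\cdots)$ gives \eqref{e:dtVbound}. For \eqref{e:dhVbound}, I apply Fa{\`a} di Bruno to $V_{t+h}(F_h(x))$ viewed as a composition, expressing $\partial_h^p[V_{t+h}(F_h(x))]$ as a sum over partitions of $p$ of products of mixed derivatives of $V$ at $F_h(x)$ and $h$-derivatives of $F_h(x)$ that were estimated in the previous step; the gradient variants $\partial_h^p\nabla\bar V$ and $\partial_h^p\nabla V$ follow identically, with one additional spatial derivative uniformly inserted and the corresponding $\nabla^2 V$ bound from \Cref{Lemma: Hessian estimates} replacing $\nabla V$.

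The main obstacle is not the structure but the bookkeeping: the constants $2^{p+1}p!+2p$ and $3^{p+1}p!$ in the exponents are factorial-type and must be shown to dominate whatever is accumulated after $p$ rounds of the chain rule. Each application of $\partial_h$ at worst doubles the number of summands in the Fa{\`a} di Bruno expansion (since $V_{t+h}(F_h(\cdot))$ contributes one term from $\partial_t V$ and one from $\nabla V \cdot \partial_h F$), and each summand may pick up one extra $\tau^{-2}$ factor and one extra factor in $(\|x\|_\infty+D)$ per derivative of $V$; the recursive nesting in $k_j$ contributes another multiplicative level per inductive step, explaining why the exponent of $\tau^{-1}$ grows like $2^{\ell+1}\ell!$ rather than just $2\ell$. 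I would handle this by an explicit monotonicity argument on the recursion $E_{\ell+1}\le C(E_\ell+2)\cdot(\text{something bounded by } 2E_\ell)$ for the accumulated exponents, exploiting the fact that the number of nontrivial partitions of $p$ is at most $p!$ and that $s$, $A_p$, $B_p$ are absorbed into the final constant $C(p,s,B)$. The smallness hypothesis $20s2^sBHdD^2\tau^{-2}\le 1$ is exactly what is needed to ensure that the perturbative expansion around $V_{t+h}$ at the nominal point converges with uniform constants, and simultaneously what makes $F_h$ a global diffeomorphism (by \Cref{lemma: phi_h invertible} applied to $F_h$ in place of $\bar\phi_h$), which is required to even define $\bar V_{t+h}$.
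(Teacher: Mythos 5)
Your proposal takes essentially the same route as the paper: the pivotal observation that $\bar V_{t+h}(F_h(x))=\partial_h F_h(x)$, which turns the estimate into a bound on $\partial_h^{p+1}F_h$, a double induction (on stage index $j$ and derivative order $m$) for $\|\partial_h^m k_j\|_\infty$ and $\|\partial_h^m\nabla k_j\|_\infty$, and the final observation that $V_{t+h}(F_h(\cdot))$ has the same recursive structure as a stage $k_j$, so \eqref{e:dhVbound} follows from the identical argument. The only packaging difference is that the paper carries out the chain-rule bookkeeping via an explicit "set of admissible terms" $\cD_m$ with parameters $(m_0,\ldots,m_7)$ subject to a budget constraint, which makes the factorial growth of the $\tau^{-1}$ and $(\|x\|_\infty+D)$ exponents arithmetically transparent, whereas you invoke Fa{\`a} di Bruno/Leibniz abstractly and gesture at a recursion on the exponents; these are the same calculation dressed differently. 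One small inaccuracy: invertibility of $F_h$ is not actually needed to prove this lemma (it is only the composite $\bar V_{t+h}\circ F_h$, i.e.\ $\partial_h F_h$, that appears), and in the paper it is established separately inside \Cref{lemma: high_order_error}, not by applying \Cref{lemma: phi_h invertible} to $F_h$.
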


Now, we can estimate the third term in \eqref{e: score error split 0}.

\begin{proof}[Proof of \Cref{Prop: Conclusion third error term}]
    
 By~\Cref{lemma: high_order_error}, we let $\gamma(p) \coloneq 3^{p+1} p!$ and see that there is a constant $C(p,s,B)>0$, such that
    \begin{align}\label{e:int1}
        \begin{split}
            &\int_{0} ^H \int_{\mathbb{R}^d}  \bigg|  \nabla \cdot ((\varrho_{t+h}(x)(V_{t+h}(x) - \overline V_{t+h}(x) )) \bigg|\de x \de h
    \\  &   \leq
        C   {[Hd^{\frac{1}{2}}  ]}^p {(\tau^{-1})}^{2^{p+1} p! + 2p} \int_{0} ^H \int_{\mathbb{R}^d} \varrho_{t+h}(x) (d+ \frac{\|x\|_1 +d D}{\tau}){(\|x\|_{\infty} +D)}^{\gamma(p)} {(\sqrt{d}D+ \|x\|_{2})}^p \de x \de h\\
      &  +C   {[Hd^{\frac{1}{2}}  ]}^{p+1} {(\tau^{-1})}^{2^{p+2} (p+1)! + 2p+2} \int_{0} ^H \int_{\mathbb{R}^d} \varrho_{t+h}(x) (d+ \frac{\|x\|_1 +d D}{\tau}){(\|x\|_{\infty} +D)}^{\gamma(p+1)} {(\sqrt{d}D+ \|x\|_{2})}^{p+1} \de x \de h,
        \end{split}
    \end{align}
where we also used \Cref{Lemma: Hessian estimates} to replace $\nabla \varrho_{t+h}(x)$ terms with $\varrho_{t+h}(x)$.
By~\Cref{lemma: moments of L^2 and infinity norms}, we know that there is a positive constant $C(m)$, such that
    \begin{align*}
        \int_{\mathbb{R}^d} \|x\|_2 ^m \varrho_{t+h}(x) \ \de x \leq C(m) {(\sqrt{d}D)}^m, \quad \int_{\mathbb{R}^d}  \|x\|_{\infty} ^m \varrho_{t+h}(x) \de y \leq C(m) {\big(D+\sqrt{ \log d}\big)}^m .
    \end{align*}
Hence, by H{\"older} inequality,
    \begin{align}\label{e:int2}
        \begin{split}
        &\phantom{{}={}}\int_{0} ^H \int_{\mathbb{R}^d} \varrho_{t+h}(x) (d+ \frac{\|x\|_1 +d D}{\tau}){(\|x\|_{\infty} +D)}^{\gamma(p)} {(\sqrt{d}D+ \|x\|_{2})}^p \de x \de h\\
            &\leq\frac{\sqrt{d}}{\tau}\int_{0} ^H \int_{\mathbb{R}^d} \varrho_{t+h}(x){(\|x\|_{\infty} +D)}^{\gamma(p)} {(\sqrt{d} D+ \|x\|_{2})}^{p+1} \de x \de h \\  &\leq C(p) \frac{\sqrt{d} H}{\tau} {(\sqrt{d}D)}^{p+1} {\big(D+\sqrt{ \log d}\big)}^{\gamma(p)} .
        \end{split}
    \end{align}
The same argument implies
\begin{align}\begin{split}\label{e:int3}
 &\phantom{{}={}}\int_{0} ^H \int_{\mathbb{R}^d} \varrho_{t+h}(x) (d+ \frac{\|x\|_1 +d D}{\tau}){(\|x\|_{\infty} +D)}^{\gamma(p+1)} {(\sqrt{d}D+ \|x\|_{2})}^{p+1} \de x \de h
 \\  &\leq C(p+1) \frac{\sqrt{d} H}{\tau} {(\sqrt{d}D)}^{p+2} {\big(D+\sqrt{ \log d}\big)}^{\gamma(p+1)} .
\end{split}\end{align}

We can then conclude from \eqref{e:I3_copy2}, \eqref{e:int1}, \eqref{e:int2} and \eqref{e:int3} that
    \begin{align*}
    \begin{split}
    %&\int_{\mathbb{R}^d} \big| \varrho_{t} (\baphi_H ^{-1}(x)) \cdot | \det[ \nabla (\baphi_H ^{-1}(x))]| -  \varrho_{t} (\phi_H ^{-1}(x)) \cdot | \det[ \nabla (\phi_H ^{-1}(x))]|  \big| \de x
           I_3 &\leq \int_{0} ^H \int_{\mathbb{R}^d}  \bigg|  \nabla \cdot ((\varrho_{t+h}(x)(V_{t+h}(x) - \overline V_{t+h}(x) )) \bigg|\de x \de h 
            \\    &\leq  C(p,s,B) H^{p+1} d^{p+1}D^{p+1}  {(\tau^{-1})}^{2^{p+2} (p+1)! + 2p+3} {\big(D+\sqrt{ \log d}\big)}^{\gamma(p+1)},
        \end{split}
    \end{align*}
where we used that $dHD<1$.

\end{proof}

\subsection{Proofs of \Cref{lemma: high_order_error} and \Cref{lemma:V derivative bound}}

In this section, we first prove \Cref{lemma: high_order_error} assuming \Cref{lemma:V derivative bound}. Then, we give the proof of \Cref{lemma:V derivative bound}.
\begin{proof}[Proof of \Cref{lemma: high_order_error}]
We first show that for any $0\leq h\leq H$, $F_h$ is a global diffeomorphism from $\bR^d$ to $\bR^d$.
Recall $F_h(x)=x+h\sum_{j=1}^s b_j(h) k_j$ from \eqref{e:Phitmap0}. It follows from \Cref{lemma: pth order growth in RK} that 
\begin{align}\begin{split}\label{e:Fr_bound}
    \|F_h(x)-x\|_2&\leq h B \sum_{i=1}^s \|k_i(x)\|_2\leq hs {(1+ \overline W ^{(1)}A_p H)}^{s-1} \big[   \overline W^{(0)} + \overline W ^{(1)} \|x\|_2 \big]\leq \frac{1}{2}(1+\|x\|_2),
\end{split}\end{align}
where we used~\Cref{p:ABCD} and our assumption that $20 s 2^sBHd D^2 \tau^{-2}\leq 1$. It follows that 
\begin{align}\label{e:change}
    1+\|F_h(x)\|_2\geq 1+\|x\|_2-\|F_h(x)-x\|_2\geq \frac{1}{2}(1+\|x\|_2).
\end{align}
Also, by~\Cref{lemma: infinity norm for k_i(x)}, we notice that 
    \begin{align*}
        \|F_h(x)-x\|_{\infty}&\leq 2h B s\tau^{-1}{(1+2\tau ^{-1}A_p H)}^{s-1} \big[D + 2\|x\|_{\infty} \big] \leq \frac{1}{2}(1+\|x\|_{\infty}).
    \end{align*}
Next, we show the $F_h(x)$ is a local diffeomorphism by checking its Jacobian matrix 
    \begin{align}\label{e:Jacobian}
        \nabla F_h(x)={\mathbb I}_d +A,\quad A:=h \sum_{j=1}^s b_j(h) \nabla k_j(x).
    \end{align}
   By \Cref{lemma: infinity norm pth order growth in RK}, the $(i,j)$-th entry of $A$ is bounded by
    \begin{align}\label{e:Aijbound}
        |A_{ij}|\leq h B \sum_{i=1}^s \|\nabla k_i(x)\|_\infty\leq hB s \overline K ^{(1)} {(1+ \overline K ^{(1)} A_p H d)}^{s-1} ,\quad 1\leq i,j\leq d.
    \end{align}
    The operator norm $\|A\|_{op}$ of the matrix $A$ is bounded by its  Frobenius norm as
    \begin{align*}
        \|A\|_{op}\leq \|A\|_{\rm F}\leq \sqrt{\sum_{ij}A_{ij}^2}\leq d hB s \overline K ^{(1)} {(1+ \overline K ^{(1)} A_p H d)}^{s-1}\leq 1/2.
    \end{align*}
    where again we used our assumption that $20 s 2^sBHd D^2 \tau^{-2}\leq 1$. It follows that $\nabla F_h(x)=\mathbb I_d+A$ is invertible, and $F_h$ is a local diffeomorphism. Then, Hadamard-Cacciopoli theorem implies that $F_h$ is also a bijection from $\bR^d$ to itself because $\|F_h(x)\|_2 \to + \infty$ if $\|x\|_2 \to + \infty$ by \eqref{e:change}. Therefore, $F_h$ is a diffeomorphism from $\bR^d$ to itself. Moreover, by \eqref{e:Aijbound}, we have the following entrywise bound for the inverse matrix $(\nabla F_h(x))^{-1}$:
\begin{align}\begin{split}\label{e:DFr_inverse}
       |((\nabla F_h(x))^{-1}-\mathbb I_d)_{ij}|
       &= |(\mathbb I_d +A)_{ij}^{-1}-\delta_{ij}|\leq |(\mathbb I_d
       -A+A^2-A^3+\cdots)_{ij}-\delta_{ij}|\\
       &\leq \sum_{k\geq 1}(hB s \overline K ^{(1)} {(1+ \overline K ^{(1)} A_p H d)}^{s-1})^k d^{k-1}
       \\   &\leq 2hB s \overline K ^{(1)} {(1+ \overline K ^{(1)} A_p H d)}^{s-1},\quad 1\leq i,j\leq d,
    \end{split}\end{align}
   where we used that $8sBhd\tau^{-1}\leq 1$.
  We denote the functional inverse of $F_h$ as $F^{-1}_{h}(x)$, then \eqref{e:def_wtV} follows from \eqref{e:dttY}.

 Next, we show that the claim \eqref{e:stdiff} follows from the following statements: for $0\leq h\leq H$,
\begin{align}\begin{split}\label{e:tV-V}
     &\|\overline V_{t+h}(F_h(x))-V_{t+h}(F_h(x))\|_{\infty}\leq C(p,s, B)   {[h\sqrt{d}(\|x\|_{2} + \sqrt{d} D)]}^p {(\tau^{-1})}^{2^{p+1} p! + 2p} {(\|x\|_{\infty} +D)}^{3^{p+1} p!},\\
     &\|\nabla(\overline V_{t+h}(F_h(x))-V_{t+h}(F_h(x)))\|_{\infty}\leq C(p,s, B)   {[h\sqrt{d}(\|x\|_{2} + \sqrt{d} D)]}^p {(\tau^{-1})}^{2^{p+1} p! + 2p} {(\|x\|_{\infty} +D)}^{3^{p+1} p!}.
 \end{split}\end{align}
In fact, if we denote $y=F_h(x)$, then
\begin{align*}
    \|\overline V_{t+h}(y)-V_{t+h}(y)\|_{\infty}
    &\leq C(p,s, B)   {[h\sqrt{d}(\|F_h ^{-1} (x) \|_{2} + \sqrt{d} D)]}^p {(\tau^{-1})}^{2^{p+1} p! + 2p} {(\|F_h ^{-1} (x)\|_{\infty} +D)}^{3^{p+1} p!}\\
    &\leq C(p,s, B) 2^p 2^{3^{p+1} p!}   {[h\sqrt{d}(\|x \|_{2} + \sqrt{d} D)]}^p {(\tau^{-1})}^{2^{p+1} p! + 2p} {(\|x\|_{\infty} +D)}^{3^{p+1} p!},
\end{align*}
where in the last inequality we used \eqref{e:change} to bound $\|F_h ^{-1} (x) \|_{2} + \sqrt{d} D$ by $2(\|x \|_{2} + \sqrt{d} D)$, and bound $\|F_h ^{-1} (x)\|_{\infty} +D$ by $2(\|x\|_{\infty} +D)$. For the gradient part in \eqref{e:stdiff}, by the chain rule, we have
\begin{align}\label{e:DeltaV}
    (\nabla \overline V_{t+h})(y)-(\nabla  V_{t+h})(y)=\nabla(\overline V_{t+h}(F_h(x))-  V_{t+h}(F_h(x)))(\nabla F_h(x))^{-1}.
\end{align}
By plugging \eqref{e:DFr_inverse} into \eqref{e:DeltaV}, we conclude that
    \begin{align}\begin{split}\label{e:VFdiff}
        &\phantom{{}={}}\|(\nabla \overline V_{t+h})(y)-(\nabla  V_{t+h})(y)\|_\infty \\
        &\leq (1+2dhB s \overline K ^{(1)} {(1+ \overline K ^{(1)} A_p H d)}^{s-1})\|\nabla(\overline V_{t+h}(F_h(x))- V_{t+h}(F_h(x)))\|_{\infty}\\
    &\leq 2\|\nabla(\overline V_{t+h}(F_h(x))- V_{t+h}(F_h(x)))\|_{\infty}\\
    &\leq 2C(p,s, B)   {[h\sqrt{d}(\|x \|_{2} + \sqrt{d} D)]}^p {(\tau^{-1})}^{2^{p+1} p! + 2p} {(\|x\|_{\infty} +D)}^{3^{p+1} p!},
    \end{split}\end{align}
  where in the last inequality we used the gradient part in \eqref{e:tV-V}.

In the rest, we prove \eqref{e:tV-V}. 
We denote the characteristic flow corresponding to $V_{t+h}$ as $\Phi_h(x)$, i.e. $\Phi_0(x)=x$ and $\del_h\Phi_h(x)= V_{t+h}(\Phi_h(x))$. 
The  Runge-Kutta matrix $[a_{jk}]$, weights $b_j$ and nodes $c_j$ are carefully chosen such $\Phi_h(x)$ and $F_h(x)$ matches for the first $p$-th derivative at $h=0$. It follows that for any $0\leq m\leq p$,
\begin{align*}
    \left.\frac{\rd^m \Phi_h(x)}{\rd^m h}\right|_{h=0}=\left.\frac{\rd^m F_h(x)}{\rd^m h}\right|_{h=0},\quad 
    \left.\frac{\rd^m \nabla \Phi_h(x)}{\rd^m h}\right|_{h=0}=\left.\frac{\rd^m \nabla F_h(x)}{\rd^m h}\right|_{h=0}.
\end{align*}
Thus, by the chain rule, we have for $0\leq m\leq p-1$,
\begin{align*}
    &\left.\frac{\rd^m  V_{t+h}(F_h(x))}{\rd^m h}\right|_{h=0} = \left.\frac{\rd^m  V_{t+h}(\Phi_h(x))}{\rd^m h}\right|_{h=0}= \left.\frac{\rd^{m+1} \Phi_h(x)}{\rd^{m+1} h}\right|_{h=0} = \left.\frac{\rd^{m+1} F_h(x)}{\rd^{m+1} h}\right|_{h=0} = \left.\frac{\rd^m  \overline V_{t+h}(F_h(x))}{\rd^m h}\right|_{h=0},\\
    &\left.\frac{\rd^m \nabla  V_{t+h}(F_h(x))}{\rd^m h}\right|_{h=0}=\left.\frac{\rd^m \nabla   \overline V_{t+h}(F_h(x))}{\rd^m h}\right|_{h=0}.
\end{align*}
Then, by Taylor expansion we conclude that
\begin{align}\begin{split}\label{e:Taylor}
    & V_{t+h }(F_h(x))-\overline V_{t+h }(F_h(x))
    =R_1(h,x)-R_2(h,x),\\
    &\nabla ( V_{t+h }(F_h(x))  )-\nabla ( \overline V_{t+h }(F_h(x)) )
    =\nabla R_1(h,x)-\nabla R_2(h,x).
\end{split}\end{align}
where the two remainder terms are given by 
\begin{align*}
    &R_1(h,x)=\frac{1}{p!}\int_0^{h}(h-\tau)^{p-1}\frac{\rd^p V_{t_i+\tau }(F_\tau(x))}{\rd \tau^p} \de \tau,\\
    &R_2(h,x)=\frac{1}{p!}\int_0^{h}(h-\tau)^{p-1}\frac{\rd^p \overline V_{t+\tau }(F_\tau(x))}{\rd \tau^p} \de \tau .
\end{align*}
We conclude from  \Cref{lemma:V derivative bound} that 
\begin{align}\begin{split}\label{e:gradinfite_bound}
&\phantom{{}={}}\| R_1(h,x)\|_{\infty},\| R_2(h,x)\|_{\infty}
    \|\nabla R_1(h,x)\|_{\infty},\|\nabla  R_2(h,x)\|_{\infty}
    \\  &\leq  C(p,s, B)   {[h\sqrt{d}(\|x \|_{2} + \sqrt{d} D)]}^p {(\tau^{-1})}^{2^{p+1} p! + 2p} {(\|x\|_{\infty} +D)}^{3^{p+1} p!}\\
    &+C(p,s, B)   {[h\sqrt{d}(\|x \|_{2} + \sqrt{d} D)]}^{p+1} {(\tau^{-1})}^{2^{p+2} (p+1)! + 2(p+1)} {(\|x\|_{\infty} +D)}^{3^{p+2} (p+1)!}.
\end{split}\end{align}
The estimates \eqref{e:Taylor} and \eqref{e:gradinfite_bound} together give \eqref{e:tV-V}.
This finishes the proof of \Cref{lemma: high_order_error}. 
\end{proof}

\begin{proof}[Proof of~\Cref{lemma:V derivative bound}]
    We prove \eqref{e:dtVbound} first. We use the notation $X\lesssim Y$ if there exists a constant $C$ depending only on $p,s,B$ such that $|X|\leq C Y$. Given a symmetric tensor $T=(T_{j_1 j_2 \cdots j_{\al+1}})\in \bR^{d^{\al+1}}$, and vectors $u_1, u_2, \cdots, u_\al\in \bR^d$, we denote the vector $T[u_1, u_2, \cdots, u_\al]\in \bR^d$ as
\begin{align*}
    (T[u_1, u_2, \cdots, u_\al])_j
    =\sum_{1\leq j_1,j_2,\cdots, j_\al\leq d}T_{j j_1 j_2 \cdots j_\al}u_1^{(j_1)}u_2^{(j_2)}\cdots u_\al^{(j_\al)}.
\end{align*}
For any matrix $A=(A_{j_1j_2})\in \bR^{d\times d}$, we denote the matrix $T[u_1, u_2, \cdots, u_{\al-1}, A]\in \bR^{d\times d}$ as
\begin{align*}
    (T[u_1, u_2, \cdots, u_{\al-1},A])_{jj'}
    =\sum_{1\leq j_1,j_2,\cdots, j_\al\leq d}T_{j j_1 j_2 \cdots j_\al}u_1^{(j_1)}u_2^{(j_2)}\cdots u_{\al-1}^{(j_{\al-1})}A_{j_\al j'}.
\end{align*}

    Recall that $F_h(x)=x+h\sum_{j=1}^s b_j(h) k_j(x)$. So, by \eqref{e:discreteODE}, we have
    \begin{align}\label{e:tV=dF}
        \overline V_{t+h}(F_h(x))=\del_h F_h(x)=\sum_{j=1}^s \del_h (h b_j(h) k_j(x)).
    \end{align}

   In the following, we prove by induction on $j \in \llbracket 1 , s \rrbracket$ that for any $1\leq m\leq p+1$,
   \begin{align}\label{e:kjbound}
      \|\del_h^m k_j\|_\infty, \|\del_h^m\nabla  k_j\|_\infty\lesssim  {(\sqrt{d}(\|x\|_{2} + \sqrt{d} D))}^{m} {(\|x\|_{\infty} +D)}^{(3^{m+1} m!)}/{ \tau}^{2^{m+1} m! + 2m}.
   \end{align}
We recall from \Cref{a:bounds on RK}, that $|\del^l_h b_j(h)|\leq B_p{/}\tau^l$ for $0\leq l\leq p+1$. Then claim \eqref{e:dtVbound} follows from plugging \eqref{e:kjbound} to \eqref{e:tV=dF}:
\begin{align*}\begin{split}
    \|\del_h^p \baV_{t+h}(F_h(x))\|_\infty
   =\|\del_h^{p+1} F_h(x)\|_\infty 
   &\leq C\frac{  {[\sqrt{d}(\|x\|_{2} + \sqrt{d} D)]}^{p}  {(\|x\|_{\infty} +D)}^{3^{p+1} p!}}{\tau^{2^{p+1} p! + 2p}},\\
   &+Ch \frac{  {[\sqrt{d}(\|x\|_{2} + \sqrt{d} D)]}^{p+1}  {(\|x\|_{\infty} +D)}^{3^{p+2} (p+1)!}}{\tau^{2^{p+2} (p+1)! + 2(p+1)}}.
\end{split}\end{align*}
   Since $k_1(x)= V_{t+h c_1}(x)$, by \Cref{lemma: time-space derivatives log qt}, \eqref{e:kjbound} holds for $j=1$ and any $1\leq m\leq p+1$. In the following we assume statement \eqref{e:kjbound} holds for $j-1$, we prove it for $j$ by induction on $m \in \llbracket 1 , p+1 \rrbracket$.

   We define the following set of vectors $\cD_0, \cD_1,\cD_2, \cdots$. Let $\cD_0=\{k_1, k_2, \cdots, k_j\}$. Then, by \Cref{lemma: pth order growth in RK} and \Cref{p:ABCD}, for any $v\in \cD_0$,
   \begin{align*}
       \|v\|_2\leq 2(\overline W ^{(0)} + \overline W ^{(1)}\|x\|_{2}) \leq 6dD^2 \tau^{-2} (1+ \|x\|_{2}).
   \end{align*}

   For $m\geq 1$, $\cD_m$ is defined as the set of vectors in the following form: for  $\beta\geq 1, \zeta \geq 0, \theta\geq 0, \al\leq \beta\leq m$,
    \begin{align}\label{e:defcDm}
        {\mathcal P}_\theta \cdot h^{\zeta}\cdot \del_h^{\beta-\al}\nabla^{\al} V_{t+h c_j}(x+h(a_{j1}(h)k_1+a_{j2}(h)k_2+\cdots+a_{jj-1}(h)k_{j-1}))[u_1, u_2, \cdots, u_\al],
    \end{align}
    where $\del_h^{\beta-\al}\nabla^{\al}  V_{t+h c_j}\in \bR^{d^{\al+1}}$ is a tensor, for each $1\leq \gamma\leq \al$, \begin{align*}
        u_\gamma\in \{\del_h^{\ell_\gamma} k_1, \del_h^{\ell_\gamma} k_2, \cdots, \del_h^{\ell_\gamma} k_{j-1} \},
    \end{align*}
    for some $\ell_\gamma\geq 0$, and ${\mathcal P}_\theta$ is of the form
    \begin{align*}
        \prod_{1\leq i<j\leq s}\del_h^{\ell_{ji}}a_{ji}(h),\quad \ell_{ji}\geq 0, \quad \theta=\sum_{1\leq i<j\leq s} \ell_{ji}.
    \end{align*}
Moreover, there exist nonnegative integers $m_0+m_1+m_2+\cdots+m_6+m_7=m$ such that
    \begin{align}\label{e:parameter}
    \begin{split}
        &\zeta=-m_1+m_4+m_7,\quad  \al=m_3+m_4+m_7,\quad \beta = m_2 + m_3 + m_4+m_7\\ &\theta=m_0+m_7,\quad
    \sum_{\gamma}\ell_\gamma=m_4+m_5+m_6,
        \quad 
        \sum_{\gamma:\ell_\gamma\geq 1}1=m_4+m_5.
    \end{split}
    \end{align}

    Next, we show that for each $v\in \cD_m$, $\del_h v$ is a linear combination of at most $C(s,p)$ terms in $\cD_{m+1}$, with coefficients bounded by $\max\{p,B\}$. Say $v$ is given in \eqref{e:defcDm}, satisfying \eqref{e:parameter}. By the chain rule, there are several cases:
    \begin{enumerate}
     \setcounter{enumi}{-1}
    \item If $\del_h$ hits $\mathcal P_\theta$ then we get
    \begin{align*}
        (\mathcal \del_h\mathcal P_\theta)\cdot h^{\zeta}\cdot \del_h^{\beta-\al}\nabla^{\al}  V_{t+h c_j}[ u_1, u_2, \cdots, u_\al],
    \end{align*}
      which is a sum of terms in $\cD_{m+1}$, with new parameters \eqref{e:parameter} as $(m_0',m_1', m_2',m_3', m_4', m_5',m_6',m_7')=(m_0+1,m_1, m_2,m_3, m_4, m_5,m_6, m_7)$.
    \item If $\del_h$ hits $h^\zeta$, we get
    \begin{align*}
        \mathcal P_\theta\cdot h^{\zeta-1}\cdot \del_h^{\beta-\al}\nabla^{\al} V_{t+h c_j}[u_1, u_2, \cdots, u_\al]\in \cD_{m+1},
    \end{align*}
    which is in $\cD_{m+1}$, with the parameters \eqref{e:parameter} given by $(m_0',m_1', m_2',m_3', m_4', m_5',m_6',m_7')=(m_0,m_1+1, m_2,m_3, m_4, m_5,m_6,m_7)$.
    \item If $\del_h$ hits $t+hc_j$ in $V_{t+h c_j}$, we get
    \begin{align*}
        \mathcal P_\theta\cdot h^\zeta\cdot \del_h^{(\beta+1)-\al}\nabla^{\al}  V_{t+h c_j}[u_1, u_2, \cdots, u_\al]\in \cD_{m+1},
    \end{align*}
    which is in $\cD_{m+1}$, with the parameters \eqref{e:parameter} given by  $(m_0',m_1', m_2',m_3', m_4', m_5',m_6',m_7')=(m_0,m_1, m_2+1,m_3, m_4, m_5,m_6, m_7)$.
    
    \item If $\del_h$ hits $h$  in $(x+h(a_{j1}(h)k_1+a_{j2}(h)k_2+\cdots+a_{jj-1}(h)k_{j-1})$, we get
    \begin{align*}
        \sum_{1\leq i<j}(a_{ji}(h)\mathcal P_\theta)\cdot h^{\zeta}\cdot \del_h^{\beta-\al}\nabla^{\al+1} V_{t+h c_j}[k_i, u_1, u_2, \cdots, u_\al],
    \end{align*}
    which is in $\cD_{m+1}$, with the parameters \eqref{e:parameter} given by $(m_0',m_1', m_2',m_3', m_4', m_5',m_6',m_7')=(m_0,m_1, m_2,m_3+1, m_4, m_5,m_6,m_7)$. 
    \item If $\del_h$ hits $k_i$ in $(x+h(a_{j1}k_1+a_{j2}k_2+\cdots+a_{jj-1}k_{j-1})$ for some $1\leq i\leq j-1$, we get
    \begin{align*}
        {\mathcal P}_{\theta} \cdot h^{\zeta+1} \cdot \del_h^{\beta-\al}\nabla^{\al+1} V_{t+h c_j}[\del_h k_i, u_1, u_2, \cdots, u_\al]\in \cD_{m+1},
    \end{align*}
    which is in $\cD_{m+1}$ with the parameters \eqref{e:parameter} given by $(m_0',m_1', m_2',m_3', m_4', m_5',m_6',m_7')=(m_0, m_1, m_2,m_3, m_4+1, m_5,m_6,m_7)$.
    
    \item If $\del_h$ hits $u_\gamma$ and $u_\gamma=k_i$ we get
    \begin{align*}
        {\mathcal P}_{\theta} \cdot h^{\zeta}\cdot \del_h^{\beta-\al}\nabla^{\al}  V_{t+h c_j}[ u_1, u_2, \cdots, u_{\gamma-1}, \del_h k_i, u_{\gamma+1},\cdots, u_\al]\in \cD_{m+1},
    \end{align*}
    which is in $\cD_{m+1}$, with the parameters \eqref{e:parameter} given by $(m_0',m_1', m_2',m_3', m_4', m_5',m_6',m_7')=(m_0,m_1, m_2,m_3, m_4, m_5+1,m_6,m_7)$.
    \item If $\del_h$ hits $u_\gamma$ and $u_\gamma=\del_h^{\ell_\gamma}k_i$ with $\ell_\gamma\geq 1$ we get
    \begin{align*}
       {\mathcal P}_{\theta} \cdot h^{\zeta}\cdot\del_h^{\beta-\al}\nabla^{\al}  V_{t+h c_j}[ u_1, u_2, \cdots, u_{\gamma-1}, \del_h^{\ell_\gamma+1}k_i, u_{\gamma+1},\cdots, u_\al]\in \cD_{m+1},
    \end{align*}
      which is in $\cD_{m+1}$ with the parameters \eqref{e:parameter} given by $(m_0',m_1', m_2',m_3', m_4', m_5',m_6',m_7')=(m_0,m_1, m_2,m_3, m_4, m_5,m_6+1,m_7)$.
      \item If $\del_h$ hits $a_{ji}(h)$ in $(x+h(a_{j1}(h)k_1+a_{j2}(h)k_2+\cdots+a_{jj-1}(h)k_{j-1})$, we get
    \begin{align*}
       \sum_{1\leq i<j}(\mathcal P_\theta \cdot \del_h a_{ji}(h) )\cdot h^{\zeta+1}\cdot \del_h^{\beta-\al}\nabla^{\al+1} V_{t+h c_j}[k_i, u_1, u_2, \cdots, u_\al],
    \end{align*}
    where each summand is in $\cD_{m+1}$, with parameters \eqref{e:parameter} as $(m_0',m_1', m_2',m_3', m_4', m_5',m_6',m_7')=(m_0, m_1, m_2,m_3, m_4, m_5,m_6, m_7+1)$. 
%\item If $\del_h$ hits $\mathcal P_\theta$ then we get
%    \begin{align*}
 %     {\mathcal P}_{\theta}  %(\del_h\mathcal P_\theta)\del_h^{\beta-\al}\nabla^{\al}  V_{t+h c_j}[ u_1, u_2, \cdots, u_\al],
%    \end{align*}
  %    is a sum of terms in $\cD_{m+1}$, %with the parameters \eqref{e:parameter} given by $(m_0',m_1', m_2',m_3', m_4', m_5',m_6',m_7')=(m_1, m_2,m_3, m_4, m_5,m_6, m_7+1)$.
\end{enumerate}
We conclude that for $v\in \cD_m$, $\del_h v$ is a linear combination of finite terms in $\cD_{m+1}$. In particularly, $\del_h^m k_j$ is a linear combination of at most $C(s,p)^m$ terms in $\cD_m$, with coefficients bounded by $\max\{p,B\}^m$

 In the following, we show the following bound for vectors in $v\in \cD_m$
    \begin{align}\label{e:Dkbound}
        \|v\|_\infty\lesssim
            {(\sqrt{d}(\|x\|_{2} + \sqrt{d} D))}^{m}{( \tau^{-1})}^{2^{m+1} m! + 2m} {(\|x\|_{\infty} +D)}^{(3^{m+1} m!)}, \quad  m\geq 1. 
    \end{align}
Then it follows 
\begin{align*}
    \|\del_h^m k_j\|_{\infty} \lesssim {(\sqrt{d}(\|x\|_{2} + \sqrt{d} D))}^{m}{( \tau^{-1})}^{2^{m+1} m! + 2m} {(\|x\|_{\infty} +D)}^{(3^{m+1} m!)},
\end{align*}
and the first statement in \eqref{e:kjbound} holds.

 We prove \eqref{e:Dkbound} by induction. From \Cref{lemma: infinity norm for k_i(x)}, when $H A_p 2^s s \tau^{-1} \leq 1$, we have that $\|k_i\|_2\leq 4\tau ^{-1}(\|x\|_{2} + \sqrt{d} D)$. It follows that $\|k_i\|_1\leq 4\sqrt{d}\tau ^{-1}(\|x\|_{2} + \sqrt{d} D)$.
    We assume \eqref{e:Dkbound} holds for $ 1,2,\cdots, m-1$, and next we prove it for $m$. Notice that when we write $\del_h^{\beta-\al}\nabla^{\al}  V_{t+h c_j}$, this tensor is evaluated at $F_h(x)$. According to \Cref{lemma: time-space derivatives log qt}, we see that 
    \begin{align*}
        \|  \del_h^{\beta-\al}\nabla^{\al}  V_{t+h c_j} (F_h(x))\|_\infty \leq C(p) d^{\beta-\alpha} \tau^{\alpha - 2\beta}{(\|F_h(x)\|_{\infty}+D)} ^{3\beta - 2\alpha+1  }.
    \end{align*}
We can further estimate $\|F_h(x)\|_{\infty}$ by $4\|x\|_{\infty}+4D$ according to~\Cref{lemma: infinity norm for k_i(x)}. We also notice that $\beta = m_2 + m_3 + m_4+m_7$ and $\alpha = m_3+m_4+m_7$. So, we see that 
    \begin{align}\begin{split}\label{e:ubb}
        &\phantom{{}={}}\| \mathcal P_\theta \cdot h^{\zeta}\cdot \del_h^{\beta-\al}\nabla^{\al} V_{t+h c_j}[u_1, u_2, \cdots, u_\al]\|_\infty
        \lesssim \tau^{-\theta} h^{\zeta} \|  \del_h^{\beta-\al}\nabla^{\al} V_{t+h c_j}\|_\infty\|u_1\|_1\|u_2\|_1\cdots\|u_\al\|_1\\
        &\lesssim  h^{-m_1+m_4+m_7}d^{m_2} {(\tau^{-1})}^{m_0+ 2m_2 + m_3 + m_4+2m_7}{(\|x\|_{\infty}+D)} ^{3m_2+m_3+m_4+m_7+1  }  
        \\  & \quad \times \prod_{\gamma:\ell_\gamma=0}(\sqrt{d}\tau ^{-1}(\|x\|_{2} + \sqrt{d} D))\prod_{\gamma:\ell_\gamma\geq 1} d (\sqrt{d}(\|x\|_{2} + \sqrt{d} D))^{\ell_\gamma} {(\tau^{-1})}^{2^{\ell_\gamma+1} \ell_\gamma! + 2\ell_\gamma} {(\|x\|_{\infty} +D)}^{3^{\ell_\gamma+1} \ell_\gamma!}
        \\  &=  h^{-m_1+m_4+m_7}d^{m_2} {(\tau^{-1})}^{m_0+ 2m_2 + m_3 + m_4+2m_7}{(\|x\|_{\infty}+D)} ^{3m_2+m_3+m_4+m_7+1  } {(\sqrt{d}\tau ^{-1}(\|x\|_{2} + \sqrt{d} D))}^{m_3+m_4}
        \\  & \quad \times \prod_{\gamma:\ell_\gamma\geq 1} d (\sqrt{d}(\|x\|_{2} + \sqrt{d} D))^{\ell_\gamma-1} {(\tau^{-1})}^{2^{\ell_\gamma+1} \ell_\gamma! + 2(\ell_\gamma-1)} {(\|x\|_{\infty} +D)}^{3^{\ell_\gamma+1} \ell_\gamma!}
        \\  &\lesssim  h^{-m_1+m_4+m_7}d^{m_2} {(\tau^{-1})}^{m_0+ 2m_2 + m_3 + m_4+2m_7}{(\|x\|_{\infty}+D)} ^{3m_2+m_3+m_4+m_7+1  } {(\sqrt{d}\tau ^{-1}(\|x\|_{2} + \sqrt{d} D))}^{m_3+m_4}
        \\  & \quad \times d^{m_4+m_5} (\sqrt{d}(\|x\|_{2} + \sqrt{d} D))^{m_6} {(\tau^{-1})}^{m(2^{m} (m-1)! + 2(m-2))} {(\|x\|_{\infty} +D)}^{m(3^{m} (m-1)!)}
        \\  &\lesssim h^{m_7}{(hd)}^{m_4-m_1} d^{m_1+m_2+m_5} {(\sqrt{d}(\|x\|_{2} + \sqrt{d} D))}^{m_3+m_4+m_6}{( \tau^{-1})}^{2^{m} (m)! + 2m^2-2m} {(\|x\|_{\infty} +D)}^{(3^{m} (m)! + 3m+1)}
        \\ & 
        \lesssim {(hd)}^{m_7+m_4-m_1} d^{m_1+m_2+m_5} {(\sqrt{d}(\|x\|_{2} + \sqrt{d} D))}^{m_3+m_4+m_6}{( \tau^{-1})}^{2^{m} (m)! + 2m^2-2m} {(\|x\|_{\infty} +D)}^{(3^{m} (m)! + 3m+1)}
        \\ & \lesssim {(\sqrt{d}(\|x\|_{2} + \sqrt{d} D))}^{m}{( \tau^{-1})}^{2^{m+1} m! + 2m} {(\|x\|_{\infty} +D)}^{(3^{m+1} m!)},
    \end{split}\end{align}
    where in the first inequality we used \Cref{a:bounds on RK}, in the last line we used \eqref{e:parameter} and $m= m_0+m_1 + m_2 +m_3+m_4+m_5+m_6+m_7$, and by our assumption, $hd \leq 1$, $m_7\geq 0$ and $m_7+m_4- m_1 = \zeta \geq 0$. This finishes the proof of \eqref{e:Dkbound}.

  In the following we prove the second statement in \eqref{e:kjbound}. We define the following set of $d\times d$ matrices $\widetilde\cD_0, \widetilde\cD_1,\widetilde\cD_2, \cdots$. Let $\widetilde\cD_0=\{\nabla k_1, \nabla  k_2, \cdots, \nabla 
 k_j\}$. Then by \Cref{lemma: infinity norm pth order growth in RK}, for any $v\in \widetilde\cD_0$,
   \begin{align*}
       \|v\|_\infty\leq 2 \overline K ^{(1)} \leq 8 D^2 \tau^{-2}.
   \end{align*}
We denote $\widetilde \cD_m$ the set of matrices obtained from taking gradient of \eqref{e:defcDm}, with respect to $x$ 
   \begin{align}\begin{split}\label{e:defcDm2}
        &\mathcal P_\theta \cdot h^{\zeta}\cdot \del_h^{\beta-\al}\nabla^{\al+1}  V_{t+h c_j}[(\mathbb{I}_d+h(a_{j1}\nabla k_1+a_{j2}\nabla k_2+\cdots+a_{jj-1}\nabla k_{j-1})), u_1, u_2, \cdots, u_\al]\\
        &\mathcal P_\theta \cdot h^{\zeta}\cdot \del_h^{\beta-\al}\nabla^{\al}  V_{t+h c_j}[u_1, u_2, \cdots, u_{\gamma-1},\nabla u_{\gamma}, u_{\gamma+1},\cdots, u_\al], \quad 1\leq \gamma\leq \al.
    \end{split}\end{align}
  Then  $\del_h^m \nabla k_j$ is a finite linear combination of terms in $\widetilde \cD_m$. 

  Next we show by induction that $v\in \widetilde \cD_m$, 
   \begin{align}\label{e:wDkbound}
        \|v\|_\infty\lesssim
            {(\sqrt{d}(\|x\|_{2} + \sqrt{d} D))}^{m}{( \tau^{-1})}^{2^{m+1} m! + 2m} {(\|x\|_{\infty} +D)}^{(3^{m+1} m!)}, \quad  m\geq 1. 
    \end{align}
    Then it follows 
\begin{align*}
    \|\del_h^m \nabla k_j\|_{\infty}\lesssim {(\sqrt{d}(\|x\|_{2} + \sqrt{d} D))}^{m}{( \tau^{-1})}^{2^{m+1} m! + 2m} {(\|x\|_{\infty} +D)}^{(3^{m+1} m!)},
\end{align*}
and the second statement in \eqref{e:kjbound} holds.

  Similar to \eqref{e:ubb}, 
  we can bound the $L^\infty$-norm of $v$ as
  \begin{align*}
       & \phantom{{}={}}|\mathcal P_\theta| h^\zeta \|  \del_h^{\beta-\al}\nabla^{\al + 1} V_{t+h c_j}\|_\infty (1+8D^2 \tau^{-2}sBr) \|u_1\|_1\|u_2\|_1\cdots\|u_\al\|_1
       \\   &\lesssim 2\tau^{-\theta} h^\zeta \|  \del_h^{\beta-\al}\nabla^{\al+1} V_{t+h c_j}\|_\infty\|u_1\|_1\|u_2\|_1\cdots\|u_\al\|_1\\
       &\lesssim {(\sqrt{d}(\|x\|_{2} + \sqrt{d} D))}^{m}{( \tau^{-1})}^{2^{m+1} m! + 2m} {(\|x\|_{\infty} +D)}^{(3^{m+1} m!)}.
    \end{align*}
    Also, for any $q \in \{1, 2,\dots,d\}$, we can similarly obtain that
    \begin{align*}
       &\phantom{{}={}} |\mathcal P_\theta|  h^\zeta \|  \del_h^{\beta-\al}\nabla^{\al} V_{t+h c_j}\|_\infty \|\partial_q u_\gamma\|_1 \prod_{\gamma'\neq \gamma}\|u_{\gamma'}\|_1 \\  &\lesssim   \tau^{-\theta }h^\zeta \|  \del_h^{\beta-\al}\nabla^{\al+1} V_{t+h c_j}\|_\infty d
       {(\sqrt{d}(\|x\|_{2} + \sqrt{d} D))}^{\ell_\gamma}{( \tau^{-1})}^{2^{\ell_\gamma+1} \ell_\gamma! + 2\ell_\gamma} {(\|x\|_{\infty} +D)}^{(3^{\ell_\gamma+1} \ell_\gamma!)}\prod_{\gamma'\neq \gamma}\|u_{\gamma'}\|_1\\ &\lesssim  {(\sqrt{d}(\|x\|_{2} + \sqrt{d} D))}^{m}{( \tau^{-1})}^{2^{m+1} m! + 2m} {(\|x\|_{\infty} +D)}^{(3^{m+1} m!)},
  \end{align*}
as we did in \eqref{e:ubb}.
This gives \eqref{e:wDkbound}.

Finally, for \eqref{e:dhVbound}, we notice that 
    \begin{align*}
        V_{t+h}(F_h)= V_{t+h}(x+h\sum_{j=1}^s b_j(h) k_j),
    \end{align*}
    which is of the same form as $k_j$. Thus the same argument as for \eqref{e:kjbound} gives \eqref{e:dhVbound}
\end{proof}

%----------------------------------
\section{Proofs of \Cref{lemma: Growth of inverse matrix}, \Cref{lemma: score error cut-off general scheme}, and \Cref{lemma: exponential error term in cut-off} }
\label{app:proofs}

In this section, we present the proofs of \Cref{lemma: Growth of inverse matrix}, \Cref{lemma: score error cut-off general scheme}, and \Cref{lemma: exponential error term in cut-off}. 
\begin{proof}[Proof of \Cref{lemma: Growth of inverse matrix}]
    We denote $P = \mathbb{I}_d -Q$.
    It is easy to see that $Q^{-1} = \mathbb{I}_d + \sum_{j=1} ^{+\infty} P^j$. Hence, $\|Q^{-1} - \mathbb{I}_d \|_{op} \leq  \sum_{j=1} ^{+\infty} {(\gamma H)}^j \leq 2\gamma H$. Also, because $\xi H \leq 1/d$, we see that $\|P^2\|_{\infty} \leq d (\xi H)^2 \leq \xi H/2$, and similarly $\|P^j\|_{\infty} \leq \xi H/2^{j-1}$. It follows that $\|Q^{-1} - \mathbb{I}_d \|_{\infty} \leq 2\xi H$. 
    
    For the determinant, we see that $Q^t Q = (\mathbb{I}_d - P)^t (\mathbb{I}_d - P) = \mathbb{I}_d - (P^t + P - P^t P)$. Because $P^t + P - P^t P$ is a symmetric matrix, we assume that its eigenvalues are real numbers $\lambda_1,\lambda_2,\dots,\lambda_d$. Then,
    \begin{align*}
        \begin{split}
             | \det Q| ^2&=| \det[Q^tQ ]| = \prod_{i=1} ^d (1-\lambda_i) \\ &\leq e^{-\sum_{i=1} ^d \lambda_i} = e^{-\trace (P^t + P - P^t P)} \leq e^{2\xi d H + d^2 \xi^2 H^2} \leq e^{4\xi d H},
        \end{split}
    \end{align*}
    because $d\xi H <1$. So, $| \det Q| \leq e^{2\xi d H}$.
    Similarly, we see that $| \det[Q ^{-1}]| \leq e^{4\xi d H}$. Finally, because one can consider a continuous family of $Q_s \coloneq (1-s)\mathbb{I}_d + sQ$ for $s \in [0,1]$, and one can see that for any $s \in [0,1]$, $\|Q_s - \mathbb{I}_d\|_{\infty} \leq \xi H$. Hence, $| \det[Q_s]| \geq e^{-4\xi d H}$ for any $s \in [0,1]$. Because when $s =0$, $\det[Q_0] = 1 >0$, by continuity, we see that $\det[Q_s] >0$ for any $s \in [0,1]$. In particular, $\det Q >0$.
\end{proof}

\begin{proof}[Proof of \Cref{lemma: score error cut-off general scheme}]
By our assumption $\psi_1(x)$ is a diffeomorphism, and 
$\|\psi_1^{-1}(x)-x\|_2\leq  H(\bm\al_1+\bm\beta_1\|\psi_1^{-1}(x)\|_2)$. By rearranging it, we get $\|\psi_1^{-1}(x)-x\|_2\leq 2 H(\bm\alpha_1+\bm\beta_1\|x\|_2)$, provided $\bm\beta_1 H<1/4$.

   Next we notice that for any $x\in K_{\ast}$, $\|x\|_2 \leq \sqrt{d} D$ by the Assumption~\ref{assumption:secon-moment}.
    Consider the domain $\cD_{4R} \coloneq \{x\in \bR^d: \|x- \lambda_{T-t} K_{\ast}\|_2\leq 4R\}$, with $R \coloneq \eta \sigma_{T-t}$. 
    Outside the $\cD_{4R}$, we have that
    \begin{align}
        \begin{split}
            & \int_{\mathbb{R}^d \setminus \cD_{4R}}    \varrho_{t}(\psi_1(x)) \big\| Z(x)\big\|_2  \de x \leq \int_{\mathbb{R}^d \setminus \cD_{4R}}    \varrho_{t}(\psi_1(x)) (A + B\|x\|_2 ^m)  \de x
            \\ &\leq \int_{\mathbb{R}^d \setminus \psi_1 (\cD_{4R})}    \varrho_{t}(x) (A+ B \|\psi^{-1}_1(x)\|_2 ^m) | | \det[ \nabla (\psi_1 ^{-1}(x))]| \de x
            \\  & \leq e^{4\bm\xi_1 H d}\int_{\mathbb{R}^d \setminus (\cD_{2R})}    \varrho_{t}(x) (A + B {(\|x\|_2+2H (\bm\al_1 +  \bm\beta_1 \|x\|_2))}^m)  \de x
            \\  & \leq  2 \int_{\mathbb{R}^d \setminus (\cD_{2R})}    \varrho_{t}(x) 2^m ((A+B{2(H\bm\alpha_1)}^m) + B{(1+2H\bm\beta_1)}^m \|x\|_2 ^m)  \de x.
        \end{split}
    \end{align}
where we used the fact that $\nabla (\psi_1 ^{-1}(x)) = {[(\nabla \psi_1)(\psi_1 ^{-1}(x))]}^{-1}$ and \Cref{lemma: Growth of inverse matrix}, and we chose $H$ small such that, $\cD_{2R} \subset \psi_1 (\cD_{4R})$. This can be done if we require that $ H(\bm\alpha_1 + \bm\beta_1\sqrt{d} D) \leq \min\{R,1\}/10$. This is because if $x \in \cD_{2R}$, then there is a $y \in K_{\ast} $, such that $\|x- \lambda_{T-t} y\|_2\leq 2R$. Then, 
    \begin{align}\label{e: psi_1 in D4R}
        \begin{split}
            &\|\psi_1 ^{-1}(x) - \lambda_{T-t} y \|_2 \leq 2R + \|\psi_1 ^{-1}(x) - x \|_{2} \leq 2R +  H(\bm\al_1 + \bm\beta_1\|\psi_1 ^{-1}(x) \|_2)
            \\  &\leq 2R +  H(\bm\al_1 + \bm\beta_1 \|\psi_1 ^{-1}(x) -\lambda_{T-t} y\|_2 + \bm\beta_1 \sqrt{d}D).
        \end{split}
    \end{align}
Because $\bm\beta_1 H  \leq \bm\beta_1 H  \sqrt{d} D \leq 1/10$ as $D \geq 1$, we obtain that $\|\psi_1 ^{-1}(x) - \lambda_{T-t} y \|_2 \leq 3R + 2 H(\bm\al_1+ \bm\beta_1\sqrt{d} D) < 4R$. Hence, $\psi_1 ^{-1} (\cD_{2R}) \subset \cD_{4R}$. Now, according to the definition of $q_t$ as in \eqref{e: definition of q_t}, 
\begin{align*}
            \varrho_t(y) = q_{T-t} (y) =\int_{\mathbb{R}^d} \frac{1}{{(\sqrt{2\pi} \sigma_{T-t})}^{d}} \cdot e^{-\frac{\|y-\lambda_{T-t} x\|_2 ^2}{2{(\sigma_{T-t})}^2}} \ d\muast(x),
\end{align*}
    with $\int_{\mathbb{R}^d} d\muast(x) = 1$ and $\mathrm{supp}(\muast) = K_{\ast}$, we see that
        \begin{align*}
            \begin{split}
                &\int_{\mathbb{R}^d \setminus (\cD_{2R})}    \varrho_{t}(y) 2^m ((A+B{(2H\bm\al_1)}^m) + B{(1+2H\bm\beta_1)}^m \|y\|_2 ^m)  \de y \\
                &\leq 4^m \int_{\mathbb{R}^d \setminus (\cD_{2R})}\varrho_t(y) ((A+B) + B\|y\|_2 ^m) \de y 
                \\  &= 4^m\int_{\mathbb{R}^d \setminus (\cD_{2R})}\int_{\mathbb{R}^d} \frac{1}{{(\sqrt{2\pi} \sigma_{T-t})}^{d}} \cdot e^{-\frac{\|y-\lambda_{T-t} x\|_2 ^2}{2{(\sigma_{T-t})}^2}} ((A+B) + B\|y\|_2 ^m) \ d\muast(x)  \de y
                \\  &\leq 8^m \int_{\mathbb{R}^d} 
                \int_{\mathbb{R}^d \setminus (\cD_{2R})} \frac{1}{{(\sqrt{2\pi} \sigma_{T-t})}^{d}} \cdot e^{-\frac{\|y-\lambda_{T-t} x\|_2 ^2}{2{(\sigma_{T-t})}^2}} [(A+2B{(\sqrt{d}D)}^m) + B{\|y-\lambda_{T-t} x\|_2 }^m] \de y\ d\muast(x)  \\  &\leq 8^m \int_{\mathbb{R}^d} 
                \int_{\|y-\lambda_{T-t} x\|_2\geq 2R} \frac{1}{{(\sqrt{2\pi} \sigma_{T-t})}^{d}} \cdot e^{-\frac{\|y-\lambda_{T-t} x\|_2 ^2}{2{(\sigma_{T-t})}^2}} [(A+2B{(\sqrt{d}D)}^m) + B{\|y-\lambda_{T-t} x\|_2}^m] \de y\ d\muast(x)
                \\  & =8^m \int_{\|z\|_2\geq (2R)/\sigma_{T-t}} \frac{1}{{(\sqrt{2\pi} )}^{d}} \cdot e^{-\frac{\|z\|_2 ^2}{2}} [(A+2B{(\sqrt{d}D)}^m) + B {(\sigma_{T-t}\|z\|_2)}^m] \de z,
            \end{split}
        \end{align*}
which is a exponentially small term if we choose $(2R)/\sigma_{T-t}$ large.

Inside the $\cD_{4R}$, we need to estimate
    \begin{align*}
        \int_{\cD_{4R}}    \varrho_{t}(\psi_1(x)) \big\| Z(x)\big\|_2  \de x.
    \end{align*}
We estimate the ratio $\varrho_{t}(\psi_1(y)) / \varrho_{t}(\psi_2(y))$ for $y \in \cD_{4R}$. Again, by the definition of $q_t$ as in \eqref{e: definition of q_t},
\begin{align*}
            \begin{split}
                &\varrho_t(\psi_1(y)) =\int_{\mathbb{R}^d} \frac{1}{{(\sqrt{2\pi} \sigma_{T-t})}^{d}} \cdot e^{-\frac{\|\psi_1(y)-\lambda_{T-t} x\|_2 ^2}{2{(\sigma_{T-t})}^2}} \ d\muast(x)
                \\  & \leq \int_{\mathbb{R}^d} \frac{1}{{(\sqrt{2\pi} \sigma_{T-t})}^{d}} \cdot e^{\frac{2\|\psi_1(y)-\psi_2 (y)\|_2 \cdot \|\psi_2(y)-\lambda_{T-t} x\|_2 }{2{(\sigma_{T-t})}^2}}    \cdot e^{-\frac{\|\psi_2(y)-\lambda_{T-t} x\|_2 ^2}{2{(\sigma_{T-t})}^2}} \ d\muast(x)
                \\ &\leq e^{\frac{\|\psi_1(y)-\psi_2 (y)\|_2 \cdot (8R+2\sqrt{d}D)}{{(\sigma_{T-t})}^2}} \varrho_{t}(\psi_2(y)),
            \end{split}
\end{align*}
where the last inequality is because when $y \in \cD_{4R}$ and $x \in K_{\ast}$, similar arguments like \eqref{e: psi_1 in D4R} can deduce that $\psi_2(y) \in \cD_{8R}$, and then $\|\psi_2(y)-\lambda_{T-t} x\|_2 \leq \|\psi_2(y)-\lambda_{T-t} K_{\ast}\|_2+ 2\sqrt{d}D \leq 8R+2\sqrt{d}D$. For this error term, $\|\psi_1(y)-\psi_2 (y)\|_2$, we see that 
    \begin{align*}
        \begin{split}
            &\|\psi_1(y)-\psi_2 (y)\|_2 \leq \|\psi_1(y) - y\|_2 + \|\psi_2(y) - y \|_2 \leq H \big[(\bm\al_1+\bm\al_2)+(\bm\beta_1+\bm\beta_2)\|y\|_2 \big]
            \\  &\leq  H \big[(\bm\al_1+\bm\al_2)+(\bm\beta_1+\bm\beta_2) (\sqrt{d}D + 4R) \big].
        \end{split}
    \end{align*}
Hence, we need to choose $H$ small so that $100 H \big[(\bm\al_1+\bm\al_2)+(\bm\beta_1+\bm\beta_2) (\sqrt{d}D + R) \big] \cdot (\sqrt{d}D + R)\leq \sigma_{T-t} ^2$, and then $\frac{\|\psi_1(y)-\psi_2 (y)\|_2 \cdot (8R+2\sqrt{d}D)}{{(\sigma_{T-t})}^2} \leq 1$. So, the claim of \Cref{lemma: score error cut-off general scheme} follows.

\end{proof}

\begin{proof}[Proof of \Cref{lemma: exponential error term in cut-off}]
    \begin{align*}
        \begin{split}
            \int_{r \geq a} e^{-r ^2} r^b \de r &= \bigg(-\frac{1}{2} r^{b-1} e^{-r^2} \bigg) \bigg|_{r=a} ^{+\infty} + \frac{b-1}{2} \int_{r \geq a} e^{-r ^2} r^{b-2} \de r
            \\  &\leq \frac{1}{2} a^{b-1} e^{-a^2} + \frac{1}{2} \int_{r \geq a} e^{-r ^2} r^b \de r.
        \end{split}
    \end{align*}
This proves \eqref{e:exponential tail of Gaussian}. When we take $\eta = 10^{-2} (\bm\beta_1+\bm\beta_2)^{-1} H^{-1/2}$ and assume \eqref{e: H in cut-off scheme 1} and \eqref{e: H in cut-off scheme 2} in \Cref{lemma: score error cut-off general scheme}, we first see that \eqref{e: H in cut-off scheme 1} and \eqref{e: H in cut-off scheme 2} imply \eqref{e: H in cut-off scheme} by directly plugging this $\eta$ into \eqref{e: H in cut-off scheme}. Then, by \eqref{e: H in cut-off scheme 1}, $\eta \geq 10 (2d \log (2d))^{1/2} \geq 10$. Using \eqref{e:exponential tail of Gaussian}, we see that for $m=1,2,3$,
    \begin{align*}
        E(m,A,B,\eta) \leq {4^m(A+B \sqrt{d}D)}^m \pi^{-\frac{d}{2}} e^{(d+2)\log(2\eta)-4\eta^2}.
    \end{align*}
Because $\eta \geq 10$ and $d \geq 1$, $(d+2)\log(2\eta) \leq 3d \log (\eta ^2) = 6d \log (\eta)$. We can also show that the function $G(\eta) = \eta^2 / \log (\eta)$ is increasing when $\eta \geq 10$, because $G'(\eta) = (2\eta \log (\eta) - \eta)/ (\log(\eta))^2 >0$. So, 
        \begin{align*}
            \eta^2 / \log (\eta) = G(\eta) \geq G(10 (2d \log (2d))^{1/2}) = \frac{200 d \log (2d) }{\frac{1}{2} \log( 200 d \log (2d) ) } = 4d \cdot \frac{\log((2d)^{100})}{\log(200 d \log (2d)) } \geq 4d,
        \end{align*}
    because $d \geq 1$. This shows that  $(d+2)\log(2\eta) \leq 2\eta^2$. Hence, 
    \begin{align*}
        E(m,A,B,\eta) \leq {4^m(A+B \sqrt{d}D)}^m \pi^{-\frac{d}{2}} e^{-2\eta^2}.
    \end{align*}
    We complete the proof of \Cref{lemma: exponential error term in cut-off} because $\eta = 10^{-2} (\bm\beta_1+\bm\beta_2)^{-1} H^{-1/2}$. 
\end{proof}

%-------------------------------------

\section{Derivatives and Moments Estimates on Forward and Reverse Densities}
\label{s:prel-estim-q}

Under \Cref{assumption:secon-moment}, we take $\lambda_t = e^{-t}$ and $\sigma_t = \sqrt{1- \lambda_t ^2}$ ($t>0$), and we have that
    \begin{align}\label{e: definition of q_t}
        q_t (y) \coloneq \int_{\mathbb{R}^d} \frac{1}{{(\sqrt{2\pi} \sigma_t)}^{d}} \cdot e^{-\frac{\|y-\lambda_t x\|^2}{2{\sigma_t}^2}} \cdot \de \muast(x).
    \end{align}

We recall the following estimates on derivatives of $\log q_t (y)$ proved in Appendix B of~\cite{huang2024convergence}.
\begin{lemma}\label{Lemma: Hessian estimates}
   For any $p \geq 3$, any $y\in\mathbb{R}^d$ and any $t >0$, 
        \begin{align}\label{e: all derivatives log qt}
            \|\nabla^p \log q_t (y)\|_{\infty} \leq (4 p!) \frac{ {\lambda_t} ^p D^p}{ {\sigma_t} ^{2p}} .
        \end{align}
    Also, for any $i,j$,
        \begin{align*}
            |\partial_{y_i} \log q_t (y)| \leq \frac{|y_i| +\lambda_t D}{\sigma_t ^2}, \text{ and } |\partial^2 _{y_i y_j} \log q_t (y)| \leq \frac{\delta_{ij}}{\sigma_t ^2} +  2\frac{\lambda_t ^2 D^2}{\sigma_t ^4}.
        \end{align*}
\end{lemma}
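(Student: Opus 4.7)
The plan is to exploit the explicit form of $q_t$ as a Gaussian mixture and recognize the derivatives of $\log q_t$ as posterior cumulants. Write $q_t(y) = \int \phi_{\sigma_t}(y - \lambda_t x)\, \rd\muast(x)$ with $\phi_{\sigma_t}$ the centered Gaussian density of variance $\sigma_t^2$, and introduce the posterior measure $\rd\nu_y(x) = \phi_{\sigma_t}(y - \lambda_t x)\rd\muast(x)/q_t(y)$ on $\mathbb{R}^d$. Under \Cref{assumption:secon-moment}, $\nu_y$ is supported in $K_\ast$, so $|x_i| \leq D$ under $\nu_y$ for every $i$.

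First I would handle the low-order cases directly. Differentiating under the integral yields
\begin{equation*}
\partial_{y_i}\log q_t(y) = -\frac{y_i - \lambda_t\,\mathbb{E}_{\nu_y}[x_i]}{\sigma_t^2},
\end{equation*}
and the first claim follows from $|\mathbb{E}_{\nu_y}[x_i]| \leq D$. Differentiating once more gives
\begin{equation*}
\partial^2_{y_i y_j}\log q_t(y) = -\frac{\delta_{ij}}{\sigma_t^2} + \frac{\lambda_t^2}{\sigma_t^4}\,\mathrm{Cov}_{\nu_y}(x_i, x_j),
\end{equation*}
and since $|\mathrm{Cov}_{\nu_y}(x_i,x_j)| \leq 2 D^2$ by boundedness of the support, the second claim follows.

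For $p \geq 3$ the approach is to identify the higher mixed partials with joint cumulants of the posterior. Introduce the tilted family $\rd\nu_y^s(x) \propto \exp(\langle s, \lambda_t x/\sigma_t^2\rangle)\rd\nu_y(x)$ for $s\in \mathbb{R}^d$, and observe that its log-moment-generating function satisfies
\begin{equation*}
\log \int e^{\langle s,\lambda_t x/\sigma_t^2\rangle}\rd\nu_y(x) \;=\; \log q_t(y+s) - \log q_t(y) + \frac{2\langle y, s\rangle + \|s\|_2^2}{2\sigma_t^2},
\end{equation*}
which is straightforward by completing the square inside the Gaussian kernel. Taking $p$-th mixed partials in $s$ at $s=0$ kills the quadratic correction for $p\geq 3$ and produces the identity
\begin{equation*}
\partial^p_{y_{i_1}\cdots y_{i_p}}\log q_t(y) \;=\; \Bigl(\frac{\lambda_t}{\sigma_t^2}\Bigr)^p \kappa_p^{\nu_y}(x_{i_1},\ldots,x_{i_p}),
\end{equation*}
where $\kappa_p^{\nu_y}$ is the $p$-th joint cumulant of $\nu_y$. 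It then remains to bound these cumulants for a measure supported on $[-D,D]^d$. Using the moment-to-cumulant inversion together with $|\mathbb{E}_{\nu_y}[x_{i_1}\cdots x_{i_k}]| \leq D^k$ and summing over set partitions of $\{1,\ldots,p\}$ gives $|\kappa_p^{\nu_y}| \leq 4\, p!\, D^p$, thanks to the fact that the number of partitions is bounded by $p!$ up to a tame multiplicative factor. Combining the identity with this cumulant bound yields \eqref{e: all derivatives log qt}.

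The main obstacle is the cumulant estimate with the explicit constant $4\,p!$: the moment-to-cumulant inversion introduces a sum over set partitions whose cardinality grows like the Bell numbers, so a naive bound gives a worse constant. The cleanest way is to first absorb the trivial partitions into a telescoping identity and then bound the remaining contributions by a generating function argument, noting that for $|s| < 1/(2D)$ one has $\log \mathbb{E}_{\nu_y}[e^{s\cdot x}] - s\cdot \mathbb{E}_{\nu_y}[x]$ bounded by a convergent series whose $p$-th Taylor coefficient is at most $(4/p)D^p$, so after multiplying by $p!$ one obtains the stated $4\,p!\,D^p$ bound on $|\kappa_p|$. Multiplying by $(\lambda_t/\sigma_t^2)^p$ and taking the $\ell^\infty$ norm over multi-indices completes the proof.
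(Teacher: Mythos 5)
The identity you derive is both correct and elegant: by completing the square in the Gaussian kernel you obtain $\partial^p_{y_{i_1}\cdots y_{i_p}}\log q_t(y)=(\lambda_t/\sigma_t^2)^p\,\kappa_p^{\nu_y}(x_{i_1},\ldots,x_{i_p})$ for $p\geq 3$, and the low-order cases $p=1,2$ are handled correctly. The gap is in the bound $|\kappa_p^{\nu_y}|\leq 4\,p!\,D^p$, which you assert in two ways, neither of which works.

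First, the naive moment-to-cumulant inversion with $|\mathbb{E}_{\nu_y}[x_{i_1}\cdots x_{i_k}]|\leq D^k$ produces $|\kappa_p|\leq D^p\sum_{\pi}(|\pi|-1)! = D^p\sum_{k=1}^p S(p,k)(k-1)!$, where $S(p,k)$ are Stirling numbers of the second kind. This sequence $1,2,6,26,150,1082,\ldots$ has exponential generating function $-\log(2-e^x)$ with radius of convergence $\log 2$, so it grows like $p!/(\log 2)^p\cdot\mathrm{poly}(p)$ and the ratio to $p!$ diverges (it exceeds $4$ already around $p=11$). So "the number of partitions is bounded by $p!$ up to a tame multiplicative factor" is false in the relevant sense — the tame factor is $(\log 2)^{-p}\approx(1.44)^p$. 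Second, the generating-function fix as stated also fails: if you apply a Cauchy estimate to $K(s)=\log\mathbb{E}_{\nu_y}[e^{s\cdot x}]$ on a circle of radius $1/(2D)$, the best you get is $|\kappa_p|/p!\lesssim C\,(2D)^p$ for the constant $C$ coming from $\sup_{|s|=1/(2D)}|K(s)|$, which is off from the claimed $(4/p)D^p$ by a factor of order $2^p$. There is no telescoping or "series whose $p$-th Taylor coefficient is at most $(4/p)D^p$" that comes for free here.

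A correct argument needs additional structure that you have not used. One route is to write the cumulant in terms of \emph{central} joint moments, which kills all partitions containing a singleton block and lets you exploit that the centered variable $\tilde x$ lies in $[-2D,2D]$ while $\mathrm{Var}(x_i)\leq D^2$; Cauchy--Schwarz then gives the sharper moment bound $|\mathbb{E}_{\nu_y}[\tilde x_{i_1}\cdots\tilde x_{i_k}]|\leq 2^{k-2}D^k$, and the resulting cumulant bound $D^p\,2^p\sum_{\pi\,\mathrm{singleton-free}}(|\pi|-1)!\,4^{-|\pi|}$ is well below $4\,p!\,D^p$ for all $p$ in any reasonable range (one can check $2D^3,\,7D^4,\,28D^5,\,146D^6,\ldots$ vs.\ $24D^3,\,96D^4,\,480D^5,\,2880D^6,\ldots$). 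Another route, restricted to the single-variable cumulant $\kappa_p(x_i)$, is the observation that $\mathbb{E}[e^{sX}]$ has no zeros in the strip $|\Im s|<\pi/(2D)$ (since $\Re\,\mathbb{E}[e^{sX}]=\mathbb{E}[e^{\Re(s)X}\cos(\Im(s)X)]>0$ there), which gives a holomorphy radius larger than the $\log 2/D$ implicit in your argument and larger still than the $1/(2D)$ you use; but this route becomes delicate for genuinely mixed joint cumulants. In any case, as written, the step from "$\nu_y$ is supported in a cube of side $2D$" to "$|\kappa_p^{\nu_y}|\leq 4\,p!\,D^p$" is a gap, and the specific argument you sketch for closing it does not close it.
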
   

Next, we estimate the time direction derivatives of $\log q_t(y)$.
\begin{lemma}\label{lemma: time-space derivatives log qt}
 For any $p \geq m \geq 1$, and any multi-index $\gamma$ with $m+ |\gamma| = p$, there exists a computable positive constants $C(m,p)$, such that
    \begin{align*}
        |\partial_t ^m \partial_y^{\gamma} \log q_t(y)| \leq C(m,p) \cdot d^m \cdot {(\|y\|_{\infty}+1)} ^m \cdot {\bigg(\frac{\|y\|_{\infty }+\lambda_t D}{\sigma_t ^2}\bigg)}^{p+m}.
    \end{align*}
\end{lemma}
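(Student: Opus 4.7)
My plan is to reduce time derivatives to spatial ones via the Fokker--Planck equation and then invoke \Cref{Lemma: Hessian estimates}. Writing $u = \log q_t$, expanding the Fokker--Planck equation $\del_t q_t = \nabla\cdot((y + \nabla\log q_t)q_t) = d\,q_t + y\cdot\nabla q_t + \Delta q_t$, dividing through by $q_t$, and using $\Delta q_t/q_t = \Delta u + |\nabla u|^2$ gives the master identity
\begin{align*}
\del_t u \;=\; y\cdot\nabla u \;+\; d + \Delta u + |\nabla u|^2.
\end{align*}
Since $\del_t$ commutes with $\del_y$, iteratively applying this identity expresses $\del_t^m\del_y^\gamma u$ as a finite polynomial in $y$ and in spatial derivatives $\{\del_y^\beta u:|\beta|\leq 2m+|\gamma|\}$.

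I would induct on $m$, with the structural claim that $\del_t^m\del_y^\gamma u$ is a sum of at most $C(m,p)$ monomials $y^{\alpha_0}\prod_{i=1}^{k}\del_y^{\beta_i}u$ satisfying $|\alpha_0|\leq m$ and $\sum_i|\beta_i|\leq 2m+|\gamma|=p+m$. Each application of $\del_t$ via the master identity bumps the total derivative weight by $2$ (from each of $\Delta u$, $|\nabla u|^2$, or $y\cdot\nabla u$), and each $\del_y$ bumps it by $1$, which matches the target power $G^{p+m}$ with $G:=(\|y\|_\infty+\lambda_t D)/\sigma_t^2$. Bounding each factor $\del_y^\beta u$ via \Cref{Lemma: Hessian estimates} (first derivatives are $\leq G$, and after using $D\geq 1$, $\sigma_t\leq 1$ the higher derivatives are $\lesssim G^{|\beta|}$) then yields $G^{p+m}$; the $y$-factors give $(\|y\|_\infty+1)^m$, and the traces and index sums provide $d^m$.

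The main obstacle is that the constant $d$ in the master identity cannot be bounded pointwise by $C G^{p+m}$: at $y=0$ with $\lambda_t\to 0$ one has $G\to 0$ while $d$ does not, and only the cancellation $d+\Delta u+|\nabla u|^2\to 0$ keeps $\del_t u$ small. I would therefore keep $d+\Delta u+|\nabla u|^2=d+\Delta q_t/q_t$ as an indivisible block in the bookkeeping. Using the Gaussian-mixture representation \eqref{e: definition of q_t}, a direct computation gives
\begin{align*}
d+\frac{\Delta q_t}{q_t}\;=\;-\frac{d\lambda_t^2}{\sigma_t^2}\;+\;\frac{1}{\sigma_t^4}\,\mathbb{E}\bigl[\|y-\lambda_t X\|^2 \bigm| Y=y\bigr],
\end{align*}
where $X\sim\muast$ satisfies $\|X\|_\infty\leq D$. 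The crude bound $\|y-\lambda_t X\|^2\leq 2d(\|y\|_\infty+\lambda_t D)^2$ together with $\lambda_t^2 D^2/\sigma_t^2\leq G^2$ (from $D\geq 1$, $\sigma_t\leq 1$) yields $|d+\Delta q_t/q_t|\lesssim d G^2$, which settles the $m=1$ case. For higher $m$ the inductive step keeps this block together and controls its mixed space--time derivatives via Tweedie-type expressions for posterior moments $\mathbb{E}[\|y-\lambda_t X\|^{2k}\mid Y=y]$, each bounded by products of powers of $D$ and the appropriate $G$-factor via $\|X\|_\infty\leq D$. Organizing the induction so that the constant $d$ is never separated from its Laplacian and gradient-squared partners is the technical crux.
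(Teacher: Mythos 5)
Your plan is, at bottom, the paper's own plan: convert time derivatives to spatial ones via the Fokker--Planck identity, iterate, organize the result into monomials in $y$ and spatial derivatives of the log-density, count the monomials, and close with \Cref{Lemma: Hessian estimates}. The one real difference in the bookkeeping is the paper's substitution $h_t(y)=\log q_t(y)+\|y\|^2/2$ in place of $u=\log q_t$ directly. That change is not cosmetic: the master identity becomes $\partial_t h_t = \Delta h_t + |\nabla h_t|^2 - y\cdot\nabla h_t$, with no additive constant $d$ at all, and the Tweedie formula $\nabla h_t = \frac{\lambda_t}{\sigma_t^2}\bigl(\mathbb{E}[X\mid Y=y]-\lambda_t y\bigr)$ shows that every factor $\partial_y^\beta h_t$ with $|\beta|\ge 1$ carries a power of $\lambda_t/\sigma_t^2$, so it is controlled by a power of $G:=(\|y\|_\infty+\lambda_t D)/\sigma_t^2$. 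You correctly diagnose the naked $d$ as the obstacle, but your fix --- freezing $d+\Delta q_t/q_t$ as an indivisible block --- does not survive the next time derivative. Writing the block as $-d\lambda_t^2/\sigma_t^2 + \sigma_t^{-4}\,\mathbb{E}[\|y-\lambda_t X\|^2\mid Y=y]$, one $\partial_t$ produces $2d\lambda_t^2/\sigma_t^4$, a bare $d\cdot O(G^2)$ term, so you re-encounter the same cancellation at order $m=2$ and at every higher order. The $h_t$ substitution resolves it once and for all and is the missing ingredient in your sketch.

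There is also a shared weakness you should watch for when filling in the induction. Your claim that each application of $\partial_t$ ``bumps the total derivative weight by $2$, which matches the target power $G^{p+m}$'' is not correct: the iterated expansion contains monomials of weight strictly less than $p+m$ (the paper's own bookkeeping says $\le 2m+|\gamma|$, not equality; e.g.\ a $\sum_i \partial_{ii}^2 h_t$ term of weight $2$ appears in $\partial_t^2 h_t$). A weight-$q$ monomial is bounded only by $G^q$, which exceeds $G^{p+m}$ whenever $G<1$. Concretely, taking $\muast$ close to a point mass, $d=1$, and $y=0$, one computes $\partial_t^2\log q_t(0)=2\lambda_t^2/\sigma_t^4 = 2G^2$, which is not $\lesssim G^4$ once $\lambda_t$ is small. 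So the bound you (and the paper) should be proving has $\max\{1,G\}^{p+m}$ (equivalently, a sum of powers of $G$ up to $p+m$) in place of $G^{p+m}$; that is the form that actually follows from the induction, and it is all the downstream use in \Cref{lemma:V derivative bound} needs, since there $G$ is anyway relaxed to $(\|y\|_\infty+D)/\tau\ge 1$.
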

\begin{proof}
    Let $h_t (y)\coloneq \log q_t(y) + \frac{\|y\|^2}{2}$. A direct computation shows that 
        \begin{align}\label{e: time derivatives of q_t}
            \partial_t h_t (y)= \sum_{i=1} ^d \partial^2 _{ii} h_t(y) + \sum_{i=1} ^d \partial_i h_t (y)\partial_i h_t (y)- \sum_{i=1} ^d \partial_i h_t (y)y_i.
        \end{align}
    Now, we assume that for an $m \in \mathbb{Z}_+$, $\partial_t ^m h_t(u)$ has the following form:
    \begin{align}\label{e: expansion of m time derivative}
        \partial_t ^m h_t(y) = \sum_{\alpha,\beta} y^{\alpha} \partial_y ^{\beta_1} h_t(y) \partial_y ^{\beta_2} h_t(y) \ldots \partial_y ^{\beta_{2m}} h_t(y),
    \end{align}
    where $\alpha$ and $\beta_i$'s are multi-indices and they satisfy that $|\alpha| + |\beta_1| + \ldots +|\beta_{2m}| \leq 2m$, $|\alpha| \leq m$, and there are at most $a_m$ terms in the summation. When some $\beta_i = 0$, we mean that $\partial_y ^{0} h_t(y) = 1$. Then, by induction,
        \begin{align}\label{e: expansion of m+1 time derivative}
            \partial_t ^{m+1} h_t(y) = \sum_{\alpha,\beta} \sum_{j=1} ^{2m} y^{\alpha} \partial_y ^{\beta_1} h_t(y) \ldots \partial_y ^{\beta_j} \partial_t h_t(y) \ldots \partial_y ^{\beta_{2m}} h_t(y). 
        \end{align}
    By \eqref{e: time derivatives of q_t}, we know that
        \begin{align*}
            \partial_y ^{\beta_j} \partial_t h_t(y) = \sum_{i=1} ^d \partial_y ^{\beta_j} \partial^2 _{ii} h_t(y) + \sum_{i=1} ^d \sum_{\gamma_{j} + \gamma' _{j} } (\partial_{y} ^{\gamma_j}\partial_i h_t (y) )(\partial_{y} ^{\gamma' _j}\partial_i h_t (y))- \sum_{i=1} ^d \sum_{\gamma_{j} + \gamma' _{j} } (\partial_{y} ^{\gamma_j}\partial_i h_t (y) )(\partial_{y} ^{\gamma' _j}y_i),
        \end{align*}
    where $\beta_j = \gamma_j + \gamma' _j$, and there are at most $d(1+ 2^{|\beta_j|} + 2^{|\beta_j|}) \leq 3d \cdot 4^m $ terms. Hence, there are at most $a_m  \cdot 6md \cdot 4^m$ terms on the right hand side of \eqref{e: expansion of m+1 time derivative}. Hence, $a_1 = 3d$ and $a_{m+1} = a_m  \cdot 6md \cdot 4^m$, we see that $a_m \leq (6d)^m \cdot m! \cdot 2^{m(m-1)}$ for any $m \in \mathbb{Z}_+$.

    Now, for a derivative $\partial_t ^m \partial_y^{\gamma} h_t(y)$ with $|\gamma| = p-m$ and $m \geq 1$, according to \eqref{e: expansion of m time derivative},
    \begin{align*}
        \partial_t ^m \partial_y ^{\gamma} h_t(y) = \sum_{\alpha',\beta'} y^{\alpha'} \partial_y ^{\beta_1 '} h_t(y) \partial_y ^{\beta_2 ' } h_t(y) \ldots \partial_y ^{\beta_{2m} '} h_t(y),
    \end{align*}
    with $|\alpha '| + |\beta_1 '| + \ldots +|\beta_{2m} '| \leq 2m + |\gamma| = p+m$, $|\alpha '| \leq m$, and there are at most $a_m (2m+1)^{|\gamma|} \leq (6d)^m \cdot m! \cdot 2^{m(m-1)} \cdot (2m+1)^{p-m}$ on the right hand side. For each term of it, according to Lemma~\ref{Lemma: Hessian estimates}, we know that each term on the right hand side has an upper bound
        \begin{align*}
            C(m,p)  \cdot {(\|y\|_{\infty}+1)} ^m \cdot {\bigg(\frac{\|y\|_{\infty }+\lambda_t D}{\sigma_t ^2}\bigg)}^{p+m},
        \end{align*}
    which can also be the upper bound of $|\partial_t ^m \partial_y ^{\gamma} h_t(y)|$ after replacing $C(m,p)$ with another much larger positive constant $C(m,p)$ multiplying $d^m$. 
    
\end{proof}

The following is the Lemma 2 in Appendix B of \cite{huang2024convergence}, which describes the situation when $t >0$ is very large. We remark that one can also use Lemma 9 in \cite{chen2023improved} to obtain a similar result as \Cref{lemma: qt close to Gaussian}.
\begin{lemma}\label{lemma: qt close to Gaussian}
    Let $q_t(y)$ be defined in \eqref{e: definition of q_t}, and let
        \begin{align*}
            g(y) \coloneq \frac{1}{{(\sqrt{2\pi} )}^{d}} \cdot e^{-\frac{\|y\|^2}{2}}.
        \end{align*}
    Then, we have that for $t >0$,
        \begin{align}\label{e:qt-q_bound}
            \int_{\mathbb{R}^d} | q_t(y)  - g(y)| \de y \leq \frac{2d(1-\sigma_t)}{{\sigma_t}^{d+3}} + \lambda_t \sqrt{d}D \frac{4+ \lambda_t \sqrt{d}D}{\sigma_t ^2} \cdot e^{\frac{ \lambda_t ^2 d D^2}{2{\sigma_t}^2}}.
        \end{align}
    %In particular, this difference goes to $0$ exponentially as $t \to +\infty$.
 In particular, there exists a universal constant $C_u>0$, such that
\begin{align}\label{e:simplified_bound}
         \int_{\mathbb{R}^d} | q_t(y)  - g(y)| \de y \leq C_u e^{-t}\sqrt d D,
    \end{align}
    which goes to $0$ exponentially as $t\rightarrow +\infty$.
    
\end{lemma}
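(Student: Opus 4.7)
The plan is to insert an intermediate centered Gaussian between $q_t$ and $g$ and bound the two resulting integrals separately. Introduce $g_t(y) := {(2\pi\sigma_t^2)}^{-d/2}\exp(-\|y\|^2/(2\sigma_t^2))$, the density of $\cN(0,\sigma_t^2 \id_d)$, and the conditional Gaussian $g_{t,x}(y) := {(2\pi\sigma_t^2)}^{-d/2}\exp(-\|y-\lambda_t x\|^2/(2\sigma_t^2))$, so that $q_t(y) = \int g_{t,x}(y)\,\de\muast(x)$ and $g_t = g_{t,0}$. By the triangle inequality,
\begin{align*}
\int_{\bR^d} |q_t(y) - g(y)|\,\de y \leq \int_{\bR^d} |q_t(y) - g_t(y)|\,\de y + \int_{\bR^d} |g_t(y) - g(y)|\,\de y,
\end{align*}
and each piece should yield one of the two summands in \eqref{e:qt-q_bound}.

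For the mixture-versus-centered-Gaussian term, I would push the absolute value inside the $\muast$-integration to get $\int |q_t-g_t|\,\de y \leq \int_{K_\ast}\!\int |g_{t,x}(y)-g_{t,0}(y)|\,\de y\,\de\muast(x)$. Writing $g_{t,x}/g_{t,0} = \exp(\alpha)$ with $\alpha(x,y) = \lambda_t\langle x,y\rangle/\sigma_t^2 - \lambda_t^2\|x\|^2/(2\sigma_t^2)$ and using the elementary inequality $|e^\alpha - 1|\leq |\alpha|\,e^{|\alpha|}$ converts the inner integral into a Gaussian moment of $|\alpha|$ times a tail factor. Standard moments $\int g_{t,0}\|y\|\,\de y \leq \sigma_t\sqrt d$, combined with $\|x\|_2\leq \sqrt d\,D$ on $K_\ast$, should bound $\int|\alpha|\,g_{t,0}\,\de y$ by $\lambda_t\sqrt d D/\sigma_t + \lambda_t^2 d D^2/(2\sigma_t^2)$, and the supremum of $|\alpha|$ over the relevant region produces the exponential $\exp(\lambda_t^2 dD^2/(2\sigma_t^2))$ appearing in the second summand of \eqref{e:qt-q_bound}.

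For the centered-Gaussian comparison, the cleanest route is to interpolate in the standard-deviation parameter. Define $g_\sigma(y) := {(2\pi\sigma^2)}^{-d/2}\exp(-\|y\|^2/(2\sigma^2))$; a direct computation gives $\partial_\sigma g_\sigma(y) = g_\sigma(y)\bigl[\|y\|^2/\sigma^3 - d/\sigma\bigr]$. I would bound $\int|\partial_\sigma g_\sigma|\,\de y$ via a pointwise rather than integrated estimate—the prefactor $\sigma_t^{-(d+3)}$ in the claimed bound suggests absorbing the $\sigma^{-d}$ normalization together with polynomial factors from $\|y\|^2$ at a crude scale. Integrating in $\sigma$ from $\sigma_t$ to $1$ and applying the fundamental theorem of calculus then produces the first summand $2d(1-\sigma_t)/\sigma_t^{d+3}$.

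The simplified estimate \eqref{e:simplified_bound} follows from the explicit decay $\lambda_t = e^{-t}$ together with $1-\sigma_t \leq 1-\sigma_t^2 = \lambda_t^2 = e^{-2t}$: for $t$ sufficiently large the factors $\exp(\lambda_t^2 dD^2/(2\sigma_t^2))$ and $\sigma_t^{-(d+3)}$ are uniformly bounded by a universal constant, and the remaining prefactors $\lambda_t\sqrt d\,D$ and $(1-\sigma_t)$ supply the claimed $e^{-t}\sqrt d\,D$ decay, while for small $t$ the total-variation integral is trivially at most $2$, which can be absorbed into the constant $C_u$. The main technical obstacle I anticipate is justifying the precise $\sigma_t^{-(d+3)}$ prefactor in the variance-comparison step: this is worse than what Pinsker or a sharp moment computation would give, so the estimate must proceed through a rather loose pointwise bound on $\partial_\sigma g_\sigma$, and some care is needed to verify that this crude inequality still implies \eqref{e:simplified_bound} uniformly in $t>0$.
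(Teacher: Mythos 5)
The paper does not reproduce this proof---it cites Lemma~2 of Appendix~B in \cite{huang2024convergence}---so a direct comparison of routes is not possible, but your decomposition through the intermediate Gaussian $g_t := \cN(0,\sigma_t^2\id_d)$ is clearly the structure underlying the two-summand form of \eqref{e:qt-q_bound} and is the right skeleton. Your treatment of the variance-comparison piece by interpolating in $\sigma$ is fine, and your anticipated ``main technical obstacle'' there is in fact a non-issue: the crude pointwise bound $|\partial_\sigma g_\sigma(y)|\leq g_\sigma(y)(\|y\|^2/\sigma^3+d/\sigma)$ integrates to $\leq 2d/\sigma$, which gives $\int|g_{\sigma_t}-g|\,\de y\leq 2d\log(1/\sigma_t)\leq 2d(1-\sigma_t)/\sigma_t\leq 2d(1-\sigma_t)/\sigma_t^{d+3}$; you do not need to reproduce the loose $\sigma_t^{-(d+3)}$ prefactor, only to beat it, and interpolation already does.

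The mixture-comparison piece, however, contains a genuine gap. After $|e^\alpha-1|\leq|\alpha|e^{|\alpha|}$ you propose to bound $\int g_{t,0}|\alpha|e^{|\alpha|}\,\de y$ by (a Gaussian moment of $|\alpha|$) times (the supremum of $e^{|\alpha|}$ ``over the relevant region''). But $\alpha$ is an affine function of $y$ with slope $\lambda_t x/\sigma_t^2\neq 0$, so $\sup_{y\in\bR^d}e^{|\alpha|}=\infty$ and no cutoff region is specified; the factorization does not hold as stated. The exponential factor $e^{\lambda_t^2 dD^2/(2\sigma_t^2)}$ does not come from a pointwise supremum but from completing the square in $y$, equivalently from the fact that under $y\sim g_{t,0}$ the scalar $\alpha$ is Gaussian with mean $-\lambda_t^2\|x\|^2/(2\sigma_t^2)$ and variance $\lambda_t^2\|x\|^2/\sigma_t^2$, so that $\mathbb{E}[|\alpha|e^{|\alpha|}]$ (or $\mathbb{E}[\alpha^2]^{1/2}\,\mathbb{E}[e^{2|\alpha|}]^{1/2}$ via Cauchy--Schwarz) is a one-dimensional computation that produces the MGF-type exponential. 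Relatedly, your claim that $\int g_{t,0}|\alpha|\,\de y\leq\lambda_t\sqrt d D/\sigma_t+\lambda_t^2 dD^2/(2\sigma_t^2)$ does \emph{not} follow from the route you describe: bounding $|\langle x,y\rangle|\leq\|x\|\,\|y\|$ together with $\int g_{t,0}\|y\|\,\de y\leq\sigma_t\sqrt d$ and $\|x\|\leq\sqrt d D$ gives $\lambda_t d D/\sigma_t$, a factor $\sqrt d$ worse. To recover $\sqrt d$ you must use that $\langle x,y\rangle$ itself is one-dimensional Gaussian with variance $\sigma_t^2\|x\|^2$, so $\mathbb{E}|\langle x,y\rangle|\leq\sigma_t\|x\|\leq\sigma_t\sqrt d D$.

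Finally, the passage from \eqref{e:qt-q_bound} to \eqref{e:simplified_bound} needs more care than ``for $t$ sufficiently large the factors are uniformly bounded'': the factors $\sigma_t^{-(d+3)}$ and $e^{\lambda_t^2 dD^2/(2\sigma_t^2)}$ are only $O(1)$ once $t\gtrsim\max\{1,\log(\sqrt d D)\}$, a threshold that depends on $d$ and $D$. In the complementary regime the total-variation bound $\int|q_t-g|\leq 2$ must be compared against $e^{-t}\sqrt d D\gtrsim 1$, which holds precisely there, so the split works, but it should be carried out at this explicit, $(d,D)$-dependent threshold.
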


Finally, we estimate the moments of $\|x\|_2$ and $\|x\|_{\infty}$.
\begin{lemma}\label{lemma: moments of L^2 and infinity norms}
    For any $t >0$, any $m \in \mathbb{Z}_+$, there is a positive constant $C(m)>0$, such that
        \begin{align*}
        \int_{\mathbb{R}^d} q_{t}(y) \|y\|_{2} ^{m} \de y \leq C(m){d}^{\frac{m}{2}} D^m .
        \end{align*}
and
        \begin{align*}
        \int_{\mathbb{R}^d} q_{t}(y) \|y\|_{\infty} ^{m} \de y \leq C(m) {\big(D+\sqrt{ \log d}\big)}^m.
        \end{align*}
\end{lemma}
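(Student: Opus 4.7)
The proof follows directly from the representation $X_t \stackrel{d}{=} \lambda_t X_0 + \sigma_t W$ with $X_0 \sim \muast$ independent of $W \sim \cN(0,\id_d)$, combined with the compact-support assumption \Cref{assumption:secon-moment} and standard Gaussian moment estimates. Since $q_t$ is the density of $X_t$, it suffices to bound $\E \|X_t\|_2^m$ and $\E \|X_t\|_\infty^m$.

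For the $\|\cdot\|_2$ bound, I would apply the triangle inequality to get $\|X_t\|_2 \leq \lambda_t \|X_0\|_2 + \sigma_t \|W\|_2$, then use $\lambda_t, \sigma_t \in [0,1]$ along with $\|X_0\|_2 \leq \sqrt d \,\|X_0\|_\infty \leq \sqrt d\, D$ (from $\supp \muast \subseteq K_*$ and $\|x\|_\infty \leq D$ on $K_*$). The convexity inequality $(a+b)^m \leq 2^{m-1}(a^m + b^m)$ then yields
\begin{align*}
\E \|X_t\|_2^m \leq 2^{m-1}\bigl( d^{m/2} D^m + \E\|W\|_2^m \bigr),
\end{align*}
and the standard chi-type moment bound $\E \|W\|_2^m \leq C_1(m) d^{m/2}$ (obtainable from $\E \|W\|_2^{2k} = d(d+2)\cdots(d+2k-2)$ together with Jensen for non-even $m$) closes the estimate after absorbing the factor $d^{m/2}$ into $d^{m/2}D^m$ using $D \geq 1$.

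For the $\|\cdot\|_\infty$ bound, the same triangle inequality gives $\|X_t\|_\infty \leq D + \|W\|_\infty$, so it reduces to bounding $\E \|W\|_\infty^m$. I would integrate the sub-Gaussian tail $\Pj(\|W\|_\infty \geq r) \leq 2d e^{-r^2/2}$ via
\begin{align*}
\E\|W\|_\infty^m = \int_0^\infty m r^{m-1} \Pj(\|W\|_\infty \geq r)\,\rd r,
\end{align*}
splitting the integral at $r = \sqrt{2\log(2d)}$ to obtain $\E\|W\|_\infty^m \leq C_2(m)(1+\log d)^{m/2}$. Another application of the convexity inequality finishes the proof, yielding $\E\|X_t\|_\infty^m \leq C(m)(D + \sqrt{\log d})^m$.

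The argument is entirely elementary; the only mild technical point is the integration of the Gaussian maximum tail to obtain the $(\log d)^{m/2}$ scaling uniformly in $m$, and taking care that the constants $C_1(m), C_2(m)$ depend only on $m$ (not on $d$ or $t$). Note that the bounds are uniform in $t > 0$ precisely because $\lambda_t, \sigma_t \leq 1$, so no information about the variance schedule is needed.
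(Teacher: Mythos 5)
Your proposal is correct and follows essentially the same route as the paper's proof: decompose via the representation $X_t \overset{d}{=} \lambda_t X_0 + \sigma_t W$, use convexity to split into the compactly supported piece (bounded by $\sqrt d\, D$ or $D$) and the Gaussian piece, bound the Gaussian $\ell_2$ moment by $C(m)d^{m/2}$, and integrate the union-bound tail of $\|W\|_\infty$ with a split at $r \asymp \sqrt{\log d}$. The paper works with the integral form of $q_t$ and changes variables rather than manipulating the random variable directly, and it bounds $\E\|z\|_2^m$ via $\|z\|_2^m \le d^{m/2-1}\sum_i |z_i|^m$ rather than chi moments plus Jensen, but these are cosmetic differences rather than a different argument.
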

\begin{proof}
    Notice that $\muast$ has a compact support $K_{\ast}$ and $D =  1+\max_{x \in K_{\ast}} \|x\|_{\infty} $.
        \begin{align*}
        \begin{split}
            \int_{\mathbb{R}^d} q_t(y) \|y\|_{2} ^m \ \de y &= \int_{\mathbb{R}^d} \int_{\mathbb{R}^d} \frac{\|y\|_{2} ^m }{{(\sqrt{2\pi} \sigma_t)}^{d}} \cdot e^{-\frac{\|y-\lambda_t x\|_2 ^2}{2{\sigma_t}^2}} \cdot \muast(x)  \de x \de y
            \\  &\leq 2^{m-1}\int_{\mathbb{R}^d} \int_{\mathbb{R}^d} \frac{\|y-\lambda_t x\|_{2} ^m + \|\lambda_t x\|_{2} ^m}{{(\sqrt{2\pi} \sigma_t)}^{d}} \cdot e^{-\frac{\|y-\lambda_t x\|_2 ^2}{2{\sigma_t}^2}} \cdot  \muast(x)  \de x \de y
            \\  &\leq 2^{m}\int_{\mathbb{R}^d} \int_{\mathbb{R}^d} \frac{\sigma_t ^m \|z\|_{2} ^m + (\sqrt{d}D)^m}{{(\sqrt{2\pi} )}^{d}} \cdot e^{-\frac{\|z\|_2 ^2}{2}} \cdot  \muast(x)  \de x \de z
            \\  &= 2^{m} \bigg( (\sqrt{d}D)^m + \sigma_t ^m \int_{\mathbb{R}^d} \frac{ \|z\|_{2} ^m }{{(\sqrt{2\pi} )}^{d}} \cdot e^{-\frac{\|z\|_2 ^2}{2}}  \de z \bigg).
        \end{split}
    \end{align*}
Also, 
    \begin{align*}
        \int_{\mathbb{R}^d} \frac{ \|z\|_{2} ^m }{{(\sqrt{2\pi} )}^{d}} \cdot e^{-\frac{\|z\|_2 ^2}{2}}  \de z \leq d^{\frac{m}{2}-1} \sum_{i=1} ^d \int_{\mathbb{R}^d} \frac{ {|z_i|} ^m }{{(\sqrt{2\pi} )}^{d}} \cdot e^{-\frac{\|z\|_2 ^2}{2}}  \de z = C(m)d^{\frac{m}{2}},
    \end{align*}
where $C(m) = \mathbb{E}(|X| ^m)$ for $X$ being a $1$-dimensional unit normal distribution.

Similarly,
    \begin{align*}
        \begin{split}
            \int_{\mathbb{R}^d} q_t(y) \|y\|_{\infty} ^m \ \de y \leq 2^{m} \bigg( D^m + \sigma_t ^m \int_{\mathbb{R}^d} \frac{ \|z\|_{\infty} ^m }{{(\sqrt{2\pi} )}^{d}} \cdot e^{-\frac{\|z\|_2 ^2}{2}}  \de z \bigg).
        \end{split}
    \end{align*}
Now we need to estimate the above integral. Assume that $X_1, X_2,\cdots,X_d$ are i.i.d.~unit normal distributions and we define $M \coloneq \max \{|X_1|, |X_2|,\cdots,|X_d|\}$. We see that for any $r>0$,
    \begin{align*}
        \mathbb{P}(M>r) \leq d \mathbb{P}(|X_1|>r) < 2d \cdot f(r) r^{-1},
    \end{align*}
where $f(r)$ is the density function of a unit normal distribution. Hence,
    \begin{align*}
        \begin{split}
            \mathbb{E}(M^m) &= \int_{0} ^{+\infty} m r^{m-1} \mathbb{P}(M>r) \de r \leq \int_{0} ^{\sqrt{2 \log d}} m r^{m-1}  \de r + 2md\int_{\sqrt{2 \log d}} ^{+\infty}  r^{m-2}f(r) \de r
            \\  &= (2 \log d)^{\frac{m}{2}} + \frac{2^{\frac{m}{2}}md}{\sqrt{\pi}}  \int_{\sqrt{\log d}} ^{+\infty}  r^{m-2}e^{-r^2} \de r
            \\  &\leq {(2 \log d)}^{\frac{m}{2}} + \frac{2^{\frac{m}{2}}m}{\sqrt{\pi}} {(\log{d})}^{\frac{m-3}{2}} \leq (m+1){(2 \log d)}^{\frac{m}{2}},
        \end{split}
    \end{align*}
where we used~\Cref{lemma: exponential error term in cut-off} in the second inequality.
Hence,
    \begin{align*}
        \int_{\mathbb{R}^d} q_t(y) \|y\|_{\infty} ^m \ \de y \leq 2^{m} ( D^m + (m+1){(2 \log d)}^{\frac{m}{2}}) \leq (m+1) 2^{m} {\big(D+\sqrt{2 \log d}\big)}^m .
    \end{align*}

\end{proof}

%-----------------------------------
\section{Total Variation Estimates along Probability Flow}
\label{section: L^1 error of transport equation}

In this section, we collect some general theorems for continuity equations proved in Appendix A of~\cite{huang2024convergence}.
\begin{theorem}\label{theorem: L^1 error}
    Fix any $0 < \mft < T$. Let $\widehat q_t(x), q_t(x) \in C^1([\mft, T] \times \mathbb{R}^d) \cap L^1 ([\mft, T] \times \mathbb{R}^d) $ solve the following two continuity equations on $\mathbb{R}^d$ respectively,
    \begin{align*}
    \del_t q_t =\nabla \cdot (Z_t
    q_t),\quad
    \del_t \widehat q_t =\nabla \cdot (\widehat Z_t
    \widehat q_t).
    \end{align*}
    We also assume that for $t \in [\mft, T]$, $Z_t, \widehat Z_t$ are locally Lipschitz on $\mathbb{R}^d$. Then, if we denote
    $\delta_t(x) \coloneq  \widehat Z_t(x) - Z_t(x)$, $\widehat\varepsilon_t(x) \coloneq \widehat q_t(x) - q_t(x)$, we have that, 
    \begin{align*}
        \bigg| \int_{\mathbb{R}^d}  | \widehat\varepsilon_{\mft}(x)  | \ \de x  - \int_{\mathbb{R}^d}  | \widehat\varepsilon_{T}(x)  | \ \de x \bigg | \leq \int_{\mft} ^T E(t) \ \de t, \quad \text{with } E(t) \coloneq \int_{\mathbb{R}^d}  \bigg|  (\nabla \cdot (q_{t}\delta_{t} ))(x) \bigg| \ \de x.
    \end{align*}
\end{theorem}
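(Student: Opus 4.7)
\textbf{Proof plan for \Cref{theorem: L^1 error}.} The natural object of study is the signed difference $\widehat\varepsilon_t = \widehat q_t - q_t$, which by subtracting the two continuity equations satisfies the inhomogeneous transport equation
\begin{align*}
\del_t \widehat\varepsilon_t
= \nabla\cdot(\widehat Z_t\widehat q_t) - \nabla\cdot(Z_t q_t)
= \nabla\cdot(\widehat Z_t\widehat\varepsilon_t) + \nabla\cdot(\delta_t q_t).
\end{align*}
The strategy is to differentiate $t\mapsto \int_{\bR^d}|\widehat\varepsilon_t(x)|\,\de x$, show that the transport term $\nabla\cdot(\widehat Z_t\widehat\varepsilon_t)$ contributes nothing, and estimate the remaining source term by $E(t)$. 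Integrating from $\mft$ to $T$ will then yield the claim.

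More precisely, I would formally compute $\partial_t|\widehat\varepsilon_t|=\mathrm{sgn}(\widehat\varepsilon_t)\,\partial_t\widehat\varepsilon_t$. For the transport contribution, using $\mathrm{sgn}(u)\nabla u=\nabla|u|$ and $\mathrm{sgn}(u)\,u=|u|$ (valid away from $\{\widehat\varepsilon_t=0\}$), the integrand rearranges to
\begin{align*}
\mathrm{sgn}(\widehat\varepsilon_t)\,\nabla\cdot(\widehat Z_t\widehat\varepsilon_t)
= \widehat Z_t\cdot\nabla|\widehat\varepsilon_t| + (\nabla\cdot\widehat Z_t)|\widehat\varepsilon_t|
= \nabla\cdot\bigl(\widehat Z_t|\widehat\varepsilon_t|\bigr),
\end{align*}
whose integral vanishes by the divergence theorem (using the $L^1$ decay of $\widehat\varepsilon_t$ and the local Lipschitz bound on $\widehat Z_t$ together with a cutoff argument at infinity). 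The source term is handled by the elementary bound $|\mathrm{sgn}(\widehat\varepsilon_t)|\leq 1$, giving
\begin{align*}
\Bigl|\int_{\bR^d}\mathrm{sgn}(\widehat\varepsilon_t)\,\nabla\cdot(\delta_t q_t)\,\de x\Bigr|
\leq \int_{\bR^d}\bigl|\nabla\cdot(q_t\delta_t)\bigr|\,\de x = E(t).
\end{align*}
Consequently $\bigl|\tfrac{\de}{\de t}\int|\widehat\varepsilon_t|\,\de x\bigr|\leq E(t)$, and integrating from $\mft$ to $T$ produces the stated inequality.

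The main technical obstacle is the non-smoothness of $u\mapsto|u|$ at the origin, which prevents a direct pointwise differentiation. I would address this by the standard regularization $|u|_\eta := \sqrt{u^2+\eta^2}-\eta$, whose derivative $u/\sqrt{u^2+\eta^2}$ is a smooth, bounded surrogate for $\mathrm{sgn}(u)$. Running the computation above with $|\widehat\varepsilon_t|_\eta$ in place of $|\widehat\varepsilon_t|$ gives a clean identity: the transport term integrates to zero exactly (again via a compactly-supported cutoff, justified by $\widehat q_t, q_t\in L^1$ and local Lipschitz continuity of the velocity fields), and the source term is controlled by $E(t)$ uniformly in $\eta$. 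Passing $\eta\downarrow 0$ by dominated convergence, together with the $L^1$ continuity in $t$ that follows from $\widehat q_t, q_t\in C^1([\mft,T]\times\bR^d)\cap L^1$, recovers the claimed bound on $\bigl|\int|\widehat\varepsilon_T|-\int|\widehat\varepsilon_\mft|\bigr|$. A secondary bookkeeping point will be verifying that the integration-by-parts cutoff error vanishes; this is standard once one observes that both $\widehat Z_t|\widehat\varepsilon_t|$ and $q_t\delta_t$ are integrable on $[\mft,T]\times\bR^d$ after mollification.
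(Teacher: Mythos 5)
The paper does not prove this statement here; \Cref{theorem: L^1 error} and \Cref{theorem: general L^1 error} are both imported from Appendix~A of \cite{huang2024convergence}. Your argument is the canonical Kruzhkov-type $L^1$ stability estimate, and the decomposition you use matches the form of $E(t)$ exactly, while your intermediate derivative bound $\bigl|\tfrac{\de}{\de t}\int|\widehat\varepsilon_t|\bigr|\leq E(t)$ is precisely what \Cref{theorem: general L^1 error} asserts (with $p_t=\widehat\varepsilon_t$, $Z_t\mapsto\widehat Z_t$, and $h_t=\nabla\cdot(\delta_t q_t)$), so this is almost certainly the route taken in the cited appendix.

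One inaccuracy worth flagging: after regularizing to $|u|_\eta=\sqrt{u^2+\eta^2}-\eta$ with derivative $\chi_\eta(u)=u/\sqrt{u^2+\eta^2}$, the transport term does \emph{not} become an exact divergence. Writing it out,
\begin{align*}
\chi_\eta(\widehat\varepsilon_t)\,\nabla\cdot(\widehat Z_t\widehat\varepsilon_t)
= \nabla\cdot\bigl(\widehat Z_t|\widehat\varepsilon_t|_\eta\bigr)
  - \frac{\eta\,|\widehat\varepsilon_t|_\eta}{\sqrt{\widehat\varepsilon_t^2+\eta^2}}\,\nabla\cdot\widehat Z_t ,
\end{align*}
so there is a residual term of size $\min\{\eta,|\widehat\varepsilon_t|\}\,|\nabla\cdot\widehat Z_t|$ beyond the pure divergence you claim vanishes ``exactly.'' This residual must be controlled in the limit $\eta\downarrow 0$ in addition to the cutoff-at-infinity argument you describe; dominated convergence handles it provided $\widehat\varepsilon_t\,\nabla\cdot\widehat Z_t$ is integrable, which is implicitly guaranteed by the $C^1\cap L^1$ hypothesis on the solutions (since $\partial_t q_t$ and $\partial_t\widehat q_t$ are each $\nabla\cdot(\text{velocity}\times\text{density})$ and lie in $L^1$). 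Also note that with only local Lipschitz control on $\widehat Z_t$, the boundary term from the spatial cutoff does not vanish simply because $\widehat\varepsilon_t\in L^1$; you need the additional decay of $\widehat Z_t|\widehat\varepsilon_t|_\eta$ at infinity, which again follows from the same $L^1$-in-time-derivative structure rather than from the $L^1$ bound on $\widehat\varepsilon_t$ alone. These are the real load-bearing technical points; your sketch identifies them but slightly understates what must be verified.
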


\begin{theorem}\label{theorem: general L^1 error}
    Fix any $0 < \mft < T$. Let $p_t(x) \in C^1([\mft, T] \times \mathbb{R}^d) \cap L^1 ([\mft, T] \times \mathbb{R}^d) $ solve the following continuity equation on $\mathbb{R}^d$ with $h_t(x) \in L^1 ([\mft, T] \times \mathbb{R}^d)$,
    \begin{align*}
    \del_t p_t(x) =(\nabla \cdot (Z_t
    p_t))(x) + h_t(x).
    \end{align*}
    We also assume that for $t \in [\mft, T]$, $Z_t$ is locally Lipschitz on $\mathbb{R}^d$. Then, for almost all $t \in [\mft,T]$, we have that, 
    \begin{align*}
        \bigg| \frac{\de}{\de t} \int_{\mathbb{R}^d}  | p_{t}(x)  | \ \de x  \bigg | \leq \int_{\mathbb{R}^d}  |h_t(x)| \ \de x.
    \end{align*}
Hence,
    \begin{align*}
        \bigg| \int_{\mathbb{R}^d}  | p_{\mft}(x)  | \de x - \int_{\mathbb{R}^d}  | p_{T}(x)  | \de x \bigg | \leq \int_{\mft} ^T \int_{\mathbb{R}^d}  |h_t(x)| \ \de x.
    \end{align*}
    
\end{theorem}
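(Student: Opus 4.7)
The plan is to compute $\tfrac{d}{dt}\int |p_t(x)|\,\rd x$ by replacing the non-smooth absolute value with a smooth convex regularization, plugging in the PDE, integrating by parts with a spatial cutoff, and then taking limits. The key algebraic identity is that the transport part of the PDE contributes nothing in the limit thanks to the cancellation $r\,\mathrm{sign}(r)-|r|=0$, which isolates the source $h_t$ as the only surviving driver of $\tfrac{d}{dt}\int|p_t|\,\rd x$.

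First, I would introduce a smooth convex approximation $\eta_\epsilon:\mathbb{R}\to[0,\infty)$ of $|\cdot|$, e.g.\ $\eta_\epsilon(r)=\sqrt{r^2+\epsilon^2}-\epsilon$, which satisfies $\eta_\epsilon(r)\to|r|$, $\eta_\epsilon(0)=0$, $\eta'_\epsilon(r)\to\mathrm{sign}(r)$ pointwise, $|\eta'_\epsilon|\le 1$, and $\eta''_\epsilon\ge 0$. I would also fix a smooth spatial cutoff $\chi_R\in C_c^\infty(\mathbb{R}^d)$ equal to $1$ on $\{\|x\|\le R\}$, supported in $\{\|x\|\le 2R\}$, with $\|\nabla\chi_R\|_\infty\lesssim 1/R$. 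Set $G_\epsilon(r):=r\eta'_\epsilon(r)-\eta_\epsilon(r)$, so that $G'_\epsilon(r)=r\eta''_\epsilon(r)$, $G_\epsilon(0)=0$, $|G_\epsilon(r)|\le 2|r|$, and $G_\epsilon(r)\to 0$ pointwise as $\epsilon\to 0$.

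Second, differentiating $\int \eta_\epsilon(p_t)\chi_R\,\rd x$ in $t$, substituting the PDE, and integrating the transport term by parts (legal because $\chi_R$ is compactly supported and $p_t\in C^1$, while local Lipschitzness of $Z_t$ provides an a.e.\ bounded $\nabla\cdot Z_t$ on $\mathrm{supp}\,\chi_R$), I would use $\eta''_\epsilon(p_t)\,p_t\,\nabla p_t=\nabla G_\epsilon(p_t)$ to obtain
\begin{align*}
\tfrac{d}{dt}\!\int \eta_\epsilon(p_t)\chi_R\,\rd x
=\int G_\epsilon(p_t)\,\chi_R\,\nabla\cdot Z_t\,\rd x
-\int \eta_\epsilon(p_t)\,Z_t\cdot\nabla\chi_R\,\rd x
+\int \eta'_\epsilon(p_t)\,\chi_R\,h_t\,\rd x.
\end{align*}
Letting $\epsilon\downarrow 0$ with $R$ fixed, dominated convergence (using $|G_\epsilon(p_t)|\le 2|p_t|$, $p_t\in L^1$, and $\nabla\cdot Z_t\in L^\infty$ on $\mathrm{supp}\,\chi_R$) kills the first right-hand integral, and $\eta_\epsilon(p_t)\to|p_t|$, $\eta'_\epsilon(p_t)\to\mathrm{sign}(p_t)$ on $\{p_t\neq 0\}$ handle the rest, yielding
\begin{align*}
\tfrac{d}{dt}\!\int |p_t|\chi_R\,\rd x
=-\int |p_t|\,Z_t\cdot\nabla\chi_R\,\rd x
+\int \mathrm{sign}(p_t)\,\chi_R\,h_t\,\rd x.
\end{align*}

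Third, I would pass to $R\to\infty$. The cutoff boundary contribution is bounded by $(C/R)\int_{R\le\|x\|\le 2R}|Z_t||p_t|\,\rd x$; in every application of this theorem $Z_t$ has at most polynomial growth while $p_t$ has Gaussian tails (it is the difference of two forward/reverse diffusion densities), so this vanishes as $R\to\infty$. The remaining terms converge to $\tfrac{d}{dt}\int|p_t|\,\rd x$ and $\int\mathrm{sign}(p_t)h_t\,\rd x$, respectively, after verifying absolute continuity of $t\mapsto\int|p_t|\,\rd x$ by a uniform $L^1_{x,t}$ bound on $\tfrac{d}{dt}\eta_\epsilon(p_t)$. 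Taking absolute values and using $|\mathrm{sign}(p_t)|\le 1$ gives the pointwise a.e.\ bound $\bigl|\tfrac{d}{dt}\!\int|p_t|\,\rd x\bigr|\le\int|h_t|\,\rd x$, and integrating from $\mft$ to $T$ yields the stated conclusion. The main obstacle I anticipate is controlling the $Z_t p_t$ tail contribution cleanly under only local Lipschitz regularity of $Z_t$; the mild growth/decay hypothesis implicit in the paper's applications (polynomially bounded $Z_t$ against Gaussian-tailed $p_t$) is what I would rely on to close that step.
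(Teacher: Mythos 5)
Your Kruzhkov-type regularization -- replacing $|\cdot|$ by a smooth convex entropy $\eta_\epsilon$, localizing with $\chi_R$, integrating by parts twice so that the transport contribution collapses to the single interior term $\int G_\epsilon(p_t)\,\chi_R\,\nabla\cdot Z_t$ (with $G_\epsilon(r)=r\eta'_\epsilon(r)-\eta_\epsilon(r)$ bounded by $\min\{\epsilon,2|r|\}$ and vanishing as $\epsilon\to 0$) plus the boundary term $-\int\eta_\epsilon(p_t)\,Z_t\cdot\nabla\chi_R$ -- is the standard argument, and the algebraic cancellation you isolate is exactly the mechanism that makes the transport term disappear. This is almost certainly the route the cited appendix follows, and the structure of your intermediate identity is correct.

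There are, however, two points you should treat more carefully. First, you differentiate $\int\eta_\epsilon(p_t)\chi_R$ in $t$ and then send $\epsilon\downarrow 0$ on both sides; swapping the time derivative with that limit is not automatic, since pointwise convergence of functions of $t$ does not give convergence of their derivatives. The clean fix is to integrate the displayed identity over $[t_1,t_2]\subset[\mft,T]$, pass $\epsilon\downarrow 0$ under the time-space integral by dominated convergence, then send $R\to\infty$, and only afterwards read off absolute continuity of $t\mapsto\int|p_t|$ and the a.e.\ derivative bound from the resulting integral (Duhamel) identity. Second, the tail obstruction you flagged is real: under the literal hypotheses ($Z_t$ locally Lipschitz, $p_t\in C^1\cap L^1$) the boundary term $(C/R)\int_{R\le\|x\|\le 2R}|Z_t|\,|p_t|$ need not vanish, since $Z_t$ may grow faster than $1/p_t$ decays. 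A rigorous statement needs an additional decay/growth hypothesis -- e.g.\ that $Z_tp_t\in W^{1,1}(\mathbb{R}^d)$ so that $\int\nabla\cdot(Z_tp_t)=0$, or an explicit bound of the "polynomial $Z_t$ against sub-exponential $p_t$" form that holds in this paper's applications -- and your proof should make that hypothesis explicit rather than appeal to what happens "in every application."
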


%------------------------------------

\section{Gagliardo-Nirenberg Interpolation Inequality}\label{section:score estimation error}

Finally, we recall the following Gagliardo-Nirenberg interpolation inequality proved in Appendix C of~\cite{huang2024convergence}.
\begin{lemma}[Gagliardo-Nirenberg]\label{lemma: Gagliardo-Nirenberg}
    There is a positive universal constant $C_u$, such that for any $d \in \mathbb{Z}_{+}$, any $w \in L^1(\bR^d)$, any $i \in \llbracket 1,d\rrbracket$, if $\partial^2 _{ii} w \in L^1(\bR^d)$, then
        \begin{align}\label{e: Gagliardo k=2}
        {\bigg(\int_{\mathbb{R}^d}|\partial_i w(x)| \ \de x\bigg)}^{2} \leq C_u \bigg(\int_{\mathbb{R}^d}|\partial^2 _{ii} w(x)| \ \de x\bigg)\bigg(\int_{\mathbb{R}^d}|w(x)| \ \de x\bigg).
        \end{align}
In general, if $\partial^k _{i} w \in L^1 (\bR^d)$ with $k \geq 2$, then
        \begin{align}\label{e: Gagliardo general k}
        \int_{\mathbb{R}^d}|\partial_i w(x)| \ \de x \leq C_u ^{\frac{k-1}{2}}{\bigg(\int_{\mathbb{R}^d}|\partial^k _{i} w(x)| \ \de x\bigg)}^{\frac{1}{k}} {\bigg(\int_{\mathbb{R}^d}|w(x)| \ \de x\bigg)}^{\frac{k-1}{k}}.
        \end{align}
\end{lemma}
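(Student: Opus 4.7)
The plan is to prove \eqref{e: Gagliardo k=2} via a direct second-order Taylor expansion along the coordinate direction $e_i$, avoiding any reduction to one dimension. After a standard mollification step so that $w$ is smooth (and with the final bound recovered by $L^1$ convergence), for any $h > 0$ and any $x \in \mathbb{R}^d$ the Taylor expansion with integral remainder, applied in both the $+he_i$ and $-he_i$ directions and subtracted, gives
\begin{align*}
w(x+he_i) - w(x-he_i) = 2h\,\partial_i w(x) + \int_0^h (h-s)\bigl[\partial_{ii}^2 w(x+se_i) - \partial_{ii}^2 w(x-se_i)\bigr]\,ds.
\end{align*}
Bounding the remainder by $h\int_{-h}^h|\partial_{ii}^2 w(x+se_i)|\,ds$ and rearranging produces the pointwise estimate
\begin{align*}
|\partial_i w(x)| \leq \frac{|w(x+he_i)| + |w(x-he_i)|}{2h} + \frac{1}{2}\int_{-h}^h |\partial_{ii}^2 w(x+se_i)|\,ds.
\end{align*}

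Integrating both sides over $x \in \mathbb{R}^d$, translation invariance of Lebesgue measure turns the first term on the right into $\|w\|_{L^1}/h$, while Fubini--Tonelli applied to the second term converts it to $h\,\|\partial_{ii}^2 w\|_{L^1}$. This yields
\begin{align*}
\|\partial_i w\|_{L^1} \leq \frac{\|w\|_{L^1}}{h} + h\,\|\partial_{ii}^2 w\|_{L^1},
\end{align*}
valid for every $h > 0$. Optimizing in $h$ by setting $h = \bigl(\|w\|_{L^1}/\|\partial_{ii}^2 w\|_{L^1}\bigr)^{1/2}$ (the degenerate case $\|\partial_{ii}^2 w\|_{L^1} = 0$ forces $\partial_i w \equiv 0$ by sending $h \to \infty$) gives $\|\partial_i w\|_{L^1} \leq 2\sqrt{\|w\|_{L^1}\|\partial_{ii}^2 w\|_{L^1}}$, and squaring delivers \eqref{e: Gagliardo k=2} with $C_u = 4$.

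For the general $k \geq 3$ case in \eqref{e: Gagliardo general k}, I would argue by induction on $k$. Applying the base case to each derivative $\partial_i^{j-1}w$ produces the log-convexity bound $\|\partial_i^j w\|_{L^1}^2 \leq C_u \|\partial_i^{j-1}w\|_{L^1}\|\partial_i^{j+1}w\|_{L^1}$ for $1 \leq j \leq k-1$. Taking logarithms, the sequence $a_j := \log\|\partial_i^j w\|_{L^1}$ satisfies $2a_j \leq a_{j-1} + a_{j+1} + \log C_u$, which (after substituting $g_j := a_j - \tfrac{1}{2}j(k-j)\log C_u$) is the statement that $g_j$ is discretely concave; this forces $a_j \leq \tfrac{k-j}{k}a_0 + \tfrac{j}{k}a_k + \tfrac{1}{2}j(k-j)\log C_u$. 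Specializing at $j=1$ and exponentiating yields exactly \eqref{e: Gagliardo general k}. Throughout, the only mild technicality---the main obstacle to worry about---is the approximation step justifying the Taylor identity pointwise: since $w, \partial_i^k w \in L^1$, the mollification $w_\varepsilon = w * \phi_\varepsilon$ is smooth with $w_\varepsilon \to w$ and $\partial_i^k w_\varepsilon \to \partial_i^k w$ in $L^1$, so the inequalities for $w_\varepsilon$ pass to the limit using $L^1$ convergence together with Fatou's lemma, which simultaneously yields the integrability of the intermediate derivatives asserted by the lemma.
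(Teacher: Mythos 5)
Your argument is a correct, self-contained, elementary proof of the lemma; the paper itself only cites the result from Appendix~C of \cite{huang2024convergence}, so there is no in-paper proof to compare against, but your approach (a centered Taylor expansion in the $e_i$-direction, integration over $\mathbb{R}^d$ using translation invariance and Fubini, then optimization in $h$, followed by a log-convexity iteration for general $k$) is a standard and valid derivation of the $L^1$-Gagliardo--Nirenberg inequality. The Taylor identity, the pointwise bound, the integrated bound $\|\partial_i w\|_{L^1} \leq h^{-1}\|w\|_{L^1} + h\|\partial_{ii}^2 w\|_{L^1}$, and the optimal choice of $h$ all check out, as does the final exponent $C_u^{(k-1)/2}$ from evaluating the chord at $j=1$.

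Two small clean-ups. First, the shifted sequence $g_j = a_j - \tfrac{1}{2}j(k-j)\log C_u$ satisfies $2g_j \leq g_{j-1}+g_{j+1}$, which is discrete \emph{convexity}, not concavity; the conclusion you draw (that $g_j$ lies below the chord from $g_0$ to $g_k$, so $g_j \leq \tfrac{k-j}{k}g_0 + \tfrac{j}{k}g_k$) is exactly what convexity gives and is stated correctly, so this is purely a terminology slip. Second, for $k\geq 3$ the log-convexity chain $\|\partial_i^j w\|_{L^1}^2 \leq C_u\|\partial_i^{j-1}w\|_{L^1}\|\partial_i^{j+1}w\|_{L^1}$ presupposes finiteness of the intermediate $L^1$-norms, which is not part of the hypothesis; your remark about mollifying and passing to the limit with Fatou is the right remedy, but it should be stated that you first establish the uniform estimate for $w_\varepsilon = w*\phi_\varepsilon$ (where all derivatives are finite) and only then conclude $\|\partial_i w\|_{L^1} \leq \liminf_{\varepsilon\to 0}\|\partial_i w_\varepsilon\|_{L^1}$, with the right-hand side converging since $\|w_\varepsilon\|_{L^1}\to\|w\|_{L^1}$ and $\|\partial_i^k w_\varepsilon\|_{L^1}\to\|\partial_i^k w\|_{L^1}$. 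With those clarifications the proof is complete.
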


\end{document}